\def\isarxiv{1} 
\newtheorem{theorem}{Theorem}[section]
\newtheorem{lemma}[theorem]{Lemma}
\newtheorem{definition}[theorem]{Definition}
\newtheorem{proposition}[theorem]{Proposition}
\newtheorem{assumption}[theorem]{Assumption}
\newtheorem{fact}[theorem]{Fact}
\newtheorem{claim}[theorem]{Claim}
\renewcommand{\citet}{\cite}
\renewcommand{\cite}{\citep}
\newcommand{\wt}{\widetilde}
\newcommand{\R}{\mathbb{R}}
\renewcommand{\d}{\mathrm{d}}
\renewcommand{\S}{\mathsf{S}}
\newcommand{\F}{\mathsf{F}}
\renewcommand{\u}{\mathsf{u}}
\DeclareMathOperator*{\E}{{\mathbb{E}}}
\DeclareMathOperator*{\D}{\mathcal{D}}
\DeclareMathOperator{\poly}{poly}
\DeclareMathOperator{\vect}{vec}
\newcommand*{\RN}[1]{\expandafter\@slowromancap\romannumeral #1@}
\title{Curse of Attention:\\
A Kernel-Based Perspective for Why Transformers Fail to Generalize on Time Series Forecasting and Beyond}
\author{%
  Yekun Ke\textsuperscript{1}, \quad Yingyu Liang\textsuperscript{2, 3}, \quad Zhenmei Shi\textsuperscript{3}, \quad Zhao Song\textsuperscript{4}, \quad Chiwun Yang\textsuperscript{4}\\
  \textsuperscript{1}Independent Researcher, ~\textsuperscript{2}The University of Hong Kong, ~\textsuperscript{3}University of Wisconsin-Madison,\\
  \textsuperscript{4}The Simons Institute for the Theory of Computing at UC Berkeley, ~\textsuperscript{5}Sun Yat-sen University
  \texttt{keyekun0628@gmail.com, yingyul@hku.hk, yliang@cs.wisc.edu, zhmeishi@cs.wisc.edu, magic.linuxkde@gmail.com, christiannyang37@gmail.com}
}
\begin{document}

\ifdefined\isarxiv

\date{}

\title{Curse of Attention:\\
A Kernel-Based Perspective for Why Transformers Fail to Generalize on Time Series Forecasting and Beyond}
\author{
Yekun Ke\thanks{\texttt{ keyekun0628@gmail.com}. Independent Researcher.}
\and
Yingyu Liang\thanks{\texttt{
yingyul@hku.hk}. The University of Hong Kong. \texttt{
yliang@cs.wisc.edu}. University of Wisconsin-Madison.} 
\and
Zhenmei Shi\thanks{\texttt{
zhmeishi@cs.wisc.edu}. University of Wisconsin-Madison.}
\and 
Zhao Song\thanks{\texttt{ magic.linuxkde@gmail.com}. The Simons Institute for the Theory of Computing at UC Berkeley.}
\and
Chiwun Yang\thanks{\texttt{ christiannyang37@gmail.com}. Sun Yat-sen University.}
}

\else



\fi

\ifdefined\isarxiv
\begin{titlepage}
  \maketitle
  \begin{abstract}
The application of transformer-based models on time series forecasting (TSF) tasks has long been popular to study. However, many of these works fail to beat the simple linear residual model, and the theoretical understanding of this issue is still limited. In this work, we propose the first theoretical explanation of the inefficiency of transformers on TSF tasks. We attribute the mechanism behind it to {\bf Asymmetric Learning} in training attention networks. When the sign of the previous step is inconsistent with the sign of the current step in the next-step-prediction time series, attention fails to learn the residual features. This makes it difficult to generalize on out-of-distribution (OOD) data, especially on the sign-inconsistent next-step-prediction data, with the same representation pattern, whereas a linear residual network could easily accomplish it. We hope our theoretical insights provide important necessary conditions for designing the expressive and efficient transformer-based architecture for practitioners.

  \end{abstract}
  \thispagestyle{empty}
\end{titlepage}

{
\hypersetup{linkcolor=black}
\tableofcontents
}
\newpage

\else

\maketitle

\begin{abstract}

\end{abstract}

\fi

\section{Introduction}
Attention-based architectures, particularly Transformers, have revolutionized artificial intelligence. Large language models such as Llama \cite{tli+23}, Claude-3 \cite{claude3}, GPT-4 \cite{OJSS+23}, and et al. have significantly transformed the AI landscape. Besides, vision models like Vision Transformer(ViT) \cite{d20} and Data-efficient Image Transformer(DeiT) \cite{tcd+21} have revolutionized the visual domain by directly processing image patches, bypassing the limitations of traditional Convolutional Neural Networks (CNNs).
These models demonstrate outstanding performance in fields of natural language processing and computer vision, driving advancements across diverse fields, including content creation \cite{ll22,aso23,akpl23}, software development \cite{sdfs20,ozhw23,ncp+23}, multimodal application \cite{wfq+23,hxll+24,hxl+24}, machine translation \cite{hasr+23,wlj+23,hwl+23} etc. 

Time series prediction tasks are crucial for forecasting future trends and have been widely used in making data-driven decisions in various fields, such as finance \cite{kb96,whz13,sgo20}, healthcare \cite{zip06,bpv+18,kcs+20} and traffic flow forecasting \cite{vdw96,lbf13,ys16}. In addition to their success in NLP, Transformer models have recently gained significant attention in time series prediction tasks. The ability of Transformers to capture involuted patterns and model long-range dependencies has led to their growing adoption in time series prediction tasks, with several recent studies \cite{zzp+21,wxwl21,zmww+22,nnsk22,lyll+22,lhzw+23}. These models utilize self-attention mechanisms to focus on relevant time steps, which makes them particularly well-suited for handling time series data with irregular intervals and high dimensions. Furthermore, some transformer-based methods integrate techniques such as temporal fusion \cite{lalp21}, hierarchical attention \cite{blm23}, and patching process \cite{nnsk22} etc., allowing them to better capture multi-scale temporal dependencies and adapt to non-stationary patterns in time series.

However, recent studies have challenged the performance of Transformers in time series prediction tasks. Some researchers have found that simple linear layers can outperform more complex Transformers in terms of both accuracy and efficiency \cite{zczx23,dklm+23}. Many works have provided explanations for why Transformer performs worse than simple linear layers on TSF tasks. \cite{zczx23} argue that the poor performance of Transformer on TSF tasks stems from its permutation-invariant self-attention mechanism, which results in the loss of temporal information. \cite{zy23} and \cite{ejns+23} attribute the issue to the Transformer’s practice of embedding multiple variables into indistinguishable channels, leading to a loss of both variable independence and multivariate correlations. However, there is a lack of theoretical understanding regarding why transformers often perform worse than simple linear models in time series forecasting tasks. To address this gap, we present the first theoretical analysis of this issue, shedding light on the underlying factors contributing to the performance discrepancy.

To demystify the black box, we conducted the following analysis: First, we utilized data generated by the State Space Model (SSM) \cite{kds+15} to model time series data. This approach builds on the work in \cite{aia+22}, which demonstrated the SSM's robust modeling capabilities for sequential data. Notably, based on our observation that the linear residual network (N-Linear) \cite{zczx23} performs well in fitting sequential data, we designed a simple task. In this task, the model only needs to apply a straightforward linear mapping to the core features of the time series, which results in relatively small errors.

For the sake of subsequent theoretical analysis, we consider an over-parameterized attention network with a $d=1$ in our setting where d denotes the input feature dimension, i.e., 
\begin{align*}
        f(x, w, a) := \frac{1}{\sqrt{m}} \sum_{r=1}^m a_r \cdot \Big\langle {\sf softmax}( x_{d} \cdot w_r \cdot x ), x \Big\rangle
\end{align*}
where $m$ is the hidden neurons number, $a$ and $w$ are the output and hidden layer  weights respectively and $x$ is the input data. Our theoretical analysis shows that the training method for next-token prediction induces asymmetric feature updates during gradient descent. Specifically, the parameter $w_r$ will be updated in the direction of the parameter $a_r$. By connecting our setup with vanilla Attention, the above conclusion means that the weights of $W_Q$ and $W_K$ will be updated in the direction of $W_V$. Our results show that in the case of $d = 1$, such asymmetric learning is detrimental to the generalization of sequential data. Then, we introduce inconsistent next-step prediction. Specifically, because $w_r$ updates along the direction of $a_r$, when the model overfits and $a_r = -1$, $w_r$ becomes negative, leading to very small weights for the final timestep feature after applying Softmax. This makes it difficult for the model to learn residual features effectively.

Besides, we further propose a theoretical insight: linear models can exhibit exceptional performance on the task in generalization on SSM sequence data. In contrast, no matter how over-parameterized the attention mechanism is, how large the dataset is, or how long the training time is, it will fail to generalize on SSM sequence data.

Our main contributions can be outlined as follows:

\begin{itemize}
    \item We demonstrate that asymmetric learning in transformer-based models is the root cause of their underperformance in time series forecasting. Specifically, when the sign of the previous step conflicts with the current step in next-step prediction, the attention mechanism fails to effectively learn residual features, which limits the model’s ability to generalize on out-of-distribution (OOD) data.
    \item We provide a theoretical analysis showing that linear residual models outperform transformers in generalizing to sequential data, as even over-parameterized attention networks fail to match the generalization capability of simple linear models. Moreover, we extend our analysis to $d > 1$ case and discuss several potential solutions for future study.
\end{itemize} 
\section{Related Work}\label{sec:related_work}
\paragraph{Time Series Forecasting.}
Time Series Forecasting (TSF) \cite{lz21,cmlw+23,rhx+23,whl+24} is a classical task of predicting future values based on historical data, widely used in finance, weather, traffic, and healthcare. Traditional methods like ARIMA \cite{bp70} and ETS \cite{g85} have been reliable due to their solid theoretical foundations, but they are limited by assumptions such as stability and linearity, affecting real-world accuracy.  In recent years, the rapid development of deep learning (DL) has greatly improved the nonlinear modeling capabilities of time series forecasting (TSF) methods. For example, Liu et al. \cite{ydjy17} utilize LSTM \cite{h97} for multi-step forecasting in time series tasks and demonstrate that its performance outperforms traditional models. Li et al. \cite{llwd22} present a bidirectional VAE with diffusion, denoise, and disentanglement, improving time series forecasting by augmenting data and enhancing interpretability, outperforming competitive methods in experiments. With Transformer’s outstanding performance in NLP and CV, it has quickly been applied to time series forecasting tasks, demonstrating superior performance compared to traditional methods. Notable works include Informer \cite{zzp+21}, Autoformer \cite{wxwl21}, FEDformer \cite{zmww+22}, PatchTST \cite{nnsk22}, Pyraformer \cite{lyll+22}, iTransformer \cite{lhzw+23}.

\paragraph{Neural Tangent Kernel.} The Neural Tangent Kernel (NTK) was initially proposed by Jacot et al. \cite{jgh18} to provide a framework for understanding over-parameterized neural network training behavior. This work showed that, under specific conditions, deep neural network training can be approximated by a linear model, with the NTK governing parameter evolution during gradient descent. Since its introduction, NTK has become a key tool for analyzing training in over-parameterized models. Building on this work, many studies have focused on generalizing the NTK theory to various network architectures at over-parameterization, such as \cite{ll18,als19_neurips,all19,adh+19,sy19,zg19,szz21,syz21,gms23,qss23}. It has been demonstrated that Gradient Descent can effectively train a sufficiently wide neural network and will converge in polynomial time. The NTK technique has gained widespread application in various contexts, including pre-processing analysis \cite{syz21,dhs+22,aszz24,ssll23,ssl24}, LoRA adaptation for LLM \cite{hsw+21,zth+24,hsk+24,lssy24}, federated learning \cite{hlsy21}, and estimating scoring functions in diffusion models \cite{hrx24,llss24}.

\paragraph{Theory for Understanding Attention Mechanism.} The attention has become a cornerstone in AI, particularly in large language models (LLMs), which excel in NLP tasks such as machine translation, text generation, and sentiment analysis due to their ability to capture complex contextual relationships. However, understanding the attention mechanism from a theoretical perspective remains an ongoing challenge. Several works have explored the theoretical foundations and computational complexities of attention \cite{tby+19,zhdk23,bsz23,as24,syz24,cll+24_rope,hlsl24,mosw22,szz24,als19_icml,hswz22,bpsw21,syz21,als+23,gkl+25,cll+25_var,lll+25_loop, lls+25_graph, kll+25_tc, cll+25_mamba,lls+24_tensor, lll+24}, focusing on areas such as efficient attention \cite{hjk+23,smn+24,szz+21,lll21,lls+24_conv,lssz24_tat,lss+24,llss24_sparse,lls+24_prune,cls+24,lls+24_io,hwsl24,hwl24,hcl+24,whhl24,hyw+23,as24_arxiv,gswy23, kll+25_var, chl+24_rope,hwl+24}, optimization \cite{dls23}, and the analysis of emergent abilities \cite{b20,wtb+22,al23,j23,xsw+23,lls+24_grok,xsl24,cll+24,lss+24_relu,hwg+24,wsh+24}. Notably, \cite{zhdk23} introduced an algorithm with provable guarantees for attention approximation, \cite{kwh23} proved a lower bound for attention computation based on the Strong Exponential Time Hypothesis, and \cite{as24} provided both an algorithm and hardness results for static attention computation.

\begin{figure*}[!ht]
    \centering
    \subfloat[]{\includegraphics[width=0.40\linewidth]{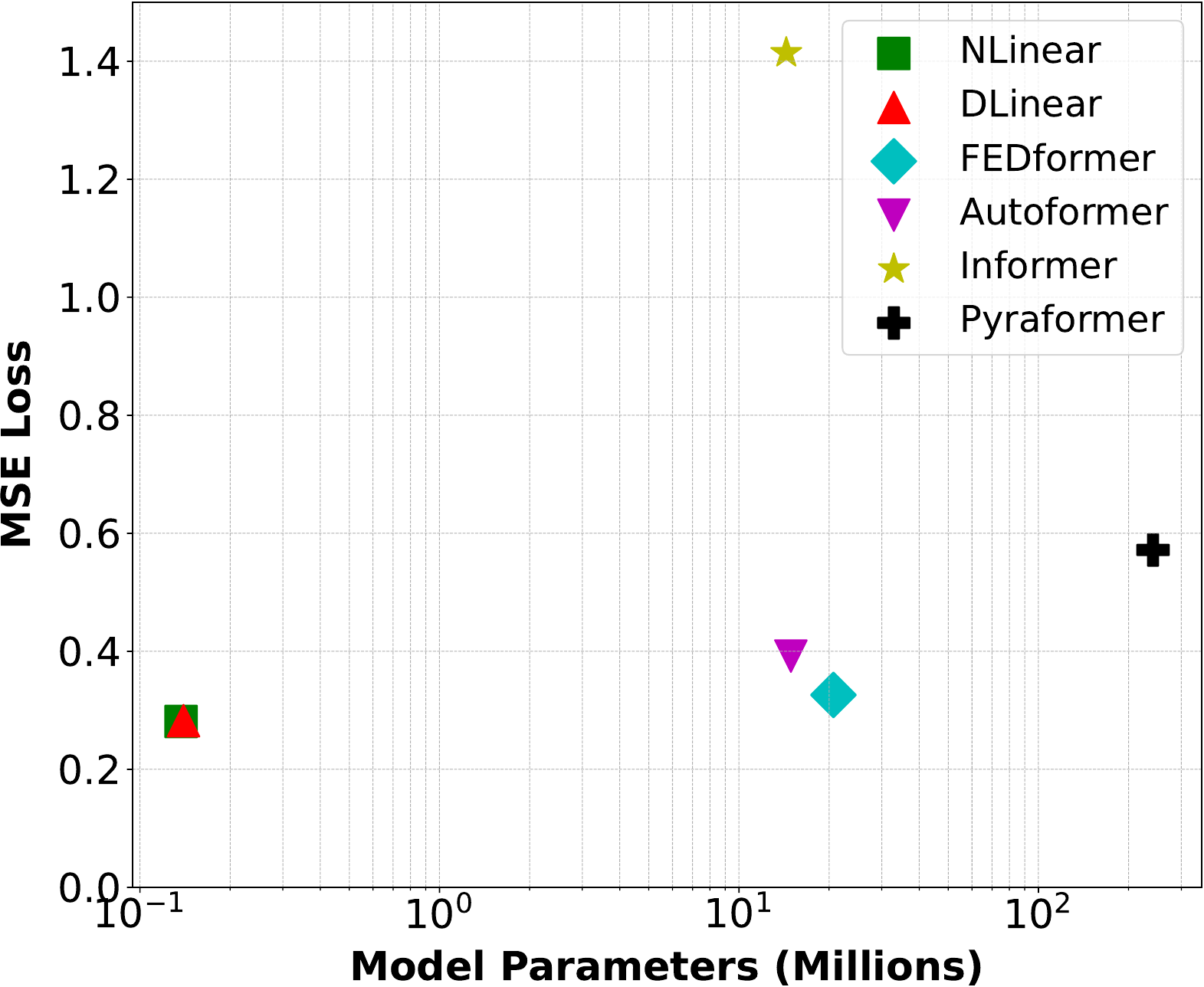} }
    \hspace{0.05\linewidth}
    \subfloat[]{\includegraphics[width=0.51\linewidth]{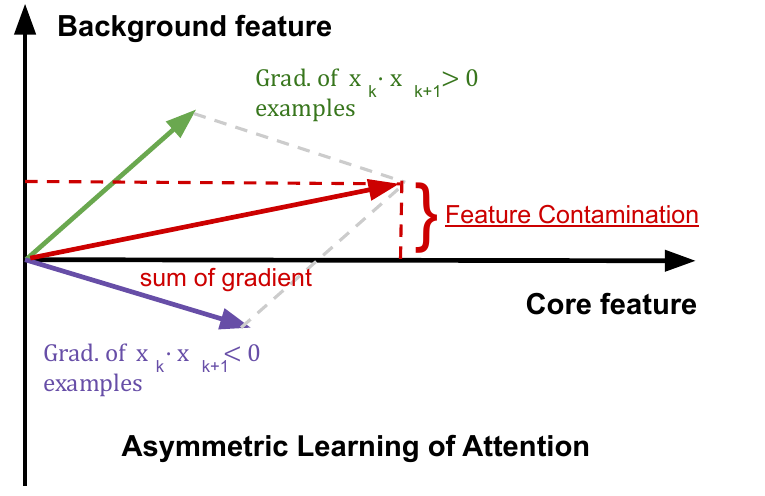}}
    \caption{(a) We compare the work of previous model \cite{ydjy17,wxwl21,zmww+22,lyll+22,zczx23} on the benchmark dataset ETTh1 and ETTh2. The experimental results show that, even though the simple linear models, NLinear and DLinear, have far fewer parameters than Transformer-based models, they exhibit superior generalization ability on TSF tasks. (b) Theoretical-expected gradient direction of training transformer-based model on TSF tasks. In our setup, we focus on the features at the last time step (also referred to as core features), denoted as $x_{k+1}$ and the features at previous time steps (also referred to as background features), denoted as $x_{k}$ ($k \in [d]$). Our theoretical findings suggest that the asymmetric feature updates in attention make it difficult for the attention mechanism to learn the recent residual features when the directions of $x_{k+1}$ and $x_{k}$ are not aligned. In detail, the gradient when training data satisfies $x_k \cdot x_{k+1} < 0$ is contaminated by background features due to the learning disadvantage of attention. }
    \label{fig:background}
    \vspace{-8mm}
\end{figure*}
\section{Background: Transformer Fails to Beat Linear Model in TSF}

As a crucial research direction for data science and statistics, time series forecasting (TSF) tasks have played an important role in various domains, including finance analysis, health care, energy management, etc. In recent years, with the outstanding performance of Transformers in the field of Computer Vision (CV) and Natural Language Process (NLP), many studies have applied the Transformer architecture to time series forecasting tasks \cite{ljxz+19,zzp+21,wxwl21,zmww+22,nnsk22,lyll+22,lhzw+23}. The primary reason for introducing Transformer-based methods into TSF tasks is their attention mechanism, which effectively models long-range dependencies in the time domain. For instance, Informer \cite{zzp+21} introduces the ProbSparse self-attention mechanism and self-attention distilling techniques, enabling Transformer-based methods to handle long sequence time-series forecasting (LSTF) efficiently; FEDformer \cite{zmww+22} introduces seasonal-trend decomposition and frequency enhancing techniques, enabling the model to capture global time-series trends; Crossformer \cite{zy23} introduces the Dimension-Segment-Wise embedding and Two-Stage Attention techniques, enabling Transformer-based models to efficiently capture both cross-time and cross-dimension dependencies for multivariate time series forecasting.

However, it is still debated whether Transformer-based models are more efficient than other deep learning models for time series tasks. The suitability of Transformer-based models for long-term time series forecasting tasks is questioned in \cite{zczx23}. The authors highlight that although these models are effective at capturing semantic correlations, their permutation-invariant self-attention mechanism causes a loss of temporal information. To support this, they introduce a one-layer linear model, LSTF-Linear, which outperforms advanced Transformer-based LTSF models across several TSF Benchmarks. 
They also suggest revisiting the effectiveness of Transformer-based approaches for TSF tasks. Recently, \cite{yzfw+24} introduced a novel frequency-domain MLP approach for TSF. By utilizing a global perspective and energy compaction in the frequency domain, this MLP-based method surpasses Transformer-based models, delivering exceptional performance in both short-term and long-term forecasting scenarios.
Furthermore, we present the experimental results of existing work \cite{ydjy17,wxwl21,zmww+22,lyll+22,zczx23} on prediction performance on the benchmark datasets ETTh1 and ETTh2, as shown in Figure \ref{fig:background} (a). The data indicates that, despite Transformer-based methods having model parameters 1000 times larger than those of simple linear models, their prediction performance on time series data remains significantly inferior to that of the linear models. This discrepancy raises questions about the effectiveness of such large-scale models in time series forecasting tasks.

Therefore, \textbf{why vanilla transformers are not efficient for time series prediction tasks} has become a hotly debated issue recently. A lot of work has shed light on this question: \cite{zczx23} proposed that the permutation-invariant self-attention mechanism may lead to the loss of temporal information. After that,  \cite{clya+23} highlights that Transformers in time-series forecasting suffer from overfitting due to their data-dependent attention mechanisms. In contrast, linear models with fixed time-step-dependent weights effectively capture temporal patterns and demonstrate better generalization on datasets with strong temporal dependencies.  \cite{lhzw+23} highlighted the inefficiencies of vanilla Transformer models in time series forecasting, arguing that embedding multiple variables of the same timestamp into a single token results in the loss of crucial multivariate correlations, which hinders the model’s ability to capture variable interactions. The token formed at a single time step may fail to capture useful information due to its limited receptive field and the misalignment of events occurring simultaneously.
However, the lack of a theoretical explanation behind why the vanilla Transformer model is less efficient than simple linear models in time series tasks remains unexplained. In our paper, we use the NTK framework to analyze and provide a theoretical explanation for the underlying cause of this issue.

\section{Preliminary: Problem Definition}
We present our formal problem definition in this section. In Section~\ref{sec:task_and_data}, we introduce the task within our framework, the Residual State Space Model (SSM), and describe how we use the Residual SSM to generate training data. Section~\ref{sec:model_and_training} introduces our two-layer attention model and its training details.

\subsection{Task and Data}\label{sec:task_and_data}
We consider a time series forecasting task with an input space $\mathcal{X} \in \R^d$, a label space $\mathcal{Y} \in \R$, a model class $\mathcal{H}: \mathcal{X} \to \R$, and a loss function $L: \mathcal{Y} \times \mathcal{Y} \to \R$.

For every dataset $\D := \{ (x_i, y_i) \}_{i=1}^n$ over $\mathcal{X} \times \mathcal{Y}$ and model $h \in \mathcal{H}$, our training objective of $h$ is given as $L(h):=\frac{1}{2}\sum_{i=1}^n (h(x_i)-y_i)^2$.
In our times series forecasting task, there exists a set of distributions $\mathbb{D}$ that consists of all possible distributions to which we would like our model to generalize. In training, we have access to a training distribution set $\mathbb{D}_{train} \subsetneq \mathbb{D}$, where $\mathbb{D}_{train}$ may contain one or multiple training distributions. It's clear that without further assumptions on $\mathbb{D}_{train}$ and $\mathbb{D}$, the time series forecasting task is impossible since no model can generalize to an arbitrary distribution. 

In recent years, State Space Models (SSM) \cite{kds+15,aia+22,gd23,zlz+24,xyy+24,mlw24,kx24,sld+24} have been widely applied in various fields, particularly in time series analysis, computer vision, and machine learning. Specifically, \cite{aia+22} offered a simple mathematical explanation for S4's ability to model long-range dependencies and demonstrated the strong performance of S4 and its various variants on benchmark tasks. This also indicates that state space models can represent almost all known time series data. To formalize this, in this work, we assume that our training data and testing data are generated by a residual state space model defined as follows.

\begin{definition}[State space model (SSM), informal version of Lemma~\ref{def:ssm}]\label{def:ssm_informal}
    The state space model is defined as follows:
    \begin{itemize}
        \item For matrices ${\cal A} \in \R^{N \times N}$, ${\cal B} \in \R^{N \times 1}$, ${\cal C} \in \R^{N \times 1}$
        \item For $k \in [d]$, the state space model is given by:
        \begin{align*}
            h_{k+1} := & ~ {\cal A} h_{k} + {\cal B} u_{k} \in \R^N \\
            u_{k+1} := & ~ 
            {\cal C}^\top h_{k+1} \in \R.
        \end{align*}
        \item Denote ${\cal K}_k := {\cal C}^\top {\cal A}^{k-1} {\cal B} \in \R$ for $k \in [d]$.
        \item Denote ${\cal G}_k := {\cal C}^\top {\cal A}^{k-1} \in \R^{1\times N}$ for $k \in [d]$.
        \item We can rewrite $u_{k} =  \sum_{\kappa = 1}^{k-1} {\cal K}_{k - \kappa} \cdot u_{\kappa} + {\cal G}_{k} h_1, \forall k \in [d].$
    \end{itemize}
\end{definition}
Also, we have the following claim about Residual SSM for generating data:
\begin{claim}[Residual SSM for generating data, informal version of Claim~\ref{clm:res_ssm}]\label{clm:res_ssm_informal}
    Since we define the state space model in Definition~\ref{def:ssm_informal}, we can show that for a initial state $h_1 \in \R^N$, there is:
    \begin{align*}
        u_{k} := \langle {\cal P}_{k}, h_1 \rangle, ~~\forall k \in [d + 1],
    \end{align*}
    where ${\cal P}_k := {\cal G}_{k} + \sum_{\kappa = 1}^{k-1} {\cal K}_{k - \kappa} \cdot {\cal P}_\kappa \in \R^N$.

    Hence, we define residual SSM here. We consider $\{{\cal P}_{k}\}_{k=1}^{d+1} \subset \R^{N}$ as the features of this SSM. The residual SSM focuses on the last few features to be the core features of the next step prediction. Otherwise, the rest of the features are the background features. We define:
    \begin{align*}
        {\cal T}_{\rm core} := & ~ \{ d-k+1, \forall k \in [d_0] \} \\
        {\cal T}_{\rm bg} := & ~ [d] / {\cal T}_{\rm core}.
    \end{align*}
    Besides, by choosing some appropriate value for ${\cal A}, {\cal B}$ and ${\cal C}$, we can show that for a certain $\gamma < 1$, we have:
    \begin{itemize}
        \item Property 1. The norm for features: $\| {\cal P}_{k} \|_2 = 1, \forall k \in [d+1].$
        \item Property 2. Similarity of features:
        \begin{align*}
            \langle {\cal P}_k, {\cal P}_{d+1} \rangle = & ~ \gamma, \forall k \in {\cal T}_{\rm core} \\
            \langle {\cal P}_{k_1}, {\cal P}_{k_2} \rangle = & ~ 0, \forall k_1 \in {\cal T}_{\rm core}, k_2 \in {\cal T}_{\rm bg}.
        \end{align*}
        \item Property 3. We especially consider $d_0 = 1$.
    \end{itemize}
\end{claim}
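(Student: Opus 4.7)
The plan splits into two independent pieces: (i) a short induction to derive $u_k = \langle \mathcal{P}_k, h_1\rangle$, and (ii) an explicit construction of $(\mathcal{A}, \mathcal{B}, \mathcal{C})$ that makes the resulting features $\mathcal{P}_k$ satisfy Properties 1--3.

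For (i), I would proceed by induction on $k$. The base case $k=1$ is immediate: $u_1 = \mathcal{C}^\top h_1 = \mathcal{G}_1 h_1 = \langle \mathcal{G}_1^\top, h_1\rangle = \langle \mathcal{P}_1, h_1 \rangle$, since the sum defining $\mathcal{P}_1$ is empty. For the inductive step I would start from the convolution identity $u_{k+1} = \mathcal{G}_{k+1} h_1 + \sum_{\kappa=1}^{k} \mathcal{K}_{k+1-\kappa}\, u_\kappa$ already spelled out in Definition~\ref{def:ssm_informal}, substitute the inductive hypothesis $u_\kappa = \langle \mathcal{P}_\kappa, h_1\rangle$, and factor $h_1$ out of the inner product by linearity; the resulting bracketed combination of $\mathcal{G}_{k+1}$ with the $\mathcal{K}_{k+1-\kappa}\mathcal{P}_\kappa$ terms is exactly $\mathcal{P}_{k+1}$ by definition. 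This is routine.

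For (ii), specializing to $d_0 = 1$ (Property 3) gives $\mathcal{T}_{\rm core} = \{d\}$ and $\mathcal{T}_{\rm bg} = \{1,\ldots,d-1\}$, so the goal reduces to producing unit vectors $\mathcal{P}_1,\ldots,\mathcal{P}_{d+1}$ with $\mathcal{P}_d$ orthogonal to $\mathcal{P}_1,\ldots,\mathcal{P}_{d-1}$ and $\langle \mathcal{P}_d, \mathcal{P}_{d+1}\rangle = \gamma$. My construction takes $N = d+1$, $\mathcal{C} = e_1$, $\mathcal{B} = 0$, and defines $\mathcal{A}$ by its transpose: $\mathcal{A}^\top e_i = e_{i+1}$ for $i = 1,\ldots,d-1$, $\mathcal{A}^\top e_d = \gamma\, e_d + \sqrt{1-\gamma^2}\, e_{d+1}$, and $\mathcal{A}^\top e_{d+1} = 0$. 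Because $\mathcal{B} = 0$, every $\mathcal{K}_j$ vanishes and the recursion collapses to $\mathcal{P}_k = (\mathcal{A}^\top)^{k-1} e_1$, yielding $\mathcal{P}_k = e_k$ for $k \leq d$ and $\mathcal{P}_{d+1} = \gamma\, e_d + \sqrt{1-\gamma^2}\, e_{d+1}$. Properties 1--3 follow by direct inspection.

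The main obstacle is conceptual rather than computational: the maps $\mathcal{G}_k$ and $\mathcal{K}_k$ are not independent knobs, they are yoked together through the same triple $(\mathcal{A}, \mathcal{B}, \mathcal{C})$, so an arbitrary target sequence $\{\mathcal{P}_k\}$ need not arise from any SSM. The trick is to kill the convolution terms by setting $\mathcal{B} = 0$, which decouples the feature sequence from the residual weights and reduces the task to choosing a single orbit $\{(\mathcal{A}^\top)^{k-1}\mathcal{C}\}$ with the prescribed inner-product geometry, which is easy once the ambient dimension is at least $d+1$. If a nondegenerate residual SSM is desired for downstream use, one can instead pick $\mathcal{B}$ in the orthogonal complement of the relevant span of $\mathcal{A}$-iterates of $\mathcal{C}$; this yields nonzero $\mathcal{K}_j$ without disturbing the feature construction, but it is not needed for the claim as stated.
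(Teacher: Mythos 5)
Your argument is correct, and it actually does more than the paper's own proof. The paper's proof of the formal statement (Claim~\ref{clm:res_ssm}) consists only of your part (i): it rewrites $u_k=\sum_{\kappa=1}^{k-1}{\cal K}_{k-\kappa}u_\kappa+{\cal G}_k h_1$ and asserts $u_k=\langle {\cal P}_k,h_1\rangle$ by ``simple algebra,'' which is exactly your induction with the inductive step left implicit; the paper never verifies the existence of $({\cal A},{\cal B},{\cal C})$ realizing Properties 1--3, treating ``by choosing some appropriate value'' as part of the claim/definition. Your part (ii) supplies the missing witness, and it is sound: with ${\cal B}=0$ all ${\cal K}_j$ vanish, so ${\cal P}_k=({\cal A}^\top)^{k-1}{\cal C}$, and your shift-plus-rotation choice of ${\cal A}^\top$ with ${\cal C}=e_1$ gives ${\cal P}_k=e_k$ for $k\le d$ and ${\cal P}_{d+1}=\gamma e_d+\sqrt{1-\gamma^2}\,e_{d+1}$, which satisfies the stated norm and orthogonality/similarity conditions for $d_0=1$; the observation that ${\cal G}_k$ and ${\cal K}_k$ are coupled and that setting ${\cal B}=0$ decouples them is the right conceptual point. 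Two caveats worth noting if you intend the construction to serve the rest of the paper rather than just this informal claim: the formal version adds a ``residual features'' identity ${\cal P}_{d+1}=\frac{d-1}{d}{\cal P}_d+\frac{1}{d(d-1)}\sum_{k\in{\cal T}_{\rm bg}}{\cal P}_k$ (which forces $\gamma=\frac{d-1}{d}$) that your orthonormal choice does not satisfy, and the later generalization results assume $d=N$ whereas you take $N=d+1$; the latter is easily repaired (e.g.\ place ${\cal P}_{d+1}$ inside ${\rm span}\{e_1,\dots,e_d\}$, say ${\cal P}_{d+1}=\gamma e_d+\sqrt{1-\gamma^2}\,e_1$, since Property~2 does not constrain $\langle{\cal P}_{d+1},{\cal P}_k\rangle$ for background $k$), but the former would require a different geometry. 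For the statement as given, your proof is complete and more explicit than the paper's.
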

With the above definitions, we introduce our data generation model as follows:
\begin{definition}[Data Generation, informal version of Definition~\ref{def:id_generator}]\label{def:id_generator_informal}
    Let the residual state space model be defined as Definition~\ref{def:ssm_informal}, then we define the data generator, for $i \in [n]$:
    \begin{itemize}
        \item Sample $h_{i, 1} \sim \mathcal{N}(0, I_N)$. Generate $u_i = [u_{i, 1}, u_{i, 2}, \cdots, u_{i, d}, u_{i, d+1}]^\top \in \R^{d+1}$ via Claim~\ref{clm:res_ssm_informal}.
        \item Sample $\xi_i \sim \mathcal{N}(0, \sigma \cdot I_{d+1})$ where $\sigma \ge 0$ is a small constant.
        \item $x_{i} = [x_{i, 1}, x_{i, 2}, \cdots, x_{i, d}]^\top \in \R^d$ where $x_{i, k} := u_{i, k} + \xi_{i, k}$ for $k \in [d]$.
        \item $y_i := u_{i, d+1} + \xi_{i, d+1} \in \R$.
    \end{itemize}
    We define the training dataset as $\mathcal{D} := \{ (x_i, y_i) \}_{i=1}^n \subset \R^d \times \R$.
\end{definition}

\subsection{Model and Training.} \label{sec:model_and_training}
    In this section, we state our model setting and details of its training.
    
    {\bf Model.} In this paper, we consider a two-layer attention model:
    \begin{align*}
        f(x, w, a) := \frac{1}{\sqrt{m}} \sum_{r=1}^m a_r \cdot \Big\langle {\sf softmax}( x_{d} \cdot w_r \cdot x ), x \Big\rangle
    \end{align*}
    with the hidden-layer weights $w(0) := \begin{bmatrix}
    w_1(0), w_2(0), \cdots, w_m(0)
    \end{bmatrix}^\top \in \R^{m}$ and output-layer weights $a \in \R^m$. To simplify our analysis, we keep output layer weights fixed during training, which is a common assumption in analyzing two-layer neural networks\cite{al22,lssy24,llss24}. Such a stylized setting has been widely used for studying the learning behavior of transformer-based models \cite{dls23,csy23,csy24}, and they gave detailed derivations and guarantees for its connection to attention.

    {\bf Assumption: Zero Initialization on Training Data.} For hidden-layer weights, we randomly initialize that $w(0) := \begin{bmatrix}
    w_1(0), w_2(0), \cdots, w_m(0)
    \end{bmatrix}^\top \in \R^{m}$, where its $r$-th column for $r \in [m]$ is sampled by $w_{r}(0) \sim \mathcal{N}(0, 1)$. For output layer weights, We randomly initialize $a \in \R^m$ where its $r$-th entry for $r \in [m]$ is sampled by $a_r \sim {\sf Uniorm}\{-1, +1\}$. And let training dataset $\D:=\{(x_i,y_i)\}_{i=1}^n \subset \R^d \times \R$. Then we assume that $f(x_i,w(0),a) = 0, \forall i\in [n]$
    in our setting.
    
    {\bf Training.} Consider a training dataset $\D = \{(x_i,y_i)\}_{i=1}^n$ where the $i$-th data point $(x_i,y_i) \in \R^d \times \R$ which are generated in Definition~\ref{def:id_generator_informal}. The training loss is measured by the $\ell_2$ norm of the difference between the model prediction and ideal output $y_i$. Formally, the training object is
     \begin{align*}
        L(w(t)) := \frac{1}{2} \sum_{i=1}^n (f(x_i, w(t), a) - y_i)^2,
    \end{align*}
    where $w$ and $a$ denote hidden-layer weights and output-layer weights, respectively. Then, we use gradient descent (GD) to update the trainable weights $w(t)$ with a fixed learning rate $\eta>0$. Then for $t > 0$, we have
    \begin{align*}
         w(t + 1) := w(t) - \eta \cdot \nabla_{w} L(w(t)),
    \end{align*}
    where $\eta$ denotes the fixed learning rate in the training process.
\section{Training Convergence with Asymmetric Learning}
In this section, we present the analysis of training convergence with asymmetric learning. In Section~\ref{sec:ntk}, we will present the key tools we used: the Neural Tangent Kernel (NTK) induced by our model, Kernel Convergence, which is key needed for the NTK analysis, assumptions on NTK, and the associated assumptions. In section~\ref{sec:train_convergence}, we present the main result of our paper, which provides a convergence guarantee for asymmetric learning within our framework.

\subsection{Neural Tangent Kernel}\label{sec:ntk}
Neural Tangent Kernel (NTK)\cite{jgh18} provides a powerful tool for understanding gradient descent in neural network training, particularly for analyzing the behavior and convergence of deep networks\cite{hlsy21,swl22,qss23,swl23,gls+24b}. Here, we give the formal definition of NTK in our analysis, which is a kernel function that is driven by hidden-layer weights $w(t) \in \R^{1\times m}$. To present concisely, we first introduce an operator function in the following. For all $i \in [n]$ and $r \in [m]$, we have
\begin{align*}
    \u_{i,r}(t) :=&~ \exp(x_{i,r}(t) \cdot w_r(t) \cdot x_i) \in \R^d, \\
    \alpha_{i,r}(t) :=&~ \langle \u_{i,r}(t), {\bf 1}_d\rangle  \in \R,\\
    \S_{i,r}(t) :=&~ \alpha_{i,r}(t)^{-1} \cdot \u_{i,r}(t) \in \R^d.
\end{align*}
Then, we define the kernel matrix $H(t)$ as an $n \times n$ Gram matrix, and the $(i,j)$-th entry of the block is
\begin{align*}
    H_{i, j}(t) := \frac{1}{m} x_{i, d} x_{j, d} \sum_{r=1}^m \Big( \langle \S_{i, r}(t), x_i^{\circ 2} \rangle - \langle \S_{i, r}(t), x_i \rangle^2 \Big) \cdot \Big( \langle \S_{j, r}(t), x_j^{\circ 2} \rangle - \langle \S_{j, r}(t), x_j \rangle^2 \Big),
\end{align*}
where we define $x^{\circ 2} := x \circ x$. Here, we introduce the assumption of NTK, which is widely used in literature.

{\bf Assumption on NTK.} In the NTK analysis framework for the convergence of training neural networks, one widely used and mild assumption is that $H^{*}:= H(0)$ is a positive definite (PD) matrix, i.e., its minimum eigenvalue $\lambda:= \lambda_{\min}(H^{*}) > 0$. With this, the theorem of training convergence with Asymmetric Learning is presented as follows.

Next, we introduce the convergence property of the kernel, which is key for the NTK analysis and is formalized below (details in Section~\ref{sec:NTK}).
\begin{lemma}[Kernel Convergence, informal version of Lemma~\ref{lem:kernel_pd_formal}]\label{lem:kernel_pd_informal}
    For $\delta \in (0,0.1)$, $B = \max \{1, $ $\sqrt{(1+\sigma^2)\log(nN/\delta)}\} $ and $D = \max\{\sqrt{\log(m/\delta)},1\}$. For any $r \in [m]$, we have $|w_r(t)-w_r(0)| \leq R$ and let $R \leq \frac{\lambda}{n \poly(\exp(B^2),\exp(D)}$  . Then with probability at least $1-\delta$, we have $\|H(t)-H(0)\|_F \leq O(nR)\cdot \exp(O(B^2 D))$ and $\lambda_{\rm min}(H(t)) \ge \lambda / 2$.
\end{lemma}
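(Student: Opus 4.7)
The plan is to decompose the argument into three pieces: (i) high-probability concentration bounds on the data and the initialization, (ii) elementary sandwich bounds on the softmax-related quantities $\u_{i,r}$, $\alpha_{i,r}$, and $\S_{i,r}$ under the perturbation, and (iii) an entry-wise perturbation analysis of $H(t) - H(0)$ followed by Weyl's inequality. Conceptually the NTK entry $H_{i,j}(t)$ is a product of smooth functions of $w_r(t)$, so the standard NTK-perturbation recipe applies; the only delicate bookkeeping is that every factor carries an exponential in $B^2 D$ coming from the softmax normalizer.

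First I would establish the concentration events. By Definition~\ref{def:id_generator_informal}, each coordinate $x_{i,k} = u_{i,k} + \xi_{i,k}$ is Gaussian with variance at most $1 + \sigma^2$ (since $u_{i,k} = \langle \mathcal{P}_k, h_{i,1}\rangle$ and $\|\mathcal{P}_k\|_2 = 1$ by Property~1 of Claim~\ref{clm:res_ssm_informal}), so a union bound over the $O(nN)$ relevant entries gives $\max_{i,k} |x_{i,k}| \le B$ with probability at least $1 - \delta/2$, matching the stated choice of $B$. Independently, since $w_r(0) \sim \mathcal{N}(0,1)$, a Gaussian-tail union bound over $r \in [m]$ yields $\max_r |w_r(0)| \le D$ with probability at least $1 - \delta/2$. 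On the intersection of these events, the assumption $|w_r(t) - w_r(0)| \le R$ gives $|w_r(t)| \le D + R$ uniformly, so every exponent in $\u_{i,r}(t) = \exp(x_{i,d} \cdot w_r(t) \cdot x_i)$ lies in $[-B^2(D+R), B^2(D+R)]$. Hence the entries of $\u_{i,r}(t)$ and of $\S_{i,r}(t)$ are sandwiched between $\exp(-O(B^2 D))$ and $\exp(O(B^2 D))$ (using that $R$ will be chosen negligible compared with $D$), and the analogous bounds hold at $t = 0$.

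Next I would bound $H_{i,j}(t) - H_{i,j}(0)$. Differentiating the map $w_r \mapsto \S_{i,r}(w_r)$ gives a Jacobian with operator norm at most $B^2 \cdot \exp(O(B^2 D))$, so $\|\S_{i,r}(t) - \S_{i,r}(0)\|_2 \le R \cdot \exp(O(B^2 D))$. Each factor $\langle \S_{i,r}, x_i^{\circ 2} \rangle - \langle \S_{i,r}, x_i \rangle^2$ appearing in the formula for $H_{i,j}$ is then bounded in magnitude by $\exp(O(B^2 D))$ and is Lipschitz in $\S_{i,r}$ with constant $\exp(O(B^2 D))$. Applying the telescoping identity $ab - a'b' = (a - a')b + a'(b - b')$ to the product of the two $i$- and $j$-factors in $H_{i,j}$, together with $|x_{i,d}|, |x_{j,d}| \le B$ and the $1/m$ average over $r$, yields $|H_{i,j}(t) - H_{i,j}(0)| \le R \cdot \exp(O(B^2 D))$, and hence $\|H(t) - H(0)\|_F \le n R \cdot \exp(O(B^2 D))$. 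The assumed $R \le \lambda / (n \poly(\exp(B^2), \exp(D)))$ is tailored precisely so that this Frobenius bound is at most $\lambda/2$, and then Weyl's inequality gives $\lambda_{\min}(H(t)) \ge \lambda_{\min}(H(0)) - \|H(t) - H(0)\| \ge \lambda - \lambda/2 = \lambda/2$. The main obstacle I expect is in this last step: tracking the exponential dependence on $B^2 D$ cleanly through the softmax Jacobian and through the nested products and differences in $H_{i,j}$, without picking up spurious polynomial factors of $m$ or $d$, which is what forces one to be careful about which norm (entry-wise versus $\ell_2$) is invoked at each link of the Lipschitz chain.
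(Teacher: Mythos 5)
Your proposal is correct and follows essentially the same route as the paper's proof: high-probability bounds $|x_{i,k}|\le O(B)$, $|w_r(0)|\le O(D)$, sandwich and Lipschitz-type perturbation bounds on $\u_{i,r}$, $\alpha_{i,r}$, $\S_{i,r}$ (the paper's Lemma~\ref{lem:taylor_series_tools}), a telescoping entrywise bound $|H_{i,j}(t)-H_{i,j}(0)|\le O(R)\exp(O(B^2D))$ (the paper splits into four terms $U_1,\dots,U_4$ rather than your two-term identity, which is immaterial), summation to the Frobenius bound, and the eigenvalue perturbation inequality (Fact~\ref{fac:lambda_min_perturb}) with the choice of $R$ to conclude $\lambda_{\min}(H(t))\ge\lambda/2$. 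No substantive differences or gaps.
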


\begin{proof}[Proof sketch of Lemma~\ref{lem:kernel_pd_informal}]
     For Part 1, we first decompose $|H_{i,j}(t) - H_{i,j}(0)|$ into the sum of four subparts using the triangle inequality. Then, we apply the inequality proven in Lemma~\ref{lem:taylor_series_tools} to the upper bound for each part. Then, using the definition of the Frobenius norm, we prove that $\|H(t)-H(0)\|_F \leq O(nR)\cdot \exp(O(B^2 D))$. For Part 2, we can easily get the result by taking the appropriate value of $R$ and Fact~\ref{fac:lambda_min_perturb}. Please see Lemma~\ref{lem:kernel_pd_formal} for the detailed proof of Lemma~\ref{lem:kernel_pd_informal}.
\end{proof}

\subsection{Training Convergence with Asymmetric Learning}\label{sec:train_convergence}

Now, we present our first theorem regarding the convergence of training with Asymmetric Learning:

\begin{theorem}[Informal version of Theorem~\ref{thm:convergence}]\label{thm:convergence:informal}
    Given an error $\epsilon > 0$. For $\delta \in (0,0.1)$, $B = \max\{\sqrt{(1+\sigma^2)\log(nN/\delta)},1\} $ and $D = \max\{\sqrt{\log(m/\delta)}, 1\}$. Let $m = \Omega( \poly(\lambda^{-1}, \exp(B^2), \exp(D)), n, d)$ and the learning rate $\eta \le O(\frac{\lambda \delta}{\poly(\exp(B^2), \exp(D)), n, d)})$. Let $T \ge \Omega(\frac{1}{\eta \lambda} \log(nB^2/\epsilon)$, we have: $L(T) \le \epsilon$.

    Denote $v_{\min} := \min \{ \frac{1}{d} \sum_{k=1}^d (x_{i, k} - \overline{x}_i)^2 \}_{i=1}^n$ where $\overline{x}_i := \frac{1}{d} \sum_{k=1}^d x_{i, k}$. The {\bf Asymmetric Learning} of model weights is expressed by $w_r(t)$ updating with $a_r$ as formulated below, for any $t \ge \Omega(\frac{m}{\eta \lambda v_{\rm min}})$:
    \begin{itemize}
        \item Part 1. $\Pr[ w_r(t) > 0 | a_r = 1 ] \ge  1 - \delta$.
        \item Part 2. $\Pr[ w_r(t) < 0 | a_r = -1 ] \ge  1 - \delta$.
    \end{itemize}
\end{theorem}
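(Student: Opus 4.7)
This part follows the standard NTK overparameterization template. I would run a joint induction on $t$ with three statements: (i) geometric loss decay $L(w(t)) \le (1-\eta\lambda/2)^t L(w(0))$; (ii) parameter movement $|w_r(t)-w_r(0)| \le R$ for every $r\in[m]$, with $R$ small enough to invoke Lemma~\ref{lem:kernel_pd_informal}; and (iii) the PD lower bound $\lambda_{\min}(H(t)) \ge \lambda/2$. The inductive step uses the standard loss expansion
\begin{align*}
L(w(t+1)) - L(w(t)) = -2\eta\,(F(t)-y)^\top H(t)(F(t)-y) + \mathcal{O}\!\left(\eta^2\|\nabla L\|_2^2\right) + \mathrm{Err}(t),
\end{align*}
where $F(t):=(f(x_i,w(t),a))_{i\in[n]}$ and $\mathrm{Err}(t)$ is the nonlinear remainder from the softmax, controlled by its Lipschitz smoothness together with (ii). The overparameterization $m=\Omega(\poly(\lambda^{-1},\exp(B^2),\exp(D),n,d))$ is calibrated so that the cumulative gradient-norm bound (and hence $R$) stays inside the validity radius of Lemma~\ref{lem:kernel_pd_informal}, which delivers (iii). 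Running $T=\Omega(\log(nB^2/\epsilon)/(\eta\lambda))$ iterations then drives the loss below $\epsilon$.

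\textbf{Asymmetric learning: gradient factorization.} The key identity for the second part comes from differentiating through the softmax: for every data point $x_i$,
\begin{align*}
\frac{\partial f(x_i,w,a)}{\partial w_r} = \frac{a_r}{\sqrt{m}}\cdot x_{i,d}\cdot V_{i,r}(t), \qquad V_{i,r}(t) := \langle \S_{i,r}(t), x_i^{\circ 2}\rangle - \langle \S_{i,r}(t), x_i\rangle^2 \ge 0,
\end{align*}
because $V_{i,r}$ is the variance of the coordinates of $x_i$ under the softmax distribution $\S_{i,r}$. Consequently $\nabla_{w_r} L(w(t)) = (a_r/\sqrt{m})\cdot G_r(t)$ with $G_r(t):=\sum_{i=1}^n(f(x_i,w(t),a)-y_i)\,x_{i,d}\,V_{i,r}(t)$, and GD telescopes into
\begin{align*}
w_r(t) = w_r(0) - \frac{\eta\,a_r}{\sqrt{m}}\sum_{\tau=0}^{t-1} G_r(\tau).
\end{align*}
Therefore $\mathrm{sign}(w_r(t)) = \mathrm{sign}(a_r)\cdot\mathrm{sign}(-\sum_\tau G_r(\tau))$ once the drift dominates $w_r(0)$, with the second factor \emph{independent} of $a_r$ -- this is precisely the asymmetric-learning mechanism the theorem formalizes.

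\textbf{Sign and magnitude of the drift, and main obstacle.} It remains to show that $-\sum_{\tau<t}G_r(\tau)$ is positive and that $(\eta/\sqrt{m})\bigl|\sum_\tau G_r(\tau)\bigr|$ exceeds $|w_r(0)| = O(\sqrt{\log(m/\delta)})$ once $t\ge\Omega(m/(\eta\lambda v_{\min}))$. At $\tau=0$ the zero-initialization assumption gives $-G_r(0) = \sum_i y_i\,x_{i,d}\,V_{i,r}(0)$; the SSM model gives $\E[y_i x_{i,d}] = \langle{\cal P}_{d+1},{\cal P}_d\rangle = \gamma > 0$ by Claim~\ref{clm:res_ssm_informal} (Property~2 with $d_0=1$), the Gaussian initialization keeps $\S_{i,r}(0)$ close to uniform so that $V_{i,r}(0)\ge\Omega(v_{\min})$ with probability $1-\delta$, and Hoeffding/Bernstein concentration over $i\in[n]$ yields $-G_r(0)\ge\Omega(n\gamma v_{\min})$ w.h.p. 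Propagating this sign forward in $\tau$ uses (a) $|V_{i,r}(\tau)-V_{i,r}(0)| \le \mathcal{O}(R)\exp(\mathcal{O}(B^2 D))$ from the same Taylor ingredients as Lemma~\ref{lem:kernel_pd_informal}, and (b) coordinatewise residual control of $f(x_i,\tau)-y_i$ obtained from the geometric loss decay combined with concentration of the signal $y_i x_{i,d}$. The principal obstacle is precisely this last step: a generic $\|F(\tau)-y\|_\infty \le \sqrt{2L(\tau)}$ bound is too crude to prevent individual residuals from flipping sign, so the argument must exploit the SSM data structure (Property~2 and the $\gamma$-gap) to yield a per-sample lower bound on $(y_i-f(x_i,\tau))\cdot y_i$; summing the resulting per-step drifts $\gtrsim\eta n v_{\min}/\sqrt{m}$ over the effective training window and comparing to $|w_r(0)|$ then produces the stated threshold $t\ge\Omega(m/(\eta\lambda v_{\min}))$.
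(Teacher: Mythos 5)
Your convergence half matches the paper's route: the paper proves $L(T)\le\epsilon$ by exactly the induction you describe (loss decomposition into $C_1,\dots,C_5$ in Lemma~\ref{lem:decompose_loss}, kernel stability in Lemma~\ref{lem:kernel_pd_formal}, weight-radius control in Lemma~\ref{lem:induction_weight}), and your gradient factorization, the nonnegativity of the softmax variance $V_{i,r}$, the identity $\E[x_{i,d}y_i]=\gamma$, the Hoeffding step, and the final comparison of the accumulated drift against $|w_r(0)|$ all coincide with Lemma~\ref{lem:bound_S_2_minux_S_circ_2}, Lemma~\ref{lem:grad_direction} and the proof of Theorem~\ref{thm:convergence}.

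The gap is in the step you yourself flag as the ``principal obstacle'': showing that the drift $-\sum_{\tau<t}G_r(\tau)$ keeps a fixed sign for all $\tau$. You assert this requires per-sample residual control, i.e.\ a lower bound on $(y_i-f(x_i,\tau))\cdot y_i$ exploiting the SSM structure, but you never construct that argument, so Parts 1--2 are not established by your proposal. Moreover, this is not how the paper closes the step, and the per-sample route is unnecessary: the paper never prevents individual residuals from flipping sign. Instead it uses zero initialization (Assumption~\ref{assu:zero_init}) together with the output-concentration bound $|\F_i(\tau)-\F_i(0)|\le \exp(O(B^2D))\cdot O(R)$ (Lemma~\ref{lem:bound_Ft_minus_F0_l2}) and the overparameterization-forced radius $R\le \lambda/(n\,\poly(\exp(B^2),\exp(D)))$ (Lemma~\ref{lem:induction_weight}), so that
\begin{align*}
\Big|\sum_{i=1}^n \F_i(\tau)\,x_{i,d}\Big|\le \exp(O(B^2D))\cdot O(nRB),
\end{align*}
which is negligible against the Hoeffding lower bound $\sum_{i=1}^n y_i x_{i,d}\ge n\gamma-O(\sqrt{n}B^3)$; hence $\sum_{i=1}^n(\F_i(\tau)-y_i)x_{i,d}\le -O(n\gamma)$ \emph{uniformly in} $\tau$, and combining this aggregate bound with the uniform variance lower bound $\langle\S_{i,r}(\tau),x_i^{\circ 2}\rangle-\langle\S_{i,r}(\tau),x_i\rangle^2\ge \exp(-O(B^2D))\,v_{\min}$ gives the one-sided per-step gradient bound of Lemma~\ref{lem:grad_direction}; summing over $\tau$ and comparing with $|w_r(0)|$ then yields the threshold $t\ge\Omega(m/(\eta\lambda v_{\min}))$. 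In short, the missing ingredient in your write-up is the observation that in this NTK regime the \emph{outputs} $\F_i(\tau)$ (not the residuals) stay $O(R)$-close to zero, which makes the drift-sign argument a one-line aggregate estimate rather than a per-sample induction; without it (or a fully executed substitute) your proposal does not prove the asymmetric-learning claims.
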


\begin{proof}[Proof sketch of Theorem~\ref{thm:convergence:informal}]
    For the upper bound of $L(T)$, we can get the result by combining the result of Part 2 of Lemma~\ref{lem:induction_weight}, Part 1 of Lemma~\ref{lem:induction_loss} and taking the appropriate value of $m, \eta, T$. For the analysis of asymmetric learning, we can get the result by combining the result of Lemma~\ref{lem:grad_direction} and taking the appropriate value of $m, \eta, T$. Please see Lemma~\ref{thm:convergence} for the detailed proof of Lemma~\ref{thm:convergence:informal}.
\end{proof}

{\bf Residual Feature and Asymmetric Learning.} In our setting, the residual feature represents the feature of the last time step in time series data, which plays a crucial role in the next-step prediction task. For the input data $x\in \R^d$, we take $x_d$ as the residual feature. Our Theorem~\ref{thm:convergence:informal} suggests that as training progresses, the direction of the hidden layer weight update $w_r$ tends to align with the sign of the output layer parameters $a_r$. This implies that if $a_r = +1$, $w_r$ is likely to converge to a positive value as the training of the model progresses. Now, we consider the case where the residual feature $x_d$ and the next step label $y$ have the same sign; if $a_r = 1$ and after training progress, the parameter $w_r$ converges to a negative value, the attention score of the residual feature $x_d$ after Softmax function will be extremely small. As a result, the model will struggle to learn the residual feature. A similar analysis can be applied when $x_d$ and $y$ have opposite signs. It depends on $a_r$ taking the value of $-1$ to learn the residual feature effectively. 

Based on the analysis above, we have our second main result as follows:
\begin{theorem}[Attention fails to learn residual feature, informal version of Theorem~\ref{thm:residual_failure}]\label{thm:residual_failure_informal}
    Let all pre-conditions in Theorem~\ref{thm:convergence} hold.For any Gaussian vector $x \sim \mathcal{N}(0, {\sigma'}^2 \cdot I_d)$. For all $r \in [m]$ that satisfies $a_r = -1$, with a probability at least $1 - \delta$, we have:
    \begin{align*}
        \E[ {\sf softmax}_d(x_d \cdot w_r(t) \cdot x) ] \leq \E[ {\sf softmax}_k(x_d \cdot w_r(t) \cdot x) ]
    \end{align*}
\end{theorem}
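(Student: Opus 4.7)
The plan is to split the argument into a training-side step, where the sign of $w_r(t)$ is pinned down via Theorem~\ref{thm:convergence:informal}, and a test-side step, where Gaussian symmetry of $x$ is exploited to compare softmax coordinates; the two will be combined by conditioning on the training event.

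First I would invoke Part 2 of Theorem~\ref{thm:convergence:informal} for the given $r$ with $a_r = -1$: once $t \geq \Omega(m/(\eta \lambda v_{\min}))$, the event $\{w_r(t) < 0\}$ holds with probability at least $1-\delta$ over the training randomness. Conditioning on this event, I treat $w := w_r(t)$ as a fixed negative scalar and write the softmax logits as $L_k := w \cdot x_d \cdot x_k$ for $k \in [d]$. The crucial observation is that $L_d = w x_d^2 \leq 0$ is always non-positive, whereas $L_k$ for $k < d$ is symmetric around $0$ (conditionally on $x_d$), so position $d$ should carry less attention mass than any other position.

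To make this rigorous I would exploit the exchangeability of $x_1, \ldots, x_{d-1}$ (which reduces the goal to comparing $\E[{\sf softmax}_d]$ with $\E[{\sf softmax}_1]$) and then symmetrize by flipping the sign of $x_1$. Conditioning on $x_d$, $|x_1|$, and $x_2, \ldots, x_{d-1}$, Gaussian symmetry makes $\epsilon := \mathrm{sign}(x_1) \in \{\pm 1\}$ uniform and independent of everything else. Writing $a := e^{L_d}$, $b := e^{|L_1|}$, $c := 1/b$, and $R := \sum_{k=2}^{d-1} e^{L_k}$, the conditional average over $\epsilon$ of ${\sf softmax}_d - {\sf softmax}_1$ is a fraction with positive denominator whose numerator, via the identity $bc = 1$, equals $(2a - (b+c))(a+R) + a(b+c) - 2$. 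Since $a \leq 1$ (from $w x_d^2 \leq 0$) and $b+c \geq 2$ (AM-GM with $bc=1$), the slope $2a - (b+c)$ is non-positive, so the numerator is maximized at $R = 0$, where it reduces to $2a^2 - 2 \leq 0$. Integrating this pointwise inequality and using exchangeability yields $\E[{\sf softmax}_d] \leq \E[{\sf softmax}_k]$ for every $k \neq d$.

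The main obstacle is isolating the sign-flip identity $bc = 1$: everything downstream rests on the fact that flipping the sign of $x_1$ negates $L_1$ and hence exchanges $b$ with $c$, which is what collapses the $a(b+c)$ and $-2bc$ cross terms and leaves a tractable affine function of $R$. A direct Jensen-type bound on the softmax goes the wrong way, so this symmetrization is essential for converting the asymmetry between $L_d$ (always non-positive) and $L_k$ (symmetric around zero) into the desired attention comparison.
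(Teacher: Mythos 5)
Your proposal is correct, and after the shared first step it takes a genuinely different route from the paper. Both proofs begin by invoking Part 2 of Theorem~\ref{thm:convergence:informal} to get $w_r(t)<0$ on an event of probability at least $1-\delta$, and then establish the expectation inequality conditionally on that event. From there the paper sets $h_k := {\sf softmax}_k - {\sf softmax}_d$, asserts convexity of $h_k$ over the relevant logit coordinates, and applies Jensen's inequality to reduce the comparison to the mean logit vector, whose coordinates are zero except for a negative $d$-th entry (since $\E[x_d x_k \cdot w_r(t)]=0$ while $\E[x_d^2\cdot w_r(t)]<0$), where the inequality is immediate. You instead exploit exchangeability of $x_1,\dots,x_{d-1}$ to reduce to $k=1$ and symmetrize over the sign of $x_1$ conditionally on $x_d, |x_1|, x_2,\dots,x_{d-1}$: with $a=e^{w_r(t)x_d^2}\le 1$, $bc=1$, and $b+c\ge 2$, the conditional average of ${\sf softmax}_d-{\sf softmax}_1$ has numerator $(2a-(b+c))R+2a^2-2\le 0$ over a positive denominator, and integrating finishes; your algebra checks out, and $R\ge 0$ is all that is needed. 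The trade-off: the paper's Jensen step is shorter but rests on a convexity claim that is delicate as stated, since a difference of softmax coordinates is not a convex function of the logits in general (in two dimensions it is $\tanh((z_1-z_2)/2)$, which is neither convex nor concave), whereas your sign-flip symmetrization is elementary, pointwise, and requires no convexity, at the cost of a slightly longer algebraic verification.
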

Please see Lemma~\ref{thm:residual_failure} for the proof details of this theorem.
\section{Attention Fails in Sign-Inconsistent Next-step-prediction}\label{sec:inconsistency}
In this section, we define the sign-inconsistent next-step-prediction evaluation task and provide a theoretical analysis of the attention mechanism and residual linear model based on this task. Specifically, we introduce this task in Section~\ref{sec:next_step_prediction_task}. We present the Residual Linear Network in Section~\ref{sec:residual_LN}. In Section~\ref{sec:generalization_boundary}, we give each model a theoretical boundary on this task.

\subsection{Sign-Inconsistent Next-step-prediction}\label{sec:next_step_prediction_task}
In this section, we present a new task named Sign-Inconsistent Next-step-prediction. In subsequent sections, we will analyze the theoretical capabilities of the attention mechanism for this task. We define the task formally as follows:
\begin{definition}
    Let the residual state space data model be defined as Definition~\ref{def:ssm_informal}, then we define the {\bf sign-inconsistent next-step-prediction} evaluation task, considering $d = N$:
    \begin{enumerate}
        \item Sample $h_{\rm test} \sim \mathcal{N}(0, I_N)$. Generate $u_{{\rm test}, i} = [u_{{\rm test}, i, 1}, u_{{\rm test}, i, 2}, \cdots, u_{{\rm test}, i, d}, u_{{\rm test}, i, d+1}]^\top \in \R^{d+1}$ via Claim~\ref{clm:res_ssm_informal}.
        \item If $u_{{\rm test}, i, d} \cdot u_{{\rm test}, i, d+1} \ge 0$, redo 1.
        \item Sample $\xi_{{\rm test}, i} \sim \mathcal{N}(0, \sigma \cdot I_{d+1})$ where $\sigma \ge 0$ is a small constant.
        \item $x_{{\rm test}, i} = [x_{{\rm test}, i, 1}, x_{{\rm test}, i, 2}, \cdots, x_{{\rm test}, i, d}]^\top \in \R^d$ where $x_{{\rm test}, i, k} := u_{{\rm test}, i, k} + \xi_{{\rm test}, i, k}$ for $k \in [d]$.
        \item $y_{{\rm test}, i} := u_{{\rm test}, i, d+1} + \xi_{{\rm test}, i, d+1} \in \R$.
    \end{enumerate}
    We define the test dataset as $\mathcal{D}_{\rm test} := \{ (x_{{\rm test}, i}, y_{{\rm test}, i}) \}_{i=1}^{n_{\rm test}} \subset \R^d \times \R$. Especially, $\{ x_{i} \}_{i=1}^{n} \cap \{ x_{{\rm test}, i} \}_{i=1}^{n_{\rm test}} = \emptyset$.

    For any mapping function ${\cal H}: \R^d \rightarrow \R$, the OOD risk is given by:
    \begin{align*}
        {\cal R}({\cal H}) := \E_{h_{{\rm test}, i} \sim \mathcal{N}(0, I_N)}[ (H(x_{{\rm test}, i}) -  y_{{\rm test}, i})^2 ]
    \end{align*}
\end{definition}

\subsection{Residual Linear Network}\label{sec:residual_LN}
In this section, we present residual linear network\cite{zczx23} mainly to compare it with the attention mechanism on the Sign-Inconsistent next-step-prediction evaluation task. Specifically, the residual linear network first subtracts the last value of the sequence from the sequence from the input data. This operation removes certain biases or trends in the data, aiming to eliminate unnecessary components that might negatively impact prediction accuracy. Then, the data is passed through a linear layer. This layer can apply more intricate transformations to capture the underlying linear patterns within the data. Finally, the subtracted part will be added back. The purpose of this step is to retain the original characteristics of the data after removing some of the shifts while still benefiting from the transformations applied. The formal definition of the residual linear network is as follows:
\begin{definition}
    Given an input vector $x \in \R^d$. Denote $w_{\rm lin} \in \R^d$ as the model weight. The residual linear network is defined by: 
    \begin{align*}
        f_{\rm lin}(x) := \langle w_{\rm lin}, x - x_{d} \cdot {\bf 1}_d \rangle + x_d.
    \end{align*}
\end{definition}

\subsection{Generalizations}\label{sec:generalization_boundary}

This section provides a proposition demonstrating the bound on the OOD risk of the residual linear network and attention mechanisms for the Sign-Inconsistent next-step-prediction evaluation task.
\begin{proposition}[Informal version of Proposition~\ref{pro:bound_formal}]\label{pro:bound}
    We have:
    \begin{itemize}
        \item Part 1. Let all pre-conditions in Theorem~\ref{thm:convergence:informal} hold, there is no $w(t) \in \R^m$ that satisfies ${\cal R}(f) \leq \wt{O}(\sigma^2)$. 
        \item Part 2. There exists and exists only one $w_{\rm lin}^*$ that satisfies $\sum_{k=1}^{d-1} w_{{\rm lin}, k}^* \cdot {\cal P}_k = {\cal P}_{d+1} - {\cal P}_d$. Hence, we have ${\cal R}(f_{\rm lin}) \leq \wt{O}(\sigma^2)$.
    \end{itemize}
\end{proposition}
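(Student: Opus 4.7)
The plan for Part 1 is to combine the asymmetric learning guarantee from Theorem~\ref{thm:convergence:informal} with the failure-to-attend lemma Theorem~\ref{thm:residual_failure_informal} to pin down the functional form of any trained network under the theorem's preconditions, and then lower-bound its OOD risk on the sign-inconsistent test distribution. First I would partition the $m$ hidden neurons by the sign of $a_r$: for $r$ with $a_r=+1$, Theorem~\ref{thm:convergence:informal} gives $w_r(t)>0$ w.h.p., so $w_r x_d^2>0$ and the softmax in neuron $r$ concentrates on coordinate $d$, making its contribution $\approx +x_d$; for $r$ with $a_r=-1$, $w_r(t)<0$ and Theorem~\ref{thm:residual_failure_informal} implies the softmax weight on coordinate $d$ is dominated by the weights on coordinates in ${\cal T}_{\rm bg}$, so the contribution is $\approx -\sum_{k\in{\cal T}_{\rm bg}}\beta_{r,k}x_k$ with nonnegative $\beta_{r,k}$ summing to $1$. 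Averaging over both groups with the $1/\sqrt{m}$ normalization and using standard concentration on the Rademacher signs $\{a_r\}$, any reachable $f(x,w(t),a)$ takes the form $\alpha_1 x_d+\alpha_2\langle\beta,x_{[1:d-1]}\rangle+\mathrm{err}$ with $\alpha_1\ge 0$, $\alpha_2\le 0$, and $\beta$ a probability vector on ${\cal T}_{\rm bg}$.

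Substituting the SSM representation $x_k=\langle{\cal P}_k,h\rangle+\xi_k$ and $y=\langle{\cal P}_{d+1},h\rangle+\xi_{d+1}$ and conditioning on the sign-inconsistent event $\{u_d u_{d+1}<0\}$, the OOD risk is lower-bounded by a positive constant times
\begin{align*}
\Big\|\alpha_1{\cal P}_d+\alpha_2\sum_{k\in{\cal T}_{\rm bg}}\beta_k{\cal P}_k-{\cal P}_{d+1}\Big\|_2^2.
\end{align*}
Using $\|{\cal P}_{d+1}\|_2=1$, $\langle{\cal P}_d,{\cal P}_{d+1}\rangle=\gamma<1$, and $\langle{\cal P}_d,{\cal P}_k\rangle=0$ for $k\in{\cal T}_{\rm bg}$, a Pythagorean decomposition of the target yields a positive constant lower bound (depending on $1-\gamma$) that dominates any $\wt{O}(\sigma^2)$, giving Part 1.

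For Part 2, existence and uniqueness of $w_{\rm lin}^*$ follow from linear algebra: the residual SSM construction ensures $\{{\cal P}_k\}_{k=1}^{d-1}$ is linearly independent and that ${\cal P}_{d+1}-{\cal P}_d$ lies in its span (via the specific choice of $({\cal A},{\cal B},{\cal C})$ in the formal version of Claim~\ref{clm:res_ssm_informal}), so the system $\sum_{k=1}^{d-1}w_{{\rm lin},k}^*{\cal P}_k={\cal P}_{d+1}-{\cal P}_d$ has exactly one solution. Plugging $w_{\rm lin}^*$ into $f_{\rm lin}$ and expanding $x_k=u_k+\xi_k$ reduces the prediction error $f_{\rm lin}(x)-y$ to a linear combination of the independent Gaussian noise coordinates $\xi_k$; taking squared expectation and using $\|w_{\rm lin}^*\|_2=O(1)$ (controlled by the Gram matrix conditioning of $\{{\cal P}_k\}$) gives ${\cal R}(f_{\rm lin})\le\wt{O}(\sigma^2)$.

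The hardest step is the lower bound in Part 1: converting the qualitative sign constraints $\alpha_1\ge 0$, $\alpha_2\le 0$ together with the convex-combination structure of $\beta$ into a quantitative gap against the target ${\cal P}_{d+1}$. Conditioning on the sign-inconsistent event tilts the distribution of $h$, so the argument cannot simply invoke an unconditional $L^2$ approximation lower bound; it must exploit the orthogonality in Property 2 of the residual SSM together with the one-sided constraints on $(\alpha_1,\alpha_2)$ to rule out any configuration within the reachable class that brings the OOD error below the noise floor.
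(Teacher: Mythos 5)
Your Part~1 follows essentially the same route as the paper's own argument: the paper likewise combines the asymmetric-learning conclusion (Theorem~\ref{thm:residual_failure}) with the feature geometry of Claim~\ref{clm:res_ssm} to conclude that the trained attention can only realize maps built from the background directions ${\cal P}_1,\dots,{\cal P}_{d-1}$, while ${\cal P}_{d+1}$ has a nonzero component $\gamma$ along ${\cal P}_d$, which is orthogonal to that span, so the target is unreachable. Be aware, however, that your intermediate structural claims go beyond what the cited results actually give: Theorem~\ref{thm:convergence:informal} only pins down the sign of $w_r(t)$, not that the softmax of an $a_r=+1$ neuron concentrates on coordinate $d$ (that needs both $x_d x_d \ge x_d x_k$ for all $k$ and a large $|w_r|$), and Theorem~\ref{thm:residual_failure_informal} is an expectation statement for a Gaussian input, not for the distribution conditioned on the sign-inconsistent event $u_d u_{d+1}<0$; each of these would need its own proof before your Pythagorean lower bound can be executed. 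The paper sidesteps this by arguing only non-representability rather than deriving a quantitative functional form for $f$.

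The genuine gap is in Part~2. You assert that ${\cal P}_{d+1}-{\cal P}_d$ lies in the span of the raw background features $\{{\cal P}_k\}_{k=1}^{d-1}$. Under the paper's own data model this is impossible: Property~2 of Claim~\ref{clm:res_ssm} makes every ${\cal P}_k$ with $k\le d-1$ orthogonal to ${\cal P}_d$, while $\langle {\cal P}_{d+1}-{\cal P}_d,\,{\cal P}_d\rangle=\gamma-1\neq 0$, so no choice of $({\cal A},{\cal B},{\cal C})$ consistent with Properties~2--3 can make that system solvable (it is also in tension with your own Part~1, which relies on exactly this orthogonality to make the target unreachable). The mechanism the paper actually uses in Proposition~\ref{pro:bound_formal} is the residual/differencing structure of $f_{\rm lin}$: since $f_{\rm lin}(x)=\langle w_{\rm lin},x-x_d{\bf 1}_d\rangle+x_d$, the effective features are ${\cal P}_k-{\cal P}_d$, and zero signal error requires $\sum_{k=1}^{d-1}w_{{\rm lin},k}({\cal P}_k-{\cal P}_d)={\cal P}_{d+1}-{\cal P}_d$. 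This differenced system is exactly solvable, and uniquely so when $d=N$, because Property~3 (residual features, with $d_0=1$) gives ${\cal P}_{d+1}-{\cal P}_d=\frac{1}{d(d-1)}\sum_{k\in{\cal T}_{\rm bg}}({\cal P}_k-{\cal P}_d)$; the paper then writes $w_{\rm lin}^*$ via a least-squares formula and the test error collapses to the noise terms, giving ${\cal R}(f_{\rm lin})\le O(\sigma^2)$. Your proposal never invokes Property~3 or the differenced features, which is precisely the ingredient that makes the residual linear network succeed where raw-feature regression cannot. (Your handling of the residual noise via a bound on $\|w_{\rm lin}^*\|_2$ is fine, and in fact slightly more careful than the paper's, which states the error as exactly $\xi_{{\rm test},d}-\xi_{{\rm test},d+1}$ and drops the noise carried through the weights.)
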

Please see Proposition~\ref{pro:bound_formal} for the detailed proof of this proposition.

{\bf Remark.} Part 1 of Proposition~\ref{pro:bound} shows that even if the width of the hidden layers is sufficient, and the model is trained for a long enough time, the attention mechanism fails to reduce the OOD risk to a sufficiently low level in this task. In contrast, Part 2 shows that a set of parameters exists for the residual linear model that can reduce the OOD risk to the same bound in this task. In conclusion, we theoretically prove that the attention mechanism performs worse than a simple residual linear model on OOD generalization tasks. This proof provides insight into why Transformers underperform on TSF tasks compared to simple linear models.

\section{Discussion}\label{sec:discussion}

Based on our theoretical results above, we provide a discussion about {\bf Asymmetric Learning} in the case of the multi-dimensional transformer in Section~\ref{sec:multi_dim_case} and a discussion about some potential solutions to {\bf Asymmetric Learning} in Section~\ref{sec:potential_solutions}.

\subsection{Asymmetric Learning in Multi-Dimension Case}\label{sec:multi_dim_case}

We consider the real-world case of training a transformer-based model. For each layer of attention, we define: 
\begin{align*}
    {\sf Attn }( X, W ) := S X W_V ,
\end{align*}
where $ X \in \R^{ L \times d } $ is the output of previous layer, $S := {\sf softmax}(XWX^\top) \in \R^{n \times n}$ is the attention matrix, $ L $ is sequence length and $ d $ is dimension. Moreover, $ W := W_Q W_K^\top / \sqrt{ d } \in \R^{ d \times d }$ is the combination of query and key projections, $ W_V \in \R^{ d \times d }$ is the value projection. Therefore, by simple calculation, we have the gradient of $ W $ in the back-propagation process. Given the gradient of the next layer $ G \in \R^{ L \times d } $, we have:
\begin{align*}
    \frac{ \d L }{ \d W } 
    = & ~ \sum_{ i=1 }^L \sum_{ j=1 }^d \frac{ \d }{ \d W } {\sf Attn }_{ i, j }( X ) \cdot G_{ i, j } \\
    = & ~ X^\top ( S \odot ( G W_V^\top X^\top - ({\sf Attn}(X, W) \odot G) {\bf 1}_{d \times n} ) )X,
\end{align*}
where the $L$ denotes the training objective of the whole model. Thus, we update $W$ using the learning rate $\eta > 0$:
\begin{align*}
    X (W - \eta \frac{ \d L }{ \d W } )X^\top = & ~XWX^\top - \eta XX^\top ( S \odot ( G W_V^\top X^\top - ({\sf Attn}(X, W) \odot G) {\bf 1}_{d \times n} ) )XX^\top.
\end{align*}
Since $XX^\top$ is a positive definite matrix, attention matrix $S$ is an all-positive matrix and every column of matrix $({\sf Attn}(X, W) \odot G) {\bf 1}_{d \times n}$ is provably to equal, affecting little to attention matrix, we suggest the updated attention matrix will greatly depend on the term $GW_V^\top X^\top$.

We now focus on diagonal entries, so-called local entries, for $i \in [n]$, we have the following two cases: 
\begin{itemize}
    \item {\bf Case 1.} When $\langle G_i, W_V^\top X_i \rangle > 0$, attention fails to allocate larger values on local entries but attends to other entries.
    \item {\bf Case 2.} When $\langle G_i, W_V^\top X_i \rangle < 0$, attention successfully allocates larger values to the local feature.
\end{itemize}

\subsection{Potential Solutions}\label{sec:potential_solutions}

We provide several potential solutions as follows:
\begin{itemize}
    \item {\bf Differential Transformer. } \cite{ydx+24} introduces Differential Transformer that implements ${\sf DiffAttn}(X, W) := (S_1 - \lambda S_2) X W_V$ where $\lambda \in (0, 1)$ is a trainable parameter, $S_1 := {\sf softmax}(XW_1X)$ and $ S_2 := {\sf softmax}(XW_2X)$. It amplifies attention to the relevant context while canceling noise, which might be a potential approach to relieving the asymmetric learning in attention.
    \item {\bf Patching.} Due to the sensitivity of the TSF tasks, patching is a constructive trick that enhances the capability of the attention mechanism to catch precise features \cite{d20}. Usually, a patching layer is a reshape transformation, denoted $P(\cdot)$. For a time series $x \in \R^T$ for $T$ steps, $P(x) \in \R^{L \times d}$ not only reshapes the data but also combines padding and reusing it.
    \item {\bf Rotary Position Embedding (RoPE).} Since the property of long-term decay of RoPE \cite{slp+21}, this solution will force the attention to allocate larger values to the local feature. On the other hand, our theoretical results also emphasize the importance of time-varying inductive bias in attention.
    \item {\bf Gradient Correction, Regularization and Weight Decay. } We believe utilizing the correction or regularization term could help relieve the situation in {\bf Case 2}. For instance, Adam optimizer performs better than SGD in training a transformer. Few prior works have discussed the importance of these terms on fast training and generalization \cite{pl23, zfj+20, zkv+19, kcls23}.
\end{itemize}

\section{Conclusion}

In this work, we give the first theoretical explanation of the learning mechanism behind the transformer-based models' inefficient performance on TSF tasks. We focus on the attention network to predict the next-step in the time series, whereas we find that the value of output-layer $a_r$ (value projection in attention network) will lead the asymmetric learning to the hidden-weights $w_r$, and it further leads the softmax scores on some important features unavoidably being low value. That is, attention fails to learn the most common behavior in TSF tasks, residual feature (a.k.a differential feature). Our theoretical confirmation could provide more constructive insights for practitioners to design and improve more efficient transformer-based architecture for the field of time series.

\section*{Acknowledgement}

We thank all anonymous reviewers for their constructive feedback and helpful discussion.

\ifdefined\isarxiv

\bibliographystyle{alpha}
\bibliography{ref}
\else

\bibliography{ref}

\fi

\newpage
\onecolumn
\appendix
\begin{center}
	\textbf{\LARGE Appendix }
\end{center}

\ifdefined\isarxiv

\else

{
\hypersetup{linkcolor=black}
\tableofcontents
\bigbreak
\bigbreak
\bigbreak
\bigbreak
\bigbreak
}

\fi

\section{Notations}

We denote the Gaussian distribution with mean $\mu$ and covariance $\Sigma$ as ${\cal N}(\mu,\Sigma).$
For any positive integer $n$, we denote the set $\{1,2,...,n\}$ as $[n]$.

In our paper, we use $\E[]$ to denote expectation. We use $\Pr[]$ to denote probability. Given a vector $z\in \R^n$, we represent the $\ell_2$ norm of $z$ as  $\|z\|_2:=( \sum_{i=1}^n z_i^2 )^{1/2}$. We denote the $\ell_1$ norm of $z$ as $\|z\|_1:=\sum_{i=1}^n |z_i| $ and $\|z\|_0$ as the number of non-zero entries in $z$, $\| z \|_{\infty}$ as $\max_{i \in [n]} |z_i|$.  We use $z^\top$ to denote the transpose of a $z$. We use $\langle \cdot, \cdot \rangle$ to denote the inner product. Given a matrix $A \in \R^{n \times d}$, we use $\vect(A)$ to represent a length $nd$ vector. We use $\|A\|_F:=( \sum_{i\in [n], j\in [d]} A_{i,j}^2 )^{1/2}$ to represent the Frobenius norm of $A$. 
For a function $f(x)$, we say $f$ is $L$-Lipschitz if $\| f(x) - f(y) \|_2 \leq L \cdot \| x - y \|_2$. Let ${\cal D}$ denote a distribution. We use $x \sim {\cal D}$ to denote that we sample a random variable $x$ from distribution ${\cal D}$. The p.s.d is denoted the positive-semidefinite matrix.

As we have multiple indexes, to avoid confusion, we usually use $i,j \in [n]$ to index the training data, $\ell \in [d]$ to index the output dimension, $r \in [m]$ to index neuron number.  

Given a matrix $X \in \R^{N \times N}$, we define $X^0 = I_N$, $X^1 = X$, $X^2 = X \cdot X$, etc. In this paper, $d$ represents the number of time steps in the time series.

\section{Probability Tools and Facts}
Firstly, we present Hoeffding bound lemma as in \cite{h94}.
\begin{lemma}[Hoeffding bound, \cite{h94}]\label{lem:hoeffding_bound}
Let $Z_1, \dots, Z_n$ be $n$ independent variables bounded in $[a_i, b_i]$ for $a_i, b_i \in \R$. Define $Z := \sum_{i=1}^n Z_i$, then we will have:
\begin{align*}
    \Pr[|Z-\E[Z]|\geq t] \leq 2 \exp(-\frac{2t^2}{\sum_{i=1}^n(b_i-a_i)^2})
\end{align*}
\end{lemma}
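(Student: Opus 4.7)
The plan is to prove this via the standard Chernoff--Cram\'er exponential moment method, combined with Hoeffding's MGF lemma for bounded random variables. First I would reduce to a one-sided tail bound: by symmetry (applying the same argument to $-Z_1, \dots, -Z_n$ in place of $Z_1, \dots, Z_n$), it suffices to prove $\Pr[Z - \E[Z] \ge t] \le \exp(-2t^2 / \sum_{i=1}^n (b_i - a_i)^2)$, and the factor of $2$ in the lemma then comes from a union bound on the two tails.

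To handle the one-sided bound, I would apply Markov's inequality to the moment generating function. For any $s > 0$,
\begin{align*}
\Pr[Z - \E[Z] \ge t] = \Pr[e^{s(Z - \E[Z])} \ge e^{st}] \le e^{-st} \cdot \E[e^{s(Z - \E[Z])}].
\end{align*}
Using independence of the $Z_i$, the MGF factorizes as $\E[e^{s(Z-\E[Z])}] = \prod_{i=1}^n \E[e^{s(Z_i - \E[Z_i])}]$. The core technical ingredient is then Hoeffding's lemma: if $X$ is a zero-mean random variable supported in $[a,b]$, then $\E[e^{sX}] \le \exp(s^2 (b-a)^2 / 8)$. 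I would prove this sublemma by writing $X$ as a convex combination $X = \lambda b + (1-\lambda)a$ with $\lambda = (X-a)/(b-a)$, applying convexity of $x \mapsto e^{sx}$ to get $e^{sX} \le \lambda e^{sb} + (1-\lambda) e^{sa}$, taking expectations (using $\E[X] = 0$), and then bounding the resulting log-MGF $\varphi(s) := \log(p e^{s(1-p)(b-a)} + (1-p) e^{-sp(b-a)})$ (with $p = -a/(b-a)$) via a second-order Taylor expansion, showing $\varphi''(s) \le (b-a)^2/4$ uniformly in $s$, so that $\varphi(s) \le s^2 (b-a)^2/8$.

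Combining these pieces yields
\begin{align*}
\Pr[Z - \E[Z] \ge t] \le \exp\Bigl(-st + \tfrac{s^2}{8} \sum_{i=1}^n (b_i - a_i)^2\Bigr)
\end{align*}
for every $s > 0$. Optimizing over $s$ by setting $s = 4t / \sum_{i=1}^n (b_i - a_i)^2$ gives the exponent $-2t^2 / \sum_{i=1}^n (b_i - a_i)^2$, which is the desired one-sided bound. Doubling via the symmetry argument above produces the factor $2$ and completes the proof.

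The main obstacle is the sublemma bounding $\varphi''(s)$; showing $\varphi''(s) \le (b-a)^2/4$ requires recognizing that $\varphi''$ is the variance of a shifted Bernoulli-like distribution on $\{a, b\}$ under a tilted measure, which is maximized by the range-squared over $4$. Every other step is routine algebraic optimization and application of Markov/independence, so the bulk of the care in writing the proof goes into this convexity/variance argument.
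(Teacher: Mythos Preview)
Your proof is correct and is the standard Chernoff--Cram\'er argument for Hoeffding's inequality. Note, however, that the paper does not supply its own proof of this lemma: it is stated as a cited probability tool from \cite{h94} with no accompanying argument, so there is nothing in the paper to compare your approach against.
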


Then, we present some useful facts which will be used in our paper.

\begin{fact}\label{fac:anti_concen_gaussian}
    For a Gaussian variable $x \sim \mathcal{N}(0, \sigma^2 \cdot I_d )$ where $\sigma \in \R$, then for any $t > 0$, we have:
    \begin{align*}
        \Pr[x \leq t] \leq \frac{2t}{\sqrt{2\pi} \sigma }
    \end{align*}
\end{fact}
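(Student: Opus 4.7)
The plan is to establish this as a standard Gaussian anti-concentration estimate by directly bounding the probability via its density. The result is essentially a consequence of the fact that the one-dimensional Gaussian density $\varphi_\sigma(u) = \frac{1}{\sqrt{2\pi}\,\sigma} e^{-u^2/(2\sigma^2)}$ attains its maximum value $\frac{1}{\sqrt{2\pi}\,\sigma}$ at $u = 0$, so for any interval of length $\ell$ the probability assigned by $\mathcal{N}(0,\sigma^2)$ is at most $\ell/(\sqrt{2\pi}\,\sigma)$.

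First, I would reduce to the one-dimensional case. Since the target bound $\frac{2t}{\sqrt{2\pi}\,\sigma}$ depends only on a scalar Gaussian, the intended reading of the statement is for a single coordinate of $x \sim \mathcal{N}(0, \sigma^2 I_d)$ (equivalently, for any fixed unit projection), which is itself distributed as $\mathcal{N}(0, \sigma^2)$. The inequality as written can be read as the anti-concentration bound $\Pr[\,|x| \le t\,] \le \frac{2t}{\sqrt{2\pi}\,\sigma}$; the one-sided form $\Pr[\,0 \le x \le t\,] \le \frac{t}{\sqrt{2\pi}\,\sigma}$ follows immediately by symmetry and is what is typically invoked in downstream arguments inside the NTK-style analysis of the main body.

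Next, I would execute the density bound. Writing
\begin{align*}
\Pr[\,|x| \le t\,] \;=\; \int_{-t}^{t} \frac{1}{\sqrt{2\pi}\,\sigma}\, e^{-u^2/(2\sigma^2)} \,\d u,
\end{align*}
I would use the crude pointwise estimate $e^{-u^2/(2\sigma^2)} \le 1$ for all $u \in \R$, giving
\begin{align*}
\Pr[\,|x| \le t\,] \;\le\; \int_{-t}^{t} \frac{1}{\sqrt{2\pi}\,\sigma} \,\d u \;=\; \frac{2t}{\sqrt{2\pi}\,\sigma},
\end{align*}
which matches the claimed bound. No concentration inequality or advanced tail estimate is needed; only monotonicity of the exponential and a direct length-times-max-density computation.

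There is no substantive obstacle here; this is a textbook estimate. The only point that requires a sentence of care is the notational interpretation of the event $\{x \le t\}$ when $x$ is formally declared a vector in $\R^d$. I would simply remark up front that the bound is applied in the paper to scalar-valued linear functionals of $x$ (for example, inner products $\langle v, x \rangle$ with a fixed vector $v$, which are one-dimensional centered Gaussians with variance $\sigma^2 \|v\|_2^2$), so the scalar anti-concentration bound is what is actually used, and the claimed inequality follows from the display above.
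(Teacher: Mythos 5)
Your proof is correct. The paper itself states this fact without proof (it appears in the list of standard probability tools), so there is no authorial argument to diverge from; your length-times-maximum-density computation, bounding $e^{-u^2/(2\sigma^2)}\le 1$ on $[-t,t]$, is exactly the standard argument one would supply. Your interpretive remarks are also the right ones: as literally written the event $\{x\le t\}$ would make the claim false for small $t$ (a one-dimensional CDF is at least $1/2$ for $t>0$), so reading it as the anti-concentration event $\{|x|\le t\}$ for a scalar coordinate or a fixed unit projection of the vector $x$ is the only sensible interpretation, consistent with the fact's name and with how such bounds are used in NTK-style analyses.
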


\begin{fact}\label{fac:sum_distribution}
    For $n$ variables $X_i \sim \mathcal{N}(0, \sigma_i^2)$ where $\sigma_i \in \R$ for $i \in [n]$, we have:
    \begin{align*}
        \sum_{i=1}^n X_i \sim \mathcal{N}(0, \sum_{i=1}^n \sigma_i^2)
    \end{align*}
\end{fact}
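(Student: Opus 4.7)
The plan is to prove this classical closure property of the Gaussian family via characteristic functions, which is the cleanest route and avoids juggling convolutions of densities. I will first remark that the statement implicitly assumes the $X_i$ are mutually independent (which is the standard convention when summing random variables with prescribed marginals and is the setting in which the conclusion holds); under this assumption the argument is three short steps.

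First, I will recall the characteristic function of a centered Gaussian: for $X_i \sim \mathcal{N}(0, \sigma_i^2)$, a direct computation (completing the square in the integrand of $\E[e^{\i t X_i}] = \int e^{\i t x} \frac{1}{\sqrt{2\pi}\sigma_i} e^{-x^2/(2\sigma_i^2)}\,\d x$) yields
\begin{align*}
\phi_{X_i}(t) = \exp\!\Big(-\tfrac{1}{2}\sigma_i^2 t^2\Big), \qquad t \in \R.
\end{align*}
I will cite this as a standard identity rather than re-derive it in detail. Second, I will use the fact that for independent random variables the characteristic function of the sum factorizes as the product of the individual characteristic functions, so
\begin{align*}
\phi_{\sum_{i=1}^n X_i}(t) = \prod_{i=1}^n \phi_{X_i}(t) = \exp\!\Big(-\tfrac{1}{2}\Big(\sum_{i=1}^n \sigma_i^2\Big) t^2\Big).
\end{align*}

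Third, I will invoke the uniqueness theorem for characteristic functions (Lévy's inversion / uniqueness theorem): since the right-hand side is exactly the characteristic function of a $\mathcal{N}(0,\,\sum_{i=1}^n \sigma_i^2)$ random variable, the distributions must coincide, giving the claim. I do not anticipate a genuine obstacle in this proof; the only subtlety worth flagging is the independence assumption (without it one can easily construct counterexamples where the sum is non-Gaussian even though each marginal is Gaussian). As an alternative, one could proceed by induction on $n$: the base case $n=1$ is trivial, and the inductive step reduces to the $n=2$ case, which follows from computing the convolution $(p_{X_1} \ast p_{X_2})(z) = \int \frac{1}{2\pi \sigma_1 \sigma_2} e^{-x^2/(2\sigma_1^2)} e^{-(z-x)^2/(2\sigma_2^2)}\,\d x$ and recognizing, after completing the square in $x$, a Gaussian density in $z$ with variance $\sigma_1^2 + \sigma_2^2$. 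I would favor the characteristic-function route since it handles all $n$ uniformly and sidesteps the algebra of the convolution integral.
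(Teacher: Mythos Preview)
Your argument via characteristic functions is correct and complete, including the explicit flag that independence of the $X_i$ is being assumed (without which the conclusion can fail). There is nothing to compare against: the paper states this as a \emph{Fact} without proof, treating it as a standard probability tool, so any valid proof you supply goes beyond what the paper provides.
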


\begin{fact}\label{fac:inner_product_distribution}
    Given a Gaussian vector $x \sim \mathcal{N}(0, \sigma^2 I_d)$ with $\sigma \in \R$, for any fixed $u \in \R^d$, we can show that:
    \begin{align*}
        \langle x, u \rangle \sim \mathcal{N}(0, \sigma^2 \| u \|_2^2 \cdot I_d)
    \end{align*}
\end{fact}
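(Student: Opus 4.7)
The plan is to reduce the claim to the already-stated Fact~\ref{fac:sum_distribution} (closure of independent one-dimensional Gaussians under addition) by decomposing the inner product coordinate-wise. First, I would expand $\langle x, u \rangle = \sum_{i=1}^d u_i x_i$, and observe that since $x \sim \mathcal{N}(0, \sigma^2 I_d)$, the coordinates $x_1, \dots, x_d$ are jointly Gaussian with diagonal covariance $\sigma^2 I_d$; because they are jointly Gaussian and uncorrelated, they are mutually independent, with $x_i \sim \mathcal{N}(0, \sigma^2)$ for each $i \in [d]$. This independence is the prerequisite needed to apply Fact~\ref{fac:sum_distribution}, so I would spell it out briefly.

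Next, for each fixed $u_i \in \R$, the scaled variable $u_i x_i$ is again a centered univariate Gaussian with variance $u_i^2 \sigma^2$. This is the standard affine-closure property of Gaussians, which one can verify in one line either from the characteristic function $\E[e^{\i t (u_i x_i)}] = \E[e^{\i (t u_i) x_i}] = \exp(-\tfrac{1}{2} \sigma^2 u_i^2 t^2)$ or from a change of variables in the density. Hence $u_i x_i \sim \mathcal{N}(0, u_i^2 \sigma^2)$, and the collection $\{u_i x_i\}_{i=1}^d$ remains mutually independent because scaling by constants preserves independence.

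Finally, I would invoke Fact~\ref{fac:sum_distribution} on these $d$ independent Gaussians to conclude
\begin{align*}
\langle x, u\rangle \;=\; \sum_{i=1}^d u_i x_i \;\sim\; \mathcal{N}\!\Big(0,\; \sum_{i=1}^d u_i^2 \sigma^2\Big) \;=\; \mathcal{N}(0,\; \sigma^2 \|u\|_2^2),
\end{align*}
where the last equality uses the definition $\|u\|_2^2 = \sum_{i=1}^d u_i^2$.

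There is no substantive obstacle; the only point worth flagging is cosmetic. The statement as written says $\mathcal{N}(0, \sigma^2 \|u\|_2^2 \cdot I_d)$, but $\langle x, u\rangle \in \R$ is a scalar, so the $I_d$ factor appears to be a typo and the intended conclusion is the scalar Gaussian $\mathcal{N}(0, \sigma^2 \|u\|_2^2)$. I would note this interpretation and then conclude as above.
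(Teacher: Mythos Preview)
Your proof is correct. The paper states this as a Fact without proof, so there is no argument to compare against; your coordinate-wise decomposition followed by an application of Fact~\ref{fac:sum_distribution} is the natural route, and your observation that the trailing $I_d$ in the stated conclusion is a typo (the inner product is scalar-valued) is accurate.
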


\begin{fact}\label{fac:gaussian_tail}
    For a variable $X \sim \mathcal{N}(0, \sigma^2)$,  with probability at least $1 - \delta$, we have:
    \begin{align*}
        |X| \leq C \sigma \sqrt{\log(1/\delta)}
    \end{align*}
\end{fact}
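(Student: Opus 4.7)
The plan is to reduce to a standard Gaussian and then apply a Chernoff-type bound on the moment generating function of $\mathcal{N}(0,1)$. First I would set $Z := X/\sigma$, which is distributed as $\mathcal{N}(0,1)$, so that the claim $|X| \leq C\sigma\sqrt{\log(1/\delta)}$ becomes the equivalent statement $|Z| \leq C\sqrt{\log(1/\delta)}$ with probability at least $1-\delta$. Working with the standard normal lets me avoid dragging $\sigma$ through the integrals.

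Next, I would establish the one-sided tail bound $\Pr[Z \geq t] \leq e^{-t^2/2}$ for every $t > 0$ using the Chernoff/MGF method: the identity $\E[e^{\lambda Z}] = e^{\lambda^2/2}$ combined with Markov's inequality gives
\begin{align*}
\Pr[Z \geq t] \leq e^{-\lambda t} \E[e^{\lambda Z}] = e^{-\lambda t + \lambda^2/2},
\end{align*}
and optimizing over $\lambda > 0$ at $\lambda = t$ yields the bound. By symmetry of the standard normal, the same inequality holds for $\Pr[Z \leq -t]$, so a union bound gives $\Pr[|Z| \geq t] \leq 2 e^{-t^2/2}$.

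Finally, I would invert this tail bound to solve for the threshold as a function of $\delta$. Setting $2 e^{-t^2/2} \leq \delta$ and rearranging gives $t \geq \sqrt{2 \log(2/\delta)}$. Therefore, with probability at least $1-\delta$,
\begin{align*}
|Z| \leq \sqrt{2\log(2/\delta)} \leq \sqrt{2(\log 2 + \log(1/\delta))} \leq C \sqrt{\log(1/\delta)}
\end{align*}
for an absolute constant $C$ (valid for all $\delta \in (0,1)$, absorbing the $\log 2$ into $C$). Multiplying back by $\sigma$ produces $|X| \leq C\sigma\sqrt{\log(1/\delta)}$, which is the desired conclusion.

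There is essentially no obstacle to this proof: it is a textbook sub-Gaussian tail estimate. The only point that requires any care is the final absorption of the additive $\log 2$ term into the multiplicative constant $C$, which needs $\delta$ to be bounded away from $1$; since the statement only claims existence of such a $C$ and the interesting regime is small $\delta$, this causes no issue.
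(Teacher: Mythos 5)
Your proof is correct: the paper states Fact~\ref{fac:gaussian_tail} without proof, and your Chernoff/MGF derivation of the tail bound $\Pr[|Z|\geq t]\leq 2e^{-t^2/2}$ followed by inversion is exactly the standard argument one would supply. The only caveat you already flag --- absorbing the additive $\log 2$ into $C$ requires $\log(1/\delta)$ bounded below --- is harmless here, since the paper only invokes this fact with $\delta \in (0,0.1)$, where $\log(1/\delta) > \log 2$ and a fixed constant (e.g.\ $C=2$) suffices.
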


\begin{fact}\label{fac:decompose_exp}
    For $x \in (-0.01, 0.01)$, the following approximation holds
    \begin{align*}
        \exp(x) = 1 + x + \Theta(1)x^2.
    \end{align*}
\end{fact}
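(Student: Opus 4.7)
The plan is to invoke Taylor's theorem with the Lagrange form of the remainder applied to the entire function $\exp$. This immediately yields that for every $x \in \R$ there exists a $\xi$ strictly between $0$ and $x$ such that
\begin{align*}
    \exp(x) = 1 + x + \frac{\exp(\xi)}{2} \cdot x^2.
\end{align*}
Hence the ``$\Theta(1)$'' coefficient appearing in the statement is nothing more than $\exp(\xi)/2$, and the entire content of the fact reduces to showing that this coefficient is bounded above and below by absolute positive constants whenever $x \in (-0.01, 0.01)$.

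For that boundedness step, I would simply note that the constraint $|x| < 0.01$ forces $|\xi| < 0.01$ as well, and then use the monotonicity of $\exp$ on $\R$ to conclude $\exp(\xi) \in (\exp(-0.01), \exp(0.01))$. Numerically this interval is contained in, say, $(0.99, 1.02)$, so $\exp(\xi)/2 \in (0.49, 0.51)$, which is a $\Theta(1)$ constant independent of $x$. Combining this with the Lagrange identity above closes the argument.

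If one prefers a derivation that avoids quoting Taylor with remainder directly, an alternative route is to work with the series representation $\exp(x) = \sum_{k \ge 0} x^k/k!$ and isolate the tail $R(x) := \exp(x) - 1 - x = x^2/2 + \sum_{k \ge 3} x^k/k!$. A straightforward triangle-inequality and geometric-series bound on the tail gives $|R(x) - x^2/2| \le |x|^3 \cdot C$ for a universal $C$, and since $|x|^3 \le 0.01 \cdot x^2$ on the interval, the ratio $R(x)/x^2$ stays within a small neighborhood of $1/2$, which again makes it $\Theta(1)$. Either route works and neither poses any real obstacle: the statement is a quantitative repackaging of the standard second-order Taylor expansion of $\exp$ on a compact neighborhood of zero, and the only ``content'' is the observation that on such a neighborhood the second-derivative factor $\exp(\xi)$ cannot blow up or collapse to zero.
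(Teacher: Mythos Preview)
Your proposal is correct; the paper states this as a \emph{Fact} without proof, so there is no paper-side argument to compare against. Your Taylor-with-Lagrange-remainder approach (and the alternative series-tail bound) is the standard way to justify such a second-order expansion on a small interval, and it fully establishes the claim.
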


\begin{fact}\label{fac:lambda_min_perturb}
    For two matrices $H, \wt{H} \in \R^{n \times n}$, we have:
    \begin{align*}
        \lambda_{\min}( \wt{H} ) \ge \lambda_{\min} ( H ) - \| H - \wt{H} \|_F
    \end{align*}
\end{fact}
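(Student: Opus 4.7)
The plan is to prove the eigenvalue perturbation inequality via the Rayleigh quotient characterization of the minimum eigenvalue, treating $H$ and $\wt{H}$ as symmetric matrices (which is the case for the Gram matrices used elsewhere in the paper; if one wants a fully general statement, the same argument applies to the smallest singular value and one uses the corresponding variational principle).

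First, I would let $v \in \R^n$ be a unit vector achieving $\lambda_{\min}(\wt{H}) = v^\top \wt{H} v$; such a $v$ exists by the spectral theorem. The key step is the additive decomposition
\begin{align*}
    \lambda_{\min}(\wt{H}) = v^\top \wt{H} v = v^\top H v + v^\top (\wt{H} - H) v.
\end{align*}
For the first term, I would invoke the Rayleigh quotient inequality $v^\top H v \ge \lambda_{\min}(H)$ for every unit vector $v$. For the second term, I would use the Cauchy--Schwarz inequality twice to get $|v^\top (\wt{H} - H) v| \le \|(\wt{H} - H) v\|_2 \le \|\wt{H} - H\|_{\mathrm{op}}$, and then use the standard fact that the operator norm is dominated by the Frobenius norm, namely $\|\wt{H} - H\|_{\mathrm{op}} \le \|\wt{H} - H\|_F$ (since the operator norm is the largest singular value while the Frobenius norm is the $\ell_2$ norm of all singular values). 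Combining these, $v^\top (\wt{H} - H) v \ge -\|\wt{H} - H\|_F$, so
\begin{align*}
    \lambda_{\min}(\wt{H}) \ge \lambda_{\min}(H) - \|\wt{H} - H\|_F,
\end{align*}
which is the claim.

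There is no real obstacle here; the statement is a textbook Weyl-type perturbation bound. The only thing worth flagging is the implicit symmetry assumption needed for $\lambda_{\min}$ to be realized as $\min_{\|v\|_2=1} v^\top M v$, but since the fact is invoked in the paper only for the NTK Gram matrix $H(t)$ (which is symmetric positive semidefinite by construction), this is harmless. An equivalent route would be to apply Weyl's inequality $\lambda_{\min}(A+B) \ge \lambda_{\min}(A) + \lambda_{\min}(B)$ with $A = H$ and $B = \wt{H} - H$, and then bound $\lambda_{\min}(B) \ge -\|B\|_{\mathrm{op}} \ge -\|B\|_F$; either route leads to the same one-line conclusion.
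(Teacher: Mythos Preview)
Your proof is correct. The paper states this as a \emph{Fact} without proof (it appears in the ``Probability Tools and Facts'' section alongside other standard results), so there is no paper-side argument to compare against; your Rayleigh-quotient derivation, together with the bound $\|\cdot\|_{\mathrm{op}} \le \|\cdot\|_F$, is exactly the standard justification one would supply, and your remark about the implicit symmetry assumption being harmless for the NTK Gram matrix is apt.
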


\begin{fact}\label{fac:weighted_variance}
    Given a softmax vector $s \in \R^d$ where $\langle s, {\bf 1}_d \rangle = 1$ and $s_k \ge 0, \forall k \in [d]$ and a vector $x \in \R^d$.

    We define $\overline{x} := \frac{1}{d} \sum_{k=1}^d x_k$, $v_x := \frac{1}{d} \sum_{k=1}^d (x - \overline{x})^2$. There exists a small constant $c > 0$ such that:
    \begin{align*}
        \frac{1}{d} \langle {\bf 1}_d, x - {\bf 1}_d \cdot \langle s, x\rangle \rangle \ge c \cdot v_x
    \end{align*}
\end{fact}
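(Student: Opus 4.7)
The strategy is to first reduce the stated inequality to a discrete covariance and then exhibit a lower bound through a rearrangement argument. Using $\langle s, {\bf 1}_d\rangle = 1$ and $\langle {\bf 1}_d, {\bf 1}_d\rangle = d$, a direct expansion collapses the left-hand side to $\overline{x} - \langle s, x\rangle$. Because $\sum_{k=1}^d\big(\tfrac{1}{d}-s_k\big) = 0$, subtracting $\overline{x}$ from each $x_k$ does not change the value, giving the identity $\overline{x} - \langle s, x\rangle = \sum_{k=1}^d \big(\tfrac{1}{d} - s_k\big)(x_k - \overline{x})$. The task is then to lower bound this covariance-type sum by $c \cdot v_x = (c/d)\sum_k (x_k-\overline{x})^2$.

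\textbf{Main steps.} First, I would invoke Chebyshev's sum inequality (equivalently, a rearrangement argument) to show that if the coordinates of $s$ are anti-sorted with those of $x$, then $\sum_k \big(\tfrac{1}{d}-s_k\big)(x_k - \overline{x}) \ge 0$. Second, to obtain a quantitative bound, I would partition the indices into $K^+ := \{k : x_k \ge \overline{x}\}$ and $K^- := [d]\setminus K^+$; note that the two subsets carry equal and opposite first-moment mass and together account for the full variance, $\sum_{k\in K^+}(x_k-\overline{x})^2 + \sum_{k\in K^-}(x_k-\overline{x})^2 = d\cdot v_x$. Third, I would extract a softmax gap $\Delta := \overline{s}_{K^-} - \overline{s}_{K^+} > 0$ produced by the anti-sorted structure of $s$, and combine these ingredients using Cauchy--Schwarz on each subset to bound the covariance below by $c\cdot v_x$, where $c$ is a function of $\Delta$ alone.

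\textbf{Main obstacle.} The principal difficulty is that the inequality as written is not universally valid: for $s = {\bf 1}_d/d$ the left-hand side is identically zero while $v_x$ can be arbitrary, so no purely dimensionless constant $c>0$ works across all $(s,x)$. The proof must therefore leverage the setting in which the fact is invoked, where $s$ is the attention softmax $\S_{i,r}(t)$ applied to $x_{i,d}\cdot w_r(t)\cdot x_i$. In the asymmetric-learning regime of Theorem~\ref{thm:convergence:informal}, the sign of $x_{i,d}\cdot w_r(t)$ forces $s$ to place smaller weights on larger entries of $x_i$, supplying precisely the anti-sorted structure needed. The bulk of the technical work will therefore be quantitative: express the softmax gap $\Delta$ in terms of quantities already controlled by the NTK analysis (the magnitudes $|w_r(t)|$, $\|x_i\|_\infty$, and the intrinsic spread $v_x$), verify that $\Delta$ stays bounded away from zero over the interval of $t$ considered in Theorem~\ref{thm:convergence:informal}, and absorb the resulting dependence into the constant $c$.
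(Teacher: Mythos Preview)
You have correctly diagnosed that the left-hand side, as literally written, collapses to $\overline{x}-\langle s,x\rangle$ and that this quantity cannot be lower bounded by $c\cdot v_x$ for any universal $c>0$ (take $s={\bf 1}_d/d$). That part of your analysis is sound. However, the conclusion you draw---that one must import the asymmetric-learning structure of $\S_{i,r}(t)$ to rescue the inequality---sends you down an unnecessarily complicated path.

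The real issue is that the statement in the paper is missing an elementwise square: the intended left-hand side is
\[
\frac{1}{d}\,\big\langle {\bf 1}_d,\ (x-{\bf 1}_d\cdot\langle s,x\rangle)^{\circ 2}\big\rangle,
\]
not the unsquared version. You can confirm this by looking at the one place the fact is invoked, Lemma~\ref{lem:bound_S_2_minux_S_circ_2}: there the quantity being bounded is $\langle {\bf 1}_d,(x-{\bf 1}_d\cdot\overline{x}_{i,r})^{\circ 2}\rangle$ with $\overline{x}_{i,r}=\langle \S_{i,r}(t),x_i\rangle$, and the claimed bound is $O(d\,v_x)$. With the square in place the proof is a one-liner: for any scalar $a$,
\[
\frac{1}{d}\sum_{k=1}^d (x_k-a)^2
=\frac{1}{d}\sum_{k=1}^d (x_k-\overline{x})^2+(\overline{x}-a)^2
\ge v_x,
\]
so $c=1$ works and no property of $s$ beyond $\langle s,{\bf 1}_d\rangle=1$ is needed. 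The paper lists this as a ``Fact'' without proof, which is consistent with the above being the intended argument. Your rearrangement/Chebyshev machinery and the softmax-gap analysis are therefore aimed at the wrong target; drop them and supply the bias--variance identity instead.
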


\begin{fact}\label{fac:geometric_sum}
    For $x \in (0, 1)$, integer $t \ge 0$, we have:
    \begin{align*}
        \sum_{\tau=1}^t (1 - x)^\tau \leq -\frac{1}{\log(1 - x)} \leq  \frac{2}{x}
    \end{align*}
\end{fact}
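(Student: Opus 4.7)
The fact splits into two independent inequalities, so I would handle them separately; both are routine and the plan is more about bookkeeping than overcoming an obstacle.

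For the left inequality $\sum_{\tau=1}^t (1-x)^\tau \le -1/\log(1-x)$, my plan is to exploit monotonicity of $\tau \mapsto (1-x)^\tau$ and pass to an integral comparison. Since $x \in (0,1)$ gives $1-x \in (0,1)$, the function $s \mapsto (1-x)^s$ is positive and strictly decreasing in $s$, so for every integer $\tau \ge 1$ one has $(1-x)^\tau \le \int_{\tau-1}^{\tau}(1-x)^s \, \d s$. Summing from $\tau=1$ to $t$ telescopes the integration range to $[0,t]$, and then I would evaluate the integral explicitly:
\begin{align*}
\sum_{\tau=1}^t (1-x)^\tau \;\le\; \int_0^t (1-x)^s \, \d s \;=\; \frac{1-(1-x)^t}{-\log(1-x)} \;\le\; \frac{1}{-\log(1-x)},
\end{align*}
using $-\log(1-x) > 0$ and $(1-x)^t \ge 0$ in the last step. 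This finishes the first inequality with no case analysis in $t$.

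For the right inequality $-1/\log(1-x) \le 2/x$, the plan is to rearrange it to the equivalent claim $-\log(1-x) \ge x/2$ (the sign flip is legitimate since both denominators are positive for $x \in (0,1)$) and then invoke the Taylor series $-\log(1-x) = \sum_{k \ge 1} x^k/k$. Every term of the series is nonnegative, so $-\log(1-x) \ge x$, which is already stronger than the required $x/2$; inverting recovers the stated bound with room to spare (in fact even $1/x$ works, but the paper only needs $2/x$).

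I do not anticipate a genuine obstacle: the only mild subtlety is making sure the direction of the integral comparison is correct, which is why I would explicitly note that $(1-x)^s$ is decreasing so that the rectangle of height $(1-x)^\tau$ sits below the integral over $[\tau-1,\tau]$, not above it. Everything else is a one-line Taylor bound. The total write-up should be a short paragraph with the two displayed chains above.
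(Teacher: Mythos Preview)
Your argument is correct on both halves: the integral comparison cleanly dominates the finite geometric sum by $\int_0^t(1-x)^s\,\d s = (1-(1-x)^t)/(-\log(1-x)) \le -1/\log(1-x)$, and the Taylor bound $-\log(1-x)\ge x$ immediately yields the second inequality (indeed with constant $1$ rather than $2$). The paper records this as a stated Fact without proof, so there is no in-paper argument to compare against; your write-up would serve perfectly well as a justification.
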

\section{Data}

\subsection{Residual State Space Model}

\begin{definition}[State space model (SSM)]\label{def:ssm}
    The state space model is defined as follows:
    \begin{itemize}
        \item For matrices ${\cal A} \in \R^{N \times N}$, ${\cal B} \in \R^{N \times 1}$, ${\cal C} \in \R^{N \times 1}$
        \item For $k \in [d]$, the state space model is given by:
        \begin{align*}
            h_{k+1} := & ~ {\cal A} h_{k} + {\cal B} u_{k} \in \R^N \\
            u_{k+1} := & ~ 
            {\cal C}^\top h_{k+1} \in \R
        \end{align*}
        \item Denote ${\cal K}_k := {\cal C}^\top {\cal A}^{k-1} {\cal B} \in \R$ for $k \in [d]$.
        \item Denote ${\cal G}_k := {\cal C}^\top {\cal A}^{k-1} \in \R^{1\times N}$ for $k \in [d]$.
        \item We can rewrite $u_{k}$ by:
        \begin{align*}
            u_{k} = & ~  \sum_{\kappa = 1}^{k-1} {\cal K}_{k - \kappa} \cdot u_{\kappa} + {\cal G}_{k} h_1, \forall k \in [d]
        \end{align*}
    \end{itemize}
\end{definition}

\begin{claim}[Residual SSM for generating data]\label{clm:res_ssm}
    Since we define the state space model in Definition~\ref{def:ssm}, we can show that for a initial state $h_1 \in \R^N$, there is:
    \begin{align*}
        u_{k} := \langle {\cal P}_{k}, h_1 \rangle, \forall k \in [d + 1],
    \end{align*}
    where ${\cal P}_k := {\cal G}_{k} + \sum_{\kappa = 1}^{k-1} {\cal K}_{k - \kappa} \cdot {\cal P}_\kappa \in \R^N$.

    Hence, we define residual SSM here. We consider $\{{\cal P}_{k}\}_{k=1}^{d+1} \subset \R^{N}$ as the features of this SSM. The residual SSM focuses on the last few features to be the core features of the next step prediction. Otherwise, the rest of the features are the background features. We define:
    \begin{align*}
        {\cal T}_{\rm core} := & ~ \{ d-k+1, \forall k \in [d_0] \} \\
        {\cal T}_{\rm bg} := & ~ [d] / {\cal T}_{\rm core}
    \end{align*}
    Besides, by choosing some appropriate value for ${\cal A}, {\cal B}$ and ${\cal C}$, we can show that for a certain $\gamma < 1$, we have:
    \begin{itemize}
        \item Property 1. The norm for features:
        \begin{align*}
            \| {\cal P}_{k} \|_2 = 1, \forall k \in [d+1]
        \end{align*}
        \item Property 2. Similarity of features:
        \begin{align*}
            \langle {\cal P}_k, {\cal P}_{d+1} \rangle = & ~ \gamma, \forall k \in {\cal T}_{\rm core} \\
            \langle {\cal P}_{k_1}, {\cal P}_{k_2} \rangle = & ~ 0, \forall k_1 \in {\cal T}_{\rm core}, k_2 \in {\cal T}_{\rm bg}
        \end{align*}
        \item Property 3. Residual features:
        \begin{align*}
            & ~ {\cal P}_{d+1} = \frac{d-1}{d} \sum_{k \in {\cal T}_{\rm core}} \frac{1}{d_0} \cdot {\cal P}_k + \frac{1}{d} \sum_{k \in {\cal T}_{\rm bg}} \frac{1}{d- d_0} \cdot {\cal P}_k
        \end{align*}
        \item Property 4. We especially consider $d_0 = 1$.
    \end{itemize}
\end{claim}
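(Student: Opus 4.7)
The claim splits cleanly into two largely independent tasks: (a) deriving the closed form $u_k = \langle {\cal P}_k, h_1\rangle$ for every $k \in [d+1]$, and (b) exhibiting a choice of $({\cal A}, {\cal B}, {\cal C})$ that simultaneously realizes Properties 1--4 for some $\gamma < 1$. My plan is to handle (a) by a short induction and to treat (b) as a constructive existence problem, picking the target geometry first and then inverting the SSM recursion.

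For (a) I would induct on $k$. The base $k=1$ is immediate: by definition ${\cal P}_1 = {\cal G}_1^\top = {\cal C}$, and the SSM initialization gives $u_1 = {\cal C}^\top h_1$, so $u_1 = \langle {\cal P}_1, h_1\rangle$. For the step, assume $u_\kappa = \langle {\cal P}_\kappa, h_1\rangle$ for all $\kappa < k$, and substitute into the unrolled recurrence from Definition~\ref{def:ssm}:
\begin{align*}
u_k
= \sum_{\kappa=1}^{k-1} {\cal K}_{k-\kappa}\, u_\kappa + {\cal G}_k h_1
= \sum_{\kappa=1}^{k-1} {\cal K}_{k-\kappa}\, \langle {\cal P}_\kappa, h_1\rangle + \langle {\cal G}_k^\top, h_1\rangle
= \Big\langle {\cal G}_k^\top + \sum_{\kappa=1}^{k-1} {\cal K}_{k-\kappa}\, {\cal P}_\kappa,\; h_1 \Big\rangle,
\end{align*}
which is $\langle {\cal P}_k, h_1\rangle$ by the definition of ${\cal P}_k$, closing the induction.

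For (b) I would first translate Properties 1--4 into purely geometric conditions on $\{{\cal P}_k\}_{k=1}^{d+1} \subset \R^N$. With $d_0 = 1$ (Property 4), we have ${\cal T}_{\rm core} = \{d\}$ and ${\cal T}_{\rm bg} = [d-1]$; Property 2 then reduces to orthogonality of ${\cal P}_d$ with ${\cal P}_1, \ldots, {\cal P}_{d-1}$ plus a single inner product $\langle {\cal P}_d, {\cal P}_{d+1}\rangle = \gamma$; Property 1 imposes unit norms; and Property 3 fixes ${\cal P}_{d+1}$ as an explicit combination of the other features. The plan is to pick $N$ large enough that the $d-1$ background features can live in a subspace orthogonal to ${\cal P}_d$, choose their positions and norms so that the combination in Property 3 has unit length (which simultaneously determines $\gamma = (d-1)/d < 1$), and then invert the triangular system ${\cal P}_k = {\cal G}_k^\top + \sum_{\kappa < k} {\cal K}_{k-\kappa} {\cal P}_\kappa$ to recover the Markov parameters $\{{\cal K}_j, {\cal G}_k\}$ one layer at a time; finally I would realize these parameters via $({\cal A}, {\cal B}, {\cal C})$ with ${\cal A}$ in block form, each block corresponding to one background direction so that the Markov parameters can be tuned independently.

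The main obstacle is the inverse step in (b): a single triple $({\cal A}, {\cal B}, {\cal C})$ determines \emph{all} Markov parameters $\{{\cal K}_j\}_{j \ge 1}$ and $\{{\cal G}_k\}_{k \ge 1}$ jointly, so not every target family $\{{\cal P}_k\}$ is realizable, and matching Properties 1 and 3 places several nonlinear normalization constraints on the available degrees of freedom. I expect the resolution to come from taking $N$ sufficiently large (at least of order $d$) and choosing ${\cal A}$ so that the background trajectories occupy pairwise-orthogonal invariant subspaces, which decouples the normalization constraints into block-wise rescalings and turns the inverse problem into $d$ small independent problems. After this, verifying that the emergent $\gamma$ is strictly less than $1$ (as required by the statement) is a routine sanity check.
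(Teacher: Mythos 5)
Your part (a) is correct and is essentially the paper's own argument: the paper's proof of this claim consists only of unrolling $u_k = \sum_{\kappa=1}^{k-1}{\cal K}_{k-\kappa}u_\kappa + {\cal G}_k h_1$ and observing that, by the recursive definition of ${\cal P}_k$, this equals $\langle {\cal P}_k, h_1\rangle$ (``simple algebra''); your induction just makes that substitution explicit. Note, however, that the paper stops there: Properties 1--4 are never proved, they are simply imposed as modeling assumptions (``by choosing some appropriate value for ${\cal A},{\cal B},{\cal C}$''). So the constructive existence problem you set yourself in part (b) is strictly beyond what the paper establishes, and your treatment of it is a plan rather than a proof -- you yourself defer the crux (a single triple $({\cal A},{\cal B},{\cal C})$ determines all Markov parameters jointly) to an expectation that a block-diagonal ${\cal A}$ with large $N$ will decouple things, without carrying out the inversion or the normalization.

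More importantly, the specific geometric target you propose to realize is unattainable: with $d_0=1$, Properties 1, 2 and 3 are mutually inconsistent. Writing Property 3 as ${\cal P}_{d+1} = \frac{d-1}{d}{\cal P}_d + \frac{1}{d(d-1)}\sum_{k\in{\cal T}_{\rm bg}}{\cal P}_k$ and using the orthogonality in Property 2, unit norm of ${\cal P}_{d+1}$ forces
\begin{align*}
\Big\| \sum_{k\in{\cal T}_{\rm bg}}{\cal P}_k \Big\|_2^2 \;=\; d^2(d-1)^2\Big(1-\tfrac{(d-1)^2}{d^2}\Big) \;=\; (2d-1)(d-1)^2,
\end{align*}
whereas Property 1 (unit-norm background features) caps this quantity at $(d-1)^2$ by the triangle inequality; for $d\ge 2$ the two are incompatible. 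So your step ``choose their positions and norms so that the combination in Property 3 has unit length'' cannot be executed -- the norms are already pinned to $1$ by Property 1 -- and the claimed $\gamma=(d-1)/d$ does not emerge from a consistent construction. Any honest completion of part (b) would have to drop or rescale Property 3 (as the informal version of the claim in the main text implicitly does, using only Properties 1 and 2 downstream), or else flag the inconsistency in the statement itself.
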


\begin{proof}
    We have:
    \begin{align*}
        u_{k} = & ~  \sum_{\kappa = 1}^{k-1} {\cal K}_{k - \kappa} \cdot u_{\kappa} + {\cal G}_{k} h_1, \forall k \in [d] \\
        = & ~ \langle {\cal P}_{k}, h_1 \rangle
    \end{align*}
    Above, the first equation is trivially from Definition~\ref{def:ssm}. The 2nd equation is based on simple algebra and the definition of ${\cal P}_k, \forall k \in [d]$.
\end{proof}

\subsection{ID Data Generator}

\begin{definition}\label{def:id_generator}
    We define the residual state space model as specified in Definition~\ref{def:ssm}. Then, we are able to define the data generator for $i \in [n]$:
    \begin{itemize}
        \item Sample $h_{i, 1} \sim \mathcal{N}(0, I_N)$.
        \item Generate $u_i = [u_{i, 1}, u_{i, 2}, \cdots, u_{i, d}, u_{i, d+1}]^\top \in \R^{d+1}$ via Claim~\ref{clm:res_ssm}.
        \item Sample $\xi_i \sim \mathcal{N}(0, \sigma \cdot I_{d+1})$ where $\sigma \ge 0$ is a small constant.
        \item $x_{i} = [x_{i, 1}, x_{i, 2}, \cdots, x_{i, d}]^\top \in \R^d$ where $x_{i, k} := u_{i, k} + \xi_{i, k}$ for $k \in [d]$.
        \item $y_i := u_{i, d+1} + \xi_{i, d+1} \in \R$.
    \end{itemize}
    We define the test dataset as $\mathcal{D} := \{ (x_i, y_i) \}_{i=1}^n \subset \R^d \times \R$.
\end{definition}

\subsection{OOD Sign-Inconsistent Next-step-prediction Task}\label{def:ood_generator}

\begin{definition}\label{def:OOD_task}
    Let the residual state space data model be defined in Definition~\ref{def:ssm}. Then we can define the {\bf sign-inconsistent next-step-prediction} evaluation task:
    \begin{enumerate}
        \item Sample $h_{\rm test} \sim \mathcal{N}(0, I_n)$. Generate $u_{{\rm test}, i} = [u_{{\rm test}, i, 1}, u_{{\rm test}, i, 2}, \cdots, u_{{\rm test}, i, d}, u_{{\rm test}, i, d+1}]^\top \in \R^{d+1}$ via Claim~\ref{clm:res_ssm}.
        \item If $u_{{\rm test}, i, d} \cdot u_{{\rm test}, i, d+1} \ge 0$, redo 1.
        \item Sample $\xi_{{\rm test}, i} \sim \mathcal{N}(0, \sigma \cdot I_{d+1})$ where $\sigma \ge 0$ is a small constant.
        \item $x_{{\rm test}, i} = [x_{{\rm test}, i, 1}, x_{{\rm test}, i, 2}, \cdots, x_{{\rm test}, i, d}]^\top \in \R^d$ where $x_{{\rm test}, i, k} := u_{{\rm test}, i, k} + \xi_{{\rm test}, i, k}$ for $k \in [d]$.
        \item $y_{{\rm test}, i} := u_{{\rm test}, i, d+1} + \xi_{{\rm test}, i, d+1} \in \R$.
    \end{enumerate}
    We define the test dataset as $\mathcal{D}_{\rm test} := \{ (x_{{\rm test}, i}, y_{{\rm test}, i}) \}_{i=1}^{n_{\rm test}} \subset \R^d \times \R$. Especially, $\{ x_{i} \}_{i=1}^{n} \cap \{ x_{{\rm test}, i} \}_{i=1}^{n_{\rm test}} = \emptyset$.

    The OOD risk is given by:
    \begin{align*}
        {\cal R}(H) := \lim_{n_{\rm text} \rightarrow +\infty} \frac{1}{ n_{\rm text}} \sum_{i=1}^{n_{\rm test}} (H(x_{{\rm test}, i}) -  y_{{\rm test}, i})^2
    \end{align*}
\end{definition}

\section{Problem Setup}

\subsection{Weights and Initialization}

\begin{definition}\label{def:w_a}
    We give the following definitions:
    \begin{itemize}
        \item {\bf Hidden-layer weights $w \in \R^{m}$.} We define the hidden-layer weights $w := \begin{bmatrix}
        w_1, w_2, \cdots, w_m
        \end{bmatrix}^\top \in \R^{m}$ where $w_r \in \R, \forall r \in [m]$.
        \item {\bf Output-layer weights $a \in \R^m$.} We define the output-layer weights $a := \begin{bmatrix}
        a_1, a_2, \cdots, a_m
        \end{bmatrix}^\top \in \R^m$, especially, vector $a$ is fixed during the training.
    \end{itemize}
\end{definition}

\begin{definition}\label{def:initialization}
    We give the following initialization:
    \begin{itemize}
        \item {\bf Initialization of hidden-layer weights $w \in \R^{m}$.} We randomly initialize that $w(0) := \begin{bmatrix}
        w_1(0), w_2(0), \cdots, w_m(0)
        \end{bmatrix}^\top \in \R^{m}$, where its $r$-th column for $r \in [m]$ is sampled by $w_{r}(0) \sim \mathcal{N}(0, 1)$.
        \item {\bf Initialization of output-layer weights $a \in \R^m$.} We randomly initialize $a \in \R^m$ where its $r$-th entry for $r \in [m]$ is sampled by $a_r \sim {\sf Uniorm}\{-1, +1\}$.
    \end{itemize}
\end{definition}

\subsection{Model}

\begin{definition}\label{def:f}
    Suppose we have the following:
    \begin{itemize}
        \item For a input vector $x \in \R^d$.
        \item For a hidden-layer weights $w \in \R^{m}$ as Definition~\ref{def:w_a}.
        \item For a output-layer weights $a \in \R^{m}$ as Definition~\ref{def:w_a}.
    \end{itemize}
    We define:
    \begin{align*}
        f(x, w, a) := \frac{1}{\sqrt{m}} \sum_{r=1}^m a_r \cdot \Big\langle {\sf softmax}( x_{d} \cdot w_r \cdot x ), x \Big\rangle
    \end{align*}
\end{definition}

\subsection{Training}

\begin{definition}\label{def:id_risk}
    Suppose we have the following:
    \begin{itemize}
        \item Initialize $w(0) \in \R^m$ as specified in Definition~\ref{def:initialization}.
        \item Initialize $a \in \R^m$ as specified in Definition~\ref{def:initialization}
        \item Define $f: \R^d \times \R^{m} \times \R^m \rightarrow \R$ as specified in Definition~\ref{def:f}.
        \item For any $t\geq 0$.
        \item Let training dataset $\mathcal{D} := \{ (x_i, y_i) \}_{i=1}^n \subset \R^d \times \R$ be given in Definition~\ref{def:id_generator}.
    \end{itemize}
    Then we can define:
    \begin{align*}
        L(t) := \frac{1}{2} \sum_{i=1}^n (f(x_i, w(t), a) - y_i)^2
    \end{align*}
\end{definition}

\begin{definition}\label{def:gd}
    Assuming the following conditions are satisfied:
    \begin{itemize}
        \item Initialize $w(0) \in \R^m$ as specified in Definition~\ref{def:initialization}.
        \item Initialize $a \in \R^m$ as specified in Definition~\ref{def:initialization}
        \item Define $f: \R^d \times \R^{m} \times \R^m \rightarrow \R$ as specified in Definition~\ref{def:f}.
        \item Define $L(t)$ be specified in Definition~\ref{def:id_risk}.
        \item Denote the learning rate $\eta > 0$.
        
    \end{itemize}
    We define:
    \begin{align*}
        w(t + 1) := & ~ w(t) - \eta \cdot \Delta w(t) \\
        = & ~ w(t) - \eta  \nabla_{w(t)} L(t)
    \end{align*}

\end{definition}

\subsection{Evaluation}

\begin{definition}\label{def:ood_risk}
    Assuming we have the following conditions:
    \begin{itemize}
        \item Initialize $w(0) \in \R^m$ as specified in Definition~\ref{def:initialization}.
        \item Initialize $a \in \R^m$ as specified in Definition~\ref{def:initialization}
        \item Define $f: \R^d \times \R^{m} \times \R^m \rightarrow \R$ as specified in Definition~\ref{def:f}.
        \item Let test dataset $\mathcal{D}_{{\rm test}} := \{ (x_{{\rm test}, i}, y_{{\rm test}, i}) \}_{i=1}^{n_{\rm test}} \subset \R^d \times \R$ be defined as Definition~\ref{def:ood_generator}.
    \end{itemize}
    We define:
    \begin{align*}
        L_{\rm test}(t) := \frac{1}{2n_{\rm test}} \cdot \sum_{i=1}^n (f(x_{{\rm test}, i}, w(t), a) - y_{{\rm test}, i})^2
    \end{align*}
\end{definition}

\subsection{Assumption 1: Zero Initialization on Training Data}

\begin{assumption}\label{assu:zero_init}
    Assuming the following conditions are satisfied:
    \begin{itemize}
        \item Initialize $w(0) \in \R^m$ as specified in Definition~\ref{def:initialization}.
        \item Initialize $a \in \R^m$ as specified in Definition~\ref{def:initialization}
        \item Let $f: \R^d \times \R^{m} \times \R^m \rightarrow \R$ be given in Definition~\ref{def:f}.
        \item Let training dataset $\mathcal{D} := \{ (x_i, y_i) \}_{i=1}^n \subset \R^d \times \R$ be defined in Definition~\ref{def:id_risk}.
    \end{itemize}
    We assume that:
    \begin{align*}
        f(x_i, w(0), a) = 0, \forall i \in [n]
    \end{align*}
\end{assumption}
\section{Gradient Descent}

\subsection{Simplifications}

\begin{definition}\label{def:u_t}
    Suppose we have the following:
    \begin{itemize}
        \item Initialize $w(0) \in \R^m$ as specified in Definition~\ref{def:initialization}.
        \item Initialize $a \in \R^m$ as specified in Definition~\ref{def:initialization}.
        \item Let $f: \R^d \times \R^{m} \times \R^m \rightarrow \R$ be given in Definition~\ref{def:f}.
        
        \item Let training dataset $\mathcal{D} := \{ (x_{i}, y_{i}) \}_{i=1}^{n} \subset \R^d \times \R$ be specified in Definition~\ref{def:id_risk}.
        \item For $i \in [n]$, $r \in [m]$.
        \item For any $t \ge 0$.
    \end{itemize}
    Now, We define ${\sf u}_{i, r}(t)$ as the following:
    \begin{align*}
        {\sf u}_{i, r}(t) := \exp\Big( x_{i, d} \cdot w_r(t) \cdot x_i \Big) \in \R^d
    \end{align*}
\end{definition}

\begin{definition}\label{def:alpha_t}
    Suppose we have the following:
    \begin{itemize}
        \item Initialize $w(0) \in \R^m$ as specified in Definition~\ref{def:initialization}.
        \item Initialize $a \in \R^m$ as specified in Definition~\ref{def:initialization}.
        \item For any $t \ge 0$.
        \item Let training dataset $\mathcal{D} := \{ (x_{i}, y_{i}) \}_{i=1}^{n} \subset \R^d \times \R$ be specified as Definition~\ref{def:id_risk}.
        \item For $i \in [n]$, $r \in [m]$.
        \item Define ${\sf u}_{i, r}(t) \in \R^d$  as specified in Definition~\ref{def:u_t}.
    \end{itemize}
    We define:
    \begin{align*}
        \alpha_{i, r}(t) = & ~ \langle {\sf u}_{i, r}(t), {\bf 1}_d \rangle \in \R
    \end{align*}
\end{definition}

\begin{definition}\label{def:S_t}
    Suppose we have the following:
    \begin{itemize}
        \item Initialize $w(0) \in \R^m$ as specified in Definition~\ref{def:initialization}.
        \item Initialize $a \in \R^m$ as specified in Definition~\ref{def:initialization}.
        \item For any $t \ge 0$.
        \item Let training dataset $\mathcal{D} := \{ (x_{i}, y_{i}) \}_{i=1}^{n} \subset \R^d \times \R$ be specified as Definition~\ref{def:id_risk}.
        \item For $i \in [n]$, $r \in [m]$.
        \item Define ${\sf u}_{i, r}(t) \in \R^d$  as specified in Definition~\ref{def:u_t}.
        \item Define $\alpha_{i, r}(t) \in \R$ as specified in Definition~\ref{def:u_t}.
    \end{itemize}
    We define:
    \begin{align*}
        {\sf S}_{i, r}(t) = & ~ \alpha_{i, r}(t)^{-1} \cdot {\sf u}_{i, r}(t) \in \R^d
    \end{align*}
\end{definition}

\begin{definition}\label{def:F_t}
    Suppose we have the following:
    \begin{itemize}
        \item Initialize $w(0) \in \R^m$ as specified in Definition~\ref{def:initialization}.
        \item Initialize $a \in \R^m$ as specified in Definition~\ref{def:initialization}.
        \item For any $t \ge 0$.
        \item Let training dataset $\mathcal{D} := \{ (x_{i}, y_{i}) \}_{i=1}^{n} \subset \R^d \times \R$ be specified as Definition~\ref{def:id_risk}.
        \item For $i \in [n]$, $r \in [m]$.
        \item Define ${\sf u}_{i, r}(t) \in \R^d$  as specified in Definition~\ref{def:u_t}.
        \item Define $\alpha_{i, r}(t) \in \R$ as specified in Definition~\ref{def:u_t}.
        \item Define $\S_{i, r}(t) \in \R$ as specified in Definition~\ref{def:S_t}.
    \end{itemize}
    We define:
    \begin{align*}
        \F_i(t) := \frac{1}{\sqrt{m}} \sum_{r=1}^m a_r \cdot \Big\langle \S_{i, r}(t), x_i \Big\rangle \in \R
    \end{align*}
\end{definition}

\subsection{Gradient Computations}

\begin{lemma}\label{lem:gradient_computations}
    Suppose we have the following:
    \begin{itemize}
        \item Initialize $w(0) \in \R^m$ as specified in Definition~\ref{def:initialization}.
        \item Initialize $a \in \R^m$ as specified in Definition~\ref{def:initialization}.
        \item For any $t \ge 0$.
        \item Let training dataset $\mathcal{D} := \{ (x_{i}, y_{i}) \}_{i=1}^{n} \subset \R^d \times \R$ be specified as Definition~\ref{def:id_generator}.
        \item For $i \in [n]$, $r \in [m]$, $k \in [d]$.
        \\item Define ${\sf u}_{i, r}(t) \in \R^d$  as specified in Definition~\ref{def:u_t}.
        \item Define $\alpha_{i, r}(t) \in \R$ as specified in Definition~\ref{def:u_t}.
        \item Define $\S_{i, r}(t) \in \R$ as specified in Definition~\ref{def:S_t}.
        \item Define $\F_i(t) \in \R$ as specified in Definition~\ref{def:F_t}.
        \item Define $L(t) \in \R$ as specified in Definition~\ref{def:id_risk}.
    \end{itemize}
    Then we have
    \begin{itemize}
        \item Part 1.
        \begin{align*}
            \frac{\d }{\d w_r(t)} {\sf u}_{i, r}(t) = x_{i, d} \cdot {\sf u}_{i, r}(t) \circ x_i \in \R^d
        \end{align*}
        \item Part 2.
        \begin{align*}
            \frac{\d }{\d w_r(t)} \alpha_{i, r}(t) = x_{i, d} \langle{\sf u}_{i, r}(t) \circ x_i, {\bf 1}_d \rangle \in \R
        \end{align*}
        \item Part 3.
        \begin{align*}
            \frac{\d }{\d w_r(t)} \alpha_{i, r}(t)^{-1} = - x_{i, d} \cdot \alpha_{i, r}(t)^{-1} \langle \S_{i, r}(t) \circ x_i, {\bf 1}_d \rangle \in \R
        \end{align*}
        \item Part 4.
        \begin{align*}
            \frac{\d }{\d w_r(t)} \S_{i, r}(t) = x_{i, d} \Big( x_i - \langle \S_{i, r}(t), x_i \rangle \cdot {\bf 1}_d \Big) \circ \S_{i, r}(t) \in \R^d
        \end{align*}
        \item Part 5.
        \begin{align*}
            \frac{\d }{\d w_r(t)} \F_{i}(t) = \frac{1}{\sqrt{m}} \cdot a_r x_{i, d} \cdot \Big( \langle \S_{i, r}(t), x_i^{\circ 2} \rangle - \langle \S_{i, r}(t), x_i \rangle^2 \Big)  \in \R
        \end{align*}
        \item Part 6.
        \begin{align*}
            \frac{\d }{\d w_r(t)} L(t) = \frac{1}{\sqrt{m}} a_r \sum_{i=1}^n (\F_i(t) - y_i) \cdot x_{i, d} \cdot \Big( \langle \S_{i, r}(t), x_i^{\circ 2} \rangle - \langle \S_{i, r}(t), x_i \rangle^2 \Big) \in \R
        \end{align*}
    \end{itemize}
\end{lemma}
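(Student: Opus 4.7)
The plan is to prove the six parts in order, since each part builds on the previous ones via elementary chain and product rules applied to scalar-valued derivatives in $w_r(t) \in \R$. Throughout, I would treat the elementwise exponential defining ${\sf u}_{i,r}(t)$ coordinatewise, so that all computations reduce to scalar calculus plus Hadamard-product bookkeeping.

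For Part 1, I would write the $k$-th entry of ${\sf u}_{i,r}(t)$ as $\exp(x_{i,d}\cdot w_r(t)\cdot x_{i,k})$ and differentiate directly to get $x_{i,d}\cdot x_{i,k}\cdot \exp(x_{i,d}\cdot w_r(t)\cdot x_{i,k})$; reassembling into a vector yields $x_{i,d}\cdot {\sf u}_{i,r}(t)\circ x_i$. For Part 2, I would pull the derivative inside the sum that defines $\alpha_{i,r}(t)=\langle {\sf u}_{i,r}(t),{\bf 1}_d\rangle$ and apply Part 1. For Part 3, I would use $\frac{d}{dz}z^{-1}=-z^{-2}$ together with Part 2, and then rewrite one factor of $\alpha_{i,r}(t)^{-1}$ inside as $\S_{i,r}(t)/{\sf u}_{i,r}(t)$ structurally, i.e., $\alpha_{i,r}(t)^{-1}\cdot {\sf u}_{i,r}(t)\circ x_i = \S_{i,r}(t)\circ x_i$, to obtain the claimed form.

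For Part 4, I would apply the product rule to $\S_{i,r}(t)=\alpha_{i,r}(t)^{-1}\cdot {\sf u}_{i,r}(t)$ using Parts 1 and 3:
\begin{align*}
\tfrac{d}{dw_r(t)}\S_{i,r}(t) = x_{i,d}\cdot \S_{i,r}(t)\circ x_i - x_{i,d}\cdot\langle \S_{i,r}(t)\circ x_i,{\bf 1}_d\rangle\cdot \S_{i,r}(t),
\end{align*}
and then factor out $x_{i,d}\cdot \S_{i,r}(t)$ and recognize $\langle \S_{i,r}(t)\circ x_i,{\bf 1}_d\rangle=\langle \S_{i,r}(t),x_i\rangle$ to collapse to $x_{i,d}\big(x_i-\langle \S_{i,r}(t),x_i\rangle\cdot {\bf 1}_d\big)\circ \S_{i,r}(t)$. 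This is the only step with genuine bookkeeping — the trick is that the softmax Jacobian produces the ``mean-centered'' form $x_i - \langle \S_{i,r}(t),x_i\rangle\cdot{\bf 1}_d$, which is the main recurring obstacle I expect.

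For Part 5, I would differentiate $\F_i(t)=\tfrac{1}{\sqrt{m}}\sum_r a_r\langle \S_{i,r}(t),x_i\rangle$ with respect to $w_r(t)$, keeping only the $r$-th summand, and use Part 4 to obtain
\begin{align*}
\tfrac{d}{dw_r(t)}\F_i(t)=\tfrac{a_r x_{i,d}}{\sqrt{m}}\big\langle \big(x_i-\langle \S_{i,r}(t),x_i\rangle{\bf 1}_d\big)\circ \S_{i,r}(t),\, x_i\big\rangle,
\end{align*}
which simplifies to $\tfrac{a_r x_{i,d}}{\sqrt{m}}\big(\langle \S_{i,r}(t),x_i^{\circ 2}\rangle-\langle \S_{i,r}(t),x_i\rangle^2\big)$ by distributing and using $\langle {\bf 1}_d\circ \S_{i,r}(t),x_i\rangle=\langle \S_{i,r}(t),x_i\rangle$. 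Finally, Part 6 follows from applying the chain rule to $L(t)=\tfrac12\sum_i(\F_i(t)-y_i)^2$ together with Part 5, since $f(x_i,w(t),a)=\F_i(t)$ by Definition~\ref{def:F_t}. No probabilistic or NTK tools are needed here; the entire lemma is pure calculus, and the main care is in keeping track of Hadamard versus inner-product structure in Part 4.
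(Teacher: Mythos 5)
Your proposal is correct and follows essentially the same route as the paper's proof: entrywise differentiation for Part 1, then chain/product rules building Parts 2--4 (with the same rewriting $\alpha_{i,r}(t)^{-1}{\sf u}_{i,r}(t)\circ x_i=\S_{i,r}(t)\circ x_i$ and the mean-centered softmax Jacobian), and Parts 5--6 by keeping only the $r$-th summand and applying the chain rule to the squared loss. Nothing is missing.
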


\begin{proof}
    \textbf{Proof of Part 1.}
    Consider the following reasoning:
    \begin{align*}
        \frac{\d}{\d w_r(t)} {\sf u}_{i,r}(t)= &~ \exp(x_{i,d} \cdot w_r(t) \cdot x_i) \cdot x_{i,d} \circ x_i\\
         = &~ x_{i,d} \cdot {\sf u}_{i,r}(t) \circ x_{i}
    \end{align*}
    where the first equation is due to simple differential rules, and the second step is trivially from Definition~\ref{def:u_t}.
    
    \textbf{Proof of Part 2.}
    We have
    \begin{align*}
        \frac{\d}{\d w_r(t)} \alpha_{i, r}(t)  = &~  \langle \frac{\d}{\d w_r(t)} {\sf u}_{i,r}, {\bf 1}_{d} \rangle       \\
        = &~ x_{i,d} \cdot \langle {\sf u}_{i,r}(t) \circ x_i, {\bf 1}_{d}\rangle
    \end{align*}
    where the first equation is due to Definition~\ref{def:alpha_t}, and the second step is because of part 1 of this lemma.

    \textbf{Proof of Part 3.}
    We have
    \begin{align*}
        \frac{\d}{\d w_r(t)} \alpha_{i,r}(t)^{-1} = &~ - \alpha_{i,r}(t)^{-2} \cdot \frac{\d}{\d w_r(t)} \alpha_{i,r}(t) \\
        = &~ -x_{i,d} \cdot \alpha_{i,r}(t)^{-2} \cdot \langle {\sf u}_{i,r}(t) \circ x_i, {\bf 1}_{d} \rangle \\
        = &~ -x_{i,d} \cdot \alpha_{i,r}(t)^{-1} \cdot \langle ({\sf u}_{i,r}(t)\cdot \alpha_{i,r}(t)^{-1}) \circ x_i , {\bf 1}_d \rangle\\
        = &~ -x_{i,d} \cdot \alpha_{i,r}(t)^{-1} \cdot \langle \S_{i,r}(t) \circ x_i, {\bf 1}_{d} \rangle
    \end{align*}
    where is a result of applying the chain rule, the second step is derived from Part 2 of this Lemma, the third step is a consequence of basic algebraic manipulation, and the last step follows from Definition~\ref{def:S_t}.

    \textbf{Proof of Part 4.}
    We have
    \begin{align*}
        \frac{\d }{\d w_r(t)} \S_{i, r}(t) = &~ \frac{\d}{\d w_r(t)} (\alpha_{i,r}(t)^{-1} \cdot {\sf u}_{i,r}(t)) \\
        = &~ (\frac{\d}{\d w_r(t)} \alpha_{i,r}(t)^{-1}) \cdot {\sf u}_{i,r}(t) + (\frac{\d}{\d w_r(t)}{\sf u}_{i,r}(t)) \cdot \alpha_{i,r}(t)^{-1}\\
        = &~ -x_{i,d} \cdot \alpha_{i,r}(t)^{-1} \cdot \langle \S_{i,r}(t) \circ x_i, {\bf 1}_{d} \rangle \cdot {\sf u}_{i,r}(t) + x_{i,d} \cdot \alpha_{i,r}(t)^{-1} \cdot {\sf u}_{i,r}(t) \circ x_i\\
        = &~ -x_{i,d} \cdot \langle S_{i,r}(t), x_i\rangle \cdot {\bf 1}_{d} \circ S_{i,r}(t)+x_{i,d} \cdot S_{i,r}(t) \circ x_i\\
        = &~ x_{i,d} \cdot (x_i - \langle S_{i,r}(t),x_i\rangle \cdot {\bf 1}_d) \circ S_{i,r}(t)
    \end{align*}
    where the first step is based on Definition~\ref{def:S_t}, the second step is derived using basic differentiation rules, the third step is based on Part 1 and 3, and the last two result from straightforward algebraic manipulation.

    \textbf{Proof of Part 5.}
    We have
    \begin{align*}
        \frac{\d}{\d w_r(t)} \F_i(t) := &~ \frac{1}{\sqrt{m}} \sum_{j=1}^m a_j \cdot \langle \frac{\d}{\d w_r(t)} S_{i,j}(t) , x_i \rangle  \\
        = &~ \frac{1}{\sqrt{m}}  a_r \cdot \langle x_{i,d} \cdot (x_i-\langle S_{i,r}(t),x_i\rangle \cdot {\bf 1}_{d})\circ S_{i,r}(t), x_i \rangle\\
        = &~ \frac{1}{\sqrt{m}}  a_{r} x_{i,d} (\langle S_{i,r}(t), x_i^{\circ 2} \rangle -\langle (\langle S_{i,r}(t),x_i  \rangle \cdot {\bf 1}_{d})\circ S_{i,r}(t),x_i\rangle)\\
        = &~ \frac{1}{\sqrt{m}}  a_{r} x_{i,d} (\langle S_{i,r}(t), x_i^{\circ 2} \rangle - \langle S_{i,r}(t),x_i\rangle^2)
    \end{align*}
    where the first step is a consequence of Definition~\ref{def:F_t}, the second step is derived from Part 4 of this lemma, and the third and last steps result from basic algebraic operations.

    \textbf{Proof of Part 6.}
    We have
    \begin{align*}
        \frac{\d}{\d w_r(t)} L(t) = &~ \sum_{i=1}^{n} (F_i(t)-y_i) \frac{\d}{\d w_r(t)}F_i(t)\\
        =&~ \frac{1}{\sqrt{m}} a_r \sum_{i=1}^n (F_i(t)-y_i)\cdot x_{i,d}\cdot (\langle S_{i,r}(t), x_i^{\circ 2} \rangle - \langle S_{i,r}(t),x_i\rangle^2)
    \end{align*}
    where the first step is based on Definition~\ref{def:id_risk}, while the second step is derived from Part 5 of this Lemma.
\end{proof}

\section{Neural Tangent Kernel} \label{sec:NTK}

\subsection{Kernel Function}

\begin{definition}\label{def:H_t}
    Assuming the following conditions are satisfied:
    \begin{itemize}
        \item Initialize $w(0) \in \R^m$ as specified in Definition~\ref{def:initialization}.
        \item Initialize $a \in \R^m$ as specified in Definition~\ref{def:initialization}.
        \item For any integer $t \ge 0$.
        \item Define training dataset $\mathcal{D} := \{ (x_{i}, y_{i}) \}_{i=1}^{n} \subset \R^d \times \R$ as specified in Definition~\ref{def:id_risk}.
        \item Let $\S_{i, r}(t) \in \R$ be defined according to Definition~\ref{def:S_t}.
        \item For $(i, j)$ in $[n ]\times [n]$.
    \end{itemize}
    We define the kernel function as $H(t) \in \R^{n \times n}$, where its $(i, j)$-th entry is given by:
    \begin{align*}
        H_{i, j}(t) := \frac{1}{m} x_{i, d} x_{j, d} \sum_{r=1}^m \Big( \langle \S_{i, r}(t), x_i^{\circ 2} \rangle - \langle \S_{i, r}(t), x_i \rangle^2 \Big) \cdot \Big( \langle \S_{j, r}(t), x_j^{\circ 2} \rangle - \langle \S_{j, r}(t), x_j \rangle^2 \Big)
    \end{align*}
\end{definition}

\subsection{Assumption 2: NTK is PD}

\begin{definition}[Neural Tangent Kernel (NTK)]\label{def:ntk}
    Assuming the following conditions are satisfied:
    \begin{itemize}
        \item Initialize $w(0) \in \R^m$ as specified in Definition~\ref{def:initialization}.
        \item Initialize $a \in \R^m$ as specified in Definition~\ref{def:initialization}.
        \item For any integer $t \ge 0$.
        \item Define training dataset $\mathcal{D} := \{ (x_{i}, y_{i}) \}_{i=1}^{n} \subset \R^d \times \R$ as specified in Definition~\ref{def:id_risk}.
        \item Let $\S_{i, r}(t) \in \R$ be defined according to Definition~\ref{def:S_t}.
        \item For $(i, j)$ in $[n ]\times [n]$.
        \item Let $H(t) \in \R^{n \times n}$ be defined in Definition~\ref{def:H_t}.
    \end{itemize}
    We define the kernel function as $H^* \in \R^{n \times n}$, where its $(i, j)$-th entry is given by:
    \begin{align*}
        H_{i, j}^* := & ~ H_{i, j}(0) \\
        = & ~ \frac{1}{m} x_{i, d} x_{j, d} \sum_{r=1}^m \Big( \langle \S_{i, r}(0), x_i^{\circ 2} \rangle - \langle \S_{i, r}(0), x_i \rangle^2 \Big) \cdot \Big( \langle \S_{j, r}(0), x_j^{\circ 2} \rangle - \langle \S_{j, r}(0), x_j \rangle^2 \Big)
    \end{align*}
\end{definition}

\begin{assumption}\label{ass:ntk_pd}
  We assume that $H^*$ (defined in Definition~\ref{def:ntk}) is positive definite, with its smallest eigenvalue, denoted as $\lambda:= \lambda_{\rm min}(H^)$, being greater than $0$.
\end{assumption}

\subsection{Kernel Convergence and PD Property during Training}

\begin{lemma}[Kernel Convergence, formal version of Lemma~\ref{lem:kernel_pd_informal}]\label{lem:kernel_pd_formal}
    Assuming the following conditions are satisfied:
    \begin{itemize}
       \item Initialize $w(0) \in \R^m$ as specified in Definition~\ref{def:initialization}.
        \item Initialize $a \in \R^m$ as specified in Definition~\ref{def:initialization}.
        \item For any integer $t \ge 0$. 
        \item Define training dataset $\mathcal{D} := \{ (x_{i}, y_{i}) \}_{i=1}^{n} \subset \R^d \times \R$ as specified in Definition~\ref{def:id_generator}.
        \item Define $\S_{i, r}(t) \in \R$ as specified in according to Definition~\ref{def:S_t}.
        \item For $(i, j)$ in $[n ]\times [n]$.
        \item Define $H(t) \in \R^{n \times n}$ as specified in Definition~\ref{def:H_t}. 
        \item Define $B$ specified in Definition~\ref{def:B}.
        \item Define $D$ specified in Definition~\ref{def:D}.
        \item We define $R := \max_{t \ge 0} \max_{r \in [m]}  |w_r(t) - w_r(0) |. $
        \item Let $R \leq \frac{\lambda}{n \poly(\exp(B^2), \exp(D))}$.
        \item Let $\delta \in (0,0.1)$.
    \end{itemize}
    Thus, with probability at least $1 - \delta$, the following holds:
    \begin{itemize}
        \item Part 1.
        \begin{align*}
            \| H(t) - H^* \|_F \leq O(nR) \cdot \exp(O(B^2D))
        \end{align*}
        \item Part 2.
        \begin{align*}
            \lambda_{\rm min}(H(t)) \ge \lambda / 2
        \end{align*}
    \end{itemize}
\end{lemma}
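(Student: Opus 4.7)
\textbf{Proof proposal for Lemma~\ref{lem:kernel_pd_formal}.} The plan is to first establish high-probability magnitude bounds on the relevant random quantities, then control each entry of $H(t) - H^*$ via a telescoping decomposition and first-order perturbation bounds on the softmax, and finally convert the entrywise bound into a Frobenius bound so that Fact~\ref{fac:lambda_min_perturb} yields Part 2.

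First I would condition on two good events. Using Fact~\ref{fac:gaussian_tail} together with Fact~\ref{fac:sum_distribution} and Fact~\ref{fac:inner_product_distribution} applied to $x_{i,k} = \langle \mathcal{P}_k, h_{i,1}\rangle + \xi_{i,k}$, a union bound over $i \in [n]$ and $k \in [d]$ gives $\|x_i\|_\infty \leq B$ (and hence $|x_{i,d}|\leq B$, $\|x_i\|_2 \leq B\sqrt{d}$) with probability at least $1-\delta/2$, for $B = \max\{1,\sqrt{(1+\sigma^2)\log(nN/\delta)}\}$. Independently, Fact~\ref{fac:gaussian_tail} applied to the Gaussian initialization of $w_r(0)$ with a union bound over $r \in [m]$ gives $|w_r(0)| \leq D$ for all $r$, with probability at least $1-\delta/2$. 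Since $|w_r(t)-w_r(0)| \leq R$ by hypothesis, on this good event we have $|x_{i,d} \cdot w_r(t) \cdot x_{i,k}| \leq B^2(D+R) = O(B^2 D)$ (since $R$ is tiny), so each entry of the logit vector $x_{i,d}\cdot w_r(t)\cdot x_i$ lies in a bounded range and both $\mathsf{u}_{i,r}(t)$ and $\alpha_{i,r}(t)^{-1}$ are sandwiched by $\exp(\pm O(B^2 D))$.

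Next I would bound $\|\mathsf{S}_{i,r}(t) - \mathsf{S}_{i,r}(0)\|_\infty$. Using Part~4 of Lemma~\ref{lem:gradient_computations} (or direct Taylor expansion of $\exp$ via Fact~\ref{fac:decompose_exp} if the argument is first normalized, which is effectively what Lemma~\ref{lem:taylor_series_tools} packages), the sensitivity of each softmax coordinate to a change $\Delta w = w_r(t)-w_r(0)$ is at most $|x_{i,d}|\cdot\|x_i\|_\infty \cdot |\Delta w|$ times a ratio of exponentials, which on the good event is at most $O(R)\cdot\exp(O(B^2 D))$. For Part 1, write
\begin{align*}
H_{i,j}(t) - H_{i,j}(0) = \frac{x_{i,d}x_{j,d}}{m}\sum_{r=1}^m \big(A_{i,r}(t)B_{j,r}(t) - A_{i,r}(0)B_{j,r}(0)\big),
\end{align*}
where $A_{i,r}(t):=\langle \mathsf{S}_{i,r}(t), x_i^{\circ 2}\rangle - \langle \mathsf{S}_{i,r}(t), x_i\rangle^2$ and analogously for $B_{j,r}(t)$. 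I would split each summand as $(A_{i,r}(t)-A_{i,r}(0))B_{j,r}(t) + A_{i,r}(0)(B_{j,r}(t)-B_{j,r}(0))$, and then further split each $A_{i,r}(t)-A_{i,r}(0)$ into two pieces corresponding to the two inner products, yielding four terms. Each term is a bilinear form in $\mathsf{S}_{\cdot,r}(t)-\mathsf{S}_{\cdot,r}(0)$ and vectors of norm at most $O(B^2)$, so the previous step gives a per-$r$ bound $O(B^4)\cdot O(R)\cdot\exp(O(B^2 D))$. Summing over $r$ cancels the $1/m$ factor, and multiplying by $|x_{i,d}x_{j,d}|\leq B^2$ produces $|H_{i,j}(t)-H_{i,j}(0)| \leq O(R)\cdot\exp(O(B^2 D))$. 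Finally, $\|H(t)-H^*\|_F \leq n\cdot\max_{i,j}|H_{i,j}(t)-H_{i,j}(0)| \leq O(nR)\cdot\exp(O(B^2 D))$, which is Part 1.

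Part 2 is then immediate: by Fact~\ref{fac:lambda_min_perturb}, $\lambda_{\min}(H(t)) \geq \lambda - \|H(t)-H^*\|_F$, and the hypothesis $R \leq \lambda/(n\cdot\mathrm{poly}(\exp(B^2),\exp(D)))$ with a sufficiently large polynomial absorbs the $\exp(O(B^2 D))$ factor and leaves the perturbation bounded by $\lambda/2$. I expect the main obstacle to be Part 1 — specifically, keeping careful track of the exponential factor $\exp(O(B^2 D))$ that appears multiple times (once from bounding $\mathsf{u}_{i,r}(t)$, once from $\alpha_{i,r}(t)^{-1}$, and once when comparing ratios in the softmax perturbation), so that the final exponent stays of the form $O(B^2 D)$ rather than a higher power, and making sure the decomposition of the product is symmetric enough that the cross terms do not force extra factors of $m$ or $d$ that cannot be absorbed by $R$.
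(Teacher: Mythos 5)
Your proposal follows essentially the same route as the paper's proof: condition on high-probability bounds $|x_{i,k}|\leq O(B)$ and $|w_r(0)|\leq O(D)$, control the softmax perturbation $\S_{i,r}(t)-\S_{i,r}(0)$ via a Taylor/ratio-of-exponentials argument, split each entry $H_{i,j}(t)-H_{i,j}(0)$ into four telescoped terms, pass from an entrywise bound to the Frobenius norm, and finish Part 2 with Fact~\ref{fac:lambda_min_perturb} and the smallness of $R$. This is exactly the paper's decomposition into $U_{1,i,j,r},\dots,U_{4,i,j,r}$ together with the bounds packaged in Lemma~\ref{lem:taylor_series_tools}.

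One quantitative slip is worth flagging. You bound the softmax sensitivity by $\|\S_{i,r}(t)-\S_{i,r}(0)\|_\infty \leq O(R)\exp(O(B^2D))$, i.e.\ without a $1/d$ factor, and then claim each bilinear term contributes $O(B^4)\cdot O(R)\cdot\exp(O(B^2D))$ per neuron. But pairing an entrywise bound of that size with $x_i^{\circ 2}$ (whose $\ell_1$ norm is of order $d\cdot B^2$, or whose $\ell_2$ norm is $\sqrt{d}\cdot B^2$) produces an extra factor of $d$ in $|\langle \S_{i,r}(t)-\S_{i,r}(0), x_i^{\circ 2}\rangle|$, so as written your chain yields $\|H(t)-H^*\|_F \leq O(nRd)\exp(O(B^2D))$, which does not give Part 2 under the stated hypothesis $R \leq \lambda/(n\,\poly(\exp(B^2),\exp(D)))$, since $d$ is not absorbed there. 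The fix is exactly what the paper's Lemma~\ref{lem:taylor_series_tools} (Parts 8, 9, 13) provides: each softmax coordinate is itself at most $\exp(O(B^2D))/d$, hence its perturbation is at most $O(RB^2)\exp(O(B^2D))/d$, and this $1/d$ cancels the $d$ (or $\sqrt{d}$) coming from the norms of $x_i$ and $x_i^{\circ 2}$, leaving the claimed $O(R)\exp(O(B^2D))$ per entry and $O(nR)\exp(O(B^2D))$ in Frobenius norm. With that refinement your argument matches the paper's proof.
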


\begin{proof}
    {\bf Proof of Part 1.}
    Firstly, we have
    \begin{align}\label{eq:bound_i_j_first}
        & ~ \Big| H_{i,j}(t) - H_{i,j}(0) \Big| \notag \\
        = & ~ \Big| \frac{1}{m} x_{i, d} x_{j, d} \sum_{r=1}^m \Big( \langle \S_{i, r}(t), x_i^{\circ 2} \rangle - \langle \S_{i, r}(t), x_i \rangle^2 \Big) \cdot \Big( \langle \S_{j, r}(t), x_j^{\circ 2} \rangle - \langle \S_{j, r}(t), x_j \rangle^2 \Big)\notag  \\
        & ~ - \frac{1}{m} x_{i, d} x_{j, d} \sum_{r=1}^m \Big( \langle \S_{i, r}(0), x_i^{\circ 2} \rangle - \langle \S_{i, r}(0), x_i \rangle^2 \Big) \cdot \Big( \langle \S_{j, r}(0), x_j^{\circ 2} \rangle - \langle \S_{j, r}(0), x_j \rangle^2 \Big) \Big| \notag \\
        \leq & ~ |x_{i, d} x_{j, d}| \max_{r\in [m]} \Big( U_{1, i, j, r} + U_{2, i, j, r} + U_{3, i, j, r} + U_{4, i, j, r} \Big)
    \end{align}
    Above the first equation is a consequence of Definition~\ref{def:H_t} and Definition~\ref{def:ntk}, and the 2nd step can be obtained by applying the triangle inequality. 
    
    We define:
    \begin{align*}
        U_{1, i, j, r} := & ~ \Big| \Big( \langle \S_{i, r}(t), x_i^{\circ 2} \rangle - \langle \S_{i, r}(t), x_i \rangle^2 \Big) \cdot \Big( \langle \S_{j, r}(t), x_j^{\circ 2} \rangle - \langle \S_{j, r}(t), x_j \rangle^2 \Big) \\
        & ~ - \Big( \langle \S_{i, r}(0), x_i^{\circ 2} \rangle - \langle \S_{i, r}(t), x_i \rangle^2 \Big) \cdot \Big( \langle \S_{j, r}(t), x_j^{\circ 2} \rangle - \langle \S_{j, r}(t), x_j \rangle^2 \Big)\Big| \\
        U_{2, i, j, r} := & ~ \Big|\Big( \langle \S_{i, r}(0), x_i^{\circ 2} \rangle - \langle \S_{i, r}(t), x_i \rangle^2 \Big) \cdot \Big( \langle \S_{j, r}(t), x_j^{\circ 2} \rangle - \langle \S_{j, r}(t), x_j \rangle^2 \Big) \\
        & ~ - \Big( \langle \S_{i, r}(0), x_i^{\circ 2} \rangle - \langle \S_{i, r}(0), x_i \rangle^2 \Big) \cdot \Big( \langle \S_{j, r}(t), x_j^{\circ 2} \rangle - \langle \S_{j, r}(t), x_j \rangle^2 \Big) \Big|\\
        U_{3, i, j, r} := & ~\Big| \Big( \langle \S_{i, r}(0), x_i^{\circ 2} \rangle - \langle \S_{i, r}(0), x_i \rangle^2 \Big) \cdot \Big( \langle \S_{j, r}(t), x_j^{\circ 2} \rangle - \langle \S_{j, r}(t), x_j \rangle^2 \Big) \\
        & ~ - \Big( \langle \S_{i, r}(0), x_i^{\circ 2} \rangle - \langle \S_{i, r}(0), x_i \rangle^2 \Big) \cdot \Big( \langle \S_{j, r}(0), x_j^{\circ 2} \rangle - \langle \S_{j, r}(t), x_j \rangle^2 \Big)\Big| \\
        U_{4, i, j, r} := & ~\Big| \Big( \langle \S_{i, r}(0), x_i^{\circ 2} \rangle - \langle \S_{i, r}(0), x_i \rangle^2 \Big) \cdot \Big( \langle \S_{j, r}(0), x_j^{\circ 2} \rangle - \langle \S_{j, r}(t), x_j \rangle^2 \Big) \\
        & ~ - \Big( \langle \S_{i, r}(0), x_i^{\circ 2} \rangle - \langle \S_{i, r}(0), x_i \rangle^2 \Big) \cdot \Big( \langle \S_{j, r}(0), x_j^{\circ 2} \rangle - \langle \S_{j, r}(0), x_j \rangle^2 \Big)\Big|
    \end{align*}
    
    Before we bound all terms, we first provide some tools:
    \begin{align}\label{eq:bound_x}
        \| x_{i} \|_2 \leq \sqrt{d} \cdot O(B)
    \end{align}
    where this step uses $\ell_2$ norm and can be trivially obtain from Part 1 of Lemma~\ref{lem:taylor_series_tools}.

    We can show
    \begin{align}\label{eq:bounding_St_minus_S0}
        \| S_{i,r}(t) - S_{i,r}(0) \|_2 \leq \exp(O(B^2 D))\cdot O(RB^2)/ \sqrt{d}
    \end{align}
    where this step uses the definition of $\ell_2$ norm
    and is a consequence of Part 1,3 of Lemma~\ref{lem:taylor_series_tools}.
    
    Next, we can get
    \begin{align}\label{eq:bound_x_circ_square}
        \|x_{i}^{\circ 2}\|_2 = &~ \sqrt{\sum_{k \in [d]} x_{i,k}^4} \notag \\
        \leq&~ \sqrt{d} \cdot O(B^2)
    \end{align}
    where the first is based on $\ell_2$ norm, while the 2nd step is based on  Lemma~\ref{lem:taylor_series_tools}Part 1.
    
    We proceed to show
    \begin{align}\label{eq:bound_S_x_circ_inner_product_t}
        |\langle S_{i,r}(t), x_{i}^{\circ 2}\rangle|\leq &~ \|S_{i,r}(t)\|_2 \cdot \|x_i^{\circ 2}\|_2 \notag \\
        \leq &~ \sqrt{d} \cdot \frac{\exp(O(B^2(D+R)))}{d} \cdot \sqrt{d}\cdot O(B^2) \notag\\
        =&~ \exp(O(B^2(D+R)))\cdot O(B^2)
    \end{align}
    Above, the first equation uses Cacuchy-Schwarz inequality. The second step combines the result of Eq.~\eqref{eq:bound_x_circ_square}, Part 9 of Lemma~\ref{lem:taylor_series_tools} and definition of $\ell_2$ norm. The final step applies basic algebra.

    In the same way, we have
    \begin{align}\label{eq:bound_S_x_circ_inner_product_0}
        |\langle S_{i,r}(0), x_{i}^{\circ 2}\rangle|\leq &~ \|S_{i,r}(0)\|_2 \cdot \|x_i^{\circ 2}\|_2 \notag \\
        \leq &~ \sqrt{d} \cdot \frac{\exp(O(B^2 D))}{d} \cdot \sqrt{d}\cdot O(B^2) \notag\\
        =&~ \exp(O(B^2 D))\cdot O(B^2)
    \end{align}
    where the first step is a result of Cauchy-Schwarz inequality, the second step is derived from Eq.~\eqref{eq:bound_x_circ_square}, Part 8 of Lemma~\ref{lem:taylor_series_tools} and definition of $\ell_2$ norm, and the last equation applies basic algebraic manipulation. 
    
    Furthermore, we can have
    \begin{align}\label{eq:bound_S_x_inner_prod_t}
        |\langle S_{i,r}(t),x_i  \rangle| \leq &~ \| S_{i,j}(t)\|_2 \cdot \|x_{i} \|_2 \notag\\
        \leq &~ \sqrt{d} \cdot \frac{\exp(O(B^2(D+R)))}{d} \cdot \sqrt{d} \cdot O(B) \notag\\
        \leq &~ \exp(O(B^2(D+R))) \cdot O(B)
    \end{align}
    where the first step is a result of Cauchy inequality, the second step is derived from Eq.~\eqref{eq:bound_x}, Part 9 of Lemma~\ref{lem:taylor_series_tools} and definition of $\ell_2$ norm, and the final equation comes from basic algebraic manipulation. 

    In the same way, we can have
    \begin{align}\label{eq:bound_S_x_inner_prod_0}
        |\langle S_{i,r}(0),x_i  \rangle| \leq &~ \| S_{i,j}(0)\|_2 \cdot \|x_{i} \|_2 \notag\\
        \leq &~ \sqrt{d} \cdot \frac{\exp(O(B^2 D))}{d} \cdot \sqrt{d} \cdot O(B) \notag\\
        \leq &~ \exp(O(B^2 D)) \cdot O(B)
    \end{align}
    Above the first equation is a result of Cauchy inequality, the second step is based on Eq.~\eqref{eq:bound_x}, Part 8 of Lemma~\ref{lem:taylor_series_tools} and definition of $\ell_2$ norm, and the last step comes from basic algebraic manipulation.

    Then we are able to bound $U_{1,i,j,r},U_{2,i,j,r},U_{3,i,j,r}$ and $U_{4,i,j,r}$.
    
    To bound $U_{1,i,j,r}$, we have
    \begin{align}\label{eq:U_1}
        U_{1,i,j,r}=&~ \Big|  \langle S_{i,r}(t)-S_{i,r}(0), x_{i}^{\circ 2}\rangle \cdot \Big( \langle S_{j,r}(t),x_{j}^{\circ 2}\rangle -\langle S_{j,r}(t), x_{j} \rangle^{2} \Big) \Big| \notag\\
        \leq &~ |\langle S_{i,r}(t) - S_{i,r}(0),x_{i}^{\circ 2}\rangle| \cdot |\langle S_{j,r}(t),x_{j}^{\circ 2}\rangle - \langle S_{j,r}(t),x_{j} \rangle^2|  \notag \\
        \leq &~ \|  S_{i,r}(t) - S_{i,r}(0)  \|_{2} \cdot \| x_{i}^{\circ 2} \|_{2} \cdot |\langle S_{j,r}(t),x_j^{\circ 2}\rangle - \langle S_{j,r}(t), x_j\rangle^{2}|\notag\\
        \leq &~ \|  S_{i,r}(t) - S_{i,r}(0)  \|_{2} \cdot \| x_{i}^{\circ 2} \|_{2} \cdot( |\langle S_{j,r}(t),  x_{j}^{\circ 2} \rangle| + |\langle S_{j,r}(t), x_{j} \rangle^2|)\notag \\
        \leq &~ \frac{1}{\sqrt{d}} \cdot \exp(O(B^{2}D))\cdot O(RB^2) \cdot \sqrt{d} \cdot O(B^2) \cdot \exp(O(B^2(D+R))) \cdot O(B^2)\notag\\
        = &~ \exp(O(B^2(D+R))) \cdot O(RB^6)
    \end{align}
    where the first and second steps are based on basic algebraic manipulations, the third step is a consequence of the Cauchy inequality, the fourth step can be trivially obtained by applying the triangle inequality, the 5th step is a consequence of Eq.~\eqref{eq:bounding_St_minus_S0},~\eqref{eq:bound_x_circ_square},~\eqref{eq:bound_S_x_circ_inner_product_t} and~\eqref{eq:bound_S_x_inner_prod_t}, and the final step results from basic algebraic manipulation.

    To bound $U_{2,i,j,r}$, we have
    \begin{align}\label{eq:U_2}
        U_{2,i,j,r} =&~ |\langle S_{i,r}(0),x_{i}\rangle^2 - \langle S_{i,r}(t),x_{i}\rangle^2| \cdot |\langle S_{j,r}(t),x_j^{\circ 2}\rangle - \langle S_{j,r}(t),x_j \rangle^2|\notag\\
        =&~ |\langle S_{i,r}(0)-S_{i,r}(t),x_i \rangle| \cdot |\langle S_{i,r}(0)+S_{i,r}(t),x_i \rangle|\cdot |\langle S_{j,r}(t),x_j^{\circ 2}\rangle - \langle S_{j,r}(t),x_j \rangle^2|\notag\\
        \leq&~ \|S_{i,r}(0) - S_{i,r}(t)\|_{2} \cdot \|x_i\|_2 \cdot 
        (|\langle S_{i,r}(0),x_i \rangle|+|\langle S_{i,r}(t),x_i\rangle|) \notag\\ \cdot&~ |\langle S_{j,r}(t),x_j^{\circ 2}\rangle - \langle S_{j,r}(t),x_j \rangle^2|\notag\\
        \leq&~ \|S_{i,r}(0) - S_{i,r}(t)\|_{2} \cdot \|x_i\|_2 \cdot 
        (|\langle S_{i,r}(0),x_i \rangle|+|\langle S_{i,r}(t),x_i\rangle|) \notag\\ \cdot&~ (|\langle S_{j,r}(t),x_j^{\circ 2}\rangle| + |\langle S_{j,r}(t),x_j \rangle^2|)\notag\\
        \leq&~ \frac{1}{\sqrt{d}}\cdot \exp(O(B^{2}D))\cdot O(RB^2) \cdot \sqrt{d} \cdot O(B) \cdot \frac{\exp(O(B^{2}(D+R)))}{\sqrt{d}} \notag\\ \cdot&~ \sqrt{d} \cdot O(B) \cdot \exp(O(B^2(D+R)))\cdot O(B^2)\notag\\
        =&~ \exp(O(B^2(D+R)))\cdot O(RB^6)
    \end{align}
    where the first and second step are based on basic algebraic manipulations, the 3rd step is a consequence of the Cauchy inequality and triangle inequality, the 4th step is due to triangle inequality, the fifth step follows from Eq.~\eqref{eq:bound_x},~\eqref{eq:bounding_St_minus_S0},~\eqref{eq:bound_S_x_circ_inner_product_t},~\eqref{eq:bound_S_x_inner_prod_t} and~\eqref{eq:bound_S_x_inner_prod_0}, and the last step results from basic algebraic manipulations.
    
    To bound $U_{3,i,j,r}$, we have
    \begin{align}\label{eq:U_3}
        U_{3,i,j,r} =&~  |\langle S_{i,r}(0),x_i^{\circ 2} \rangle - \langle S_{i,r}(0),x_i \rangle ^2|\cdot |\langle S_{j,r}(t) - S_{j,r}(0), x_j^{\circ 2}\rangle|\notag\\
        \leq&~ (|\langle S_{i,r}(0), x_i^{\circ 2}\rangle|+|\langle S_{i,r}(0),x_i \rangle^2|) \cdot |\langle S_{j,r}(t) - S_{j,r}(0), x_j^{\circ 2}\rangle|\notag\\
        \leq &~ (|\langle S_{i,r}(0), x_i^{\circ 2}\rangle|+|\langle S_{i,r}(0),x_i \rangle^2|)\cdot \|S_{j,r}(t)-S_{j,r}(0)\|_2 \cdot \|x_j^{\circ 2}\|_2\notag\\
        \leq&~ \exp(O(B^2 D)) \cdot O(B^2) \cdot \exp(O(B^2 D)) \cdot O(RB^2) \cdot \frac{1}{\sqrt{d}} \cdot \sqrt{d} \cdot O(B^2)\notag\\
        =&~ \exp(O(B^2 D))\cdot O(RB^6)
    \end{align}
    where the first two steps are based on basic algebraic manipulations, the third step is a consequence of triangle inequality, and the 4th step can be obtained by applying Cauchy inequality, the 5th step is a consequence of Eq.~\eqref{eq:bounding_St_minus_S0},~\eqref{eq:bound_x_circ_square},~\eqref{eq:bound_S_x_circ_inner_product_0} and~\eqref{eq:bound_S_x_inner_prod_0}, and the final step results from basic algebraic manipulation.

    To bound $U_{4,i,j,r}$, we have
    \begin{align}\label{eq:U_4}
        U_{4,i,j,r} = &~ |\langle S_{i,r}(0),x_i^{\circ 2} \rangle - \langle S_{i,r}(0),x_i \rangle ^2| \cdot |\langle S_{j,r}(0), x_j\rangle^2 - \langle S_{j,r}(t),x_j \rangle^2|\notag\\
         = &~ |\langle S_{i,r}(0),x_i^{\circ 2} \rangle - \langle S_{i,r}(0),x_i \rangle ^2| \cdot |\langle S_{j,r}(0)-S_{j,r}(t),x_j \rangle| \cdot |\langle S_{j,r}(0)+S_{j,r}(t),x_j \rangle|\notag \\
         \leq &~ (|\langle S_{i,r}(0),x_i^{\circ 2} \rangle|+|\langle S_{i,r}(0),x_i \rangle ^2|)\cdot |\langle S_{j,r}(0)-S_{j,r}(t),x_j \rangle| \cdot |\langle S_{j,r}(0)+S_{j,r}(t),x_j \rangle|\notag \\
         \leq&~(|\langle S_{i,r}(0),x_i^{\circ 2} \rangle|+|\langle S_{i,r}(0),x_i \rangle ^2|)\cdot \|S_{j,r}(0)-S_{j,r}(t)\|_2\cdot \|x_j\|_2 \notag\\ \cdot&~(|\langle S_{j,r}(0),x_j\rangle|+|\langle S_{j,r}(t),x_j\rangle|) \notag \\
         \leq&~ \exp(O(B^2 D)) \cdot O(B^2) \cdot \frac{1}{\sqrt{d}} \cdot \exp(O(B^2 D))\cdot O(RB^2)\notag \\         \cdot&~ \sqrt{d} \cdot O(B) \cdot exp(O(B^2(D+R))) \cdot O(B)\notag\\
         =&~\exp(O(B^2(D+R)))\cdot O(RB^6)
    \end{align}
    where the first and second steps are the result of basic algebraic manipulations, the third step follows from triangle inequality, the fourth step is derived from Cauchy-Schwarz inequality nad triangle inequality, the fifth step is due to 
    Eq.~\eqref{eq:bound_x},~\eqref{eq:bounding_St_minus_S0},~\eqref{eq:bound_S_x_circ_inner_product_0},~\eqref{eq:bound_S_x_inner_prod_0} and~\eqref{eq:bound_S_x_inner_prod_t}
    and the last step follows from basic algebraic manipulationsic manipulations.

    Then, we can have
    \begin{align}\label{eq:bound_h_i_j}
        |H_{i,j}(t) - H_{i,j}(0)| \leq&~ |x_{i,d}x_{j,d}| \max_{r\in [m]}(U_{1,i,j,r}+U_{2,i,j,r}+U_{2,i,j,r}+U_{3,i,j,r})\notag\\
        \leq&~ O(B^2) \cdot \exp(O(B^2(D+R))) \cdot O(RB^6)\notag\\
        = &~ O(RB^8) \cdot \exp(O(B^2(D+R))) \notag \\
        \leq & ~ O(R) \cdot \exp(O(B^2 D))
    \end{align}
    where the 1st step is derived from Eq.~\eqref{eq:bound_i_j_first}, the 2nd step combines the result of Eq.~\eqref{eq:U_1}, ~\eqref{eq:U_2},~\eqref{eq:U_3} and~\eqref{eq:U_4}, the third step is based on basic algebraic manipulations, the final step can be obtained from $R \in (0, 0.01)$, $B \ge 1$ and then $O(\poly(B)) \leq \exp(O(B))$.

    Finally, with probability $1-\delta$,
    \begin{align*}
        \|H(t) - H(0)\|_F \leq O(n R)\cdot \exp(O(B^2 D))
    \end{align*}
    this step results from Eq.~\eqref{eq:bound_h_i_j} and the definition of Frobenius norm.

    {\bf Proof of Part 2.}
    We have
    \begin{align}\label{eq:bound_Fro_lambda}
        \|H(t)-H(0)\|_F \leq&~  O(n R)\cdot \exp(O(B^2 D))\notag\\
        \leq&~ \lambda/2
    \end{align}
    Above, the first inequality can be derived from Part 1 of this lemma, and the second inequality is a consequence of the choice value of $R$.

    Then, we can have
    \begin{align*}
        \lambda_{\rm min}(H(t))
        \geq&~ \lambda_{\rm min}(H^{*}) - \|H(t) - H^{*} \|_F \\
        \geq&~ \lambda_{\rm min}(H^{*}) - \lambda/2\\
         = &~ \lambda/2
    \end{align*}
    Above the first inequality is a consequence of Fact~\ref{fac:lambda_min_perturb}. The second inequality can be trivially obtained from Eq.~\eqref{eq:bound_Fro_lambda} and the final equation is based on $\lambda_{\rm min}(H^{*}) = \lambda$.
\end{proof}

\section{Training Dynamic}

\subsection{Decomposing Loss}

\begin{lemma}\label{lem:decompose_loss}
    
    Assuming the following conditions are satisfied:
    \begin{itemize}
        \item Let $i \in [n]$, $r \in [m]$ and $k \in [d]$.
        \item Let integer $t > 0$.
        \item Let training dataset $\mathcal{D} := \{ (x_{i}, y_{i}) \}_{i=1}^{n} \subset \R^d \times \R$ be specified as Definition~\ref{def:id_generator}.
        \item Initialize $w(0) \in \R^m$ as specified in Definition~\ref{def:initialization}.
        \item Initialize $a \in \R^m$ as specified in Definition~\ref{def:initialization}.
        \item Define $L(t) \in \R$ as specified in Definition~\ref{def:id_risk}.
        \item Define $\eta > 0$ as specified in Definition~\ref{def:gd}.
        \item Define $\Delta w_r(t) \in \R$ as specified in Definition~\ref{def:gd}.
        \item Define ${\sf u}_{i,r}(t) \in \R^{d}$ as specified in Definition~\ref{def:u_t}.
        \item Define $\alpha_{i,r}(t) \in \R$ as specified in Definition~\ref{def:alpha_t}.
        \item Define $\S_{i,r}(t) \in \R^{d}$ as specified in Definition~\ref{def:S_t}.

        \item Let $\F_{i}(t) \in \R$ be defined as Definition~\ref{def:F_t}.
        \item Define
        \begin{align*}
            C_1 :=&~ - \eta \frac{1}{\sqrt{m}} \sum_{i=1}^n (\F_i(t) - y_i) \cdot \sum_{r=1}^m a_r \cdot \Big( \langle \S_{i, r}(t), ( x_{i, d} \Delta w_r(t) ) \cdot x_i^{\circ 2} \rangle \\+&~ \langle \S_{i, r}(t), x_i \rangle^2 \cdot ( x_{i, d} \Delta w_r(t) ) \Big)
        \end{align*}
        \item Define
        \begin{align*}
            C_2 := -\eta^{2} \Theta(1) \frac{1}{\sqrt{m}}\sum_{i=1}^{n}(\F_i(t) - y_i) \cdot \sum_{r=1}^m a_r\cdot \langle \S_{i,r}(t),(x_{i,d}\Delta w_r(t))^2 \cdot x_i^{\circ 3}\rangle
        \end{align*}
        \item Define
        \begin{align*}
            C_3 := -\eta^2 \Theta(1)\frac{1}{\sqrt{m}}\sum_{i=1}^{n}(\F_i(t) - y_i) \cdot \sum_{r=1}^m a_r \cdot \langle S_{i,r}(t), (x_{i,d}\Delta w_r(t))^2 \cdot x_{i}^{\circ 2} \rangle \cdot \langle S_{i,r}(t),x_i \rangle
        \end{align*}
        \item Define
        \begin{align*}
            C_4 := - \frac{1}{\sqrt{m}} \sum_{i=1}^{n}(\F_i(t) - y_i) \cdot \sum_{r=1}^m a_r \cdot \langle \S_{i, r}(t), \beta_{i, r}(t) \rangle \cdot \langle \S_{i, r}(t + 1) - \S_{i, r}(t), x_i \rangle
        \end{align*}
        \item Define
        \begin{align*}
            C_5:= \frac{1}{2}\|\F(t)-\F(t+1)\|_2^2
        \end{align*}
    \end{itemize}
    Then, we can obtain the following:
    \begin{align*}
        L(t+1) = L(t) + C_1 + C_2 + C_3 + C_4 + C_5
    \end{align*}
\end{lemma}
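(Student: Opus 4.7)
The decomposition is an algebraic identity obtained by expanding $L(t+1)$ around $L(t)$ using the exact multiplicative form of the softmax update. Starting from Definition~\ref{def:id_risk}, I would first write
\[
L(t+1) - L(t) = \sum_{i=1}^n (\F_i(t) - y_i)\bigl(\F_i(t+1)-\F_i(t)\bigr) + \tfrac12 \sum_{i=1}^n \bigl(\F_i(t+1)-\F_i(t)\bigr)^2,
\]
recognizing the second sum as $C_5 = \tfrac12\|\F(t)-\F(t+1)\|_2^2$. The remaining task is to decompose the first sum into $C_1+C_2+C_3+C_4$.

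Next, I would use the gradient-descent update $w_r(t+1)=w_r(t)-\eta\,\Delta w_r(t)$ together with Definition~\ref{def:u_t} to obtain the exact multiplicative identity $\u_{i,r}(t+1) = \u_{i,r}(t)\circ v_{i,r}(t)$, where $v_{i,r}(t) := \exp(-\eta\, x_{i,d}\,\Delta w_r(t)\, x_i)\in\R^d$. Passing to the normalized softmax via Definition~\ref{def:S_t} yields the ratio identity $\langle \S_{i,r}(t+1),x_i\rangle = \langle \S_{i,r}(t), v_{i,r}(t)\circ x_i\rangle / \langle \S_{i,r}(t), v_{i,r}(t)\rangle$. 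Applying the second-order Taylor expansion of Fact~\ref{fac:decompose_exp} coordinate-wise to $v_{i,r}(t)$ produces
\[
v_{i,r}(t) = {\bf 1}_d - \eta\, x_{i,d}\,\Delta w_r(t)\, x_i + \Theta(1)\,\eta^2\,(x_{i,d}\,\Delta w_r(t))^2\, x_i^{\circ 2}.
\]

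Substituting this expansion and clearing the denominator, I would collect terms by order in $\eta\,\Delta w_r(t)$: the first-order contributions from both the numerator and the rearrangement form $C_1$; the two distinct second-order contributions---one involving $\langle \S_{i,r}(t), x_i^{\circ 3}\rangle$ from the pure quadratic remainder, and one involving $\langle \S_{i,r}(t), x_i^{\circ 2}\rangle\langle \S_{i,r}(t),x_i\rangle$ from the cross-term between the numerator and denominator expansions---become $C_2$ and $C_3$ respectively; and the residual effect arising because $\langle \S_{i,r}(t),v_{i,r}(t)\rangle$ deviates from $1$ is absorbed into $C_4$, with $\beta_{i,r}(t)$ being the explicit correction vector that emerges so that this residual factors through the full difference $\langle \S_{i,r}(t+1)-\S_{i,r}(t),x_i\rangle$. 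Multiplying by $(\F_i(t)-y_i)\cdot\tfrac{1}{\sqrt m}a_r$ and summing over $i$ and $r$ then produces the four $C_k$ summands in the claimed form.

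The main obstacle is bookkeeping rather than anything conceptual. Three points require care: (i) applying Fact~\ref{fac:decompose_exp} requires the scalar $\eta\, x_{i,d}\,\Delta w_r(t)\, x_{i,k}$ to lie in $(-0.01,0.01)$, which I would justify using the induction hypothesis on $R$ and the data bound $B$ from Lemma~\ref{lem:kernel_pd_formal}; (ii) the two second-order terms must be partitioned correctly between $C_2$ (the pure quadratic piece of the numerator) and $C_3$ (the cross term coupling the first-order numerator piece with the first-order denominator piece); and (iii) the definition of $\beta_{i,r}(t)$ must be chosen so that $C_4$ exactly captures the denominator deviation without double-counting contributions already placed in $C_1$, $C_2$, or $C_3$---this is the one place where reverse-engineering the desired form of $C_4$ from the identity drives the precise choice of $\beta_{i,r}(t)$.
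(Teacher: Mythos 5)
Your proposal is correct and follows essentially the same route as the paper: expand $L(t+1)$ quadratically around $\F(t)$ (giving $C_5$), use the multiplicative update $\u_{i,r}(t+1)=\u_{i,r}(t)\circ\exp(-\eta x_{i,d}\Delta w_r(t)x_i)$ with the second-order expansion of Fact~\ref{fac:decompose_exp} to define $\beta_{i,r}(t)$, and split the resulting change in $\langle \S_{i,r}(\cdot),x_i\rangle$ into the first-order piece ($C_1$), the two second-order pieces ($C_2$, $C_3$), and the exact residual $\langle\S_{i,r}(t),\beta_{i,r}(t)\rangle\langle\S_{i,r}(t+1)-\S_{i,r}(t),x_i\rangle$ ($C_4$). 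Your phrasing via the ratio identity $\langle\S_{i,r}(t+1),x_i\rangle=\langle\S_{i,r}(t),v\circ x_i\rangle/\langle\S_{i,r}(t),v\rangle$ is just an equivalent bookkeeping of the paper's difference identity for $\S_{i,r}(t)-\S_{i,r}(t+1)$, and your caveat (iii) correctly identifies the one step where the denominator deviation must be kept exact rather than Taylor-expanded.
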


\begin{proof}
    First, we denote that:
    \begin{align*}
        \beta_{i, r}(t) := x_{i, d} \cdot \eta \Delta w_r(t) \cdot x_i + \Theta(1) \cdot (x_{i, d} \cdot \eta \Delta w_r(t) \cdot x_i)^{\circ 2}
    \end{align*}

    We have:
    \begin{align}\label{eq:diff_u_t_u_t+1}
        \u_{i, r}(t) - \u_{i, r}(t+1)
        = & ~ \u_{i, r}(t) - \exp\Big( x_{i, d} \cdot w_r(t+1) \cdot x_i \Big) \notag \\
        = & ~ \u_{i, r}(t) - \exp\Big( x_{i, d} \cdot ( w_r(t) - \eta \Delta w_r(t) ) \cdot x_i \Big)\notag  \\
        = & ~ \u_{i, r}(t) - \exp\Big( x_{i, d} \cdot w_r(t) \cdot x_i \Big) \circ \exp\Big( - x_{i, d} \cdot \eta \Delta w_r(t) \cdot x_i \Big)\notag  \\
        = & ~ \u_{i, r}(t) - \u_{i, r}(t) \circ \exp\Big( - x_{i, d} \cdot \eta \Delta w_r(t) \cdot x_i \Big) \notag \\
        = & ~ \u_{i, r}(t) \circ \Big( x_{i, d} \cdot \eta \Delta w_r(t) \cdot x_i + \Theta(1) \cdot (x_{i, d} \cdot \eta \Delta w_r(t) \cdot x_i)^{\circ 2} \Big) \notag \\
        = & ~ \u_{i, r}(t) \circ \beta_{i, r}(t)
    \end{align}
    Above, the first equation is derived from Definition~\ref{def:u_t}. Then the second equation follows from Definition~\ref{def:gd} and the third equation is a result of basic algebraic manipulations, the fourth step is because of Definition~\ref{def:u_t}, and the final step is based on the definition of $\beta_{i, r}(t)$.

    Next, we have:
    \begin{align}\label{eq:diff_alpha_t_alpha_t+1}
        \alpha_{i, r}(t) - \alpha_{i, r}(t+1)
        = & ~ \langle \u_{i, r}(t), {\bf 1}_d \rangle - \langle \u_{i, r}(t+1), {\bf 1}_d \rangle \notag \\
        = & ~ \langle \u_{i, r}(t) - \u_{i, r}(t+1), {\bf 1}_d \rangle \notag \\
        = & ~ \Big\langle \u_{i, r}(t) \circ \beta_{i, r}(t), {\bf 1}_d \Big\rangle \notag \\
        = & ~ \Big\langle \u_{i, r}(t), \beta_{i, r}(t) \Big\rangle 
    \end{align}
    Above, the first equation is derived from Definition~\ref{def:alpha_t}. And the second step follows from basic algebraic manipulations, the third step is a consequence of Eq.~\eqref{eq:diff_u_t_u_t+1}, the last step is due to basic algebraic manipulations.

    We obtain:
    \begin{align}\label{eq:diff_S_t_S_t+1}
        \S_{i, r}(t) - \S_{i, r}(t+1)
        = & ~ \alpha_{i, r}(t)^{-1} \u_{i, r}(t) - \alpha_{i, r}(t+1)^{-1} \u_{i, r}(t+1) \notag\\
        = & ~ \alpha_{i, r}(t)^{-1} ( \u_{i, r}(t) -  \u_{i, r}(t+1) ) + ( \alpha_{i, r}(t)^{-1} - \alpha_{i, r}(t+1)^{-1} ) \u_{i, r}(t+1) \notag\\
        = & ~ \alpha_{i, r}(t)^{-1} ( \u_{i, r}(t) -  \u_{i, r}(t+1) ) + \alpha_{i, r}(t)^{-1} ( \alpha_{i, r}(t) - \alpha_{i, r}(t+1) ) \S_{i, r}(t+1) \notag\\
        = & ~ \S_{i, r}(t) \circ \beta_{i, r}(t) + \alpha_{i, r}(t)^{-1} ( \alpha_{i, r}(t) - \alpha_{i, r}(t+1) ) \S_{i, r}(t+1) \notag \\
        = & ~ \S_{i, r}(t) \circ \beta_{i, r}(t) + \Big\langle \S_{i, r}(t), \beta_{i, r}(t) \Big\rangle \cdot  \S_{i, r}(t+1)
    \end{align}
    where the first step is based on Definition~\ref{def:S_t}, the second step follows from basic algebraic manipulations, the third step comes from Definition~\ref{def:S_t}, the fourth step is derived from Eq.~\eqref{eq:diff_u_t_u_t+1}, the fifth step follows from Eq.~\eqref{eq:diff_alpha_t_alpha_t+1}.

    Hence, we get:
    \begin{align*}
        \F_i(t) - \F_i(t + 1)
        = & ~ \frac{1}{\sqrt{m}} \sum_{r=1}^m a_r \cdot \langle \S_{i, r}(t) - \S_{i, r}(t+1), x\rangle \\
        = & ~ \frac{1}{\sqrt{m}} \sum_{r=1}^m a_r \cdot \langle \S_{i, r}(t) \circ \beta_{i, r}(t) + \langle \S_{i, r}(t), \beta_{i, r}(t) \rangle \cdot  \S_{i, r}(t+1), x_i \rangle \\
        = & ~ \frac{1}{\sqrt{m}} \sum_{r=1}^m a_r \cdot \Big( \langle \S_{i, r}(t), \beta_{i, r}(t) \circ x_i \rangle + \langle \S_{i, r}(t), \beta_{i, r}(t) \rangle \cdot \langle \S_{i, r}(t+1), x_i \rangle \Big) \\
        = & ~ \frac{1}{\sqrt{m}} \sum_{r=1}^m a_r \cdot \Big( \langle \S_{i, r}(t), \beta_{i, r}(t) \circ x_i \rangle + \langle \S_{i, r}(t), \beta_{i, r}(t) \rangle \cdot \langle \S_{i, r}(t), x_i \rangle \\
        & ~ +  \langle \S_{i, r}(t), \beta_{i, r}(t) \rangle \cdot \langle \S_{i, r}(t + 1) - \S_{i, r}(t), x_i \rangle \Big) \\
        = & ~ v_{1, i} + v_{2, i} + v_{3, i} + v_{4, i}
    \end{align*}
    where the first step is derived from Definition~\ref{def:F_t}, the second step is a consequence of Eq.~\eqref{eq:diff_S_t_S_t+1}, the third and fourth step follow from basic algebraic manipulations, and the last step follows from defining:
    \begin{align*}
        v_{1, i} := & ~ \eta \frac{1}{\sqrt{m}} \sum_{r=1}^m a_r \cdot \Big( \langle \S_{i, r}(t), ( x_{i, d} \Delta w_r(t) ) \cdot x_i^{\circ 2} \rangle + \langle \S_{i, r}(t), x_i \rangle^2 \cdot ( x_{i, d} \Delta w_r(t) ) \Big) \\
        v_{2, i} := & ~ \eta^2 \Theta(1) \frac{1}{\sqrt{m}} \sum_{r=1}^m a_r \cdot \langle \S_{i, r}(t), ( x_{i, d} \Delta w_r(t))^2 \cdot x_i^{\circ 3} \rangle \\
        v_{3, i} := & ~ \eta^2 \Theta(1) \frac{1}{\sqrt{m}} \sum_{r=1}^m a_r \cdot \langle \S_{i, r}(t), ( x_{i, d} \Delta w_r(t))^2 \cdot x_i^{\circ 2} \rangle \cdot \langle \S_{i, r}(t), x_i \rangle \\
        v_{4, i} := & ~ \frac{1}{\sqrt{m}} \sum_{r=1}^m a_r \cdot \langle \S_{i, r}(t), \beta_{i, r}(t) \rangle \cdot \langle \S_{i, r}(t + 1) - \S_{i, r}(t), x_i \rangle
    \end{align*}

    Finally, we can show that:
    \begin{align*}
        L(t+1) = & ~ \frac{1}{2}\sum_{i=1}^{n} (\F_i(t+1) - y_i)^2 \\
        = & ~ \frac{1}{2} \| \F(t+1) - y\|_2^2 \\
        = & ~ \frac{1}{2} \| \F(t+1) - \F(t) + \F(t) - y\|_2^2 \\
        = &  ~ \frac{1}{2} \| \F(t) - y\|_2^2 - \langle \F(t) - \F(t+1), \F(t) - y \rangle + \frac{1}{2}\| \F(t) - \F(t+1) \|_2^2 \\
        = & ~ L(t) + C_1 + C_2 + C_3 + C_4 + C_5
    \end{align*}
    Above, the first equation is based on Definition~\ref{def:id_risk}, the second, third, and fourth steps are the result of basic algebraic manipulations, and the last step is due to the statement of the lemma and defining:
    \begin{align*}
        C_1 := & ~  \langle v_1, y - \F(t) \rangle \\
        C_2 := & ~  \langle v_2, y - \F(t) \rangle \\
        C_3 := & ~  \langle v_3, y - \F(t) \rangle \\
        C_4 := & ~  \langle v_4, y - \F(t) \rangle \\
        C_5 := & ~ \frac{1}{2}\| \F(t) - \F(t+1) \|_2^2
    \end{align*}
\end{proof}

\subsection{Bounding \texorpdfstring{$C_1$}{}}

\begin{lemma}\label{lem:bound_c_1}
    Assuming the following conditions are satisfied:
    \begin{itemize}
        \item Let $i \in [n]$, $r \in [m]$ and $k \in [d]$.
        \item Let integer $t > 0$.
        \item Let training dataset $\mathcal{D} := \{ (x_{i}, y_{i}) \}_{i=1}^{n} \subset \R^d \times \R$ be specified as Definition~\ref{def:id_generator}.
        \item Initialize $w(0) \in \R^m$ as specified in Definition~\ref{def:initialization}.
        \item Initialize $a \in \R^m$ as specified in Definition~\ref{def:initialization}.
        \item Define $L(t) \in \R$  as specified in Definition~\ref{def:id_risk}.
        \item Define $\eta > 0$ as specified in Definition~\ref{def:gd}.
        \item Define $\Delta w_r(t) \in \R$ as specified in Definition~\ref{def:gd}.
        \item Define ${\sf u}_{i,r}(t) \in \R^{d}$  as specified in Definition~\ref{def:u_t}.
        \item Define $\alpha_{i,r}(t) \in \R$ as specified in Definition~\ref{def:alpha_t}.
        \item Define $\S_{i,r}(t) \in \R^{d}$ as specified in Definition~\ref{def:S_t}
        \item Define $\F_{i}(t) \in \R$ as specified in  Definition~\ref{def:F_t}.
        \item Following Lemma~\ref{lem:decompose_loss} to define
        \begin{align*}
            C_1 :=&~ - \eta \frac{1}{\sqrt{m}} \sum_{i=1}^n (\F_i(t) - y_i) \cdot \sum_{r=1}^m a_r \cdot \Big( \langle \S_{i, r}(t), ( x_{i, d} \Delta w_r(t) ) \cdot x_i^{\circ 2} \rangle \\ + &~\langle \S_{i, r}(t), x_i \rangle^2 \cdot ( x_{i, d} \Delta w_r(t) ) \Big)
        \end{align*}
    \end{itemize}
    Then we have:
    \begin{align*}
        C_1 \leq -\eta \lambda \cdot L(t)
    \end{align*}
\end{lemma}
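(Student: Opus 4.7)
The plan is to recognize $C_1$ as (up to sign) the standard Neural Tangent Kernel quadratic form $\eta (\F(t) - y)^\top H(t) (\F(t) - y)$, and then apply the positive-definiteness bound $\lambda_{\min}(H(t)) \ge \lambda / 2$ already furnished by Part~2 of Lemma~\ref{lem:kernel_pd_formal}. This is the classical NTK convergence argument specialized to our softmax-attention model.

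First, I would pull the scalar $x_{i,d}\Delta w_r(t)$ out of the inner products inside $C_1$, so that the inner bracket collapses to $x_{i,d} \Delta w_r(t) \cdot \bigl( \langle \S_{i,r}(t), x_i^{\circ 2}\rangle - \langle \S_{i,r}(t), x_i\rangle^2 \bigr)$, which is exactly the quantity $Q_{i,r}(t)$ that appears in Part~5 of Lemma~\ref{lem:gradient_computations}. Next, I would substitute the explicit form of the gradient from Part~6 of Lemma~\ref{lem:gradient_computations}, namely
\begin{align*}
    \Delta w_r(t) = \frac{1}{\sqrt{m}} a_r \sum_{j=1}^n (\F_j(t) - y_j) \cdot x_{j,d} \cdot Q_{j,r}(t),
\end{align*}
then interchange the order of summation over $i$ and $r$ and use $a_r^2 = 1$ to obtain
\begin{align*}
    C_1 = - \eta \sum_{r=1}^m \Delta w_r(t)^2.
\end{align*}
Recognizing the right-hand side via the definition of $H(t)$ in Definition~\ref{def:H_t} gives the identity $\sum_r \Delta w_r(t)^2 = (\F(t) - y)^\top H(t) (\F(t) - y)$.

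With this identity in hand, the final step is to apply Part~2 of Lemma~\ref{lem:kernel_pd_formal}, which guarantees $\lambda_{\min}(H(t)) \ge \lambda/2$ under the radius condition $R \le \lambda / (n \poly(\exp(B^2), \exp(D)))$. This yields $(\F(t) - y)^\top H(t)(\F(t) - y) \ge (\lambda/2) \|\F(t) - y\|_2^2 = \lambda \cdot L(t)$, so that $C_1 \le -\eta \lambda L(t)$ as claimed. The radius condition will be maintained inductively elsewhere in the paper (in the companion weight-induction lemma), so for the present lemma we may assume it.

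The main obstacle, and the place where care is required, is the sign and scalar bookkeeping in the first step: one must correctly identify the inner bracket as $Q_{i,r}(t)$ rather than its additive counterpart, since only the former matches the gradient expression and produces the $-\eta \sum_r \Delta w_r(t)^2$ cancellation. Once this algebraic step is in place, the factor of $\sqrt{m}$ in the denominator precisely cancels the $\sqrt{m}$ that comes out of the product of two gradient factors, and the rest of the argument is a direct invocation of the kernel-PD lemma with no further perturbation analysis required.
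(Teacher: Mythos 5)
Your proposal is correct and follows essentially the same route as the paper: substitute the explicit gradient $\Delta w_r(t)$ from Part 6 of Lemma~\ref{lem:gradient_computations}, recognize the resulting double sum as the NTK quadratic form $-\eta\,(\F(t)-y)^\top H(t)(\F(t)-y)$, and apply $\lambda_{\min}(H(t))\ge \lambda/2$ from Part 2 of Lemma~\ref{lem:kernel_pd_formal} together with $L(t)=\frac{1}{2}\|\F(t)-y\|_2^2$. Your reading of the inner bracket as $\langle \S_{i,r}(t),x_i^{\circ 2}\rangle-\langle \S_{i,r}(t),x_i\rangle^2$ (despite the ``$+$'' in the displayed definition) is exactly what the paper's own proof does when it identifies the expression with $H(t)$, and your intermediate identity $C_1=-\eta\sum_{r=1}^m \Delta w_r(t)^2$ is merely a restatement of that same quadratic form.
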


\begin{proof}
    We have:
    \begin{align*}
        C_1 = & ~ - \eta \frac{1}{\sqrt{m}} \sum_{i=1}^n (\F_i(t) - y_i) \cdot \sum_{r=1}^m a_r \cdot \Big( \langle \S_{i, r}(t), ( x_{i, d} \Delta w_r(t) ) \cdot x_i^{\circ 2} \rangle + \langle \S_{i, r}(t), x_i \rangle^2 \cdot ( x_{i, d} \Delta w_r(t) ) \Big)
    \end{align*}

    Then by plugging:
    \begin{align*}
        \Delta w_r(t) = \frac{1}{\sqrt{m}} a_r \sum_{i=1}^n (\F_i(t)-y_i)\cdot x_{i,d}\cdot (\langle \S_{i,r}(t), x_i^{\circ 2} \rangle - \langle \S_{i,r}(t),x_i\rangle^2)
    \end{align*}

    We can show that:
    \begin{align*}
        C_1 = & ~ - \eta \frac{1}{m} \sum_{i=1}^n (\F_i(t)-y_i) \cdot\sum_{j=1}^n (\F_j(t)-y_j) \cdot \\
        & ~ x_{i, d} x_{j, d} \sum_{r=1}^m \Big( \langle \S_{i, r}(t), x_i^{\circ 2} \rangle + \langle \S_{i, r}(t), x_i \rangle^2 \Big) \cdot \Big( \langle \S_{j, r}(t), x_j^{\circ 2} \rangle + \langle \S_{j, r}(t), x_j \rangle^2 \Big) \\
        = & ~ - \eta (\F(t) - y)^\top H(t)  (\F(t) - y) \\
        \leq & ~ - \eta \lambda/2 \cdot \| \F(t) - y \|_2^2 \\
        = & ~ - \eta \lambda \cdot L(t)
    \end{align*}
    where the first step is the definition of $C_1$, and the second step is derived from Definition~\ref{def:H_t}, the third step can be obtained from Part 2 of Lemma~\ref{lem:kernel_pd_formal}, the last step is due to Definition~\ref{def:id_risk}.
\end{proof}

\subsection{Bounding \texorpdfstring{$C_2$}{}}

\begin{lemma}\label{lem:bound_C_2}
    Assuming the following conditions are satisfied:
    \begin{itemize}
        \item Let $i \in [n]$, $r \in [m]$ and $k \in [d]$.
        \item Let integer $t > 0$.
        \item Define training dataset $\mathcal{D} := \{ (x_{i}, y_{i}) \}_{i=1}^{n} \subset \R^d \times \R$ as specified in Definition~\ref{def:id_generator}.
        \item Initialize $w(0) \in \R^m$ as specified in Definition~\ref{def:initialization}.
        \item Initialize $a \in \R^m$ as specified in Definition~\ref{def:initialization}.
        \item Define $L(t) \in \R$  as specified in Definition~\ref{def:id_risk}.
        \item Define $\Delta w_r(t) \in \R$ as specified in Definition~\ref{def:gd}.
        \item Define $\S_{i,r}(t)$  as specified in Definition~\ref{def:S_t}.
        \item Define $\F_i(t)$ as specified in Definition~\ref{def:F_t}.
        \item Let learning rate $\eta < 1$.
        \item Define $B>1$ as specified in Definition~\ref{def:B}.
        \item Define $D>1$ as specified in Definition~\ref{def:D}.
        \item Let $R \in (0, 0.01 / B^2)$.
        \item Let $\delta \in (0, 0.1)$.
        \item Let $m \geq \Omega(\lambda^{-2} n^7 d \cdot \exp(O(B^2 D)))$.
        \item Following Lemma~\ref{lem:decompose_loss} to define
        \begin{align*}
            C_2 := -\eta^{2} \Theta(1) \frac{1}{\sqrt{m}}\sum_{i=1}^{n}(\F_i(t) - y_i) \cdot \sum_{r=1}^m a_r\cdot \langle \S_{i,r}(t),(x_{i,d}\Delta w_r(t))^2 \cdot x_i^{\circ 3}\rangle
        \end{align*}
    \end{itemize}
    Consequently, with probability at least $1 - \delta$:
    \begin{align*}
        C_2 \leq \frac{1}{8} \eta \lambda \cdot L(t)
    \end{align*}
\end{lemma}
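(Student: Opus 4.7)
The strategy is to exploit the explicit factors $\eta^2$ and $(\Delta w_r(t))^2$ in the definition of $C_2$ to produce a second-order-in-gradient bound, and then show that the hypotheses on $m$ and $\eta$ make this dominated by the first-order decrease $\eta\lambda L(t)$ that was obtained for $C_1$ in Lemma~\ref{lem:bound_c_1}. Concretely, I aim to prove an intermediate bound of the shape
\begin{align*}
    |C_2| \leq \eta^2 \cdot \frac{n^{O(1)}}{\sqrt{m}} \cdot \exp(O(B^2 D)) \cdot L(t)^{3/2},
\end{align*}
after which the a priori bound $L(t) \leq L(0) = O(nB^2)$ (which follows from Assumption~\ref{assu:zero_init} and the Gaussian tail estimate $|y_i| \leq O(B)$ hidden in Definition~\ref{def:B}) trades one factor of $\sqrt{L(t)}$ for a polynomial in $n,B$, and the hypothesis $m \geq \Omega(\lambda^{-2} n^7 d \exp(O(B^2 D)))$ finishes the argument.

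The first key estimate is to pull $x_{i,d}^2(\Delta w_r(t))^2$ outside the inner product and bound $|\langle \S_{i,r}(t), x_i^{\circ 3}\rangle|$. Since $\S_{i,r}(t)$ is the softmax vector and hence a probability distribution ($\S_{i,r}(t) \geq 0$ entrywise and $\langle \S_{i,r}(t), {\bf 1}_d\rangle = 1$), we simply have $|\langle \S_{i,r}(t), x_i^{\circ 3}\rangle| \leq \|x_i\|_\infty^3 \leq O(B^3)$ uniformly in $i$ with probability $1-\delta$, by the Gaussian tail bound on the data and $|x_{i,d}|^2 \leq O(B^2)$. The second key estimate is a bound on $(\Delta w_r(t))^2$: starting from the closed form in Part 6 of Lemma~\ref{lem:gradient_computations} and the pointwise bounds $|\langle \S_{i,r}(t), x_i^{\circ 2}\rangle|, \langle \S_{i,r}(t), x_i\rangle^2 \leq \exp(O(B^2 D)) \cdot O(B^2)$ that were already proven inside Lemma~\ref{lem:kernel_pd_formal} (Eq.~\eqref{eq:bound_S_x_circ_inner_product_t} and~\eqref{eq:bound_S_x_inner_prod_t}), the triangle inequality and Cauchy--Schwarz with $\sum_i |\F_i(t)-y_i| \leq \sqrt{2nL(t)}$ give
\begin{align*}
    (\Delta w_r(t))^2 \leq \frac{2nL(t)}{m} \cdot O(B^6) \exp(O(B^2 D)).
\end{align*}

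Substituting these two estimates back into the definition of $C_2$, summing over $r$ (the $1/\sqrt{m}$ combines with $\sum_{r=1}^m(\Delta w_r(t))^2$ to produce $\sqrt{m} \cdot (\text{bound})/m$ so that the $m$ dependence collapses), and applying Cauchy--Schwarz to $\sum_i |\F_i(t)-y_i|$ once more yields the target
\begin{align*}
    |C_2| \leq \eta^2 \cdot \frac{n^{5/2}}{\sqrt{m}} \cdot O(B^{11}) \exp(O(B^2 D)) \cdot L(t)^{3/2}.
\end{align*}
The hardest part is not any single inequality but the exponent mismatch: the double Cauchy--Schwarz naturally produces $L(t)^{3/2}$, whereas the statement requires a linear dependence on $L(t)$. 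The clean fix is to substitute $\sqrt{L(t)} \leq \sqrt{L(0)} \leq O(\sqrt{n}\, B)$, which is legitimate provided this estimate is invoked inside the inductive framework of Lemma~\ref{lem:induction_loss} that propagates monotone decrease of $L$. After that substitution the prefactor is $\eta^2 \cdot n^3 m^{-1/2} \cdot O(B^{12}) \exp(O(B^2 D))$, and plugging in the hypothesis on $m$ together with $\eta < 1$ makes it at most $\tfrac{1}{8}\eta\lambda$, giving $C_2 \leq |C_2| \leq \tfrac{1}{8}\eta\lambda L(t)$.
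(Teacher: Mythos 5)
Your proposal is correct in substance, but it takes a genuinely different route from the paper's proof, and it is worth recording how. The paper bounds each summand via Cauchy--Schwarz against $\|\S_{i,r}(t)\|_2$ and the crude gradient bound $|\Delta w_r(t)| \leq \frac{n^{3/2}}{\sqrt{m}}\exp(O(B^2D))\|\F(t)-y\|_2$ of Lemma~\ref{lem:bound_Delta_w}, and then crucially exploits the random signs: it applies Hoeffding's inequality (Lemma~\ref{lem:hoeffding_bound}) to the mean-zero variables $a_r \max_{i}\langle \S_{i,r}(t),(x_{i,d}\Delta w_r(t))^2 x_i^{\circ 3}\rangle$ to replace the naive factor $m$ by $\sqrt{m\log(m/\delta)}$ (a gain it then partly gives back via $\sqrt{\log(m/\delta)}\leq\sqrt{m}$), ending with $|C_2| \leq O(\eta\, n^{3.5}d^{0.5}m^{-1/2})\exp(O(B^2D))\|\F(t)-y\|_2^2$. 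You instead sum over $r$ with the triangle inequality, which is admissible only because your per-term estimates are sharper than the paper's: you use that $\S_{i,r}(t)$ is a probability vector to get $|\langle\S_{i,r}(t),x_i^{\circ 3}\rangle|\leq\|x_i\|_\infty^3\leq O(B^3)$ with no $\exp(O(B^2D))$ loss, and you bound $(\Delta w_r(t))^2 \leq \frac{2nL(t)}{m}O(B^6)\exp(O(B^2D))$ via Cauchy--Schwarz on $\sum_i|\F_i(t)-y_i|$ rather than the paper's $\ell_1$-to-max bound (which would give $n^3/m$); the net $m$-dependence $m^{-1/2}$ then matches the paper's, and the stated condition on $m$ closes the gap, with the polynomial-in-$B$ and $\exp(O(B^2D))$ factors absorbed exactly as the paper does. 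Both arguments hit the same structural obstacle, a cubic power $\|\F(t)-y\|_2^3 \sim L(t)^{3/2}$, and both resolve it by replacing one factor by an a priori $O(\sqrt{n}B)$ bound; the paper does this via Eq.~\eqref{eq:bound_f_minus_y}, i.e.\ Lemma~\ref{lem:bound_for_loss}, which rests on the assumed weight-movement radius $R$ and Lemma~\ref{lem:bound_Ft_minus_F0_l2}, whereas you invoke $L(t)\leq L(0)$ through monotone decrease, which is only available as an induction hypothesis inside Lemma~\ref{lem:induction_loss}; as you note, this must be organized as a joint induction to avoid circularity, and so arranged it is legitimate and is the same bootstrapping flavor the paper uses for $R$. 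The trade-off: your argument is more elementary (no concentration over $r$, fewer high-probability events, cleaner softmax estimate) at the cost of slightly looser bookkeeping in $n$ (your $n^{5/2}$ prefactor versus the $n^{3/2}$ the same steps actually yield), which the hypothesis $m\geq\Omega(\lambda^{-2}n^7 d\exp(O(B^2D)))$ still comfortably tolerates.
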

\begin{proof}
    Firstly, we have
    \begin{align}\label{eq:bound_c2_tmp}
        |\langle \S_{i,r}(t),(x_{i,d}\Delta w_r(t))^2 \cdot x_i^{\circ 3}\rangle|
        \leq&~  \|S_{i,r}(t)\|_2 \cdot \|(x_{i,d} \Delta w_r(t))^2 \cdot x_i^{\circ 3}\|_2 \notag\\
        \leq&~ \sqrt{d}\cdot \frac{\exp(O(B^2 D))}{d}  \cdot O(B^2)\cdot\sqrt{d} \cdot O(B^3) \notag\\ \cdot&~ \frac{n^3}{m}\cdot \exp(O(B^2 D) \cdot \|\F(t)-y\|_2^2\notag\\
        = &~ \frac{n^3}{m}\exp(O(B^2 D)) \cdot \|\F(t)-y\|_2^2
    \end{align}
    where the 1st step is a consequence of Cauchy inequality, the 2nd step is based on  Part 1, 9 of Lemma~\ref{lem:taylor_series_tools}, Lemma~\ref{lem:bound_Delta_w} and the definition of $\ell_2$ norm, and the final step is derived from basic algebraic manipulations and $O(B)\leq \exp(O(B^2))$.

    And we proceed to bound $\|\F(t)-y\|_2$, we have
    \begin{align}\label{eq:bound_f_minus_y}
        \|\F(t)-y\|_2 =&~ \sqrt{2 L(t)}\notag \\ 
        \leq &~ \sqrt{\exp\Big(O(B^2 D)\Big) \cdot O(nR^2)+O(nB^2)}\notag\\
        \leq &~ \sqrt{\exp\Big(O(B^2 D)\Big) \cdot O(nR^2)} + \sqrt{O(nB^2)}\notag\\
        = &~ \exp(O(B^2 D)) \cdot O(\sqrt{n}R) + O(\sqrt{n}B)
    \end{align}
    where the first step is based on Definition~\ref{def:id_risk}, the second step follows from Lemma~\ref{lem:bound_for_loss}, and the third step and fourth step result from basic algebraic manipulations.
    
    Now, we can show that
    \begin{align}\label{eq:bound_c2_middle}
        |\sum_{i=1}^{n} v_{2,i}\cdot (\F_i(t)-y_i)|
        =&~\Big|\sum_{i=1}^{n} \eta^2 \Theta(1) \frac{1}{\sqrt{m}}\sum_{r=1}^m a_r \cdot \langle \S_{i, r}(t), ( x_{i, d} \Delta w_r(t))^2 \cdot x_i^{\circ 3} \rangle \cdot (\F_i(t)-y_i) \Big| \notag\\
        =&~ \frac{\eta^2}{\sqrt{m}}\cdot \Big|\sum_{i=1}^n \sum_{r=1}^m a_r \cdot \langle \S_{i,r}(t),(x_{i,d}\Delta w_r(t))^2 \cdot x_i^{\circ 3}\rangle\cdot (\F_i(t)- y_i ) \Big|\notag\\
        \leq&~ \frac{\eta^2}{ \sqrt{m}}\cdot \Big|\sum_{r=1}^m a_r \max_{i \in [n]} \langle \S_{i,r}(t),(x_{i,d}\Delta w_r(t))^2 \cdot x_i^{\circ 3}\rangle \Big| \cdot \| \F(t) - y\|_1\notag\\
        \leq&~ \frac{\eta^2 \sqrt{d}}{\sqrt{m}}\cdot \Big| \sum_{r=1}^m a_r \max_{i \in [n]} \langle \S_{i,r}(t),(x_{i,d}\Delta w_r(t))^2 \cdot x_i^{\circ 3}\rangle \Big| \cdot \| \F(t) - y\|_2\notag\\
        \leq&~ \frac{\eta^2 \sqrt{d}}{\sqrt{m}}\cdot \Big(\exp(O(B^2 D)) \cdot O(\sqrt{n}R) + O(\sqrt{n}B)\Big) \notag\\ \cdot&~ \Big|\sum_{r=1}^m a_r \max_{i \in [n]} \langle \S_{i,r}(t),(x_{i,d}\Delta w_r(t))^2 \cdot x_i^{\circ 3}\rangle \Big|
    \end{align}
    where the first step derived from definition of $v_{2,i}$ in Lemma~\ref{lem:decompose_loss}, the second step results from basic algebraic manipulations, the third step comes from definition of $\ell_1$ norm along with basic algebraic manipulations, the fourth step leverages the inequality $\|x\|_1 \leq \sqrt{d} \|x\|_2$, and the final step is due to Eq.~\eqref{eq:bound_f_minus_y}.

    We can then use Hoeffding's inequality (Lemma~\ref{lem:hoeffding_bound}) for the random variable
    \begin{align*}
        a_r \max_{i \in [n]} \langle \S_{i,r}(t),(x_{i,d}\Delta w_r(t))^2 \cdot x_i^{\circ 3}\rangle
    \end{align*}
    for $r \in [m]$, and by $\E[a_r \max_{i \in [n]} \langle \S_{i,r}(t),(x_{i,d}\Delta w_r(t))^2 \cdot x_i^{\circ 3}\rangle] = 0$, we have with probability $1-\delta$,
    \begin{align}\label{eq:bound_c2_hoeff}
        \Big|\sum_{r=1}^m a_r \max_{i \in [n]} \langle \S_{i,r}(t),(x_{i,d}\Delta w_r(t))^2 \cdot x_i^{\circ 3}\rangle \Big| \leq&~ O(\frac{n^3}{m})\exp(O(B^2 D)) \cdot \|\F(t)-y\|_2^2 \cdot \sqrt{m \log(m/\delta)}\notag\\
        \leq&~ O(\frac{n^3}{\sqrt{m}})\exp(O(B^2 D)) \cdot \|\F(t)-y\|_2^2 \cdot \sqrt{\log(m/\delta)}
    \end{align}
    Above, the first inequality is derived from Hoeffding Inequality (Lemma~\ref{lem:hoeffding_bound}) and Eq.~\eqref{eq:bound_c2_tmp}. The second inequality follows from basic algebraic manipulations.

    Then we can have
    \begin{align*}
        &~|\sum_{i=1}^n v_{2,i}\cdot (\F_i(t) - y_i)|\\
        \leq&~ \frac{\eta^2 \sqrt{d}}{\sqrt{m}} \cdot \|\F(t)-y\|_2 \cdot O(\frac{n^3}{\sqrt{m}})\exp(O(B^2 D)) \cdot \|\F(t)-y\|_2^2 \cdot \sqrt{\log(m/\delta)} \\
        \leq&~ O(\frac{\eta^2 n^3 \sqrt{d\log(m/\delta)} }{m} )\exp(O(B^2 D)) \Big(\exp(O(B^2 D)) \cdot O(\sqrt{n}R) + O(\sqrt{n}B)\Big) \cdot \|\F(t)-y\|_2^2\\
        \leq&~ O(\frac{\eta^2 n^3 \sqrt{d\log(m/\delta)} }{m} )\exp(O(B^2 D)) \cdot O(\sqrt{n} (R+B)) \cdot \|\F(t) - y\|_2^2\\
        \leq&~ O(\frac{\eta^2 n ^{3.5} \sqrt{d} \cdot \sqrt{\log(m/\delta)}}{m} )\cdot \exp(O(B^2 D)) \cdot O(2B) \cdot \| \F(t)-y\|_2^2\\
        \leq&~ O(\eta^2 \frac{ n^{3.5}\cdot d^{0.5}}{\sqrt{m}} )\cdot \exp(O(B^2 D))\cdot \|\F(t)-y\|_2^2\\
        \leq &~ O(\eta \frac{n^{3.5}\cdot d^{0.5}}{\sqrt{m}}) \cdot \exp(O(B^2 D))\cdot \|\F(t)-y\|_2^2
    \end{align*}
    Above, the first inequality combines the result of Eq.~\eqref{eq:bound_c2_middle} and Eq.~\eqref{eq:bound_c2_hoeff}. The second step can be obtained from basic algebraic manipulations; the third step is due to $R \leq B$ and basic algebraic manipulations, and the fourth step leverages the inequality $\sqrt{\log(m/\delta)} \leq \sqrt{m}$ and $O(B) \leq \exp(O(B^2 D))$,

    Finally, by the lemma condition, we have
    \begin{align*}
        O(\eta \frac{n^{3.5}\cdot d^{0.5}}{\sqrt{m}}) \cdot \exp(O(B^2 D))\cdot \|\F(t)-y\|_2^2 \leq \frac{1}{8}\eta \lambda \cdot L(t)
    \end{align*}
    Then, we complete the proof.  
\end{proof}

\subsection{Bounding \texorpdfstring{$C_3$}{}}
\begin{lemma}\label{lem:bound_c_3}
    Assuming the following conditions are satisfied:
    \begin{itemize}
       \item Let $i \in [n]$, $r \in [m]$ and $k \in [d]$.
        \item Let integer $t > 0$.
        \item Define training dataset $\mathcal{D} := \{ (x_{i}, y_{i}) \}_{i=1}^{n} \subset \R^d \times \R$ as specified in Definition~\ref{def:id_generator}.
        \item Initialize $w(0) \in \R^m$ as specified in Definition~\ref{def:initialization}.
        \item Initialize $a \in \R^m$ as specified in Definition~\ref{def:initialization}.
         \item Define $L(t) \in \R$  as specified in Definition~\ref{def:id_risk}.
        \item Define $\Delta w_r(t) \in \R$ as specified in Definition~\ref{def:gd}.
        \item Define $\S_{i,r}(t)$  as specified in Definition~\ref{def:S_t}.
        \item Define $\F_i(t)$ as specified in Definition~\ref{def:F_t}.
        \item Let learning rate $\eta < 1$.
        \item Define $B>1$ as specified in Definition~\ref{def:B}.
        \item Define $D>1$ as specified in Definition~\ref{def:D}.
        \item Let $R \in (0, 0.01 / B^2)$.
        \item Let $\delta \in (0, 0.1)$.
        \item Let $m \geq \Omega(\lambda^{-2} n^7 d \cdot \exp(O(B^2 D)))$.
        \item Followsing Lemma~\ref{lem:decompose_loss} to define
        \begin{align*}
            C_3 := -\eta^2 \Theta(1)\frac{1}{\sqrt{m}}\sum_{i=1}^{n}(\F_i(t) - y_i) \cdot \sum_{r=1}^m a_r \cdot \langle S_{i,r}(t), (x_{i,d}\Delta w_r(t))^2 \cdot x_{i}^{\circ 2} \rangle \cdot \langle S_{i,r}(t),x_i \rangle
        \end{align*}
    \end{itemize}
    Consequently, with probability at least $1-\delta$:
    \begin{align*}
        C_3 \leq \frac{1}{8} \eta \lambda \cdot L(t)
    \end{align*}
\end{lemma}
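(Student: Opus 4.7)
The plan is to mirror the proof of Lemma~\ref{lem:bound_C_2} almost step for step, since $C_3$ differs from $C_2$ only by replacing one factor of $x_i$ (i.e.\ reducing $x_i^{\circ 3}$ to $x_i^{\circ 2}$) and inserting an extra scalar factor $\langle \S_{i,r}(t), x_i\rangle$. The key observation is that these two changes cancel in order of magnitude: the lost power of $x_i$ costs one factor of $O(B)$, while the inserted inner product contributes $\exp(O(B^2(D+R)))\cdot O(B)$ by Eq.~\eqref{eq:bound_S_x_inner_prod_t}. Both contributions get absorbed into the $\exp(O(B^2 D))$ factor.

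First, I would derive a pointwise bound analogous to Eq.~\eqref{eq:bound_c2_tmp}. By Cauchy--Schwarz applied twice,
\begin{align*}
    &\left|\langle \S_{i,r}(t),(x_{i,d}\Delta w_r(t))^2 \cdot x_i^{\circ 2}\rangle \cdot \langle \S_{i,r}(t), x_i\rangle\right|\\
    &\qquad \le \|\S_{i,r}(t)\|_2 \cdot \|(x_{i,d}\Delta w_r(t))^2 \cdot x_i^{\circ 2}\|_2 \cdot \|\S_{i,r}(t)\|_2 \cdot \|x_i\|_2.
\end{align*}
Plugging in Parts~1 and~9 of Lemma~\ref{lem:taylor_series_tools}, the bound on $|\Delta w_r(t)|$ from Lemma~\ref{lem:bound_Delta_w}, and Eq.~\eqref{eq:bound_x}, this is bounded by $\tfrac{n^3}{m}\exp(O(B^2 D))\cdot \|\F(t)-y\|_2^2$, exactly as in the $C_2$ case.

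Second, I would treat the sum over $r$ via Hoeffding's inequality. Since $a_r \in \{\pm 1\}$ are independent and symmetric and the summand is independent of $a_r$ (recall $\S_{i,r}(t)$ depends on $w_r(t)$, which drifts away from $w_r(0)$ by at most $R$, which is tiny but this dependence is standard to handle as in Lemma~\ref{lem:bound_C_2}), the signed sum $\sum_{r=1}^m a_r \max_{i\in[n]} \langle \S_{i,r}(t),(x_{i,d}\Delta w_r(t))^2 x_i^{\circ 2}\rangle \langle \S_{i,r}(t),x_i\rangle$ has mean $0$ and absolute value at most $O(\tfrac{n^3}{\sqrt{m}})\exp(O(B^2 D))\|\F(t)-y\|_2^2\sqrt{\log(m/\delta)}$ with probability $\ge 1-\delta$, mirroring Eq.~\eqref{eq:bound_c2_hoeff}.

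Third, combining with $\|\F(t)-y\|_1 \le \sqrt{n}\|\F(t)-y\|_2$ and the bound $\|\F(t)-y\|_2 \le \exp(O(B^2 D))\cdot O(\sqrt{n}R) + O(\sqrt{n}B)$ from Eq.~\eqref{eq:bound_f_minus_y} (using $R \le B$ and $\sqrt{\log(m/\delta)} \le \sqrt{m}$) yields
\begin{align*}
    |C_3| \le O\left(\eta \frac{n^{3.5} d^{0.5}}{\sqrt{m}}\right)\cdot \exp(O(B^2 D))\cdot \|\F(t)-y\|_2^2,
\end{align*}
where one extra factor of $\sqrt{d}$ enters from the $\ell_1$-$\ell_2$ conversion exactly as in Lemma~\ref{lem:bound_C_2}. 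The hypothesis $m \ge \Omega(\lambda^{-2}n^7 d \cdot \exp(O(B^2 D)))$ then forces this to be at most $\tfrac{1}{8}\eta \lambda \cdot L(t)$, since $\|\F(t)-y\|_2^2 = 2 L(t)$. The main obstacle is purely bookkeeping: verifying that the extra inner-product factor $\langle \S_{i,r}(t), x_i\rangle$ and the reduced power $x_i^{\circ 2}$ together yield the same $\exp(O(B^2 D))$ dependence as the $C_2$ proof, so that the same choice of $m$ suffices without tightening.
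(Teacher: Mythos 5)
Your proposal is correct and follows essentially the same route as the paper's proof: a pointwise Cauchy--Schwarz bound using Parts 1 and 9 of Lemma~\ref{lem:taylor_series_tools} together with Lemma~\ref{lem:bound_Delta_w}, a Hoeffding step over the signs $a_r$ (with the same zero-mean assertion the paper makes, modulo the dependence of $\S_{i,r}(t)$ on $a_r$ through training, which the paper also glosses over), the $\ell_1$-to-$\ell_2$ conversion costing $\sqrt{d}$, the bound $\|\F(t)-y\|_2 \le \exp(O(B^2D))\cdot O(\sqrt{n}R)+O(\sqrt{n}B)$, and finally the choice of $m$ to absorb everything into $\tfrac{1}{8}\eta\lambda L(t)$. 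Your observation that the extra factor $\langle \S_{i,r}(t),x_i\rangle$ and the reduced power of $x_i$ wash out into the same $\exp(O(B^2D))$ envelope is exactly how the paper handles it.
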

\begin{proof}
    
    Firstly, we go to bound $ |\langle S_{i,r}(t), (x_{i,d}\Delta w_r(t))^2 \cdot x_{i}^{\circ 2} \rangle |$, we have
    \begin{align}\label{eq:bound_c_3_part2}
        |\langle S_{i,r}(t), (x_{i,d}\Delta w_r(t))^2 \cdot x_{i}^{\circ 2} \rangle| \leq &~ \|S_{i,r}(t)\|_2 \cdot\|(x_{i,d} \Delta w_r(t))^2 \cdot x_i^{\circ 2}\|_2 \notag\\
        \leq&~ \sqrt{d} \cdot \frac{\exp(O(B^2 (D+R)))}{d} \cdot O(B^2) \cdot \sqrt{d} \cdot O(B^2) \notag\\\cdot&~ \frac{n^3}{m} \cdot \exp(O(B^2 D)) \cdot \|\F(t)-y\|_2^2 \notag\\
        \leq&~ \exp(O(B^2 D))\cdot \frac{n^3}{m} \cdot \|\F(t)-y\|_2^2
    \end{align}
    where the first step is a consequence of Cauchy inequality, the second step is based on Lemma~\ref{lem:taylor_series_tools}Part 1, 9, definition of $\ell_2$ norm and Lemma~\ref{lem:bound_Delta_w}, and the last step is because of $O(B^4) \leq \exp(O(B^2))$ and basic algebraic manipulations.

    Then, we can show that
    \begin{align}\label{eq:bound_c_3_tmp}
        |C_3| = &~ \Big|-\eta^2 \Theta(1)\frac{1}{\sqrt{m}}\sum_{i=1}^{n}(\F_i(t) - y_i) \cdot \sum_{r=1}^m a_r \cdot \langle S_{i,r}(t), (x_{i,d}\Delta w_r(t))^2 \cdot x_{i}^{\circ 2} \rangle \cdot \langle S_{i,r}(t),x_i \rangle \Big| \notag\\
        \leq &~ \frac{\eta^2}{\sqrt{m}}\Big| \sum_{i=1}^n \sum_{r=1}^m  a_r \cdot \langle S_{i,r}(t), (x_{i,d}\Delta w_r(t))^2 \cdot x_{i}^{\circ 2} \rangle \cdot \langle S_{i,r}(t),x_i \rangle \cdot (\F_i(t)-y_i)\Big|\notag \\
        \leq&~ \frac{\eta^2}{\sqrt{m}} \Big | \sum_{r=1}^m  a_r \max_{i \in [n]} \cdot \langle S_{i,r}(t), (x_{i,d}\Delta w_r(t))^2 \cdot x_{i}^{\circ 2} \rangle \cdot \langle S_{i,r}(t),x_i \rangle \Big | \cdot 
        \|\F(t)-y\|_1\notag\\
        \leq&~ \frac{\eta^2 \sqrt{d}}{\sqrt{m}}\Big | \sum_{r=1}^m  a_r \max_{i \in [n]}\cdot \langle S_{i,r}(t), (x_{i,d}\Delta w_r(t))^2 \cdot x_{i}^{\circ 2} \rangle \cdot \langle S_{i,r}(t),x_i \rangle \Big | \cdot \|\F(t)-y\|_2\notag\\
        \leq&~ \frac{\eta^2 \sqrt{d}}{\sqrt{m}} \cdot \Big( \exp(O(B^2 D)) \cdot O(\sqrt{n}R)+O(\sqrt{n}B)\Big) \notag\\ \cdot &~\Big|\sum_{r=1}^m  a_r\max_{i \in [n]} \cdot \langle S_{i,r}(t), (x_{i,d}\Delta w_r(t))^2 \cdot x_{i}^{\circ 2} \rangle \cdot \langle S_{i,r}(t),x_i \rangle \Big |
    \end{align}
    where the first step is from the condition given in this Lemma, the second step is derived through basic algebra manipulations, and the third step comes from the definition of $\ell_1$ norm and basic algebraic manipulations, the fourth step utilizes the inequality $\|x\|_1 \leq \sqrt{d} \|x\|_2$, and the final step is because of Eq.~\eqref{eq:bound_f_minus_y}.

    We can then use Hoeffding's inequality (Lemma~\ref{lem:hoeffding_bound}) for the random variable
    \begin{align*}
        a_r \cdot \max_{i \in [n]} \langle S_{i,r}(t), (x_{i,d}\Delta w_r(t))^2 \cdot x_{i}^{\circ 2} \rangle \cdot \langle S_{i,r}(t),x_i \rangle
    \end{align*}
    for $r \in [m]$, and by $\E[a_r \cdot \max_{i \in [n]} \langle S_{i,r}(t), (x_{i,d}\Delta w_r(t))^2 \cdot x_{i}^{\circ 2} \rangle \cdot \langle S_{i,r}(t),x_i \rangle] = 0$, we have with probability $1-\delta$,
    \begin{align}\label{eq:bound_c_3_hoeff}
       &~ \Big | a_r \cdot \max_{i \in [n]} \langle S_{i,r}(t), (x_{i,d}\Delta w_r(t))^2 \cdot x_{i}^{\circ 2} \rangle \cdot \langle S_{i,r}(t),x_i \rangle\Big| \notag\\
       \leq&~ O(\frac{n^3}{m})\cdot\exp(O(B^2 D)) \cdot \exp(O(B^2 D)) \cdot \|\F(t)-y\|_2^2 \cdot \sqrt{m\log(m/\delta)} \notag\\
       \leq&~ O(\frac{n^3}{\sqrt{m}})\cdot \exp(O(B^2 D))\cdot \|\F(t)-y\|_2^2 \cdot \sqrt{\log(m/\delta)}
    \end{align}
    where the first step is derived from Eq.~\eqref{eq:bound_c_3_part2} and Lemma~\ref{lem:hoeffding_bound}, the second step is due to basic algebraic manipulations.

    Now, we are able to bound
    \begin{align*}
        |C_3|\leq&~ \frac{\eta^2 \sqrt{d}}{\sqrt{m}} \cdot (\exp(O(B^2 D))\cdot O(\sqrt{n}R)+O(\sqrt{n}B)) \cdot  O(\frac{n^3}{\sqrt{m}})\\ \cdot&~ \exp(O(B^2 D))\cdot \|\F(t)-y\|_2^2 \cdot \sqrt{\log(m/\delta)}\\
        \leq&~ O(\frac{\eta^2 n^3 d^{0.5} \sqrt{\log(m/\delta)} }{m} )\cdot \exp(O(B^2 D)) \cdot \Big( \exp(O(B^2 D)) \cdot O(\sqrt{n}R)+O(\sqrt{n}B)\Big) \cdot \| \F(t)-y \|_2^2 \\
        \leq &~ O(\frac{\eta^2 n^3 d^{0.5} \sqrt{\log(m/\delta)} }{m}  )\exp(O(B^2 D)) \cdot O(\sqrt{n}(R+B)) \cdot \|\F(t)-y\|_2^2\\
        \leq&~ O(\eta\frac{ n^{3.5} d^{0.5}}{\sqrt{m}}) \cdot \exp(O(B^2 D)) \cdot O(B) \cdot \|\F(t)-y\|_2^2\\
        \leq&~ O(\eta\frac{ n^{3.5} d^{0.5}}{\sqrt{m}})\cdot \exp(O(B^2 D))\cdot \|\F(t)-y\|_2^2
    \end{align*}
    Above the first inequality is a combination result of Eq.~\eqref{eq:bound_c_3_tmp} and Eq.~\eqref{eq:bound_c_3_hoeff}. The second and third inequalities follow from basic algebraic manipulations. The fourth step is a consequence of $\eta < 1$, $R \ll B$ and $\sqrt{\log(m/\delta)} \leq \sqrt{m}$, and the last step is based on the fact that $O(B) \leq \exp(O(B^2 D))$.

    Finally, by the lemma condition, we have
    \begin{align*}
        O(\eta \frac{n^{3.5}\cdot d^{0.5}}{\sqrt{m}} )\cdot \exp(O(B^2 D))\cdot \|\F(t)-y\|_2^2 \leq \frac{1}{8}\eta \lambda \cdot L(t)
    \end{align*}
    Then, we complete the proof.
\end{proof}

\subsection{Bounding \texorpdfstring{$C_4$}{}}
\begin{lemma}
    Assuming the following conditions are satisfied:
    \begin{itemize}
         \item Let $i \in [n]$, $r \in [m]$ and $k \in [d]$.
        \item Let integer $t > 0$.
        \item Define training dataset $\mathcal{D} := \{ (x_{i}, y_{i}) \}_{i=1}^{n} \subset \R^d \times \R$ as specified in Definition~\ref{def:id_generator}.
        \item Initialize $w(0) \in \R^m$ as specified in Definition~\ref{def:initialization}.
        \item Initialize $a \in \R^m$ as specified in Definition~\ref{def:initialization}.
         \item Define $L(t) \in \R$  as specified in Definition~\ref{def:id_risk}.
        \item Define $\Delta w_r(t) \in \R$ as specified in Definition~\ref{def:gd}.
        \item Define $\S_{i,r}(t)$  as specified in Definition~\ref{def:S_t}.
        \item Define $\F_i(t)$ as specified in Definition~\ref{def:F_t}.
        \item Let learning rate $\eta < 1$.
        \item Define $B>1$ as specified in Definition~\ref{def:B}.
        \item Define $D>1$ as specified in Definition~\ref{def:D}.
        \item Let $R \in (0, 0.01 / B^2)$.
        \item Let $\delta \in (0, 0.1)$.
        \item Let $m \geq \Omega(\lambda^{-3} n^{5} d^{2} \cdot \exp(O(B^2 D)))$.
        \item Following Lemma~\ref{lem:decompose_loss} to define
        \begin{align*}
            \beta_{i,r}(t) := x_{i,d} \cdot \eta \Delta w_r(t) \cdot x_i + \Theta(1)\cdot (x_{i,d}\cdot \eta \Delta w_r(t)\cdot x_i)^{\circ 2}
        \end{align*}
        \item Following Lemma~\ref{lem:decompose_loss} to define
        \begin{align*}
             C_4 := - \frac{1}{\sqrt{m}} \sum_{i=1}^{n}(\F_i(t) - y_i) \cdot \sum_{r=1}^m a_r \cdot \langle \S_{i, r}(t), \beta_{i, r}(t) \rangle \cdot \langle \S_{i, r}(t + 1) - \S_{i, r}(t), x_i \rangle
        \end{align*}
    \end{itemize}
    Consequently, with probability at least $1-\delta$:
    \begin{align*}
        C_4 \leq \frac{1}{8} \eta \lambda \cdot L(t)
    \end{align*}
\end{lemma}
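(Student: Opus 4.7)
The plan is to mimic the strategy used for $C_2$ and $C_3$, but with the additional complication that $C_4$ contains the second-order factor $\langle \S_{i,r}(t+1) - \S_{i,r}(t), x_i\rangle$, which itself must be controlled via the explicit one-step update formula for softmax (Eq.~\eqref{eq:diff_S_t_S_t+1}).

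First I would establish elementary size bounds for the two ``small'' quantities inside $C_4$. For $\beta_{i,r}(t)$, since $\beta_{i,r}(t) = x_{i,d}\cdot \eta \Delta w_r(t)\cdot x_i + \Theta(1)(x_{i,d}\eta\Delta w_r(t)x_i)^{\circ 2}$, combining Part 1 of Lemma~\ref{lem:taylor_series_tools} (to bound $\|x_i\|_\infty \lesssim B$) with Lemma~\ref{lem:bound_Delta_w} (to bound $|\Delta w_r(t)|$ in terms of $\|\F(t)-y\|_2$) yields an entrywise bound $\|\beta_{i,r}(t)\|_\infty \lesssim \eta\cdot B^2\cdot \frac{n^{3/2}}{\sqrt m}\exp(O(B^2D))\|\F(t)-y\|_2$. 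Consequently, by Cauchy--Schwarz together with Parts 8--9 of Lemma~\ref{lem:taylor_series_tools}, $|\langle \S_{i,r}(t), \beta_{i,r}(t)\rangle| \lesssim \eta\cdot \exp(O(B^2D))\cdot \tfrac{n^{3/2}}{\sqrt m}\|\F(t)-y\|_2$.

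Next I would bound $\|\S_{i,r}(t+1)-\S_{i,r}(t)\|_2$ by applying the identity from Eq.~\eqref{eq:diff_S_t_S_t+1}, which gives $\|\S_{i,r}(t+1)-\S_{i,r}(t)\|_2 \le \|\S_{i,r}(t)\circ \beta_{i,r}(t)\|_2 + |\langle \S_{i,r}(t),\beta_{i,r}(t)\rangle|\cdot \|\S_{i,r}(t+1)\|_2$. Applying the above entrywise bound on $\beta_{i,r}(t)$ and the $\ell_2$ bounds on $\S_{i,r}(\cdot)$ gives $\|\S_{i,r}(t+1)-\S_{i,r}(t)\|_2 \lesssim \frac{\eta}{\sqrt d} \cdot \exp(O(B^2D))\cdot \frac{n^{3/2}}{\sqrt m}\|\F(t)-y\|_2$, and hence by Cauchy--Schwarz and Eq.~\eqref{eq:bound_x}, $|\langle \S_{i,r}(t+1)-\S_{i,r}(t), x_i\rangle| \lesssim \eta\cdot \exp(O(B^2D))\cdot \tfrac{n^{3/2}}{\sqrt m}\|\F(t)-y\|_2$.

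Multiplying the two preceding bounds yields a per-$(i,r)$ bound of order $\eta^2\cdot \exp(O(B^2D))\cdot \tfrac{n^3}{m}\|\F(t)-y\|_2^2$. I would then reproduce the outer-sum treatment used for $C_2,C_3$: pull the maximum over $i$ out of the sum on $r$, use $\|\F(t)-y\|_1 \le \sqrt n\|\F(t)-y\|_2$, and apply Hoeffding's inequality (Lemma~\ref{lem:hoeffding_bound}) to the zero-mean sum $\sum_{r=1}^m a_r(\cdot)$, which incurs an extra factor $\sqrt{m\log(m/\delta)}$. The resulting bound is of the order $\tfrac{\eta^2 n^{7/2}\sqrt{d\log(m/\delta)}}{m^{3/2}}\exp(O(B^2D))\cdot \|\F(t)-y\|_2^2$, which (using Eq.~\eqref{eq:bound_f_minus_y} to absorb the remaining $\|\F(t)-y\|_2$ factor when needed, $\eta < 1$, and $\sqrt{\log(m/\delta)}\le \sqrt m$) collapses to $O\bigl(\tfrac{\eta\, n^{5/2}d\exp(O(B^2D))}{m}\bigr)\cdot \|\F(t)-y\|_2^2$. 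Under the assumption $m \ge \Omega(\lambda^{-3}n^5 d^2\exp(O(B^2D)))$, this is at most $\tfrac{1}{8}\eta\lambda\cdot L(t)$.

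The main obstacle is keeping the bookkeeping clean when bounding $\S_{i,r}(t+1)-\S_{i,r}(t)$, because $\S_{i,r}(t+1)$ itself enters the identity from Eq.~\eqref{eq:diff_S_t_S_t+1}; this is resolved by noting that $\|\S_{i,r}(t+1)\|_2$ satisfies the same $\ell_2$ estimate as $\|\S_{i,r}(t)\|_2$ from Parts 8--9 of Lemma~\ref{lem:taylor_series_tools} (since a single GD step keeps $|w_r(t+1)-w_r(0)|\le R$ inductively, as used throughout Section~\ref{sec:NTK}). Once that is handled, $C_4$ carries a strictly higher power of $\eta/\sqrt m$ than $C_2$ and $C_3$, so it is in fact the easiest of the three to absorb into $\tfrac{1}{8}\eta\lambda L(t)$.
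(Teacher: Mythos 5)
Your proposal is sound and reaches the stated bound, but it takes a genuinely different route at the key middle step. The paper controls $\langle \S_{i,r}(t+1)-\S_{i,r}(t),x_i\rangle$ crudely via the deviation-from-initialization estimate (Part 13 of Lemma~\ref{lem:taylor_series_tools} combined with Cauchy--Schwarz), yielding a factor $\exp(O(B^2D))\cdot O(R)$ that is independent of $\eta$; it then multiplies this by the same bound $|\langle \S_{i,r}(t),\beta_{i,r}(t)\rangle|\lesssim \eta \tfrac{n^{1.5}}{\sqrt m}\exp(O(B^2D))\|\F(t)-y\|_2$ that you derive, applies Hoeffding over $r$, and absorbs the result using the choice of $m$ (ending with a bound of order $\eta\tfrac{n^{1.5}d^{0.5}}{m^{1/3}}\exp(O(B^2D))\|\F(t)-y\|_2^2$). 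You instead re-expand $\S_{i,r}(t+1)-\S_{i,r}(t)$ through the one-step identity of Eq.~\eqref{eq:diff_S_t_S_t+1}, which gives the sharper estimate $|\langle \S_{i,r}(t+1)-\S_{i,r}(t),x_i\rangle|\lesssim \eta\tfrac{n^{1.5}}{\sqrt m}\exp(O(B^2D))\|\F(t)-y\|_2$, so your per-$(i,r)$ term carries an extra power of $\eta/\sqrt m$ relative to the paper's; your observation that $\|\S_{i,r}(t+1)\|_2$ obeys the same Part~9 bound is exactly how this is justified, and the outer treatment (max over $i$, $\ell_1$-to-$\ell_2$, Hoeffding with the zero-mean convention on $a_r$, absorbing one loss factor via Eq.~\eqref{eq:bound_f_minus_y}) mirrors the paper's handling of $C_2$ and $C_3$. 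What each buys: the paper's version is lighter bookkeeping because it reuses an already-proven uniform bound in $R$, while yours yields a quantitatively stronger estimate that makes the absorption into $\tfrac18\eta\lambda L(t)$ under $m\ge\Omega(\lambda^{-3}n^5d^2\exp(O(B^2D)))$ even more comfortable. The only blemishes are minor exponent slips in your intermediate display (the denominator should be $m$, not $m^{3/2}$, after combining $\tfrac{1}{\sqrt m}\cdot\tfrac{1}{m}\cdot\sqrt{m\log(m/\delta)}$, and the powers of $n$ and $d$ in the collapsed bound are slightly off once the extra $\|\F(t)-y\|_2$ factor is absorbed); these do not affect the conclusion under the stated condition on $m$, and the level of rigor matches the paper's own.
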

\begin{proof}
    Firstly, we begin to bound $|\langle \S_{i,r}(t+1) - \S_{i,r}(t),x_i \rangle|$, and we have
    \begin{align}\label{eq:bound_s_4_part1}
        |\langle \S_{i,r}(t+1)-\S_{i,r}(t),x_i \rangle| \leq&~ \|\S_{i,r}(t+1)-\S_{i,r}(t)\|_2 \|x_i \|_2 \notag\\
        \leq&~ \sqrt{d} \cdot \frac{\exp(O(B^2 D)) \cdot O(RB^2)}{d} \cdot \sqrt{d} \cdot O(B) \notag\\
        \leq &~ \exp(O(B^2 D)) \cdot O(RB^3) \notag\\
        \leq &~ \exp(O(B^2 D)) \cdot O(R)
    \end{align}
    Above, the first inequality is a result of using Cauchy inequality,  and the second inequality combines the result of Part 1, 13 of Lemma~\ref{lem:taylor_series_tools}, the 3rd step is derived from basic algebraic manipulations and the last step is based on the fact that $O(B^3) \leq \exp(O(B^2 D))$.

    Then we proceed to bound $\|x_{i,d} \cdot \eta \Delta w_r(t) \cdot x_i\|_2$. 
    
    We have
    \begin{align}\label{eq:bound_c_4_part2}
        \|x_{i,d} \cdot \eta \Delta w_r(t) \cdot x_i\|_2 \leq &~ \eta \cdot O(B) \cdot \sqrt{d} \cdot O(B) \cdot \frac{n^{3/2}}{\sqrt{m}} \cdot \exp(O(B^2 D)) \cdot \|\F(t)-y\|_2 \notag\\
        \leq&~ \eta \frac{ n^{1.5}d^{0.5}}{\sqrt{m}} \cdot \exp(O(B^2 D)) \cdot \|\F(t)-y \|_2
    \end{align}
    Above the 1st step is based on Part 1 of Lemma~\ref{lem:taylor_series_tools}, Lemma~\ref{lem:bound_Delta_w} and definition of $\ell_2$ norm, the second step comes from basic algebraic manipulations and the fact that $O(B^2) \leq \exp(O(B^2))$

    Thus, we can get that
    \begin{align}\label{lem:bound_C_4_Theta}
        \| \Theta(1) \cdot (x_{i,d} \cdot \eta \Delta w_r(t) \cdot x_i)^{\circ 2} \|_2 \leq &~ \sqrt{d} \cdot \eta^2 \cdot \frac{n^3}{m} \cdot \exp(O(B^2 D)) \cdot \|\F(t) - y \|_2^2 \notag \\
        \leq&~ \eta^2 \frac{n^3 d^{0.5}}{m}\cdot \exp(O(B^2 D))\cdot \|\F(t)-y\|_2^2 \notag\\
        \leq &~ \eta^2 \frac{n^3 d^{0.5}}{m}\cdot \exp(O(B^2 D))\cdot \Big( \exp(O(B^2 D))\cdot O(\sqrt{n} R) \notag\\+&~ O(\sqrt{n} B)\Big) \cdot \|\F(t)-y\|_2 \notag\\
        \leq&~ \eta^2 \frac{n^{3.5}d^{0.5}}{m} \cdot \exp(O(B^2 D)) \cdot \|\F(t)-y\|_2
    \end{align}
    Above the first step is a consequence of Eq.~\eqref{eq:bound_c_4_part2} and definition of $\ell_2$ norm, and the second step is derived from basic algebraic manipulation.
    
    Now we are able to bound to bound $|\langle \S_{i,r}(t), \beta_{i,r}(t)\rangle|$. We have
    \begin{align}\label{eq:bound_c_4_part2_all}
        |\langle \S_{i,r}(t), \beta_{i,r}(t) \rangle| \leq &~ \|\S_{i,r}(t)\|_2 \cdot \| \beta_{i,r}(t)\|_2 \notag\\
        \leq &~ \sqrt{d} \cdot \frac{\exp(O(B^2(D+R)))}{d} \cdot \|\beta_{i,r}(t) \|_2\notag\\
        \leq &~ \frac{\exp(O(B^2 D))}{\sqrt{d}} \cdot (\|x_{i,d} \cdot \eta \Delta w_r(t) \cdot x_i\|_2 + \| \Theta(1) \cdot (x_{i,d} \cdot \eta \Delta w_r(t) \cdot x_i)^{\circ 2} \|_2)\notag \\
        \leq&~ \frac{\exp(O(B^2 D))}{\sqrt{d}} \cdot (\eta \frac{d^{0.5} n^{1.5}}{\sqrt{m}} +\eta^2 \frac{n^{3.5}d^{0.5}}{m} )\cdot \exp(O(B^2 D)) \cdot \|\F(t)-y \|_2\notag \\
        \leq &~ (\eta \frac{n^{1.5}}{\sqrt{m}} +\eta^2 \frac{n^{3.5}}{m} ) \cdot \exp(O(B^2 D)) \cdot \|\F(t)-y\|_2 \notag\\
        \leq &~ 2\eta \frac{n^{1.5}}{\sqrt{m}} \cdot \exp(O(B^2 D)) \cdot \|\F(t)-y\|_2\notag\\
        \leq &~ \eta \frac{n^{1.5}}{\sqrt{m}} \cdot \exp(O(B^2 D)) \cdot \|\F(t)-y\|_2
    \end{align}
    where the 1st step follows from Cauchy inequality, the second step is a consequence of Part 9 of Lemma~\ref{lem:taylor_series_tools} and definition of $\ell_2$ norm, the 3rd step is because of basic algebraic manipulations and triangle inequality, the fourth step is obtained using Eq.~\eqref{eq:bound_c_4_part2} and Eq.~\eqref{lem:bound_C_4_Theta}, and the fifth step follows from basic algebraic manipulations, the sixth step is derived from $\|x_{i,d}\cdot \eta \Delta w_r(t) \cdot x_i \|_2 \geq \|\Theta(1) \cdot (x_{i,d}\cdot \eta \Delta w_r(t) \cdot x_i )^{\circ 2}\|_2$, and last step follows from $O(1) \leq \exp(O(B^2 D))$.

    Then, we can show that
    \begin{align}\label{eq:bound_c4_tmp}
        |C_4| = &~ \Big|-\frac{1}{\sqrt{m}}\sum_{i=1}^n \sum_{r=1}^m a_r  \cdot \langle \S_{i, r}(t), \beta_{i, r}(t) \rangle \cdot \langle \S_{i, r}(t + 1) - \S_{i, r}(t), x_i \rangle \cdot (\F_i(t) - y_i) \Big| \notag\\
        \leq &~ \frac{1}{\sqrt{m}} \cdot \Big|\sum_{r=1}^m a_r \cdot \max_{i\in[n]} \langle \S_{i, r}(t), \beta_{i, r}(t) \rangle \cdot \langle \S_{i, r}(t + 1) - \S_{i, r}(t), x_i \rangle \Big| \cdot \|\F(t)-y\|_1 \notag\\
        \leq&~ \frac{\sqrt{d}}{\sqrt{m}} \cdot \Big|\sum_{r=1}^m a_r \cdot \max_{i\in [n]} \langle \S_{i, r}(t), \beta_{i, r}(t) \rangle \cdot \langle \S_{i, r}(t + 1) - \S_{i, r}(t), x_i \rangle \Big| \cdot  \|\F(t)-y\|_2 \notag\\
    \end{align}
    Above, the first step is derived from the condition stated in this lemma, the second step results from basic algebra and the definition of the $\ell_1$ norm, the final step is based on the inequality $|x|_1 \leq \sqrt{d} |x|_2$.

    Next, We use Hoeffding's Inequality (Lemma~\ref{lem:hoeffding_bound}) on the random variable
    \begin{align*}
        a_r \cdot \max_{i\in [n]} \langle \S_{i, r}(t), \beta_{i, r}(t) \rangle \cdot \langle \S_{i, r}(t + 1) - \S_{i, r}(t), x_i \rangle
    \end{align*}
    for $r \in [m]$, and by $\E[a_r \cdot \max_{i\in [n]} \langle \S_{i, r}(t), \beta_{i, r}(t) \rangle \cdot \langle \S_{i, r}(t + 1) - \S_{i, r}(t), x_i \rangle] = 0$, we have probability $1-\delta$,
    \begin{align}\label{eq:bound_c_4_hoeff}
        &~\Big|a_r \cdot \max_{i\in [n]} \langle \S_{i, r}(t), \beta_{i, r}(t) \rangle \cdot \langle \S_{i, r}(t + 1) - \S_{i, r}(t), x_i \rangle \Big| \notag\\
        \leq&~ O(\eta \frac{n^{1.5}}{\sqrt{m}}) \cdot \exp(O(B^2 D)) \cdot \|\F(t)-y\|_2 \cdot \exp(O(B^2 D)) \cdot O(R) \cdot \sqrt{m\log(m/\delta)} \notag\\
        \leq&~ O(\eta\cdot  n^{1.5}) \cdot \sqrt{\log(m/\delta)} \cdot \exp(O(B^2 D)) \cdot O(R) \cdot \|\F(t)-y\|_2
    \end{align}
    where the first step combines the result of Eq.~\eqref{eq:bound_s_4_part1}, Eq.~\eqref{eq:bound_c_4_part2_all} and Lemma~\ref{lem:hoeffding_bound}, the second step is obtained through basic algebraic manipulations.

    Now, we are able to bound
    \begin{align*}
        |C_4| \leq&~ \frac{\sqrt{d}}{\sqrt{m}} \cdot O(\eta \cdot n^{1.5}) \cdot \sqrt{\log(m/\delta)} \cdot \exp(O(B^2 D)) \cdot O(R) \cdot \|\F(t)-y\|_2^2\\
        \leq&~ O(\eta \frac{n^{1.5}d^{0.5}\sqrt{\log(m/\delta)}}{\sqrt{m}})\cdot \exp(O(B^2 D)) \cdot O(R) \cdot \|\F(t)-y\|_2^2\\
        \leq&~ O(\eta \frac{n^{1.5} d^{0.5} m^{\frac{1}{6}}}{\sqrt{m}} )\cdot \exp(O(B^2 D)) \cdot O(R) \cdot \|\F(t)-y\|_2^2 \\
        \leq&~ O(\eta \frac{n^{1.5} d^{0.5}}{ m^{\frac{1}{3}}}) \cdot  \exp(O(B^2 D)) \cdot O(B) \cdot \|\F(t)-y\|_2^2\\
        \leq &~ O( \eta \frac{n^{1.5} d^{0.5}}{ m^{\frac{1}{3}}} ) \cdot \exp(O(B^2 D)) \cdot \|\F(t)-y\|_2^2
    \end{align*}
    Above, the 1st inequality combines the result of Eq.~\eqref{eq:bound_c4_tmp} and Eq.~\eqref{eq:bound_c_4_hoeff},  and the second inequality is derived through basic algebraic manipulations. The third step uses the inequality $\sqrt{\log(m/\delta)} \leq m^{\frac{1}{6}}$, the fourth step is based on $R \leq B$ and basic algebraic manipulations, and the final step relies on the fact that $O(B) \leq \exp(O(B^2 D))$.

    Finally, based on the lemma condition, we will get
    \begin{align*}
        O(\eta \frac{n^{1.5} d^{0.5}}{ m^{\frac{1}{3}}} )\cdot \exp(O(B^2 D)) \cdot \|\F(t)-y\|_2^2 \leq \frac{1}{8} \eta \lambda L(t)
    \end{align*}
    Then, we complete the proof.
\end{proof}

\subsection{Bounding \texorpdfstring{$C_5$}{}}
\begin{lemma}\label{lem:bound_c_5}
    Assuming the following conditions are satisfied:
    \begin{itemize}
         \item Let $i \in [n]$, $r \in [m]$ and $k \in [d]$.
        \item Let integer $t > 0$.
        \item Define training dataset $\mathcal{D} := \{ (x_{i}, y_{i}) \}_{i=1}^{n} \subset \R^d \times \R$ as specified in Definition~\ref{def:id_generator}.
        \item Initialize $w(0) \in \R^m$ as specified in Definition~\ref{def:initialization}.
        \item Initialize $a \in \R^m$ as specified in Definition~\ref{def:initialization}.
         \item Define $L(t) \in \R$  as specified in Definition~\ref{def:id_risk}.
        \item Define $\Delta w_r(t) \in \R$ as specified in Definition~\ref{def:gd}.
        \item Define $\S_{i,r}(t)$  as specified in Definition~\ref{def:S_t}.
        \item Define $\F_i(t)$ as specified in Definition~\ref{def:F_t}.
        \item Let learning rate $\eta < 1$.
        \item Define $B>1$ as specified in Definition~\ref{def:B}.
        \item Define $D>1$ as specified in Definition~\ref{def:D}.
        \item Let $R \in (0, 0.01 / B^2)$.
        \item Let $\delta \in (0, 0.1)$.
        \item Let $\eta \leq O(\lambda n^{-4} d^{-1} \exp(O(B^2 D))^{-1})$.
        \item Following Lemma~\ref{lem:decompose_loss} to define
        \begin{align*}
            \beta_{i,r}(t) := x_{i,d} \cdot \eta \Delta w_r(t) \cdot x_i + \Theta(1)\cdot (x_{i,d}\cdot \eta \Delta w_r(t)\cdot x_i)^{\circ 2}
        \end{align*}
        \item Following Lemma~\ref{lem:decompose_loss}
        \begin{align*}
            C_5 := \frac{1}{2}\|\F(t)-\F(t+1)\|_2^2
        \end{align*}
    \end{itemize}
    Then, with a probability at least $1-\delta$, we have,
    \begin{align*}
        C_5 \leq \frac{1}{8} \eta \lambda \cdot L(t)
    \end{align*}
\end{lemma}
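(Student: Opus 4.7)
The plan is to use the decomposition
\begin{align*}
\F_i(t+1)-\F_i(t) = -(v_{1,i}+v_{2,i}+v_{3,i}+v_{4,i})
\end{align*}
already established inside the proof of Lemma~\ref{lem:decompose_loss}, and then bound each of the four vectors $v_k\in\R^n$ in $\ell_2$-norm. By the elementary inequality $(\sum_{k=1}^4 z_k)^2\leq 4\sum_{k=1}^4 z_k^2$ this reduces $2C_5=\|\F(t+1)-\F(t)\|_2^2$ to controlling the four sums $\|v_k\|_2^2$ separately.

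First I would treat the leading term $v_1$, which is linear in $\eta$ and therefore determines the final learning-rate hypothesis. Using $|x_{i,d}|\leq O(B)$, $\|x_i^{\circ 2}\|_2\leq \sqrt{d}\cdot O(B^2)$, $\|\S_{i,r}(t)\|_2\leq \exp(O(B^2 D))/\sqrt{d}$ and $|\langle\S_{i,r}(t),x_i\rangle|\leq\exp(O(B^2 D))\cdot O(B)$ from Lemma~\ref{lem:taylor_series_tools}, together with Cauchy-Schwarz on the sum over $r$ and the bound $\|\Delta w(t)\|_2^2\leq n^3\exp(O(B^2 D))\|\F(t)-y\|_2^2$ that follows from Lemma~\ref{lem:bound_Delta_w}, one obtains a pointwise estimate $|v_{1,i}|\leq \eta\cdot\poly(n,d,B)\cdot\exp(O(B^2 D))\cdot\|\F(t)-y\|_2$. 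Summing the squares over $i\in[n]$ yields $\|v_1\|_2^2\leq \eta^2\cdot\poly(n,d)\cdot\exp(O(B^2 D))\cdot\|\F(t)-y\|_2^2$.

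For $v_2$ and $v_3$ I would recycle the pointwise estimates already built in the proofs of Lemmas~\ref{lem:bound_C_2} and~\ref{lem:bound_c_3}: those arguments pass through pointwise bounds on $|v_{k,i}|$, and the same bounds directly give an $\ell_2$-norm control of $v_k$. For $v_4$, the combination of Eq.~\eqref{eq:bound_c_4_part2_all} and Eq.~\eqref{eq:bound_s_4_part1} from the $C_4$ lemma yields $|v_{4,i}|\leq \eta\cdot R\cdot\poly(n)\cdot\exp(O(B^2 D))\cdot\|\F(t)-y\|_2$. Because $v_2$ and $v_3$ each carry an extra factor of $\eta<1$ and $v_4$ carries the small factor $R\in(0,0.01/B^2)$, all three contributions are subdominant compared to $v_1$.

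Combining everything with $\|\F(t)-y\|_2^2=2L(t)$ produces a bound of the form $C_5\leq O\bigl(\eta^2 n^4 d\cdot\exp(O(B^2 D))\bigr)\cdot L(t)$, and the stated hypothesis $\eta\leq O\bigl(\lambda\,n^{-4}d^{-1}\exp(O(B^2 D))^{-1}\bigr)$ immediately converts this into $C_5\leq\tfrac{1}{8}\eta\lambda L(t)$. The main obstacle I expect is careful bookkeeping of the $n$, $d$ and $m$ exponents when squaring and summing $v_{1,i}$; in contrast to the proofs of the bounds on $C_2$, $C_3$ and $C_4$, no Hoeffding concentration is needed here because we only want an $\ell_2$-norm upper bound on $v_k$ rather than the cancellation inherent in a signed sum over $r$, so the argument is essentially deterministic once the high-probability events used in Lemma~\ref{lem:taylor_series_tools} are fixed.
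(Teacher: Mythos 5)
Your proposal is correct, but it takes a noticeably different route from the paper. The paper does not go through the $v_{1,i},\dots,v_{4,i}$ decomposition at all: it writes $\F_i(t+1)-\F_i(t)=\sum_{r} a_r\langle \S_{i,r}(t+1)-\S_{i,r}(t),x_i\rangle$, splits $\S_{i,r}(t+1)-\S_{i,r}(t)$ into an $\alpha^{-1}$-difference part and a $\u$-difference part (the quantities $Q_{i,1},Q_{i,2}$), and then applies Hoeffding's inequality over $r$ to each of these signed sums, using the per-term estimates from Lemma~\ref{lem:taylor_series_tools}, Lemma~\ref{lem:bound_alpha_l2_norm} and Eq.~\eqref{eq:bound_c_4_part2}, before concluding $C_5\leq O(\eta^2 n^4 d)\cdot\exp(O(B^2D))\cdot\|\F(t)-y\|_2^2$ and invoking the hypothesis on $\eta$. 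Your route reuses the $v_k$ decomposition from Lemma~\ref{lem:decompose_loss} and bounds each $\|v_k\|_2$ deterministically via triangle/Cauchy--Schwarz together with Lemma~\ref{lem:bound_Delta_w}; this lands on the same final bound $O(\eta^2\,\poly(n,d))\cdot\exp(O(B^2D))\cdot L(t)$, which the stated $\eta$ condition converts to $\frac{1}{8}\eta\lambda L(t)$, so it suffices. What the paper's Hoeffding step buys is an extra $\sqrt{\log(m/\delta)/m}$-type saving per data point, which is not actually needed for the stated learning-rate condition; what your deterministic route buys is simplicity and the avoidance of the mean-zero claim $\E[a_r\cdot(\cdots)]=0$, which is delicate at $t>0$ since $\S_{i,r}(t)$ and $\Delta w_r(t)$ are themselves $a_r$-dependent. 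The only bookkeeping point to watch is that your subdominance claims for $v_2,v_3$ require converting their $\|\F(t)-y\|_2^2$ factors back to $\|\F(t)-y\|_2$ via Eq.~\eqref{eq:bound_f_minus_y} before comparing with $v_1$, exactly as in the $C_2$/$C_3$ lemmas; with that done, your argument closes.
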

\begin{proof}
    We have
    \begin{align*}
        \frac{1}{2}\|\F(t+1)-\F(t)\|_2^2 =&~  \frac{1}{2} \sum_{i=1}^n (\F_i(t+1)-\F_i(t))^2 \\
        =&~ \frac{1}{2} \sum_{i=1}^n (\sum_{r=1}^m a_r \cdot \langle \S_{i,r}(t+1),x_i\rangle - \sum_{r=1}^m a_r \cdot \langle \S_{i,r}(t),x_i \rangle)^2 \\
        = &~ \frac{1}{2} \sum_{i=1}^n (\sum_{r=1}^m a_r \langle \S_{i,r}(t+1)-\S_{i,r}(t),x_i \rangle)^2\\
        = &~ \frac{1}{2} \sum_{i=1}^n (\sum_{r=1}^m a_r \langle \alpha_{i,r}(t+1)^{-1} \cdot \u_{i,r}(t+1)-\alpha_{i,r}(t)^{-1} \cdot \u_{i,r}(t) ,x_i\rangle)^2 \\
        = &~ \frac{1}{2} \sum_{i=1}^n (\sum_{r=1}^m  a_r \langle(\alpha_{i,r}(t+1)^{-1} - \alpha_{i,r}(t)^{-1})\cdot \u_{i,r}(t+1),x_i \rangle \\+&~ \sum_{r=1}^m a_r \langle (\u_{i,r}(t+1)- \u_{i,r}(t))\cdot \alpha_{i,r}(t)^{-1},x_i \rangle)^2\\
        =&~ \frac{1}{2} \sum_{i=1}^n (Q_{i,1}+Q_{i,2})^2
    \end{align*}
    Above, the first equation is because of the definition of the $\ell_2$ norm. The 2nd equation comes from Definition~\ref{def:F_t}. The 3rd equation results from basic algebraic manipulations. The fourth equation is derived from Definition~\ref{def:S_t}. The fifth equation follows from basic algebraic manipulations. The last equation is based on the following definition:
    \begin{align*}
        Q_{i,1} :=&~ \sum_{r=1}^m a_r \langle (\alpha_{i,r}(t+1)^{-1} - \alpha_{i,r}(t)^{-1})\cdot \u_{i,r}(t+1),x_i \rangle\\
        Q_{i,2} := &~  \sum_{r=1}^m a_r \langle (\u_{i,r}(t+1) - \u_{i,r}(t))\cdot \alpha_{i,r}(t)^{-1},x_i \rangle
    \end{align*}

    \textbf{To bound $Q_{i,1}$}.  For the first term, we first bound
    \begin{align}\label{eq:bound_q1_tmp}
        &~|\langle (\alpha_{i,r}(t+1)^{-1}-\alpha_{i,r}(t)^{-1}) \cdot \u_{i,r}(t+1)  ,x_i\rangle| \notag\\
        \leq&~ |\alpha_{i,r}(t+1)^{-1}-\alpha_{i,r}(t)^{-1}| \cdot \|\u_{i,r}(t+1) \|_2 \cdot \|x_i \|_2 \notag\\
        \leq&~ \eta \frac{n^{1.5}}{d \sqrt{m}} \cdot \exp(O(B^2 D)) \cdot \|\F(t)-y\|_2 \cdot \sqrt{d} \cdot \exp(O(B^2 D)) \cdot \sqrt{d} \cdot O(B) \notag\\
        \leq&~ \eta \frac{n^{1.5}}{\sqrt{m}}\cdot \exp(O(B^2 D)) \cdot \|\F(t)-y\|_2
    \end{align}
    where the first step is based on the Cauchy-Schwarz inequality and basic algebra, the next step combines Part 1,5 of Lemma~\ref{lem:taylor_series_tools}, definition of $\ell_2$ norm and Lemma~\ref{lem:bound_alpha_l2_norm}, and the final step follows from $o(B) \leq \exp(O(B^2 D))$ and basic algebraic manipulations.
    
    Then we can apply Hoeffding bound to random variable $a_r \langle (\alpha_{i,r}(t+1)^{-1} - \alpha_{i,r}(t)^{-1})\cdot \u_{i,r}(t+1),x_i \rangle$ for $r \in [m]$ and $\E[\sum_{r=1}^m a_r \langle (\alpha_{i,r}(t+1)^{-1} - \alpha_{i,r}(t)^{-1})\cdot \u_{i,r}(t+1),x_i \rangle]$, we have
    \begin{align*}
        |Q_{i,1}| \leq &~ |\sum_{r=1}^m a_r \langle (\alpha_{i,r}(t+1)^{-1} - \alpha_{i,r}(t)^{-1})\cdot \u_{i,r}(t+1),x_i \rangle|\\
        \leq&~ O( \eta \frac{n^{1.5}}{\sqrt{m}})\cdot \exp(O(B^2 D)) \cdot \|\F(t)-y\|_2 \cdot \sqrt{m \log(m/\delta)} \\
        = &~ O(\eta n^{1.5} )\cdot \exp(O(B^2 D)) \cdot \|\F(t)-y\|_2 \cdot \sqrt{\log(m/\delta)}
    \end{align*}
    where the first step follows from definition of $Q_{i,1}$, the second step follows from Eq.~\eqref{eq:bound_q1_tmp} and Lemma~\ref{lem:hoeffding_bound}, the last step follows from basic algebraic manipulations.

     \textbf{To bound $Q_{i,2}$}.  For the second term, we first bound
     \begin{align}\label{eq:bound_q_2_tmp}
         &~|\langle (\u_{i,r}(t+1) - \u_{i,r}(t))\cdot \alpha_{i,r}(t)^{-1},x_i \rangle| \notag\\
         \leq&~ |\alpha_{i,r}(t)^{-1}|\cdot  \| \u_{i,r}(t+1)-\u_{i,r}(t) \|_2 \cdot \|x_i \|_2\notag\\
         \leq&~ \frac{\exp(O(B^2 D))}{d} \cdot \sqrt{d} \cdot O(B)\cdot \| \u_{i,r}(t) \circ \beta_{i,r}(t)\|_2\notag\\
         \leq &~ \frac{\exp(O(B^2 D))}{\sqrt{d}} \cdot O(B) \cdot \|\u_{i,r}(t) \|_2 \cdot \|\beta_{i,r}(t)\|_2\notag\\
         \leq &~ \frac{\exp(O(B^2 D))}{\sqrt{d}} \cdot O(B) \cdot \sqrt{d} \cdot \exp(O(B^2 D)) \notag\\ \cdot&~  (\eta \frac{n^{1.5}d^{0.5}}{\sqrt{m}} + \eta^2 \frac{n^{3.5} d^{0.5}}{m}) \cdot \exp(O(B^2 D)) \cdot \|\F(t)-y\|_2\notag\\
        \leq&~ \eta \frac{ n^{1.5} d^{0.5}}{\sqrt{m}} \exp(O(B^2 D)) \cdot \|\F(t)-y\|_2
     \end{align}
     where the 1st step is derived from basic algebraic manipulations and Cauchy inequality, the 2nd step utilizes Lemma~\ref{lem:taylor_series_tools} Part 1 and 7 along with the definition of the $\ell_2$ norm, the third step is a result of applying the Cauchy inequality, the fourth step combines Part 5 of Lemma~\ref{lem:taylor_series_tools}, the definition of $\ell_2$ norm, Eq.~\eqref{eq:bound_c_4_part2} and Eq.~\eqref{lem:bound_C_4_Theta}, the last step leverages the inequality $\| x_{i,d} \cdot (-\eta \Delta w_r(t)\cdot x_i) \|_2 \geq \| \Theta(1) \cdot (x_{i,d}\cdot (-\eta \Delta w_r(t) \cdot x_i )^{\circ 2} \|_2$. 

     We then can use Hoeffding Inequality (Lemma~\ref{lem:hoeffding_bound}) to random variable $a_r \langle (\u_{i,r}(t+1) - \u_{i,r}(t))\cdot \alpha_{i,r}(t)^{-1},x_i \rangle$, for $r \in [m]$ and $\E[\sum_{r=1}^m a_r \langle (\u_{i,r}(t+1) - \u_{i,r}(t))\cdot \alpha_{i,r}(t)^{-1},x_i \rangle] = 0$. 
     
     Then we have
     \begin{align}\label{lem:bound_Q_2}
         |Q_{i,2}| \leq&~ |\sum_{r=1}^m a_r \langle (\u_{i,r}(t+1) - \u_{i,r}(t))\cdot \alpha_{i,r}(t)^{-1},x_i \rangle| \notag\\
         \leq&~O(\eta \frac{ n^{1.5} d^{0.5}}{\sqrt{m}})\cdot \exp(O(B^2 D)) \cdot \|\F(t)-y\|_2 \cdot \sqrt{m\log(m/\delta)} \notag\\
         \leq&~ O(\eta n^{1.5} \cdot d^{0.5})  \cdot \exp(O(B^2 D)) \cdot \|\F(t)-y\|_2 \cdot \sqrt{\log(m/\delta)}
     \end{align}
     where the first step is a consequence of the definition of $Q_{i,2}$, the second step is derived from Eq.~\eqref{eq:bound_q_2_tmp} and Lemma~\ref{lem:hoeffding_bound} and the final step is a result of basic algebraic manipulation.

     Hence we have
     \begin{align*}
         &~\frac{1}{2} \sum_{i=1}^n (Q_{i,1}+Q_{i,2})^2\\
         \leq&~ \frac{1}{2} n \cdot \max_{i \in [n]} (2Q_{i,2})^2\\
         \leq&~ 2n \cdot O(\eta^2 n^3 d) \cdot \exp(O(B^2 D)) \cdot \log(m/\delta) \cdot \|\F(t)-y\|_2^2\\
         \leq&~ O( \eta^2 n^4 d) \cdot \exp(O(B^2 D)) \cdot D^2 \cdot \|\F(t)-y\|_2^2\\
         \leq&~ O( \eta^2 n^4 d) \cdot \exp(O(B^2 D)) \cdot \|\F(t)-y\|_2^2
     \end{align*}
     where the first step is based on $Q_{i,1} \leq Q_{i,2}$, the second step is a consequence of Eq.~\eqref{lem:bound_Q_2} and basic algebraic manipulations, the 3rd step is based on Definition~\ref{def:D} and basic algebraic manipulations, and the final step uses the inequality $O(D^2) \leq \exp(O(B^2 D))$.
\end{proof}

\subsection{Helpful Lemma}
\begin{lemma}\label{lem:bound_alpha_l2_norm}
    Assuming the following conditions are satisfied:
    \begin{itemize}
        \item Let $i \in [n]$, $r \in [m]$ and $k \in [d]$.
        \item Let integer $t > 0$.
        \item Define training dataset $\mathcal{D} := \{ (x_{i}, y_{i}) \}_{i=1}^{n} \subset \R^d \times \R$  as specified in Definition~\ref{def:id_generator}.
        \item Define $a \in \R^m$ as specified in Definition~\ref{def:initialization}.
        \item Define $\Delta w_r(t) \in \R$  as specified in Definition~\ref{def:gd}.
        \item Define $\u_{i,r}(t) \in \R^d$ as specified in Definition~\ref{def:u_t}
        \item Define $\alpha_{i,r}(t) \in \R$ as specified in Definition~\ref{def:alpha_t}.
        \item Define $\F_{i,r}(t) \in \R$ as specified in Definition~\ref{def:F_t}.
        \item Define $\beta_{i,r}(t) \in \R^d$ as specified in Lemma~\ref{lem:decompose_loss}.
        \item Define $B>1$ as specified in Definition~\ref{def:B}.
        \item Define $D>1$ as specified in  Definition~\ref{def:D}.
        \item Let $R \in (0, 0.01 / B^2)$.
        \item Let $\delta \in (0, 0.1)$.
    \end{itemize}
    With probability at least $1 - \delta$, we obtain:
    \begin{itemize}
        \item {\bf Part 1.}
        \begin{align*}
            |\alpha_{i,r}(t+1) - \alpha_{i,r}(t)| \leq \eta \frac{n^{1.5}d}{\sqrt{m}} \exp(O(B^2 D)) \cdot \|\F(t)-y \|_2
        \end{align*}
        \item {\bf Part 2.}
        \begin{align*}
            |\alpha_{i,r}(t+1)^{-1} - \alpha_{i,r}(t)^{-1}| \leq \eta \frac{n^{1.5}}{d \sqrt{m}} \cdot \exp(O(B^2 D)) \cdot \|\F(t)-y\|_2
        \end{align*}
    \end{itemize}
\end{lemma}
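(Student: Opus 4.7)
The plan is to reduce both parts to the identity for $\alpha_{i,r}(t) - \alpha_{i,r}(t+1)$ that was already derived in Eq.~\eqref{eq:diff_alpha_t_alpha_t+1} of Lemma~\ref{lem:decompose_loss}, namely
\begin{align*}
    \alpha_{i,r}(t) - \alpha_{i,r}(t+1) = \langle \u_{i,r}(t), \beta_{i,r}(t) \rangle,
\end{align*}
where $\beta_{i,r}(t) = x_{i,d} \cdot \eta \Delta w_r(t) \cdot x_i + \Theta(1)\cdot(x_{i,d}\cdot \eta \Delta w_r(t)\cdot x_i)^{\circ 2}$. So the task is to bound the two factors on the right-hand side in the appropriate norms and multiply.

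For \textbf{Part 1}, I would first apply Cauchy--Schwarz to get $|\alpha_{i,r}(t+1) - \alpha_{i,r}(t)| \leq \|\u_{i,r}(t)\|_2 \cdot \|\beta_{i,r}(t)\|_2$. The factor $\|\u_{i,r}(t)\|_2$ is controlled using the exponential upper bound on the entries of $\u_{i,r}(t)$ from Lemma~\ref{lem:taylor_series_tools} (Part 5, say), giving $\|\u_{i,r}(t)\|_2 \leq \sqrt{d}\cdot\exp(O(B^2 D))$. The factor $\|\beta_{i,r}(t)\|_2$ is handled by triangle inequality into the two terms that were already bounded in Eq.~\eqref{eq:bound_c_4_part2} and Eq.~\eqref{lem:bound_C_4_Theta}; the linear term dominates, giving $\|\beta_{i,r}(t)\|_2 \leq O(\eta) \cdot \frac{n^{1.5}d^{0.5}}{\sqrt{m}}\cdot\exp(O(B^2D))\cdot \|\F(t) - y\|_2$. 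Multiplying the two yields the claimed bound $\eta \frac{n^{1.5}d}{\sqrt{m}}\cdot\exp(O(B^2 D))\cdot \|\F(t) - y\|_2$.

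For \textbf{Part 2}, I would use the elementary identity
\begin{align*}
    \alpha_{i,r}(t+1)^{-1} - \alpha_{i,r}(t)^{-1} = \frac{\alpha_{i,r}(t) - \alpha_{i,r}(t+1)}{\alpha_{i,r}(t)\,\alpha_{i,r}(t+1)},
\end{align*}
substitute the Part 1 bound into the numerator, and control the denominator by the lower bound on $\alpha_{i,r}(t)$ that appears in Lemma~\ref{lem:taylor_series_tools} (the bound of the form $|\alpha_{i,r}(t)^{-1}| \leq \exp(O(B^2 D))/d$, valid for both time $t$ and $t+1$ since $|w_r(\tau) - w_r(0)| \leq R$ holds throughout the window). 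Multiplying yields a factor $\exp(O(B^2 D))/d^2$, which combines with the Part 1 bound to produce the claimed $\eta \frac{n^{1.5}}{d\sqrt{m}}\cdot\exp(O(B^2 D)) \cdot \|\F(t)-y\|_2$.

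No step looks genuinely hard: every ingredient is already packaged in Lemma~\ref{lem:taylor_series_tools}, Lemma~\ref{lem:bound_Delta_w}, and the two bookkeeping bounds Eq.~\eqref{eq:bound_c_4_part2} and Eq.~\eqref{lem:bound_C_4_Theta}. The only care needed is to verify that the constants $\exp(O(B^2(D+R)))$ that appear in some of those quoted bounds collapse to $\exp(O(B^2 D))$ under the assumption $R \in (0, 0.01/B^2)$, which ensures $B^2 R = O(1)$, and to confirm that the quadratic part of $\beta_{i,r}(t)$ is dominated by the linear part, so that the multiplicative constants absorb cleanly into the $\exp(O(B^2 D))$ factor.
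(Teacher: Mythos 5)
Your proposal matches the paper's proof essentially step for step: Part 1 uses the identity $\alpha_{i,r}(t)-\alpha_{i,r}(t+1)=\langle \u_{i,r}(t),\beta_{i,r}(t)\rangle$ from Lemma~\ref{lem:decompose_loss}, Cauchy--Schwarz, the entrywise bound on $\u_{i,r}(t)$ from Lemma~\ref{lem:taylor_series_tools}, and the bounds in Eq.~\eqref{eq:bound_c_4_part2} and Eq.~\eqref{lem:bound_C_4_Theta} with the linear term dominating, while Part 2 divides by $\alpha_{i,r}(t)\alpha_{i,r}(t+1)$ and applies the paper's Part 7 bound, exactly as the paper does. Your remarks on absorbing $\exp(O(B^2R))$ via $R\leq 0.01/B^2$ are also consistent with the paper's treatment, so the proposal is correct.
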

\begin{proof}
    {\bf Proof of Part 1.}
    Firstly, we will have
    \begin{align*}
        &~|\alpha_{i,r}(t+1) - \alpha_{i,r}(t)|\\
        = &~ |\langle \u_{i,r}(t), \beta_{i,r}(t) \rangle|\\
        \leq&~ \| \u_{i,r}(t)\|_2\cdot \| \beta_{i,r}(t) \|_2\\
        \leq&~ \sqrt{d} \cdot \exp(O(B^2 D))\cdot (\| x_{i,d} \cdot (-\eta \Delta w_r(t)\cdot x_i) \|_2 + \| \Theta(1) \cdot (x_{i,d}\cdot (-\eta \Delta w_r(t) \cdot x_i )^{\circ 2} \|_2 )\\
        \leq&~ \sqrt{d} \cdot \exp(O(B^2 D)) \cdot (\eta \frac{n^{1.5}d^{0.5}}{\sqrt{m}} + \eta^2 \frac{n^{3.5} d^{0.5}}{m}) \cdot \exp(O(B^2 D)) \cdot \|\F(t)-y\|_2\\
        \leq&~ \eta \frac{n^{1.5}d}{\sqrt{m}} \exp(O(B^2 D)) \cdot \|\F(t)-y \|_2
    \end{align*}
    where the first step is derived from Eq.~\eqref{eq:diff_alpha_t_alpha_t+1}, the second step is obtained by using Cauchy-Schwarz inequality, the third step combines Part 5 of Lemma~\ref{lem:taylor_series_tools} and triangle inequality, the fourth step can be obtained by Eq.~\eqref{eq:bound_c_4_part2} and Eq.~\eqref{lem:bound_C_4_Theta}, the last step is a consequence of basic algebraic manipulations and $\| x_{i,d} \cdot (-\eta \Delta w_r(t)\cdot x_i) \|_2 \geq \| \Theta(1) \cdot (x_{i,d}\cdot (-\eta \Delta w_r(t) \cdot x_i )^{\circ 2} \|_2$.
    
    {\bf Proof of Part 2.}
    We have
    \begin{align*}
        &~|\alpha_{i,r}(t+1)^{-1} - \alpha_{i,r}(t)^{-1}|\\
        =&~ \alpha_{i,r}(t+1)^{-1} \cdot \alpha_{i,r}(t)^{-1} |\alpha_{i,r}(t+1) - \alpha_{i,r}(t)|\\
        \leq&~ \frac{\exp(O(B^2 D))}{d} \cdot \frac{\exp(O(B^2 D))}{d} \cdot \eta \frac{n^{1.5}d}{\sqrt{m}} \cdot \exp(O(B^2 D)) \cdot \|\F(t)-y \|_2\\
        \leq&~ \eta \frac{n^{1.5}}{d \sqrt{m}} \cdot \exp(O(B^2 D)) \cdot \|\F(t)-y\|_2
    \end{align*}
    where the 1st step involves basic algebra, the 2nd step applies Lemma~\ref{lem:taylor_series_tools} Part 7 and the definition of the $\ell_2$ norm, and the final step results from basic algebraic manipulations.
\end{proof}

\section{Inductions}

\subsection{Induction for Loss}

\begin{lemma}\label{lem:induction_loss}
    Assuming the following conditions are satisfied:
    \begin{itemize}
        \item Let $i \in [n]$ and $r \in [m]$.
        \item Let $C>0$ be a sufficiently large constant.
        \item Let $\sigma > 0$ be a small constant.
        \item Define $u \in \R$ as specified in Claim~\ref{clm:res_ssm}.
        \item Define $\mathcal{P} \in \R^{N}$  as specified in Claim~\ref{clm:res_ssm}.
        \item Define $t > 0$ be an integer.
        \item Define $h \in \R^{N}$ as specified in Definition~\ref{def:id_generator}.
        \item Define $\xi \in \R$ as specified in Definition~\ref{def:id_generator}.
        \item Define training dataset $\mathcal{D} := \{ (x_{i}, y_{i}) \}_{i=1}^{n} \subset \R^d \times \R$ as specified in Definition~\ref{def:id_generator}.
        \item Define $L(t) \in \R$ as specified in Definition~\ref{def:id_risk}.
        \item Define $\F_{i}(t) \in \R$ as specified in Definition~\ref{def:F_t}.
        \item Define $B \in \R$ as specified in  Definition~\ref{def:B}.
        \item Define $D \in \R$ as specified in  Definition~\ref{def:D}.
        \item Define
        \begin{align*}
            C_1 :=&~ - \eta \frac{1}{\sqrt{m}} \sum_{i=1}^n (\F_i(t) - y_i) \cdot \sum_{r=1}^m a_r \cdot \Big( \langle \S_{i, r}(t), ( x_{i, d} \Delta w_r(t) ) \cdot x_i^{\circ 2} \rangle \\+&~ \langle \S_{i, r}(t), x_i \rangle^2 \cdot ( x_{i, d} \Delta w_r(t) ) \Big)
        \end{align*}
        \item Define
        \begin{align*}
            C_2 := -\eta^{2} \Theta(1) \frac{1}{\sqrt{m}}\sum_{i=1}^{n}(\F_i(t) - y_i) \cdot \sum_{r=1}^m a_r\cdot \langle \S_{i,r}(t),(x_{i,d}\Delta w_r(t))^2 \cdot x_i^{\circ 3}\rangle
        \end{align*}
        \item Define
        \begin{align*}
            C_3 := -\eta^2 \Theta(1)\frac{1}{\sqrt{m}}\sum_{i=1}^{n}(\F_i(t) - y_i) \cdot \sum_{r=1}^m a_r \cdot \langle S_{i,r}(t), (x_{i,d}\Delta w_r(t))^2 \cdot x_{i}^{\circ 2} \rangle \cdot \langle S_{i,r}(t),x_i \rangle
        \end{align*}
        \item Define
        \begin{align*}
            C_4 := - \frac{1}{\sqrt{m}} \sum_{i=1}^{n}(\F_i(t) - y_i) \cdot \sum_{r=1}^m a_r \cdot \langle \S_{i, r}(t), \beta_{i, r}(t) \rangle \cdot \langle \S_{i, r}(t + 1) - \S_{i, r}(t), x_i \rangle
        \end{align*}
        \item Define
        \begin{align*}
            C_5:= \frac{1}{2}\|\F(t)-\F(t+1)\|_2^2
        \end{align*}
    \end{itemize}
    With probability at least $1 - \delta$, we obtain:
    \begin{itemize}
        \item Part 1.
        \begin{align*}
            L(t + 1) \leq (1 - \eta \lambda / 2) \cdot L(t)
        \end{align*}
        \item Part 2.
        \begin{align*}
            L(t) \leq (1 - \eta \lambda / 2)^{t} \cdot L(0)
        \end{align*}
    \end{itemize}
\end{lemma}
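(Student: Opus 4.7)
The plan is to combine the loss decomposition of Lemma~\ref{lem:decompose_loss} with the five per-term bounds that have just been established. Concretely, Lemma~\ref{lem:decompose_loss} gives
\begin{align*}
L(t+1) = L(t) + C_1 + C_2 + C_3 + C_4 + C_5,
\end{align*}
so Part 1 reduces to a linear combination of inequalities already in hand: Lemma~\ref{lem:bound_c_1} contributes the negative drift $C_1 \le -\eta\lambda \cdot L(t)$, while the bounds $C_i \le \frac{1}{8}\eta\lambda \cdot L(t)$ for $i \in \{2,3,4,5\}$ supply four error terms whose sum is $\frac{1}{2}\eta\lambda\cdot L(t)$. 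Adding them yields $L(t+1) \le (1 - \eta\lambda + \frac{1}{2}\eta\lambda)\cdot L(t) = (1-\eta\lambda/2)\cdot L(t)$, which is Part 1.

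For Part 2, I would proceed by a straightforward induction on $t$. The base case $t=0$ is trivial, and the inductive step is exactly Part 1. However, the subtle issue is that the bounds on $C_2$--$C_5$ (and the kernel-PD Lemma~\ref{lem:kernel_pd_formal} underlying $C_1$) are conditional on the weight-drift bound $R = \max_{\tau \le t,\, r} |w_r(\tau) - w_r(0)|$ being small, together with the high-probability event of Hoeffding-type concentration. So the induction must be a joint induction: simultaneously maintain (i) the geometric decay $L(\tau) \le (1-\eta\lambda/2)^\tau L(0)$ and (ii) the smallness of $R$ up to time $\tau$, invoking a companion lemma (Lemma~\ref{lem:induction_weight}, already referenced in the informal statement of Theorem~\ref{thm:convergence:informal}) that converts the geometric decay of the loss into a bound $R \le O(\sqrt{n}/\sqrt{m}) \cdot \text{(telescoped factor)}$ via the gradient magnitude estimate in Lemma~\ref{lem:bound_Delta_w}. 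Provided $m$ is chosen as in the hypothesis $m = \Omega(\poly(\lambda^{-1}, \exp(B^2D), n, d))$, this $R$ stays within the admissible range $(0, 0.01/B^2)$ for all $\tau \le t$, keeping all five per-term lemmas applicable at step $t+1$.

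The main obstacle is therefore bookkeeping rather than any new analytic estimate: one must make the joint induction precise so that at step $t+1$ the hypotheses of Lemmas~\ref{lem:bound_c_1}--\ref{lem:bound_c_5} are still satisfied with the same failure probability budget. I would handle this by a union bound over $t$ (or by stating the concentration events once, uniformly in $t \le T$, absorbing a $\log T$ factor into $D$), and by verifying that the weight-drift bound derived from $L(\tau) \le (1-\eta\lambda/2)^\tau L(0)$ for $\tau \le t$ is strictly below the threshold required at time $t+1$. Once this is done, Part 2 follows by iterating Part 1, and the failure probability remains at most $\delta$ after rescaling $\delta \to \delta/T$ in the underlying concentration inequalities.
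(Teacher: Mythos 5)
Your proposal matches the paper's proof: Part 1 combines the decomposition of Lemma~\ref{lem:decompose_loss} with $C_1 \le -\eta\lambda L(t)$ and $C_i \le \frac{1}{8}\eta\lambda L(t)$ for $i\in\{2,3,4,5\}$, and Part 2 iterates Part 1. Your additional remarks on the joint induction over the loss decay, the weight-drift bound $R$ (Lemma~\ref{lem:induction_weight}), and the uniform-in-$t$ concentration events are actually more careful than the paper's terse treatment, but they follow the same route.
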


\begin{proof}
    {\bf Proof of Part 1.}
    Firstly:
    \begin{align*}
        L(t + 1) = & ~ L(t) + C_1 + C_2 + C_3 + C_4 + C_5 \\ 
        \leq & ~ L(t) - \eta \lambda L(t) + \frac{1}{8} \eta \lambda L(t) + \frac{1}{8} \eta \lambda L(t) + \frac{1}{8} \eta \lambda L(t) + \frac{1}{8} \eta \lambda L(t) \\
        = & ~ (1 - \eta \lambda / 2) \cdot L(t)
    \end{align*}
    where the first step is based on Lemma~\ref{lem:decompose_loss}, the second step combines Lemma~\ref{lem:bound_c_1}, \ref{lem:bound_C_2},~\ref{lem:bound_c_3},~\ref{lem:bound_C_4_Theta} and~\ref{lem:bound_c_5}, the final step results from basic algebraic manipulations.

    {\bf Choice of $m$ and $\eta$.}
    Following Lemma~\ref{lem:bound_c_1},~\ref{lem:bound_C_2},~\ref{lem:bound_c_3},~\ref{lem:bound_C_4_Theta} and~\ref{lem:bound_c_5}, we choose:
    \begin{align*}
        m \ge & ~  \Omega\Big( \lambda^{-3} n^{7} d^2 \poly( \exp(B^2), \exp(D) ) \Big) \\
        \eta \leq & ~  O\Big( \lambda n^{-4} d^{-1} \cdot \poly( \exp(B^2), \exp(D) )^{-1} \Big)
    \end{align*}

    {\bf Proof of Part 2.}
    We have
    \begin{align*}
        L(t) \leq (1 - \eta \lambda / 2)^{t} \cdot L(0)
    \end{align*}
    which can be derived from Part 1.
\end{proof}

\begin{lemma}\label{lem:bound_for_loss}

    Suppose we have the following:
    \begin{itemize}
        \item Let $i \in [n]$ and $r \in [m]$.
        \item Let $C>0$ be a sufficiently large constant.
        \item Let $\sigma > 0$ be a small constant.
        \item Define $u \in \R$ as specified in Claim~\ref{clm:res_ssm}.
        \item Define $\mathcal{P} \in \R^{N}$  as specified in Claim~\ref{clm:res_ssm}.
        \item Define $t > 0$ be an integer.
        \item Define $h \in \R^{N}$ as specified in Definition~\ref{def:id_generator}.
        \item Define $\xi \in \R$ as specified in Definition~\ref{def:id_generator}.
         \item Define training dataset $\mathcal{D} := \{ (x_{i}, y_{i}) \}_{i=1}^{n} \subset \R^d \times \R$ as specified in Definition~\ref{def:id_generator}.
        \item Define $L(t) \in \R$ as specified in Definition~\ref{def:id_risk}.
        \item Define $\F_{i}(t) \in \R$ as specified in Definition~\ref{def:F_t}.
        \item Define $B \in \R$ as specified in  Definition~\ref{def:B}.
        \item Define $D \in \R$ as specified in  Definition~\ref{def:D}.
    \end{itemize}
    With probability at least $1 - \delta$, we obtain:
    \begin{itemize}
        \item Part 1.
        \begin{align*}
            L(0) \leq O(n B^2)
        \end{align*}
        \item Part 2.
        \begin{align*}
            L(t) \leq \exp \Big( O(B^2 D) \Big) \cdot O(nR^2) + O(nB^2)
        \end{align*}
    \end{itemize}
\end{lemma}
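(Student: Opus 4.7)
For Part 1, I would invoke the zero-initialization Assumption~\ref{assu:zero_init}, which gives $\F_i(0) = f(x_i, w(0), a) = 0$ for all $i \in [n]$ and reduces the claim to bounding $L(0) = \tfrac{1}{2}\sum_{i=1}^n y_i^2$. By Claim~\ref{clm:res_ssm} and Definition~\ref{def:id_generator}, $y_i = \langle \mathcal{P}_{d+1}, h_{i,1}\rangle + \xi_{i,d+1}$, with $h_{i,1} \sim \mathcal{N}(0, I_N)$, $\|\mathcal{P}_{d+1}\|_2 = 1$, and $\xi_{i,d+1} \sim \mathcal{N}(0, \sigma^2)$. Facts~\ref{fac:inner_product_distribution} and~\ref{fac:sum_distribution} then imply $y_i \sim \mathcal{N}(0, 1 + \sigma^2)$, and Fact~\ref{fac:gaussian_tail} combined with a union bound over $i \in [n]$ gives $|y_i| \leq O(\sqrt{(1+\sigma^2)\log(n/\delta)}) = O(B)$ with probability at least $1 - \delta$. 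Summing yields $L(0) \leq O(nB^2)$ as desired.

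For Part 2, the plan is to reduce to Part 1 through the elementary decomposition $(\F_i(t) - y_i)^2 \leq 2(\F_i(t) - \F_i(0))^2 + 2(\F_i(0) - y_i)^2$, so that, using $\F_i(0) = 0$, we obtain $L(t) \leq \sum_{i=1}^n (\F_i(t) - \F_i(0))^2 + 2L(0)$. The remaining task is to establish $\sum_{i=1}^n (\F_i(t) - \F_i(0))^2 \leq \exp(O(B^2 D)) \cdot O(nR^2)$. From Definition~\ref{def:F_t}, $\F_i(t) - \F_i(0) = \frac{1}{\sqrt{m}} \sum_{r=1}^m a_r \langle \S_{i,r}(t) - \S_{i,r}(0), x_i\rangle$. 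Cauchy--Schwarz, combined with the perturbation estimate $\|\S_{i,r}(t) - \S_{i,r}(0)\|_2 \leq \exp(O(B^2 D)) \cdot O(RB^2)/\sqrt{d}$ already used in Eq.~\eqref{eq:bounding_St_minus_S0} and the bound $\|x_i\|_2 \leq \sqrt{d} \cdot O(B)$ from Lemma~\ref{lem:taylor_series_tools}, gives $|\langle \S_{i,r}(t) - \S_{i,r}(0), x_i\rangle| \leq \exp(O(B^2 D)) \cdot O(R)$ uniformly in $r$, after absorbing $\poly(B)$ factors into the exponential.

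A naive triangle inequality over $r \in [m]$ would cost a factor of $\sqrt{m}$, which is prohibited by the target bound, so I would instead exploit the random signs $a_r \in \{-1, +1\}$ and apply Hoeffding's bound (Lemma~\ref{lem:hoeffding_bound}) to the sum $\sum_{r=1}^m a_r \langle \S_{i,r}(t) - \S_{i,r}(0), x_i\rangle$, following the same template used to bound $C_2, C_3, C_4$ earlier in the paper. This yields $|\F_i(t) - \F_i(0)| \leq \exp(O(B^2 D)) \cdot O(R) \cdot \sqrt{\log(nm/\delta)}$ with probability $1 - \delta/n$; a union bound over $i \in [n]$, squaring, and summing then produces $\sum_{i=1}^n (\F_i(t) - \F_i(0))^2 \leq \exp(O(B^2 D)) \cdot O(nR^2)$ after absorbing the polylog factor into the exponential. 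Combining with Part 1 finishes Part 2. The main obstacle is precisely this $\sqrt{m}$ cancellation, handled by the Hoeffding trick; a secondary subtlety is that $\S_{i,r}(t)$ depends on $a_r$ through the training dynamics, but since $|w_r(t) - w_r(0)| \leq R$ is small and $\S_{i,r}(0)$ is independent of $a_r$, this dependence is controlled in the standard NTK lazy-training sense, and the bookkeeping is completed by a final union bound over all high-probability events (Gaussian tails on the $n$ labels, the $n$ Hoeffding bounds, and the background events of Lemma~\ref{lem:taylor_series_tools}) with $\delta$ rescaled by polynomial factors in $n$ and $m$.
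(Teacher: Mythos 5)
Your proposal is correct and follows essentially the same route as the paper: Part 1 is proved identically (zero initialization reduces $L(0)$ to $\tfrac{1}{2}\sum_i y_i^2$, then the Gaussian tail bound gives $|y_i|\leq O(B)$), and for Part 2 the paper uses the same decomposition of $L(t)$ into $\tfrac12\|\F(t)-\F(0)\|_2^2$ plus $O(nB^2)$, controlling the first term via Lemma~\ref{lem:bound_Ft_minus_F0_l2}, whose proof is exactly the Cauchy--Schwarz plus Hoeffding-over-the-signs-$a_r$ argument you sketch inline. The only difference is presentational—you re-derive that lemma rather than cite it, and you explicitly flag the dependence of $\S_{i,r}(t)$ on $a_r$ that the paper's $\E[\cdot]=0$ claim glosses over—so there is no substantive gap.
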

\begin{proof}
    {\bf Proof of Part 1.}
    Firstly, we have
    \begin{align}\label{Eq:y_decompose}
        y_i =&~ u_{i,d+1} + \xi_{i,d+1} \notag\\
        = &~ \langle \mathcal{P}_{i,d+1},h_{i,1}\rangle+ \xi_{i,d+1}
    \end{align}
    Above, the 1st equation is based on Definition~\ref{def:id_generator}. The 2nd equation is trivially from Claim~\ref{clm:res_ssm}. And we have $\xi_{i,d+1}\sim \mathcal{N}(0,\sigma^2)$ and $h_{i,1}\sim\mathcal{N}(0,I_{N})$ which follows from Definition~\ref{def:id_generator}, and $\|\mathcal{P}_{i,d+1}\|_2 = 1$ which follows from Claim~\ref{clm:res_ssm}.

    Then we can show
    \begin{align*}
        \langle \mathcal{P}_{i,d+1},h_{i,1}\rangle \sim \mathcal{N}(0,1)
    \end{align*}
    where this step comes from Fact~\ref{fac:inner_product_distribution}.

    Thus, we can get
    \begin{align*}
        y_{i} \sim \mathcal{N}(0,1+\sigma^2)
    \end{align*}
    where this step comes from Eq.~\eqref{Eq:y_decompose}, Fact~\ref{fac:sum_distribution} and $\xi_{i,d+1}\sim \mathcal{N}(0,\sigma^2)$.

    Consequently, with probability at least $1-\delta$, we have
    \begin{align}\label{eq:bound_y_i}
        |y_i| \leq&~ C \sqrt{1+\sigma}\sqrt{\log(1/\delta)}\notag\\
        \leq &~ O(B)
    \end{align}
    where the first inequality is derived from Fact~\ref{fac:gaussian_tail}, and the second inequality uses Definition~\ref{def:B}.
    
    Finally, we can show
    \begin{align}\label{eq:bound_L_0}
        L(0) = &~ \frac{1}{2} \sum_{i=1}^{n}(\F_i(0) - y_i)^2 \notag\\
         =&~ \frac{1}{2} \sum_{i=1}^{n}(-y_i)^2 \notag\\
         \leq&~ O(nB^2)
    \end{align}
    Above, the first equation is trivially from Definition~\ref{def:id_risk}. The second equation is due to Assumption~\ref{assu:zero_init}, and the last step uses Eq.~\eqref{eq:bound_y_i} and basic algebraic manipulations.

    {\bf Proof of Part 2.}
    Firstly, we can show that
    \begin{align*}
        L(t) =&~ \frac{1}{2} \| \F(t) - y \| _2 ^2 \\
        =&~ \frac{1}{2}\sum_{i=1}^n \Big( (\F_i(t)-\F_i(0)) - (\F_i(0)-y_i) \Big)^2\\
        \leq&~ \frac{1}{2} \Big( \sum_{i=1}^n((\F_i(t)-\F_i(0))^2 + \sum_{i=1}^n((\F_i(0)-y_i)^2 \Big) \\
        \leq &~ \frac{1}{2} \| {\sf F}(t) - {\sf F}(0)\|_2^2 + O(nB^2)\\
        \leq &~ \exp \Big( O(B^2 D) \Big) \cdot O(nR^2) + O(nB^2)
    \end{align*}
    Above the first equation is due to Definition~\ref{def:id_risk} The second equation is a result of basic algebra and the $\ell_2$ norm definition. The third inequality comes from further algebraic manipulation, and the fourth inequality is based on Definition~\ref{def:F_t} and Eq.~\eqref{eq:bound_L_0}. The final step is based on Lemma~\ref{lem:bound_Ft_minus_F0_l2} Part 2.
\end{proof}

\subsection{Induction for Gradients}

\begin{lemma}\label{lem:bound_Delta_w}
    Assuming the following conditions are satisfied:
    \begin{itemize}
        \item Let $i\in [n]$ and $r \in [m]$.
        \item Let training dataset $\mathcal{D} := \{ (x_{i}, y_{i}) \}_{i=1}^{n} \subset \R^d$.
        \item Define $a \in \R^m$ as specified in Definition~\ref{def:initialization}.
        \item Define $L(t) \in \R$ as specified in Definition~\ref{def:id_risk}.
        \item Let $w(0) \in \R^m$ be initialized as Definition~\ref{def:initialization} and updated by Definition~\ref{def:gd}.
        \item Define $\S_{i, r}(t) \in \R$ as specified in Definition~\ref{def:S_t}.
        \item Define $\F_{i}(t) \in \R$ as specified in Definition~\ref{def:F_t}.
        \item Define $B>1$ as specified in Definition~\ref{def:B}.
        \item Define $D>1$ as specified in Definition~\ref{def:D}.
        \item Let $\delta \in (0,0.01)$.
    \end{itemize}
    With probability at least $1 - \delta$, we obtain:
    \begin{align*}
        | \Delta w_r(t) | \leq \frac{n^{3/2}}{\sqrt{m}}\cdot \exp(O(B^2 D)) \cdot \|\F(t) - y\|_2
    \end{align*}
\end{lemma}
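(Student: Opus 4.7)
The plan is to start from the closed-form expression for the gradient derived in Part 6 of Lemma~\ref{lem:gradient_computations}, namely
\begin{align*}
\Delta w_r(t) = \frac{1}{\sqrt{m}} a_r \sum_{i=1}^n (\F_i(t) - y_i) \cdot x_{i,d} \cdot \Big( \langle \S_{i,r}(t), x_i^{\circ 2}\rangle - \langle \S_{i,r}(t), x_i\rangle^2 \Big),
\end{align*}
and apply the triangle inequality, pulling the $\frac{1}{\sqrt{m}}$ factor out and using $|a_r| = 1$ (by Definition~\ref{def:initialization}).

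Next, I would bound each scalar factor inside the sum using tools that have already been established in the analysis of Lemma~\ref{lem:kernel_pd_formal}. Specifically, with high probability $|x_{i,d}| \leq O(B)$ from Part 1 of Lemma~\ref{lem:taylor_series_tools}; and the Cauchy--Schwarz style estimates in Eq.~\eqref{eq:bound_S_x_circ_inner_product_t} and Eq.~\eqref{eq:bound_S_x_inner_prod_t} give
\begin{align*}
|\langle \S_{i,r}(t), x_i^{\circ 2}\rangle| + \langle \S_{i,r}(t), x_i\rangle^2 \leq \exp(O(B^2 D)) \cdot O(B^2),
\end{align*}
where we absorb the polynomial factor $O(B^2)$ and the perturbation parameter $R \in (0, 0.01/B^2)$ into the $\exp(O(B^2 D))$ envelope since $O(\poly(B)) \leq \exp(O(B^2 D))$.

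Finally, I would handle the sum over $i$ via Cauchy--Schwarz: $\sum_{i=1}^n |\F_i(t) - y_i| \leq \sqrt{n}\,\|\F(t) - y\|_2$ (or, more crudely, $\sum_{i=1}^n |\F_i(t) - y_i| \leq n\,\|\F(t) - y\|_2$ via $\|\cdot\|_1 \leq n\|\cdot\|_\infty \leq n\|\cdot\|_2$, which is wasteful but compatible with the stated $n^{3/2}$ factor). Combining all bounds yields
\begin{align*}
|\Delta w_r(t)| \leq \frac{1}{\sqrt{m}} \cdot n^{3/2} \cdot \exp(O(B^2 D)) \cdot \|\F(t) - y\|_2,
\end{align*}
as claimed. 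There is no real obstacle here: the argument is a routine chain of triangle inequality, Cauchy--Schwarz, and previously proved operator-norm style bounds on $\S_{i,r}(t)$ and $x_i$; the only bookkeeping is to ensure the failure probabilities from each high-probability bound are absorbed into a single $1-\delta$ event by a union bound, and to verify that all $\poly(n,d,B,D)$ factors can safely be absorbed into $\exp(O(B^2 D))$.
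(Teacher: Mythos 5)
Your proposal is correct and follows essentially the same route as the paper: expand $\Delta w_r(t)$ via Part 6 of Lemma~\ref{lem:gradient_computations}, bound $|x_{i,d}|$ by $O(B)$ and the softmax inner products via Eq.~\eqref{eq:bound_S_x_circ_inner_product_t}--\eqref{eq:bound_S_x_inner_prod_t}, absorb $\poly(B)$ into $\exp(O(B^2D))$, and convert the sum over $i$ into $\|\F(t)-y\|_2$. The only (harmless) difference is bookkeeping: the paper passes through $n\max_i|\cdot|$ and then $\|\cdot\|_1\leq\sqrt{n}\|\cdot\|_2$ to get $n^{3/2}$, whereas your direct Cauchy--Schwarz on the sum yields the sharper factor $\sqrt{n}$, which of course still implies the stated bound.
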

\begin{proof}
    Firstly, we have that $\forall i \in [n]$
    \begin{align}\label{eq:bound_delta_w_tmp}
        |(\F_i(t)-y_i)\cdot x_{i,d}\cdot ( \langle \S_{i, r}(t), x_i^{\circ 2} \rangle - \langle \S_{i, r}(t), x_i \rangle^2 )| \leq&~   |\F_i(t) - y_i| \cdot O(B) \cdot \exp(O(B^2 D))\notag\\
        =&~ \exp(O(B^2 D)) \cdot |\F_i(t) - y_i|
    \end{align}
    where the first step is trivially from Lemma~\ref{lem:bound_S_2_minux_S_circ_2} and Part 1 of Lemma~\ref{lem:taylor_series_tools}, the second step uses the fact $O(\poly(B))\leq \exp(O(B))$.

    Then, we can proceed to show that
    \begin{align*}
        |\Delta w_r(t)| = & ~  |\frac{\d }{\d w_r(t)} L(t)|\\
        = & ~ |\frac{1}{\sqrt{m}} a_r \sum_{i=1}^n (\F_i(t) - y_i) \cdot x_{i, d} \cdot ( \langle \S_{i, r}(t), x_i^{\circ 2} \rangle - \langle \S_{i, r}(t), x_i \rangle^2 )|\\
        \leq&~ \frac{1}{\sqrt{m}}\cdot n\max_{i\in [n]}|   (\F_i(t) - y_i) \cdot x_{i, d} \cdot ( \langle \S_{i, r}(t), x_i^{\circ 2} \rangle - \langle \S_{i, r}(t), x_i \rangle^2 )|\\
        \leq&~ \frac{1}{\sqrt{m}}\cdot n \cdot O(B) \cdot \max_{i\in [n]}|(\F_i(t) - y_i)|\cdot |\langle \S_{i, r}(t), x_i^{\circ 2} \rangle - \langle \S_{i, r}(t), x_i \rangle^2|\\
        \leq&~ \frac{1}{\sqrt{m}}\cdot n \cdot O(B) \cdot \max_{i\in [n]}|(\F_i(t) - y_i)|\cdot (|\langle \S_{i, r}(t), x_i^{\circ 2} \rangle| + | \langle \S_{i, r}(t), x_i \rangle^2|)\\
        \leq&~ \frac{1}{\sqrt{m}} \cdot n \cdot O(B) \cdot \exp(O(B^2(D+R)))\cdot O(B^2) \cdot \| \F(t) - y\|_1\\
        \leq&~ \frac{1}{\sqrt{m}} \cdot n \cdot \exp(O(B^2 D))\cdot \|\F(t) - y\|_1\\
        \leq &~ \frac{n^{3/2}}{\sqrt{m}}\cdot \exp(O(B^2 D)) \cdot \|\F(t) - y\|_2
    \end{align*}
    Above, the first equation is derived from Definition~\ref{def:gd}. The second equation uses the result of Part 6 of Lemma~\ref{lem:gradient_computations}, and the third inequality is derived from $a_r \sim {\sf Uniorm}\{-1, +1\}$ and $|\sum_{i=1}^n x_i |\leq n |\max_{i \in [n]} x_i|$. The fourth inequality is a consequence Part 1 of Lemma~\ref{lem:taylor_series_tools}, and the fifth inequality is trivially from triangle inequality, the sixth step combines the definition of $\ell_1$ norm, Eq.~\eqref{eq:bound_S_x_circ_inner_product_t} and Eq.~\eqref{eq:bound_S_x_circ_inner_product_0}, the seventh step applies the fact that $O(\poly(B))\leq \exp(O(B^2))$, $R \in (0, 0.01)$ and $B \ge 1$ and the last step uses the inequality $\|x\|_1 \leq \sqrt{n} \|x\|_2$.
\end{proof}

\subsection{Induction for Weights}

\begin{lemma}\label{lem:induction_weight}
    Assuming the following conditions are satisfied:
    \begin{itemize}
        \item Let $i\in [n]$ and $r \in [m]$.
        \item Let training dataset $\mathcal{D} := \{ (x_{i}, y_{i}) \}_{i=1}^{n} \subset \R^d$.
        \item Define $a \in \R^m$ as specified in Definition~\ref{def:initialization}.
        \item Define $L(t) \in \R$ as specified in Definition~\ref{def:id_risk}.
        \item Let $w(0) \in \R^m$ be initialized as Definition~\ref{def:initialization} and updated by Definition~\ref{def:gd}.
        \item Define $\S_{i, r}(t) \in \R$ as specified in Definition~\ref{def:S_t}.
        \item Define $\F_{i}(t) \in \R$ as specified in Definition~\ref{def:F_t}.
        \item Define $B>1$ as specified in Definition~\ref{def:B}.
        \item Define $D>1$ as specified in Definition~\ref{def:D}.
        \item Let $\delta \in (0,0.01)$.
        \item Choose $m \ge \Omega( \lambda^{-2} n^7 \poly( \exp(B^2), \exp(D) ) )$.
    \end{itemize}
    Then, with probability no less than $1 - \delta$, we will get
    \begin{align*}
        R := \max_{t \ge 0} \max_{r \in [m]} | w_r(t) - w_r(0) | \leq \frac{\lambda}{n \poly(\exp(B^2), \exp(D))}
    \end{align*}
\end{lemma}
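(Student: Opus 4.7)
The plan is to prove the bound by induction on $t$. Let $R^* := \lambda / (n \, \poly(\exp(B^2), \exp(D)))$ denote the target bound. The induction hypothesis will be that $|w_r(\tau) - w_r(0)| \leq R^*$ holds for every $\tau \leq t$ and every $r \in [m]$. The base case $t = 0$ is trivial since $w_r(0) - w_r(0) = 0$, so the work lies in propagating the bound from $t$ to $t+1$.

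Under the induction hypothesis, $R^*$ is small enough that the pre-conditions of the earlier lemmas are satisfied at every step $\tau \leq t$. In particular, Lemma~\ref{lem:kernel_pd_formal} yields $\lambda_{\min}(H(\tau)) \geq \lambda/2$, which powers Lemma~\ref{lem:induction_loss} to give $L(\tau) \leq (1 - \eta\lambda/2)^\tau L(0)$. Combining this geometric decay with the initial loss bound $L(0) \leq O(nB^2)$ from Lemma~\ref{lem:bound_for_loss}, and then invoking Lemma~\ref{lem:bound_Delta_w}, I would obtain the per-step bound
\begin{align*}
|\Delta w_r(\tau)| \leq \frac{n^{3/2}}{\sqrt{m}} \exp(O(B^2 D)) \cdot \sqrt{2 L(\tau)} \leq \frac{n^2 B}{\sqrt{m}} \exp(O(B^2 D)) \cdot (1 - \eta\lambda/2)^{\tau/2}.
\end{align*}

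The main computation is then to unroll gradient descent via the triangle inequality, $|w_r(t+1) - w_r(0)| \leq \eta \sum_{\tau=0}^{t} |\Delta w_r(\tau)|$, plug in the per-step bound above, and sum the resulting geometric series using Fact~\ref{fac:geometric_sum}, where $\sum_{\tau \geq 0} (1-\eta\lambda/2)^{\tau/2} \leq O(1/(\eta\lambda))$. The explicit $\eta$ prefactor cancels with the $1/\eta$ from the geometric sum, leaving a bound of the form
\begin{align*}
|w_r(t+1) - w_r(0)| \leq \frac{n^2 B}{\sqrt{m}\, \lambda} \exp(O(B^2 D)).
\end{align*}
The final step is to choose $m$ large enough so that this is at most $R^*$; the hypothesis $m \geq \Omega(\lambda^{-2} n^7 \poly(\exp(B^2), \exp(D)))$ suffices after absorbing $B \leq \exp(O(B^2))$ into the polynomial factor, closing the induction and establishing the uniform-in-$t$ bound $R \leq R^*$.

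The main subtlety, as in every NTK-style argument, is the circularity: the bounds on $\Delta w_r(\tau)$ and on $L(\tau)$ both depend on $R$ being small, yet we use those very bounds to prove that $R$ stays small. The induction must therefore be arranged so that at step $t+1$ we only invoke Lemmas~\ref{lem:kernel_pd_formal}, \ref{lem:induction_loss}, and \ref{lem:bound_Delta_w} at times $\tau \leq t$, where the inductive hypothesis already certifies $R \leq R^*$ and hence the pre-conditions of those lemmas. Beyond this, the remaining work is straightforward bookkeeping of polynomial factors in $n$, $\lambda^{-1}$, $\exp(B^2)$, and $\exp(D)$, together with a union bound over $r \in [m]$ and $\tau \leq t$ to carry the high-probability statements through the induction.
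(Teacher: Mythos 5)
Your proposal follows essentially the same route as the paper's proof: unroll gradient descent, bound each $|\Delta w_r(\tau)|$ via Lemma~\ref{lem:bound_Delta_w} combined with the geometric loss decay of Lemma~\ref{lem:induction_loss} and $L(0) \leq O(nB^2)$ from Lemma~\ref{lem:bound_for_loss}, sum the geometric series so the $\eta$ cancels, and absorb the remainder by the choice of $m$. Your version is in fact slightly more careful than the paper's, which states the chain of inequalities directly (and writes $\|\F(\tau)-y\|_2 \lesssim (1-\eta\lambda/2)^{\tau} L(0)$ without the square root) rather than making the induction handling the circularity explicit, but the argument is the same.
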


\begin{proof}
    We have:
    \begin{align*}
        R := & ~ \max_{t \ge 0} \max_{r \in [m]} | w_r(t) - w_r(0) | \\
        \leq & ~ \eta \lim_{t \rightarrow +\infty} \max_{r \in [m]} \sum_{\tau=1}^{t} | \Delta w_r(\tau) | \\
        \leq & ~ \eta \lim_{t \rightarrow +\infty} \sum_{\tau=1}^{t} \frac{n^{3/2}}{\sqrt{m}}\cdot \exp(O(B^2 D)) \cdot \|\F(\tau) - y\|_2 \\
        \leq & ~ \eta \lim_{t \rightarrow +\infty}  \sum_{\tau=1}^{t} \frac{n^{3/2}}{\sqrt{m}}\cdot \exp(O(B^2 D)) \cdot (1 - \eta \lambda/2)^{\tau} L(0)\\
        \leq & ~ \eta \lim_{t \rightarrow +\infty} \sum_{\tau=1}^{t} \frac{n^{3/2}}{\sqrt{m}}\cdot \exp(O(B^2 D)) \cdot (1 - \eta \lambda/2)^{\tau} O(nB^2)\\
        \leq & ~ O(\frac{n^{5/2}}{\sqrt{m} \lambda} )\cdot \exp(O(B^2 D)) \\
        \leq & ~ \frac{\lambda}{n \poly(\exp(B^2), \exp(D))}
    \end{align*}
    Above the first step is based on the definition of $R$, the second step results from basic algebra, the third step follows from Lemma~\ref{lem:bound_Delta_w}, the fourth step use basic algebraic manipulations and Part 2 of Lemma~\ref{lem:induction_loss}, the fifth step is based on Lemma~\ref{lem:bound_for_loss} Part 1, the sixth step is based on Fact~\ref{fac:geometric_sum} and $B^2 \leq \exp(O(B^2D))$, last step is a consequence of the choice of $m$.
\end{proof}
\section{Asymmetric Learning}

\subsection{Main Results 1: Attention Convergence with Asymmetric Learning}

\begin{theorem}\label{thm:convergence}
   Assuming the following conditions are satisfied:
    \begin{itemize}
        \item Denote $v_{\min} := \min \{ \frac{1}{d} \sum_{k=1}^d (x_{i, k} - \overline{x}_i)^2 \}_{i=1}^n$.
        \item Choose $m \ge   \Omega\Big( \lambda^{-3} n^{7} d^2 \poly( \exp(B^2), \exp(D) ) \Big)$.
        \item Choose $\eta \leq  O\Big( \lambda n^{-4} d^{-1} \cdot \poly( \exp(B^2), \exp(D) )^{-1} \Big)$.
        \item Choose $T \ge \Omega\Big(\frac{1}{\eta \lambda} \log(nB^2/\epsilon)\Big)$
    \end{itemize}
   Consequently, the following holds with probability at least $1-\delta$:
    \begin{align*}
        L(T) \leq \epsilon.
    \end{align*}
    {\bf Asymmetric Learning.} We can also show that for any $ t \ge \Omega(\frac{m}{\eta \lambda v_{\rm min}})$:
    \begin{itemize}
        \item Part 1.
        \begin{align*}
            \Pr[ w_r(t) > 0 | a_r = 1 ] \ge  1 - \delta
        \end{align*}
        \item Part 2.
        \begin{align*}
            \Pr[ w_r(t) < 0 | a_r = -1 ] \ge  1 - \delta
        \end{align*}
    \end{itemize}
\end{theorem}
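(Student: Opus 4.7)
The plan is to establish the two claims by a clean chaining of the induction results together with a gradient-direction argument. For the convergence bound, I would first invoke Lemma~\ref{lem:induction_loss} Part~2 to obtain $L(t) \le (1 - \eta\lambda/2)^{t} L(0)$, then combine this with Lemma~\ref{lem:bound_for_loss} Part~1, which gives $L(0) \le O(nB^2)$. Using $(1-\eta\lambda/2)^T \le \exp(-\eta \lambda T/2)$, the bound $L(T) \le \epsilon$ follows whenever $T \ge \frac{2}{\eta \lambda}\log(O(nB^2)/\epsilon)$, matching the stated threshold. Before invoking the induction, I would check that the hypotheses on $m$ and $\eta$ coming from Lemmas~\ref{lem:bound_c_1}--\ref{lem:bound_c_5} are all covered by the parameter choices stated in the theorem, and that the induction on the weight radius (Lemma~\ref{lem:induction_weight}) continues to hold throughout $[0,T]$ so that Lemma~\ref{lem:kernel_pd_formal} keeps $\lambda_{\min}(H(t)) \ge \lambda/2$ available.

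For the asymmetric-learning claim, I would unwind the gradient-descent recursion and use Part~6 of Lemma~\ref{lem:gradient_computations} to write
\begin{align*}
    w_r(t) - w_r(0) = -\eta\, a_r \sum_{\tau=0}^{t-1} g_r(\tau),
\end{align*}
where
\begin{align*}
    g_r(\tau) := \frac{1}{\sqrt{m}} \sum_{i=1}^n (\F_i(\tau) - y_i)\, x_{i,d}\, \big( \langle \S_{i,r}(\tau), x_i^{\circ 2}\rangle - \langle \S_{i,r}(\tau), x_i\rangle^2 \big).
\end{align*}
The key algebraic point is that the bracketed quantity is the softmax-weighted variance of $x_i$, which is non-negative and lower-bounded in expectation by $c\,v_{\min}$ via Fact~\ref{fac:weighted_variance}. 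It remains to argue that the scalar prefactor $(\F_i(\tau) - y_i)\, x_{i,d}$ contributes a consistent sign across $\tau$ and $i$: Claim~\ref{clm:res_ssm_informal} forces $\E[y\, x_d] = \gamma > 0$, and since $\F_i(0) = 0$ by Assumption~\ref{assu:zero_init}, the residual $\F_i(\tau) - y_i$ begins at $-y_i$ and shrinks monotonically in magnitude while $L(\tau)$ decays at rate $1 - \eta\lambda/2$. Cumulatively, $a_r \sum_{\tau} g_r(\tau)$ has a definite sign, and after $t \ge \Omega(m/(\eta\lambda v_{\min}))$ iterations the magnitude accumulates enough (the factor $v_{\min}$ and the $1/\sqrt{m}$ combine as in Lemma~\ref{lem:bound_Delta_w}) to dominate $|w_r(0)|$ with probability at least $1 - \delta$ over the Gaussian initialization by Fact~\ref{fac:gaussian_tail}. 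This yields Parts~1 and~2 of the asymmetric-learning conclusion by invoking the referenced Lemma~\ref{lem:grad_direction} for the precise sign-and-magnitude accounting.

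\paragraph{Main obstacle.}
The routine part is the convergence bound, which is essentially plug-and-play once the loss induction is in hand. The real work is in the gradient-direction step: showing that $\sum_{\tau=0}^{t-1} g_r(\tau)$ has a sign that is \emph{independent of the random initialization of $w_r(0)$} and of magnitude at least $\Omega(v_{\min}/m)$ per step over the $\Omega(m/(\eta\lambda v_{\min}))$ iterations needed. The subtlety is that as training progresses, the softmax weights $\S_{i,r}(\tau)$ drift (bounded by $R$ via Lemma~\ref{lem:taylor_series_tools}), the residuals $\F_i(\tau) - y_i$ change sign in principle, and the $1/\sqrt{m}$ factor is small. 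The argument has to combine (i) a positive lower bound on the variance term using Fact~\ref{fac:weighted_variance} and $v_{\min}$, (ii) a positive lower bound on $\langle \mathbb{E}[y \cdot x_d \cdot \mathbf{1}_S], \cdot\rangle$-type quantities driven by the SSM structural parameter $\gamma$, and (iii) concentration over the $n$ data points to remove the noise contribution scaling with $\sigma$. Once this is consolidated in Lemma~\ref{lem:grad_direction}, the rest of the theorem follows by the union bound on the bad events and the choice of $T$ and $t$ stated in the theorem.
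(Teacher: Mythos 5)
Your proposal is correct and follows essentially the same route as the paper: the convergence half chains Lemma~\ref{lem:induction_loss} Part~2 with Lemma~\ref{lem:bound_for_loss} Part~1 and the choice of $T$, and the asymmetric-learning half is exactly the paper's argument via Lemma~\ref{lem:grad_direction} (the softmax-variance lower bound from Fact~\ref{fac:weighted_variance}/Lemma~\ref{lem:bound_S_2_minux_S_circ_2}, the structural correlation $\E[x_{i,d}y_i]=\gamma$ with Hoeffding over the data, and per-step drift of order $\frac{n\gamma v_{\min}}{\sqrt{m}}\exp(-O(B^2D))$ accumulating over $t \ge \Omega(m/(\eta\lambda v_{\min}))$ steps to dominate $|w_r(0)|$). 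One small correction of phrasing: the residual $\F_i(\tau)-y_i$ need not shrink monotonically per sample; the paper instead keeps the sign via the uniform bound $|\F_i(\tau)-\F_i(0)|\leq \exp(O(B^2D))\cdot O(R)$ (Lemma~\ref{lem:bound_Ft_minus_F0_l2}) together with $\F_i(0)=0$, so that $\sum_{i=1}^n(\F_i(\tau)-y_i)x_{i,d}\leq -O(n\gamma)$, which is the quantity Lemma~\ref{lem:grad_direction} actually certifies.
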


\begin{proof}
    We have:
    \begin{align*}
        L(t) \leq & ~ (1 - \eta \lambda/2)^t L(0) \\
        \leq & ~ (1 - \eta \lambda/2)^t \cdot O(nB^2) \\
        \leq & ~ \epsilon
    \end{align*}
    Above, the first inequality is due to Part 2 of Lemma~\ref{lem:induction_loss}, and the second inequality is due to Lemma~\ref{lem:bound_for_loss} Part 1. The final step uses Fact~\ref{fac:geometric_sum} and plugging $t = \Omega( \frac{1}{\eta \lambda} \log(nB^2/\epsilon) )$.

    {\bf Choice of $m$ and $\eta$.}
    Combining Lemma~\ref{lem:induction_loss} and~\ref{lem:induction_weight}, we have:
    \begin{align*}
        m \ge & ~  \Omega\Big( \lambda^{-3} n^{7} d^2 \poly( \exp(B^2), \exp(D) ) \Big) \\
        \eta \leq & ~  O\Big( \lambda n^{-4} d^{-1} \cdot \poly( \exp(B^2), \exp(D) )^{-1} \Big)
    \end{align*}

    {\bf Proof of Part 1.}
    When $a_r = 1$, we have:
    \begin{align*}
        w_r(t) =  & ~  w_r(0) - \eta \sum_{\tau=1}^t \Delta w_r(\tau) \\
        \ge & ~ -O(B) + t \eta \cdot O(\frac{n\gamma v_{\min} }{\sqrt{m}}) \cdot \exp(-O(B^2D)) \\
        > & ~ 0
    \end{align*}
    Above, the 1st equation is based on Definition~\ref{def:gd}, and the 2nd inequality is based on Lemma~\ref{lem:taylor_series_tools} Part 1 and Lemma~\ref{lem:grad_direction} Part 1. The last inequality follows from plugging $t \ge \Omega(\frac{m}{\eta \lambda v_{\rm min}})$.

    {\bf Proof of Part 2.}
    When $a_r = -1$, we have:
    \begin{align*}
        w_r(t) =  & ~  w_r(0) - \eta \sum_{\tau=1}^t \Delta w_r(\tau) \\
        \le & ~ O(B) - t \eta \cdot O(\frac{n\gamma v_{\min} }{\sqrt{m}}) \cdot \exp(-O(B^2D)) \\
        < & ~ 0
    \end{align*}
    Above, the 1st equation is based on Definition~\ref{def:gd}, and the 2nd inequality is based on Lemma~\ref{lem:taylor_series_tools} Part 1 and Lemma~\ref{lem:grad_direction} Part 1. The last inequality follows from plugging $t \ge \Omega(\frac{m}{\eta \lambda v_{\rm min}})$.
\end{proof}

\subsection{Main Results 2: Attention Fails in Learning Residual Feature}

\begin{theorem}\label{thm:residual_failure}
    Let all pre-conditions in Theorem~\ref{thm:convergence} hold.For any Gaussian vector $x \sim \mathcal{N}(0, {\sigma'}^2 \cdot I_d)$. For all $r \in [m]$ that satisfies $a_r = -1$, with a probability at least $1 - \delta$, we have:
    \begin{align*}
        \E[ {\sf softmax}_d(x_d \cdot w_r(t) \cdot x) ] \leq \E[ {\sf softmax}_k(x_d \cdot w_r(t) \cdot x) ]
    \end{align*}
\end{theorem}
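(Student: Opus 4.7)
The plan is to first use Theorem~\ref{thm:convergence} Part 2 to pass to the event $\{w_r(t)<0\}$, which holds with probability at least $1-\delta$ when $a_r=-1$ and $t$ is large enough. After conditioning on this event, the claim becomes a purely analytic statement about the softmax of $z := x_d\cdot w_r(t)\cdot x \in \R^d$ under a Gaussian $x\sim\mathcal{N}(0,{\sigma'}^2 I_d)$, with the scalar $w := w_r(t)<0$ fixed. Write $A := w\,x_d^2$ and $Y_j := w\,x_d\,x_j$ for $j\neq d$; then $z_d = A \le 0$ while $\{Y_j\}_{j\neq d}$ are, conditional on $x_d$, i.i.d.\ Gaussians with mean zero, so in particular each $Y_j$ is symmetrically distributed about $0$.

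By the coordinate-symmetry of $(x_j)_{j\neq d}$, the expected softmax value $\E[{\sf softmax}_k(z)]$ is the same for every $k\neq d$; combined with $\sum_{j=1}^d \E[{\sf softmax}_j(z)]=1$, the target inequality is equivalent to $\E[{\sf softmax}_d(z)] \le 1/d$, or equivalently
\begin{align*}
    \E\!\left[\frac{e^{Y_k}-e^{A}}{e^{A}+\sum_{j\neq d} e^{Y_j}}\right] \ge 0
\end{align*}
for any single $k\neq d$. I will establish this by a conditioning/symmetrization argument: first condition on $x_d$ and on $(Y_j)_{j\neq k,d}$, set $\alpha := e^{A}\in(0,1]$ and $T := \alpha + \sum_{j\neq k,d} e^{Y_j} \ge \alpha$, and then integrate over $Y_k$. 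Pairing $y$ with $-y$ using the symmetry $Y_k \stackrel{d}{=} -Y_k$, the integrand collapses to
\begin{align*}
    \frac{e^{y}-\alpha}{T+e^{y}} + \frac{e^{-y}-\alpha}{T+e^{-y}}
    = \frac{(T-\alpha)(e^{y}+e^{-y}) + 2(1-\alpha T)}{(T+e^{y})(T+e^{-y})}.
\end{align*}
Using $e^{y}+e^{-y}\ge 2$ in the numerator together with $T\ge \alpha$ lower bounds it by $2(T-\alpha)+2(1-\alpha T) = 2(T+1)(1-\alpha) \ge 0$, because $\alpha\le 1$. Hence the paired integrand is non-negative pointwise, so the conditional expectation is non-negative, and tower-integrating back over $(Y_j)_{j\neq k,d}$ and $x_d$ gives the result.

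The two non-routine pieces I foresee are: (i) checking that the quantifier structure is sound---Theorem~\ref{thm:convergence} gives the sign of $w_r(t)$ with probability $\ge 1-\delta$ over training randomness, whereas the softmax inequality is in expectation over the fresh Gaussian $x$, so I need to state the conclusion conditional on the high-probability training event and carry the $1-\delta$ through a single final union bound; and (ii) the algebraic symmetrization step above, which is the real substance. The pairing step is where all the structure enters: the hypothesis $w_r(t)<0$ (from asymmetric learning) is precisely what forces $\alpha=e^{A}\le 1$, and it is $\alpha\le 1$ that turns $(T+1)(1-\alpha)$ non-negative. Everything else (the reduction to $\E[{\sf softmax}_d]\le 1/d$, the conditioning, the sign of $T-\alpha$, and taking expectations back out) is routine, so I expect the pairing identity together with the two elementary bounds $e^{y}+e^{-y}\ge 2$ and $\alpha\le 1$ to be the crux of the proof.
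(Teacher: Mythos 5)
Your proposal is correct, but it proceeds by a genuinely different route than the paper. The paper's proof defines $h_k(x) := {\sf softmax}_k(x) - {\sf softmax}_d(x)$, asserts that $h_k$ is convex, and applies Jensen's inequality to reduce the claim to evaluating the softmax at the mean argument vector, whose $d$-th entry is $\E[x_d^2\, w_r(t)] = \sigma'^2 w_r(t) < 0$ (by Theorem~\ref{thm:convergence}) while the off-diagonal entries $\E[x_d x_k\, w_r(t)]$ vanish, so that the deterministic vector trivially puts less softmax mass on coordinate $d$. You instead exploit the exchangeability of the coordinates $j \neq d$ conditional on $x_d$, reduce to showing $\E\bigl[\frac{e^{Y_k}-e^{A}}{e^{A}+\sum_{j\neq d} e^{Y_j}}\bigr] \ge 0$ with $A = w_r(t) x_d^2 \le 0$, and prove it by pairing $Y_k = y$ with $-y$, where the explicit numerator $(T-\alpha)(e^{y}+e^{-y})+2(1-\alpha T) \ge 2(T+1)(1-\alpha) \ge 0$ uses exactly the hypothesis $\alpha = e^{A}\le 1$ coming from $w_r(t)<0$. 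The trade-off: the paper's argument is two lines shorter but hinges entirely on the convexity of $h_k$, which is not justified in the paper and is in fact false in general (for $d=2$, $h_1$ reduces to $\tanh((x_1-x_2)/2)$, which is not convex), whereas your symmetrization argument is elementary, pointwise, and does not need any convexity, so it is the more robust of the two; both arguments use the sign information from Theorem~\ref{thm:convergence} in the same way, and your handling of the $1-\delta$ event over the training randomness matches the paper's implicit quantifier structure.
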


\begin{proof}
    Define:
    \begin{align*}
        h_k(x) := {\sf softmax}_k(x) - {\sf softmax}_d(x)
    \end{align*}

    Note that $h_k(x)$ is a convex function for $[x_k, x_d]$ for any $k \in [d-1]$.

    Then following Jensen's inequality, we have
    \begin{align*}
        \E[h_k(x)] \ge h_k(\E[x])
        \iff & ~ \E[{\sf softmax}_k(x) - {\sf softmax}_d(x)] \ge {\sf softmax}_k(\E[x]) - {\sf softmax}_d(\E[x]) \\
        \iff & ~ \E[{\sf softmax}_k(x)] - \E[{\sf softmax}_d(x)] \ge {\sf softmax}_k(\E[x]) - {\sf softmax}_d(\E[x])
    \end{align*}
    Above the 2nd step is based on simple algebras.

    Since $x \sim \mathcal{N}(0, {\sigma'}^2 \cdot I_d)$, then we have:
    \begin{align*}
        \E[x_d^2 \cdot w_r(t)] = & ~ w_r(t) \\
        \E[x_d x_k \cdot w_r(t)] = & ~ 0
    \end{align*}

    Besides, following Theorem~\ref{thm:convergence}, when $a_r = -1$, with probability at least $1 - \delta$, we have:
    \begin{align*}
        w_r(t) < 0
    \end{align*}

    Thus we obtain:
    \begin{align*}
        {\sf softmax}_k(\E[x]) - {\sf softmax}_d(\E[x]) \ge 0
    \end{align*}

    Finally, we have:
    \begin{align*}
        \E[{\sf softmax}_k(x)] - \E[{\sf softmax}_d(x)] \ge & ~ {\sf softmax}_k(\E[x]) - {\sf softmax}_d(\E[x]) \\
        \ge & ~ 0
    \end{align*}
\end{proof}

\subsection{Gradient Direction}

\begin{lemma}\label{lem:grad_direction}
   Assuming the following conditions are satisfied:
    \begin{itemize}
        \item Denote $v_{\min} := \min \{ \frac{1}{d} \sum_{k=1}^d (x_{i, k} - \overline{x}_i)^2 \}_{i=1}^n$
    \end{itemize}
    With probability at least $1 - \delta$, we have:
    \begin{itemize}
        \item Part 1. If $a_r = 1$, we have:
        \begin{align*}
            \Delta w_r(t) \le - O(\frac{n\gamma v_{\min} }{\sqrt{m}}) \cdot \exp(-O(B^2D))
        \end{align*}
        \item Part 2. If $a_r = -1$, we have:
        \begin{align*}
            \Delta w_r(t) \ge O(\frac{n\gamma v_{\min} }{\sqrt{m}}) \cdot \exp(-O(B^2D))
        \end{align*}
    \end{itemize}
\end{lemma}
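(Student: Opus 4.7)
The plan is to unpack the gradient formula from Part 6 of Lemma~\ref{lem:gradient_computations},
\begin{align*}
\Delta w_r(t) = \frac{a_r}{\sqrt{m}} \sum_{i=1}^n (\F_i(t) - y_i) \cdot x_{i,d} \cdot V_{i,r}(t),
\end{align*}
where $V_{i,r}(t) := \langle \S_{i,r}(t), x_i^{\circ 2}\rangle - \langle \S_{i,r}(t), x_i\rangle^2$ is a softmax-weighted variance. Since $a_r = \pm 1$, the sign of $\Delta w_r(t)$ is determined by the sign of the inner sum, and I would argue that this sum is negative with magnitude at least $\Omega(n\gamma v_{\min}) \cdot \exp(-O(B^2D))$. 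Parts~1 and~2 of the lemma then follow by instantiating $a_r = +1$ and $a_r = -1$ respectively.

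For the variance factor $V_{i,r}(t)$, I would use the fact that throughout training $|w_r(t)| \le |w_r(0)| + R = O(D + R)$, so by Part~9 of Lemma~\ref{lem:taylor_series_tools} each entry of $\S_{i,r}(t)$ lies within an $\exp(O(B^2(D+R)))$ multiplicative factor of the uniform distribution $\mathbf{1}_d / d$. Because the softmax weights are nearly uniform, the weighted variance is within an $\exp(O(B^2 D))$ factor of the plain empirical variance, and combined with Fact~\ref{fac:weighted_variance} this yields the pointwise lower bound $V_{i,r}(t) \ge \exp(-O(B^2 D)) \cdot v_{\min}$, uniformly in $i$ and $r$.

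For the remaining factor $(\F_i(t) - y_i) \cdot x_{i,d}$, I would first use the SSM data generation together with Property~2 of Claim~\ref{clm:res_ssm} (with $d_0 = 1$) to compute $\E[y_i \cdot x_{i,d}] = \langle \mathcal{P}_{d+1}, \mathcal{P}_d\rangle = \gamma$, then apply Hoeffding's inequality (Lemma~\ref{lem:hoeffding_bound}) over $i \in [n]$ to conclude $\sum_i y_i \cdot x_{i,d} \ge \Omega(n\gamma)$ with probability $1 - \delta$. To pass from $-y_i$ to $\F_i(t) - y_i$, I would leverage Assumption~\ref{assu:zero_init} ($\F_i(0) = 0$) and the monotone loss decrease from Lemma~\ref{lem:induction_loss}, which together with the NTK direction analysis imply that the residual vector $y - \F(t)$ remains aligned with $y$ throughout training, so $\sum_i (y_i - \F_i(t)) x_{i,d} \ge c \cdot \sum_i y_i x_{i,d}$ for a constant $c > 0$. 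Multiplying the three factors gives the advertised bound.

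The main obstacle, I expect, will be rigorously establishing that $(y_i - \F_i(t)) x_{i,d}$ retains the sign of $y_i x_{i,d}$ after aggregation over $i$ and for all $t$, not just at initialization. A direct per-sample argument fails because $\F_i(t)$ need not be sandwiched between $0$ and $y_i$; instead I would work in the NTK eigenbasis and use that the evolution of $\F(t) - y$ is (to leading order) a contraction by $I - \eta H(t)$ with positive spectrum (Part~2 of Lemma~\ref{lem:kernel_pd_formal}), so the projection of $y - \F(t)$ onto the direction $x_{\cdot,d}$ shrinks monotonically from its initial value $\sum_i y_i x_{i,d} \approx n\gamma$ but never flips sign. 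The $\xi_{i,\cdot}$-induced fluctuations are absorbed into the $\exp(O(B^2 D))$ slack via standard Gaussian tail bounds (Fact~\ref{fac:gaussian_tail}).
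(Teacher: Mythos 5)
Your setup matches the paper's proof almost exactly: the same gradient formula from Part 6 of Lemma~\ref{lem:gradient_computations}, the same pointwise lower bound $\langle \S_{i,r}(t), x_i^{\circ 2}\rangle - \langle \S_{i,r}(t), x_i\rangle^2 \ge \exp(-O(B^2D))\cdot v_{\min}$ (this is precisely Lemma~\ref{lem:bound_S_2_minux_S_circ_2}, proved via Fact~\ref{fac:weighted_variance} and the near-uniformity of the softmax weights), and the same computation $\E[x_{i,d}y_i]=\gamma$ followed by Hoeffding to get $\sum_i x_{i,d} y_i \ge n\gamma - O(\sqrt{n}B^3)$. The divergence, and the genuine gap, is in how you handle $\sum_i(\F_i(t)-y_i)x_{i,d}$ for $t>0$. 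Your proposed closure — working in the NTK eigenbasis and arguing that the projection of $y-\F(t)$ onto the direction $x_{\cdot,d}$ "shrinks monotonically but never flips sign" because the dynamics are a contraction by $I-\eta H(t)$ with positive spectrum — does not go through as stated: positivity of the spectrum controls projections onto eigendirections of $H(t)$, but $x_{\cdot,d}$ is not an eigenvector, $H(t)$ varies with $t$, and the true dynamics are only approximately linear; a projection of a contracting residual onto a fixed non-eigen direction can both increase and change sign. So the key inequality $\sum_i(y_i-\F_i(t))x_{i,d}\ge c\sum_i y_i x_{i,d}$ is exactly the step your argument leaves unproven, and you correctly flagged it as the obstacle.

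The paper avoids this entirely with a much cruder (and correct) device: since $\F_i(0)=0$ by Assumption~\ref{assu:zero_init} and the weights move by at most $R$, Part 1 of Lemma~\ref{lem:bound_Ft_minus_F0_l2} gives $|\F_i(t)| \le \exp(O(B^2D))\cdot O(R)$ uniformly in $i$ and $t$, hence $\big|\sum_i \F_i(t)x_{i,d}\big| \le \exp(O(B^2D))\cdot O(nRB)$. With $R$ as small as guaranteed by Lemma~\ref{lem:induction_weight} (of order $\lambda/(n\,\poly(\exp(B^2),\exp(D)))$, so in particular $R \le O(\exp(-O(B^2D))/(n^{0.5}B^4))$), this term is dominated by $n\gamma$, yielding $\sum_i(\F_i(t)-y_i)x_{i,d} \le -O(n\gamma)$ directly, with no need for any sign-preservation or monotonicity of the residual along training. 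If you replace your eigenbasis argument with this uniform closeness-to-initialization bound, your proof coincides with the paper's.
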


\begin{proof}
    For $i \in [n]$, we have:
    \begin{align*}
        \E[x_{i, d} y_i] = & ~ \E[ h_{i}^\top {\cal P}_{d} \cdot {\cal P}_{d+1}^\top h_{i} + h_{i}^\top {\cal P}_{d} \cdot \xi_{i, d} + {\cal P}_{d+1}^\top h_{i} \cdot \xi_{i, d+1} ] \\
        = & ~ \E[ h_{i}^\top {\cal P}_{d} \cdot {\cal P}_{d+1}^\top h_{i} ] \\
        = & ~ \langle {\cal P}_{d}, {\cal P}_{d+1}^\top \rangle \\
        = &  ~ \gamma
    \end{align*}
    Above the first equation follows from Claim~\ref{clm:res_ssm} and Definition~\ref{def:id_generator}, and the 2nd equation is based on $h_{i, k} \sim \mathcal{N}(0, 1)$ and $\xi_{i, k} \sim \mathcal{N}(0, \sigma^2)$ independently. Basic algebras and Claim~\ref{clm:res_ssm} can obtain the last step.

    Hence, we apply Hoeffding inequality to $\sum_{i=1}^n x_{i, d} y_i$, we have:
    \begin{align}\label{eq:hoeffding_sum_xy}
        \Big| \sum_{i=1}^n x_{i, d} y_i - \sum_{i=1}^n \E[x_{i, d} y_i] \Big| \leq & ~   O(B^2 \sqrt{n \log(n/\delta)} ) \notag \\
        \leq & ~ O(\sqrt{n} B^3)
    \end{align}
    Above the first inequality follows from $x_{i, d}\leq B$ and $y_i \leq B$, and the second inequality follows from $\sqrt{\log(n/\delta)} \leq B$.

    We can obtain:
    \begin{align}\label{eq:lower_bound_sum_xy}
        \sum_{i=1}^n x_{i, d} y_i \geq n\gamma - O(\sqrt{n} B^3)
    \end{align}
    Above, the inequality can be derived from Eq.~\eqref{eq:hoeffding_sum_xy}. 

    Next, we can show that:
    \begin{align}\label{eq:bound_sum_f_minus_y_dot_x}
        \sum_{i=1}^n (\F_i(t) - y_i) \cdot x_{i, d}
        = & ~ \sum_{i=1}^n \F_i(t) x_{i, d} - \sum_{i=1}^n  y_ix_{i, d} \notag \\
        \leq & ~ \exp( O(B^2D) ) \cdot O(nRB) - n\gamma + O(\sqrt{n} B^3)  \notag \\
        \leq & ~ - n\gamma + O(\sqrt{n} B^3) \notag  \\
        \leq & ~ - O(n \gamma)
    \end{align}
    Above, the first equation is trivially obtained by simple algebra, and the second inequality follows from Part 1 of Lemma~\ref{lem:bound_Ft_minus_F0_l2}, Part 1 of Lemma~\ref{lem:taylor_series_tools} and Eq.~\eqref{eq:lower_bound_sum_xy}. The third inequality follows from plugging $R \leq O(\exp( -O(B^2D) ) \cdot /(n^{0.5}B^4) )$, the last step follows from $n \ge O(N/\gamma)$.

    {\bf Proof of Part 1.}
    When $a_r = 1$, following Lemma~\ref{lem:gradient_computations}, we have:
    \begin{align*}
        \Delta w_r(t) = & ~ \frac{1}{\sqrt{m}} a_r \sum_{i=1}^n (\F_i(t) - y_i) \cdot x_{i, d} \cdot \Big( \langle \S_{i, r}(t), x_i^{\circ 2}\rangle - \langle \S_{i, r}(t), x_i \rangle^2 \Big) \\
        \le & ~ \frac{\exp(-O(B^2D)) v_{\min}}{\sqrt{m}} a_r \sum_{i=1}^n (\F_i(t) - y_i) \cdot x_{i, d} \\
        \le & ~ - O(\frac{n\gamma v_{\min} }{\sqrt{m}}) \cdot \exp(-O(B^2D))
    \end{align*}
    where the second step follows from Lemma~\ref{lem:bound_S_2_minux_S_circ_2}, the third step follows from Eq.~\eqref{eq:bound_sum_f_minus_y_dot_x}.

    {\bf Proof of Part 2.}
    This proof is similar to the {\bf Proof of Part 1} of this Lemma above.
\end{proof}

\subsection{Basic Lower Bound}

\begin{lemma}\label{lem:bound_S_2_minux_S_circ_2}
   Assuming the following conditions are satisfied:
    \begin{itemize}
        \item Let $i \in [n]$ and $r \in [m]$.
        \item Let integer $t>0$.
        \item Let training dataset $\mathcal{D} := \{ (x_{i}, y_{i}) \}_{i=1}^{n} \subset \R^d$.
        \item Define $\S_{i, r}(t) \in \R$ as specified in Definition~\ref{def:S_t}.
        \item Define $B>1$ as specified in Definition~\ref{def:B}.
        \item Define $D>1$ as specified in Definition~\ref{def:D}.
        \item Let $R \in (0, 0.01)$.
        \item Let $\delta \in (0, 0.1)$.
        \item Denote $v_{\min} := \min \{ \frac{1}{d} \sum_{k=1}^d (x_{i, k} - \overline{x}_i)^2 \}_{i=1}^n$ where $\overline{x}_{i} := \frac{1}{d}\sum_{k=1}^d x_{i, k}$.
    \end{itemize}
    Then, with a probability no less than $1 - \delta$, we have:
    \begin{align*}
        \langle \S_{i, r}(t), x_i^{\circ 2} \rangle - \langle \S_{i, r}(t), x_i \rangle^2 \geq \exp(-O(B^2D)) \cdot v_{\min}
    \end{align*}
\end{lemma}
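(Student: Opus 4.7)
The plan is to recognize the left-hand side as a weighted variance and then lower bound both the softmax weights and the intrinsic variance of the coordinates of $x_i$. Precisely, setting $s := \S_{i,r}(t) \in \R^d$ (a probability vector by Definition~\ref{def:S_t}) and $\mu_s := \langle s, x_i \rangle$, we have the identity
\begin{align*}
\langle \S_{i, r}(t), x_i^{\circ 2} \rangle - \langle \S_{i, r}(t), x_i \rangle^2
= \sum_{k=1}^d s_k (x_{i,k} - \mu_s)^2,
\end{align*}
which is the $s$-weighted variance of the entries of $x_i$. This quantity is manifestly nonnegative; the work is to show it is at least $\exp(-O(B^2 D)) \cdot v_{\min}$.

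First I would obtain a uniform lower bound on the softmax weights. Since $|x_{i,k}| \leq O(B)$ for every $k \in [d]$ by Part 1 of Lemma~\ref{lem:taylor_series_tools}, and $|w_r(t)| \leq |w_r(0)| + R \leq O(D)$ by Definition~\ref{def:D} and the assumption $R \leq 0.01$, every logit satisfies $|x_{i,d} \cdot w_r(t) \cdot x_{i,k}| \leq O(B^2 D)$. Consequently, each unnormalized softmax entry lies in $[\exp(-O(B^2 D)), \exp(O(B^2 D))]$, so
\begin{align*}
s_k \;\geq\; \frac{\exp(-O(B^2 D))}{d \cdot \exp(O(B^2 D))} \;=\; \frac{\exp(-O(B^2 D))}{d}
\qquad \forall k \in [d],
\end{align*}
which is consistent with the magnitudes already used in Parts 8 and 9 of Lemma~\ref{lem:taylor_series_tools}.

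Next I would lower bound the weighted variance by the uniform variance. The key observation is that the uniform mean $\overline{x}_i := \frac{1}{d}\sum_{k=1}^d x_{i,k}$ minimizes the sum $\sum_{k=1}^d (x_{i,k} - \mu)^2$ over all $\mu \in \R$, so for the particular choice $\mu = \mu_s$,
\begin{align*}
\sum_{k=1}^d (x_{i,k} - \mu_s)^2 \;\geq\; \sum_{k=1}^d (x_{i,k} - \overline{x}_i)^2 \;=\; d \cdot \tfrac{1}{d}\sum_{k=1}^d (x_{i,k} - \overline{x}_i)^2 \;\geq\; d \cdot v_{\min},
\end{align*}
using the definition of $v_{\min}$. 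Combining this with the per-entry lower bound $s_k \geq \exp(-O(B^2 D))/d$ gives
\begin{align*}
\sum_{k=1}^d s_k (x_{i,k} - \mu_s)^2 \;\geq\; \frac{\exp(-O(B^2 D))}{d} \sum_{k=1}^d (x_{i,k} - \mu_s)^2 \;\geq\; \exp(-O(B^2 D)) \cdot v_{\min},
\end{align*}
which is the claim. The only mildly delicate step is the uniform lower bound on $s_k$: one needs to keep track of the fact that the log-ratio between the largest and smallest unnormalized softmax entries is at most $O(B^2 D)$, which is exactly the budget absorbed into the $\exp(-O(B^2 D))$ factor. This is the analogue of the two-sided exponential bound already proved for $\alpha_{i,r}$ in Lemma~\ref{lem:taylor_series_tools}, so no new ideas beyond those tools are needed.
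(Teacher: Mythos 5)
Your proposal is correct and follows essentially the same route as the paper: both rewrite the left-hand side as the softmax-weighted variance around the weighted mean, lower bound each softmax entry by $\exp(-O(B^2D))/d$ (the paper cites Part 9 of Lemma~\ref{lem:taylor_series_tools}, which you rederive), and then bound the sum of squared deviations from the weighted mean below by the sum of deviations from the uniform mean (the paper invokes Fact~\ref{fac:weighted_variance}, which you prove directly via the minimizing property of the mean). No substantive differences.
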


\begin{proof}
    Define
    \begin{align*}
        \overline{x}_{i, r} := \langle \S_{i, r}(t), x_i \rangle
    \end{align*}

    We have:
    \begin{align*}
        \langle \S_{i, r}(t), x_i^{\circ 2} \rangle - \langle \S_{i, r}(t), x_i \rangle^2
        = & ~ \langle \S_{i, r}(t), (x - {\bf 1}_d \cdot \overline{x}_{i, r})^{\circ 2} \rangle \\
        \ge & ~ \min_{k \in [d]} \S_{i, r}(t) \langle {\bf 1}_d, (x - {\bf 1}_d \cdot \overline{x}_{i, r})^{\circ 2} \rangle \\
        \ge & ~ \min_{k \in [d]} \S_{i, r}(t) \cdot O(d v_x) \\
        \ge & ~ \exp(- O(B^2 D)) \cdot v_{\min}
    \end{align*}
    where the first two steps can be derived from simple algebras, the second step follows from Fact~\ref{fac:weighted_variance}, and the last step follows from Part 9 of Lemma~\ref{lem:taylor_series_tools} and $R \leq B$.
\end{proof}

\subsection{Model Outputs Concentration during Training}

\begin{lemma}\label{lem:bound_Ft_minus_F0_l2}

   Assuming the following conditions are satisfied:
    \begin{itemize}
        \item Let $i \in [n]$ and $r \in [m]$.
        \item Let integer $t > 0$.
        \item Define training dataset $\mathcal{D} := \{ (x_{i}, y_{i}) \}_{i=1}^{n} \subset \R^d \times \R$ as specified in Definition~\ref{def:id_generator}.
        \item Define $a \in \R^m$ as specified in Definition~\ref{def:initialization}.
        \item Define $\S_{i,r}(t) \in \R^{d}$ as specified in  Definition~\ref{def:S_t}
        \item Define $\F_{i}(t) \in \R$ as specified in Definition~\ref{def:F_t}.
        \item Define $B$ as specified in  Definition~\ref{def:B}.
        \item Define $D$ as specified in  Definition~\ref{def:D}.
        \item Let $R \in (0, 0.01 / B^2)$.
        \item Let $\delta \in (0,0.1)$.
    \end{itemize}
    Then, with a probability at least $1 - \delta$, we have
    \begin{itemize}
        \item Part 1.
        \begin{align*}
            | \F_i(t) - \F_i(0) | \leq \exp\Big( O(B^2 D) \Big) \cdot O(R)
        \end{align*}
        \item Part 2.
        \begin{align*}
            \| \F(t) - \F(0) \|_2 \leq \exp\Big( O(B^2 D) \Big) \cdot O(R\sqrt{n})
        \end{align*}
    \end{itemize}
    \begin{proof}
        {\bf Proof of Part 1.}
        Firstly we have
        \begin{align*}
            |\F_i(t) - \F_i(0)| =&~ |\frac{1}{\sqrt{m}} \sum_{i =1}^m a_r \cdot \langle S_{i,r}(t), x_i \rangle-\frac{1}{\sqrt{m}} \sum_{i =1}^m a_r \cdot \langle S_{i,r}(0), x_i \rangle| \\
            = &~ |\frac{1}{\sqrt{m}} \sum_{i =1}^m a_r \cdot \langle S_{i,r}(t)-S_{i,r}(0),x_i \rangle|
        \end{align*}
        where the first step is trivially from Definition~\ref{def:F_t} and the second step follows from simple algebra. 
        
        Then we proceed to show that, $\forall i \in [n]$ and $r \in [m]$,
        \begin{align}\label{eq:Ft_minus_F0_expression}
            |a_r \cdot \langle S_{i,r}(t)-S_{i,r}(0),x_i \rangle| = &~ |\langle S_{i,r}(t)-S_{i,r}(0),x_i \rangle|\notag\\
            \leq&~\|S_{i,r}(t)-S_{i,r}(0)\|_2 \|x_i\|_2\notag \\
            \leq&~ \sqrt{d}\cdot \exp(O(B^2 D)) \cdot O(RB^2)/d \cdot \sqrt{d} \cdot O(B)\notag\\
            = &~ \exp(O(B^2 D)) \cdot O(RB^3)
        \end{align}
        Above, the first equation can be obtained from Definition~\ref{def:initialization}. The second inequality is a consequence of Cauchy Inequality. The third inequality is from Part 1,13 of Lemma~\ref{lem:taylor_series_tools} and the definition of $\ell_2$ norm, and the final equation is trivially from basic algebra.

        Now we can use Hoeffding Inequality (Lemma~\ref{lem:hoeffding_bound}) to random variables $a_r\cdot\langle S_{i,r}(t)-S_{i,r}(0),x_i \rangle$, for $r \in [m]$. Besides, we have 
        \begin{align*}
            & ~ \E[\sum_{r=1}^m a_r \cdot \langle S_{i,r}(t) - S_{i,r}(0),x_i\rangle] = 0
        \end{align*}
        where this step follows from $a_r \sim {\rm Uniform}\{-1, +1\}$.

        Also, we have:
        \begin{align}\label{eq:Ft_minus_F0_expression_reduction}
            | a_r \cdot \langle S_{i,r}(t) - S_{i,r}(0), x_i \rangle | \leq & ~ \exp(O(B^2 D)) \cdot O(RB^3) \notag\\
            \leq & ~ \exp(O(B^2 D)) \cdot O(R)
        \end{align}
        Above, the 1st inequality is based on Eq.~\eqref{eq:Ft_minus_F0_expression} and the 2nd inequality is based on $O(\poly(B)) \leq \exp(O(B^2))$.
        
        Then, with probability at least $1-\delta$:
        \begin{align*}
            | \frac{1}{\sqrt{m}}\sum_{i=1}^m a_r \cdot \langle S_{i,r}(t) - S_{i,r}(0),x_i \rangle | 
            \leq & ~ \frac{1}{\sqrt{m}} \exp(O(B^2 D)) \cdot O(R) \cdot \sqrt{m \log(m/\delta)} \\
            \leq & ~ \exp(O(B^2 D)) \cdot O(RD) \\
            \leq & ~ \exp(O(B^2 D)) \cdot O(R)
        \end{align*}
        where the first step is a consequence Hoeffding Inequality (Lemma~\ref{lem:hoeffding_bound}) and Eq.~\eqref{eq:Ft_minus_F0_expression_reduction}, the second step is trivially from simple algebras and Definition~\ref{def:D} and the last step is derived from the fact $O(\poly(D))\leq \exp(O(D))$.

        {\bf Proof of Part 2.} We have
        \begin{align*}
            \|\F(t) - \F(0)\|_2 =&~ \sqrt{\sum_{i=1}^n (\F_i(t) - \F_i(0))} \\
            \leq&~ \exp(O(B^2 D))\cdot O(R\sqrt{n})
        \end{align*}
        Above, the first equation is trivially from $\ell_2$ norm, and the second inequality can be obtained by applying the result of Part 1 of this lemma and simple algebra.

        Then we finished the proof.
    \end{proof}
\end{lemma}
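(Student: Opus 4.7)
The plan is to prove Part~1 by expressing $\F_i(t) - \F_i(0)$ as a Rademacher-like sum over neurons, establishing a deterministic per-neuron bound via Cauchy--Schwarz, and then applying Hoeffding's inequality (Lemma~\ref{lem:hoeffding_bound}); Part~2 then follows by combining the per-index bound with a trivial $\ell_2$ aggregation (after a union bound over $i \in [n]$). First I would use Definition~\ref{def:F_t} and the linearity of $\langle \cdot, x_i\rangle$ to write
\begin{align*}
\F_i(t) - \F_i(0) = \frac{1}{\sqrt{m}} \sum_{r=1}^m a_r \cdot \langle \S_{i,r}(t) - \S_{i,r}(0), x_i \rangle,
\end{align*}
and set $Z_r := a_r \cdot \langle \S_{i,r}(t) - \S_{i,r}(0), x_i \rangle$. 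Since $a_r \sim \mathrm{Uniform}\{-1,+1\}$ is symmetric, the conditional expectation of $Z_r$ given the magnitude is zero, matching the structure that was used in bounding $C_2, C_3, C_4$ earlier.

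Next I would obtain a uniform bound on $|Z_r|$. By Cauchy--Schwarz,
\begin{align*}
|Z_r| \leq \|\S_{i,r}(t) - \S_{i,r}(0)\|_2 \cdot \|x_i\|_2.
\end{align*}
Plugging in $\|x_i\|_2 \leq \sqrt{d}\cdot O(B)$ from Lemma~\ref{lem:taylor_series_tools} Part~1, and $\|\S_{i,r}(t) - \S_{i,r}(0)\|_2 \leq \exp(O(B^2 D)) \cdot O(RB^2)/\sqrt{d}$ from Lemma~\ref{lem:taylor_series_tools} Part~13, and then absorbing the $B^3$ factor into $\exp(O(B^2 D))$, I obtain
\begin{align*}
|Z_r| \leq \exp(O(B^2 D)) \cdot O(RB^3) \leq \exp(O(B^2 D)) \cdot O(R).
\end{align*}

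Then I would apply Hoeffding's inequality (Lemma~\ref{lem:hoeffding_bound}) to the sum $\sum_{r=1}^m Z_r$, where each summand has zero conditional mean and lies in an interval of length $\exp(O(B^2 D)) \cdot O(R)$. With probability at least $1-\delta$,
\begin{align*}
\Big|\frac{1}{\sqrt{m}} \sum_{r=1}^m Z_r\Big| \leq \frac{1}{\sqrt{m}} \cdot \exp(O(B^2 D)) \cdot O(R) \cdot \sqrt{m \log(m/\delta)} \leq \exp(O(B^2 D)) \cdot O(RD),
\end{align*}
where I used $\sqrt{\log(m/\delta)} \leq D$ by Definition~\ref{def:D}. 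Absorbing the remaining $D$ factor into $\exp(O(B^2 D))$ gives Part~1. For Part~2, taking a union bound over $i\in[n]$ (which only inflates the $\log(m/\delta)$ term to $\log(mn/\delta)$, still dominated by $O(D)$ up to absorbing into the exponential), and using
\begin{align*}
\|\F(t) - \F(0)\|_2 = \sqrt{\sum_{i=1}^n |\F_i(t) - \F_i(0)|^2} \leq \sqrt{n}\cdot \max_{i\in[n]} |\F_i(t) - \F_i(0)|,
\end{align*}
yields the claim.

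The main obstacle is the independence structure needed for Hoeffding: strictly speaking $\S_{i,r}(t)$ depends on $a_r$ through the GD trajectory, so $Z_r$ is not literally a symmetric-signed product of a fixed magnitude times $a_r$. However, the uniform magnitude bound on $|Z_r|$ holds pointwise regardless of the sign of $a_r$, and the sign-symmetry argument used throughout the paper (see the analogous applications in the $C_2$, $C_3$, $C_4$ bounds) gives a zero conditional mean after conditioning on the Gaussian initialization $\{w_{r'}(0)\}_{r'}$ and the other signs $\{a_{r'}\}_{r'\neq r}$. This reuses exactly the same concentration machinery as in Lemmas~\ref{lem:bound_C_2}--\ref{lem:bound_c_5}, so no new probabilistic ingredient is required beyond what has already been justified.
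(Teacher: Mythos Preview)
Your proposal is correct and follows essentially the same approach as the paper: write $\F_i(t)-\F_i(0)$ as a Rademacher-weighted sum over neurons, bound each term via Cauchy--Schwarz using Lemma~\ref{lem:taylor_series_tools} Parts~1 and~13, apply Hoeffding (Lemma~\ref{lem:hoeffding_bound}), and absorb the polynomial factors into $\exp(O(B^2 D))$; Part~2 then follows by $\ell_2$ aggregation. Your added remarks about the union bound over $i$ and the dependence of $\S_{i,r}(t)$ on $a_r$ are more careful than the paper's own treatment, but the underlying argument is identical.
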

\section{Generalization}

\subsection{Main Results 2: Attention Fails in Generalizing Sign-Inconsistent Next-step-prediction While Residual Linear Does Well}

\begin{proposition}
    Assuming the following conditions are satisfied:
    \begin{itemize}
        \item Let all pre-conditions in Theorem~\ref{thm:convergence} hold.
        \item Define ${\cal R}(\cdot)$ as specified in Definition~\ref{def:OOD_task}.
        \item Let $d = N$.
    \end{itemize}
    Then with a probability at least $1 - \delta$, there is not existing $w_r(t) \in \R^m$ satisfies:
    \begin{align*}
        {\cal R}(f) \le O(\sigma^2)
    \end{align*}
\end{proposition}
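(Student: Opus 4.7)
The plan is to combine the asymmetric-learning phenomenon established in Theorem~\ref{thm:convergence} with a sign-mismatch argument on the sign-inconsistent test distribution. Specifically, I will show that on every OOD test point the trained attention prediction must share the sign of the residual feature $x_{{\rm test}, d}$, whereas by construction the target $y_{{\rm test}}$ carries the opposite sign; this forces an $\Omega(\gamma^2)$ risk floor that is independent of the noise scale $\sigma$ and therefore cannot be reduced to $O(\sigma^2)$ as $\sigma$ shrinks.

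First, I invoke Theorem~\ref{thm:convergence}: on an event of probability at least $1 - \delta$, once $t \ge \Omega(m/(\eta \lambda v_{\min}))$ every neuron satisfies $\mathrm{sign}(w_r(t)) = a_r$. Next, using the standard identity
\begin{align*}
\tfrac{d}{d\lambda} \langle {\sf softmax}(\lambda x), x \rangle = \mathrm{Var}_{{\sf softmax}(\lambda x)}(x) \ge 0,
\end{align*}
the softmax-weighted average is monotone non-decreasing in the effective temperature $\lambda = x_d \, w_r(t)$. Condition on $x_{{\rm test}, d} > 0$: for each $r$ with $a_r = +1$ we have $\lambda > 0$, giving $\langle {\sf softmax}(\lambda x_{{\rm test}}), x_{{\rm test}} \rangle \ge \bar{x}_{{\rm test}}$; for each $r$ with $a_r = -1$ we have $\lambda < 0$, giving $\langle {\sf softmax}(\lambda x_{{\rm test}}), x_{{\rm test}} \rangle \le \bar{x}_{{\rm test}}$. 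In both cases the contribution $a_r \, \langle {\sf softmax}(\lambda x_{{\rm test}}), x_{{\rm test}} \rangle$ exceeds $a_r \bar{x}_{{\rm test}}$ by a non-negative excess whose size is strictly positive and grows with $|w_r(t)|$. Aggregating across the $m$ neurons and using Lemma~\ref{lem:grad_direction} to control the growth of $|w_r(t)|$ in $t$, the sum of per-neuron excesses dominates the $\bar{x}_{{\rm test}} \cdot \tfrac{1}{\sqrt{m}} \sum_r a_r$ cancellation term, yielding $f(x_{{\rm test}}, w(t), a) > 0$. The case $x_{{\rm test}, d} < 0$ is symmetric and gives $f(x_{{\rm test}}, w(t), a) < 0$.

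Finally, by the construction of $\mathcal{D}_{{\rm test}}$ we have $\mathrm{sign}(y_{{\rm test}}) = -\mathrm{sign}(u_{{\rm test}, d}) = -\mathrm{sign}(x_{{\rm test}, d})$ up to an $O(\sigma)$ perturbation, so the prediction and the target carry opposite signs with constant probability. Combining with Property~1 of Claim~\ref{clm:res_ssm} to produce a constant-probability lower bound on $y_{{\rm test}}^2$, one obtains
\begin{align*}
\mathcal{R}(f) \;\ge\; \E\bigl[ (f(x_{{\rm test}}) - y_{{\rm test}})^2 \cdot \mathbf{1}\{ \mathrm{sign}(f) \ne \mathrm{sign}(y_{{\rm test}}) \} \bigr] \;=\; \Omega(\gamma^2),
\end{align*}
which exceeds $\widetilde{O}(\sigma^2)$ for any sufficiently small $\sigma$. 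The central technical obstacle is quantifying the "positive excess" above uniformly over neurons: the pointwise Jensen bound only delivers $a_r \langle {\sf softmax}(\lambda x), x \rangle \ge a_r \bar{x}$, and naive summation allows cancellations between the $a_r = +1$ and $a_r = -1$ neurons. Turning this into a strict positivity of $f(x_{{\rm test}})$ requires lower-bounding the softmax-variance integral $\langle {\sf softmax}(\lambda x), x \rangle - \bar{x}$ by an explicit positive function of $|w_r(t)|$ and the spread of $x_{{\rm test}}$, and then concentrating the aggregate excess against the $\tfrac{1}{\sqrt{m}} \sum_r a_r$ fluctuation via the independence of $a$ at initialization; the polynomial lower bound on $t$ in the hypothesis is precisely what makes this concentration quantitatively possible.
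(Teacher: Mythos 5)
Your route is genuinely different from the paper's. The paper's own (very terse) proof has two ingredients: (i) it invokes the asymmetric-learning sign pattern (via Theorem~\ref{thm:residual_failure}, i.e.\ $w_r(t)<0$ whenever $a_r=-1$) to assert that the trained attention suppresses the softmax weight on the residual coordinate $d$; and (ii) a linear-algebraic non-representability claim: with ${\cal P}_x=[{\cal P}_1,\dots,{\cal P}_{d-1}]$ and ${\cal P}_y={\cal P}_{d+1}$, no $w_{\rm attn}$ solves ${\cal P}_x w_{\rm attn}={\cal P}_y$, since ${\cal P}_d$ is orthogonal to the background features while $\langle{\cal P}_d,{\cal P}_{d+1}\rangle=\gamma\neq0$. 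So the paper lower-bounds the risk by an expressivity argument: the effective predictor is (approximately) a functional of the background features only and cannot match the $\gamma{\cal P}_d$ component of the label, leaving an error independent of $\sigma$. You instead prove a pointwise sign contradiction: monotonicity of $\lambda\mapsto\langle{\sf softmax}(\lambda x),x\rangle$ together with ${\rm sign}(w_r(t))=a_r$ forces ${\rm sign}(f(x_{\rm test}))={\rm sign}(x_{{\rm test},d})$, while the sign-inconsistent test construction forces ${\rm sign}(y_{\rm test})=-{\rm sign}(x_{{\rm test},d})$ on a constant-probability set where $|y_{\rm test}|$ is bounded below, so $(f-y)^2\ge y_{\rm test}^2$ there and the risk has a noise-independent floor. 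Your argument exploits the defining property of the OOD task directly and never needs the span argument; the paper's argument never needs to control the sign or magnitude of $f$ on test inputs.

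The step you flag but do not close is the actual crux of your route: showing the aggregate nonnegative excess dominates the $\bar{x}\cdot\frac{1}{\sqrt m}\sum_r a_r$ cancellation. It is closable with the paper's own tools, and you name the right ones: Lemma~\ref{lem:grad_direction} gives a per-step drift of order $\frac{n\gamma v_{\min}}{\sqrt m}\exp(-O(B^2D))$, so for $t\ge\Omega(\frac{m}{\eta\lambda v_{\min}})$ every $|w_r(t)|$ is bounded below (the same computation as in Theorem~\ref{thm:convergence}); a softmax-variance lower bound in the spirit of Fact~\ref{fac:weighted_variance} and Lemma~\ref{lem:bound_S_2_minux_S_circ_2} then gives a per-neuron excess bounded below on test points with nontrivial coordinate spread; and Hoeffding (Lemma~\ref{lem:hoeffding_bound}) bounds $|\sum_r a_r|$ by $O(\sqrt{m\log(1/\delta)})$, so the excess term, of order $\sqrt m$, wins. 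To finish you should also union-bound the sign statement of Theorem~\ref{thm:convergence} over $r\in[m]$ and restrict the final expectation to test points where $|x_{{\rm test},d}|$, $|y_{\rm test}|$ and the spread are all bounded below (a constant-probability event under the conditioned test distribution), which yields a floor of constant order, certainly exceeding $\wt{O}(\sigma^2)$. One caveat to note: the weight-magnitude growth you rely on is in tension with the bound $R\le\frac{\lambda}{n\,\poly(\exp(B^2),\exp(D))}$ used elsewhere in the paper, but that tension is inherited from the paper's own asymmetric-learning analysis rather than introduced by your argument.
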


\begin{proof}
    We have:
    \begin{align*}
        \sum_{i=1}^{n_{\rm test}} a_r \cdot {\sf softmax}_d(x_{{\rm test}, i, d} \cdot w_r(t) \cdot x_{{\rm test}, i}) > 0
    \end{align*}
    where this step follows from $w_r(t) < 0$ when $a_r = -1$ in Theorem~\ref{thm:residual_failure}.

    Denote:
    \begin{align*}
        {\cal P}_x := \begin{bmatrix}
            {\cal P}_1 &
            {\cal P}_2 &
            \cdots &
            {\cal P}_{d-1} 
        \end{bmatrix} \in \R^{N \times d-1}
    \end{align*}
    and
    \begin{align*}
        {\cal P}_y := {\cal P}_{d+1} \in \R^N
    \end{align*}

    Then there doesn't exist any vector $w_{\rm attn} \in \R^{d-1}$ that satisfies:
    \begin{align*}
        {\cal P}_x w_{\rm attn} = {\cal P}_y
    \end{align*}
\end{proof}

\subsection{Residual Linear Network}

\begin{definition}
    Given an input vector $x \in \R^d$. Denote $w_{\rm lin} \in \R^d$ as the model weight. The residual linear network is defined by:
    \begin{align*}
        f_{\rm lin}(x) := \langle w_{\rm lin}, x - x_{d} \cdot {\bf 1}_d \rangle + x_d
    \end{align*}
\end{definition}

\begin{proposition}\label{pro:bound_formal}
    Assuming the following conditions hold:
    \begin{itemize}
        \item Define ${\cal R}(\cdot)$ as specified in Definition~\ref{def:OOD_task}.
        \item Let $d = N$.
    \end{itemize}
    Then there exists and exists only one $w_{\rm lin}^*$ that satisfies:
    \begin{align*}
        \sum_{k=1}^{d-1} w_{{\rm lin}, k} \cdot {\cal P}_k = {\cal P}_{d+1} - {\cal P}_d
    \end{align*}
    Hence, we have:
    \begin{align*}
        {\cal R}(f_{\rm lin}) \leq O(\sigma^2)
    \end{align*}
\end{proposition}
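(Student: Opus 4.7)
The plan is to translate the residual linear network's prediction into an explicit linear functional of the latent state $h_1$, identify the unique coefficient vector $w_{\rm lin}^\star$ that annihilates the signal-level bias, and then bound the remaining OOD risk by the irreducible noise.

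First, I would substitute the data generation rule $x_{k} = \langle {\cal P}_{k}, h_1\rangle + \xi_{k}$ and $y = \langle {\cal P}_{d+1}, h_1\rangle + \xi_{d+1}$ from Claim~\ref{clm:res_ssm} and Definition~\ref{def:id_generator} into $f_{\rm lin}$, yielding
\begin{align*}
    f_{\rm lin}(x) - y
    = \Big\langle \sum_{k=1}^{d-1} w_{{\rm lin}, k}\,({\cal P}_k - {\cal P}_d) + {\cal P}_d - {\cal P}_{d+1},\ h_1 \Big\rangle + \textsf{Noise}(\xi),
\end{align*}
where $\textsf{Noise}(\xi)$ is a fixed affine combination of $\{\xi_k\}_{k=1}^{d+1}$. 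Zeroing the signal part reduces to $\sum_{k=1}^{d-1} w_{{\rm lin}, k}({\cal P}_k - {\cal P}_d) = {\cal P}_{d+1} - {\cal P}_d$, and by Property~2 of Claim~\ref{clm:res_ssm} (with $d_0 = 1$) the ${\cal P}_d$-components on both sides cancel via orthogonality, leaving the displayed equation $\sum_{k=1}^{d-1} w_{{\rm lin}, k}^\star\, {\cal P}_k = {\cal P}_{d+1} - {\cal P}_d$.

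Next, I would establish existence and uniqueness of $w_{\rm lin}^\star$ by exploiting the structural properties of the residual SSM features. Since ${\cal P}_d \perp {\cal P}_k$ for every $k \in [d-1]$ and $d = N$, the family $\{{\cal P}_k\}_{k=1}^{d-1}$ lies in the $(N-1)$-dimensional subspace $\text{span}({\cal P}_d)^{\perp}$; under the generic choice of $({\cal A}, {\cal B}, {\cal C})$ that realizes Properties~1--3, these $d-1$ vectors are linearly independent and therefore form a basis of that subspace, making the map $w \mapsto \sum_k w_k {\cal P}_k$ a bijection onto $\text{span}({\cal P}_d)^{\perp}$. Property~3 of Claim~\ref{clm:res_ssm} expresses ${\cal P}_{d+1}$ as an explicit weighted combination of $\{{\cal P}_k\}_{k \in {\cal T}_{\rm core}\cup {\cal T}_{\rm bg}}$, from which a short calculation reads off $w_{{\rm lin}, k}^\star = \tfrac{1}{d(d-1)}$ and verifies that ${\cal P}_{d+1} - {\cal P}_d$ lies in $\text{span}({\cal P}_d)^{\perp}$, giving both existence and the claimed uniqueness.

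Finally, I would bound ${\cal R}(f_{\rm lin})$. With $w_{\rm lin}^\star$ plugged in, the first summand in the decomposition above vanishes, so $f_{\rm lin}(x_{\rm test}) - y_{\rm test} = \textsf{Noise}(\xi_{\rm test})$, a fixed affine combination of independent $\mathcal{N}(0, \sigma^2)$ variables. The OOD conditioning in Definition~\ref{def:OOD_task} restricts only the latent $h_{\rm test}$ through $u_{{\rm test}, d}\cdot u_{{\rm test}, d+1} < 0$ and is independent of $\xi_{\rm test}$, so the conditional second moment equals the unconditional one. A direct variance computation then gives $\E[\textsf{Noise}(\xi_{\rm test})^2] \leq (1 + \|w_{\rm lin}^\star\|_1^2 + \|w_{\rm lin}^\star\|_2^2) \sigma^2 \leq \widetilde{O}(\sigma^2)$, completing the proof.

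The main obstacle I anticipate is justifying linear independence of $\{{\cal P}_k\}_{k=1}^{d-1}$, which is not stated explicitly in Claim~\ref{clm:res_ssm} but is needed for both existence (so that ${\cal P}_{d+1} - {\cal P}_d$ admits a representation) and uniqueness (so that $w_{\rm lin}^\star$ is determined); this has to be argued from the concrete $({\cal A}, {\cal B}, {\cal C})$ realizing Properties~1--3. A secondary subtlety is the bookkeeping of constants in the $\widetilde{O}(\sigma^2)$ bound, which requires tracking $\|w_{\rm lin}^\star\|_1 = \tfrac{1}{d}$ coming from the explicit solution; this is routine but should be written out carefully to match the statement.
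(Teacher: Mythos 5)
Your overall route is the same as the paper's: pick the weight vector that annihilates the signal component of $f_{\rm lin}(x)-y$ (the paper does this through its centered feature matrix ${\cal P}_x$ with columns ${\cal P}_k-{\cal P}_d$ and a least-squares formula; you do it by reading the coefficients $w^\star_{{\rm lin},k}=\tfrac{1}{d(d-1)}$ off Property~3 of Claim~\ref{clm:res_ssm}), and then observe that the remaining error is a fixed linear combination of the independent noises $\xi_{\rm test}$, whose second moment is $O(\sigma^2)$; your treatment of the noise variance and of the fact that the sign-inconsistency rejection step only conditions on $h_{\rm test}$ (hence is independent of $\xi_{\rm test}$) is actually more careful than the paper's one-line jump to $(\xi_{{\rm test},d}-\xi_{{\rm test},d+1})^2$, and your flag about linear independence of $\{{\cal P}_k\}_{k=1}^{d-1}$ is a real issue the paper also leaves implicit (its formula $({\cal P}_x^\top{\cal P}_x)^{-1}{\cal P}_x^\top{\cal P}_y$ is even ill-posed, since ${\cal P}_x$ contains the zero column ${\cal P}_d-{\cal P}_d$).

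There is, however, one step in your write-up that is wrong as stated: the claim that, by Property~2, the ${\cal P}_d$-components of $\sum_{k=1}^{d-1}w_k({\cal P}_k-{\cal P}_d)={\cal P}_{d+1}-{\cal P}_d$ ``cancel via orthogonality,'' reducing it to the displayed equation $\sum_{k=1}^{d-1}w_k{\cal P}_k={\cal P}_{d+1}-{\cal P}_d$, and the companion claim that ${\cal P}_{d+1}-{\cal P}_d\in\mathrm{span}({\cal P}_d)^{\perp}$. Under Properties~2--3 with $d_0=1$ one has $\langle{\cal P}_{d+1}-{\cal P}_d,{\cal P}_d\rangle=\gamma-1=-\tfrac{1}{d}\neq 0$, so ${\cal P}_{d+1}-{\cal P}_d$ is not orthogonal to ${\cal P}_d$, and the two equations differ by the nonzero term $(\sum_k w_k){\cal P}_d=\tfrac{1}{d}{\cal P}_d$; projecting the centered equation onto ${\cal P}_d^{\perp}$ yields $\sum_k w_k{\cal P}_k={\cal P}_{d+1}-\gamma{\cal P}_d$, not ${\cal P}_{d+1}-{\cal P}_d$. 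In particular, with your own basis argument the displayed equation of the proposition has no solution at all, so existence/uniqueness should be asserted for the centered system $\sum_k w_k({\cal P}_k-{\cal P}_d)={\cal P}_{d+1}-{\cal P}_d$ (where your solution and uniqueness-via-independence argument are correct), rather than derived for the literal displayed identity. This does not damage the risk bound --- only the centered condition is used there, and the paper's own proof likewise works with ${\cal P}_k-{\cal P}_d$ and never verifies the displayed equation --- but you should either prove the statement in its centered form or explicitly note the discrepancy instead of papering over it with the faulty cancellation.
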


\begin{proof}
    Denote:
    \begin{align*}
        {\cal P}_x := \begin{bmatrix}
            {\cal P}_1 - {\cal P}_d &
            {\cal P}_2  - {\cal P}_d&
            \cdots &
            {\cal P}_{d-1}  - {\cal P}_d &
             {\cal P}_d - {\cal P}_d 
        \end{bmatrix} \in \R^{N \times d}
    \end{align*}
    and
    \begin{align*}
        {\cal P}_y := {\cal P}_{d+1} - {\cal P}_d \in \R^N
    \end{align*}
    
    We choose:
    \begin{align*}
        w_{\rm lin}^* := ({\cal P}_x^\top {\cal P}_x)^{-1} {\cal P}_x^\top {\cal P}_y
    \end{align*}

    Since $d = N$, we have:
    \begin{align*}
        {\cal R}(f_{\rm lin}) = & ~  \lim_{n_{\rm text} \rightarrow +\infty} \frac{1}{ n_{\rm text}} \sum_{i=1}^{n_{\rm test}} (f_{\rm lin}(x_{{\rm test}, i}) -  y_{{\rm test}, i})^2 \\
        = & ~ \lim_{n_{\rm text} \rightarrow +\infty} \frac{1}{ n_{\rm text}} \sum_{i=1}^{n_{\rm test}} (\xi_{{\rm test}, i, d} -  \xi_{{\rm test}, i, d+1})^2 \\
        \leq & ~ O(\sigma^2)
    \end{align*}
    where the last step is based on the variance of $\xi_{{\rm test}, i, d} -  \xi_{{\rm test}, i, d+1}$.
\end{proof}
\section{Taylor Series}

\begin{definition}\label{def:B}
    For $\delta \in (0, 0.1)$, $\sigma \in \R$ and a sufficiently large constant $C > 0$, we define:
    \begin{align*}
        B := \max\{\sqrt{(1 + \sigma^2) \log(nN/\delta)}, 1\}
    \end{align*}
\end{definition}

\begin{definition}\label{def:D}
    For $\delta \in (0, 0.1)$, $\sigma \in \R$ and a sufficiently large constant $C > 0$, we define:
    \begin{align*}
        D := \max\{\sqrt{\log(m/\delta)}, 1\}
    \end{align*}
\end{definition}

\begin{lemma}\label{lem:taylor_series_tools}
    Assuming the following conditions are satisfied:

    \begin{itemize}
        \item Define training dataset $\mathcal{D} := \{ (x_i, y_i) \}_{i=1}^n \subset \R^d \times \R$ as specified Definition~\ref{def:id_generator}.
        \item Define $B>1$ as specified in Definition~\ref{def:B}.
        \item Define $D>1$ as specified in Definition~\ref{def:D}
        \item Define $R:= \max_{t\geq 0}\max_{r\in [m]} |w_r(t)-w_r(0)|$.
        \item Let $w(0) \in \R^m$ be initialized as Definition~\ref{def:initialization} and updated by Definition~\ref{def:gd}
        \item Define ${\sf u}_{i, r}(t) \in \R^d$ as specified in Definition~\ref{def:u_t}.
        \item Define $\alpha_{i,r}(t) \in \R$ as specified in Definition~\ref{def:alpha_t}.
        \item Define $\S_{i, r}(t) \in \R$ as specified in Definition~\ref{def:S_t}.
        \item Let $R \in (0, 0.01 / B^2)$.
        \item $\forall i \in [n], r \in [m], k\in [d], t \ge 0$.
        \item Let $\delta \in (0, 0.1)$.
    \end{itemize}
    Consequently, with probability at least $1-\delta$, we have:
    \begin{itemize}
        \item Part 1. $|x_{i, k}| \leq O(B)$.
        \item Part 2. $|w_r(0)| \leq O(D)$.
        \item Part 3. $|w_r(t)| \leq O(D + R)$.
        \item Part 4. $\exp(-O(B^2D)) \leq {\sf u}_{i,r,k}(0) \leq \exp(O(B^2 D))$.
        \item Part 5. $\exp(-O( B^2(D+R) )) \leq u_{i, r, k}(t) \leq \exp(O( B^2(D+R) ))$.
        \item Part 6. $d \cdot \exp(-O( B^2 D )) \leq  \alpha_{i, r}(0) \leq d \cdot \exp(O( B^2 D ))$.
        \item Part 7. $d \cdot \exp(-O( B^2(D+R) )) \leq  \alpha_{i, r}(t) \leq d \cdot \exp(O( B^2(D+R) ))$.
        \item Part 8. $\frac{\exp(-O(B^{2}D))}{d} \leq \S_{i,r,k}(0) \leq \frac{\exp(O(B^{2}D))}{d}$.
        \item Part 9. $\frac{\exp(-O( B^2(D+R) ))}{d} \leq \S_{i, r, k}(t) \leq \frac{\exp(O( B^2(D+R) ))}{d}$.
        \item Part 10. $|u_{i, r, k}(t) - u_{i, r, k}(0)| \leq \exp( O(B^2 D ) )  \cdot O(R B^2)$.
        \item Part 11. $|\alpha_{i, r}(t) - \alpha_{i, r}(0)| \leq d \exp( O(B^2 D ) )  \cdot O(R B^2)$.
        \item Part 12. $|\alpha_{i, r}(t)^{-1} - \alpha_{i, r}(0)^{-1}| \leq \exp( O(B^2 D ) )  \cdot O(R B^2) /d$.
        \item Part 13. $|\S_{i, r, k}(t) - \S_{i, r, k}(0)| \leq \exp( O(B^2 D ) )  \cdot O(R B^2) /d$
    \end{itemize}
\end{lemma}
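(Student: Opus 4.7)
The plan is to build the thirteen bounds in a tight cascade, so that each item only needs the preceding ones plus an elementary probabilistic or algebraic step, and nothing gets re-derived. I would treat the lemma as four conceptual blocks: (i) concentration of raw inputs and initial weights (Parts 1--3), (ii) pointwise bounds on the softmax numerator, denominator, and entries at initialization (Parts 4, 6, 8), (iii) the same pointwise bounds propagated to time $t$ via the trust-region radius $R$ (Parts 5, 7, 9), and (iv) perturbation bounds for the same quantities (Parts 10--13).

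For the first block, Part 1 follows from the decomposition $x_{i,k}=\langle \mathcal{P}_k, h_{i,1}\rangle+\xi_{i,k}$, so $x_{i,k}\sim\mathcal{N}(0,1+\sigma^2)$ by Fact~\ref{fac:inner_product_distribution} together with $\|\mathcal{P}_k\|_2=1$ from Claim~\ref{clm:res_ssm} and Fact~\ref{fac:sum_distribution}; a Gaussian tail bound (Fact~\ref{fac:gaussian_tail}) and a union bound over the $nN$ coordinates give $|x_{i,k}|\le C\sqrt{(1+\sigma^2)\log(nN/\delta)}\le O(B)$. Part 2 is the same argument applied to $w_r(0)\sim\mathcal{N}(0,1)$ union-bounded over $r\in[m]$, which directly yields $O(D)$. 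Part 3 is the triangle inequality $|w_r(t)|\le|w_r(0)|+|w_r(t)-w_r(0)|\le O(D)+R$.

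For the second and third blocks, I would use the identity ${\sf u}_{i,r,k}(s)=\exp(x_{i,d}\,w_r(s)\,x_{i,k})$. Combining Parts 1 and 2 gives $|x_{i,d}\,w_r(0)\,x_{i,k}|\le O(B^2 D)$, hence Part 4; replacing $w_r(0)$ with $w_r(t)$ and using Part 3 gives Part 5. Summing $d$ coordinates yields Parts 6 and 7 for $\alpha_{i,r}(\cdot)$, and dividing numerator by denominator yields Parts 8 and 9 for $\S_{i,r,k}(\cdot)$ (the factor $1/d$ comes from the denominator). For the perturbation block, I would write
\begin{align*}
u_{i,r,k}(t)-u_{i,r,k}(0)=u_{i,r,k}(0)\bigl(\exp\bigl(x_{i,d}(w_r(t)-w_r(0))x_{i,k}\bigr)-1\bigr),
\end{align*}
note that the exponent is bounded in absolute value by $O(B^2 R)\le 0.01$ under the hypothesis $R\in(0,0.01/B^2)$, and apply Fact~\ref{fac:decompose_exp} to get $|\exp(\cdot)-1|\le O(B^2 R)$; combined with Part 4 this yields Part 10. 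Part 11 is Part 10 summed over $k$. For Part 12, I would use
\begin{align*}
|\alpha_{i,r}(t)^{-1}-\alpha_{i,r}(0)^{-1}|=\frac{|\alpha_{i,r}(t)-\alpha_{i,r}(0)|}{\alpha_{i,r}(t)\alpha_{i,r}(0)},
\end{align*}
plug in Parts 6, 7, 11, and simplify. For Part 13, I would split $\S_{i,r,k}(t)-\S_{i,r,k}(0)=\alpha_{i,r}(t)^{-1}(u_{i,r,k}(t)-u_{i,r,k}(0))+u_{i,r,k}(0)(\alpha_{i,r}(t)^{-1}-\alpha_{i,r}(0)^{-1})$ and bound each term using Parts 4, 7, 10, 12.

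The step I expect to be the actual obstacle (as opposed to routine exponent-chasing) is getting the perturbation radius in Parts 10--13 to come out as $O(R B^{2})\exp(O(B^{2}D))$ rather than the worse $\exp(O(B^{2}(D+R)))\cdot O(RB^{2})$: this is where the hypothesis $R\le 0.01/B^{2}$ is essential, because it forces the inner factor $\exp(x_{i,d}(w_r(t)-w_r(0))x_{i,k})-1$ to sit in the linear regime of Fact~\ref{fac:decompose_exp}, so that all the $R$-dependence in Parts 10--13 is linear in $R$ and the $D$-dependence stays at the initialization scale $\exp(O(B^2 D))$; without that the downstream $C_1$--$C_5$ bounds would lose a constant factor in the exponent and the induction would not close. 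Everything else is careful bookkeeping of exponentials combined with elementary Gaussian concentration.
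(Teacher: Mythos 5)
Your proposal is correct and follows essentially the same route as the paper: Gaussian tails for Parts 1--3, multiplying exponent bounds for Parts 4--9, the factorization ${\sf u}_{i,r,k}(t)={\sf u}_{i,r,k}(0)\exp(x_{i,d}x_{i,k}(w_r(t)-w_r(0)))$ with Fact~\ref{fac:decompose_exp} for Part 10, and the same telescoping splits for Parts 11--13, with $R\le 0.01/B^2$ keeping the perturbation linear in $R$ at scale $\exp(O(B^2D))$ exactly as in the paper's argument. Your explicit union bounds over $i,k$ and $r$ (matching the $nN$ and $m$ inside the definitions of $B$ and $D$) are, if anything, slightly more careful than the paper's write-up.
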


\begin{proof}
    {\bf Proof of Part 1.}   
    We have
    \begin{align*}
        x_{i,k} = &~ u_{i,k} +  \xi_{i,k} \\
                = &~ \langle \mathcal{P}_{i,k}, h_{i,1} \rangle + \xi_{i,k}
    \end{align*}
    Above, the first equation is trivially from Definition~\ref{def:id_generator}, and the second equation is also trivially from Claim~\ref{clm:res_ssm}. And we have $\xi_{i,k}\sim \mathcal{N}(0,\sigma^2)$ and $h_{i,1} \sim \mathcal{N}(0,I_{N})$ following from Definition~\ref{def:id_generator} and $\|\mathcal{P}_{i,k}\|_2 =1$ from Claim~\ref{clm:res_ssm}. 
    
    Hence, we can have
    \begin{align*}
        \langle \mathcal{P}_{i,k}, h_{i,1} \rangle \sim \mathcal{N}(0,1)
    \end{align*}
    where the step is a consequence of Fact~\ref{fac:inner_product_distribution}. 
    
    And we have
    \begin{align*}
        x_{i,k} \sim \mathcal{N}(0,1+\sigma^2)
    \end{align*}
    Thus, with a probability $1-\delta$:
    \begin{align*}
        |x_{i,k}| \leq&~ C \sqrt{(1+\sigma^2)log(1/\delta)}\\
        \leq &~ O(B)
    \end{align*}
    Above, the first inequality is derived by using Fact~\ref{fac:gaussian_tail}, and the second inequality is trivially from the Definition of $B$ (Definition~\ref{def:B}).

    {\bf Proof of Part 2.}
    We have
    \begin{align*}
        w_r(0) \sim \mathcal{N}(0,1)
    \end{align*}
    Above the step can be trivially from Definition~\ref{def:initialization}. 
    
    Thus, with a probability no less than $1-\delta$, we have
    \begin{align*}
        |w_r(0)| \leq &~ C\sqrt{\log(1/\delta)} \\
        = &~ O(D)
    \end{align*}
    Above the first inequality is a consequence of Fact~\ref{fac:gaussian_tail}, and the second equation is trivially from the Definition~\ref{def:D}.

    {\bf Proof of Part 3.}
    By following the Lemma statement, we can show that
    \begin{align*}
        |w_r(t)-w_r(0)| \leq R
    \end{align*}
    where the step can be obtained from the definition of $R$. 
    
    Then we have
    \begin{align*}
        |w_r(t)| \leq&~ |w_r(0)+R| \\
                \leq &~ |w_r(0)|+|R|\\
                \leq &~ O(D+R)
    \end{align*}
    Above, the first inequality is a result of simple algebra, the 2nd inequality applies triangle inequality, and the last step is trivially from Part 2 of this lemma.

    {\bf Proof of Part 4.}
    We have
    \begin{align*}
        |x_{i,d}\cdot w_r(0) \cdot x_{i,k}| \leq O(B^{2}D)
    \end{align*}
    The inequality above can be trivially obtained by using Part 1,2 of this lemma.

    Hence, we get
    \begin{align*}
        u_{i,r,k}(0) =&~ \exp(x_{i,d}\cdot w_r(0)\cdot x_{i,k})\\
        \in&~ [\exp(-O(B^{2}D)),\exp(O(B^{2}D))]
    \end{align*}   
    Above, the first equation is trivially from Definition~\ref{def:u_t}, and the second step is derived by using basic algebra.
    
    {\bf Proof of Part 5.}
    We have
    \begin{align*}
        |x_{i,d} \cdot w_r(t)\cdot x_{i,k}|
        \leq &~ O(B^2\cdot (D+R))
    \end{align*}
    where this step combines Part 1,3 of this Lemma.
    
    Hence, we get
    \begin{align*}
        u_{i, r, k}(t) = &  ~ \exp(x_{i,d} \cdot w_r(t)\cdot x_{i,k}) \\
        \in & ~ [ \exp(-O( B^2(D+R) )), \exp(O( B^2(D+R) )) ]
    \end{align*}
    where the 1st step is trivially from Definition~\ref{def:u_t}, and the 2nd step applies basic algebra.

    {\bf Proof of Part 6.}
    We have
    \begin{align*}
        \alpha_{i,r}(0) = &~ \langle {\sf u}_{i,r}(0), {\bf 1}_{d}\rangle \\
        = &~\sum_{k=1}^{d} {\sf u}_{i,r,k}(0)
    \end{align*}
    where the first step is trivially from Definition~\ref{def:alpha_t}, and the second step applies simple algebra.

    Thus we have
    \begin{align*}
        d \cdot \exp(-O(B^{2}D)) \leq \alpha_{i,r}(0) \leq d \cdot \exp(O(B^2D))    
    \end{align*}
    where this step can be trivially derived from Part 4 of this lemma.
    
    {\bf Proof of Part 7.}
    We have
    \begin{align*}
        \alpha_{i,r}(t) =&~ \langle {\sf u}_{i,r}(t), {\bf 1}_{d} \rangle\\
        = &~ \sum_{k=1}^{d} {\sf u}_{i,r,k}(t)
    \end{align*}
    where the first step is trivially from Definition~\ref{def:alpha_t}, and the second step comes from the definition of the inner product. Thus we have
    \begin{align*}
        d \cdot \exp(-O( B^2(D+R) )) \leq  \alpha_{i, r}(t) \leq d \cdot \exp(O( B^2(D+R) ))
    \end{align*}
    where this step can be obtained by Part 5 of this lemma.

    {\bf Proof of Part 8.}
    We have
    \begin{align*}
        \S_{i,r,k}(0) = \alpha_{i,r}(0)^{-1} \cdot {\sf u}_{i,r,k}(0)
    \end{align*}
    where this step follows from Definition~\ref{def:S_t}. Then we have
    \begin{align*}
        \frac{\exp(-O(B^{2}D))}{d} \leq \S_{i,r,k}(0) \leq \frac{\exp(O(B^{2}D))}{d}
    \end{align*}
    where this step can be obtained by combining Parts 4,6 of this lemma.
    
    {\bf Proof of Part 9.}
    We have
    \begin{align*}
        \S_{i,r,k}(t) = \alpha_{i,r}(t)^{-1} \cdot {\sf u}_{i,r,k}(t)
    \end{align*}
    where this step follows from Definition~\ref{def:S_t}. Then we have
    \begin{align*}
        \frac{\exp(-O(B^2(D+R)))}{d} \leq \S_{i,r,k}(t) \leq \frac{\exp(O(B^2(D+R)))}{d}
    \end{align*}
    where this step can be obtained by combining Part 5,7 of this lemma.

    {\bf Proof of Part 10.}
    We have
    \begin{align*}
        & ~|{\sf u}_{i,r,k}(t) - {\sf u}_{i,r,k}(0)| \\
        = & ~ |\exp(x_{i,d}\cdot w_r(t) \cdot x_{i,k}) - \exp(x_{i,d} \cdot w_r(0) \cdot x_{i,k})|\\
        = & ~ | \exp(x_{i,d} \cdot w_r(0)\cdot x_{i,k}) \cdot (  \exp\Big( x_{i, d} x_{i, k} \cdot (w_r(t)-w_r(0)) \Big) - 1 ) | \\
        = & ~ \Big| \exp(x_{i,d} \cdot w_r(0)\cdot x_{i,k}) \cdot \Big( x_{i, d} x_{i, k} \cdot (w_r(t)-w_r(0)) + \Theta(1)  \cdot x_{i, d}^2 x_{i, k}^2 \cdot (w_r(t)-w_r(0))^2  \Big) \Big| \\
        \leq & ~ | \exp(x_{i,d} \cdot w_r(0)\cdot x_{i,k}) \cdot ( R B^2 + \Theta(1) \cdot R^2 B^4  ) | \\
        \leq & ~ | \exp(x_{i,d} \cdot w_r(0)\cdot x_{i,k}) \cdot O(R B^2) | \\
        = & ~ | \u_{i, r, k}(0) \cdot O(R B^2) | \\
        \leq & ~ \exp( O(B^2 D) ) \cdot O(R B^2)
    \end{align*}
    Above the first equation is trivially from Definition~\ref{def:u_t}, the second equation can be obtained by using simple algebra, the third equation is a consequence Fact~\ref{fac:decompose_exp}, the fourth inequality combines the result of Part 1 of this lemma and $|w_r(t) - w_r(0)|\leq R$, the fifth inequality applies simple algebra, the sixth step comes from Definition~\ref{def:u_t} and the last step is derived from Part 6 of this lemma.

    {\bf Proof of Part 11.} 
    We have
    \begin{align*}
        &~|\alpha_{i,r}(t) - \alpha_{i,r}(0)| \\
        = &~ |\sum_{k\in[d]} {\sf u}_{i,r,k}(t) - \sum_{k \in [d]} {\sf u}_{i,r,k}(0)|\\
        \leq &~ \sum_{k \in [d]} |{\sf u}_{i,r,k}(t) - {\sf u}_{i,r,k}(0)|\\
        \leq &~  d \cdot \exp( O(B^2D) )  \cdot O(R B^2)
    \end{align*}
    Above the first equation is trivially from Definition~\ref{def:alpha_t}, the second step can be obtained by using triangle inequality, and the last step is derived from Part 10 of this lemma.

    {\bf Proof of Part 12.}
    We have
    \begin{align*}
        |\alpha_{i,r}(t)^{-1} - \alpha_{i,r}(0)^{-1}|
        = &~ \alpha_{i,r}(t)^{-1} \cdot \alpha_{i,r}(0)^{-1} \cdot | \alpha_{i,r}(0) - \alpha_{i,r}(t)| \\
        \leq&~ d \cdot \exp(O(B^{2}D)) \cdot O(RB^2) \cdot \frac{\exp(O(B^2(D+R)))}{d} \cdot \frac{\exp(O(B^{2}D))}{d} \\
        = & ~ \exp( O(B^2D) )  \cdot O(R B^2) / d 
    \end{align*}
     Above, the first equation is based on simple algebra, the 2nd step is due to Parts 6, 7, and 10 of this lemma, and the last step can be obtained from applying basic algebras and the fact that $R \ll D$.

    {\bf Proof of Part 13.}
    We have
    \begin{align*}
        &~ |S_{i,r,k}(t) - S_{i,r,k}(0)| \\
        = &~ |\alpha_{i,r}(t)^{-1}\cdot {\sf u}_{i,r,k}(t) - \alpha_{i,r}(0)^{-1}\cdot {\sf u}_{i,r,k}(0)|\\
        = &~ |(\alpha_{i,r}(t)^{-1}\cdot {\sf u}_{i,r,k}(t) - \alpha_{i,r}(t)^{-1}\cdot {\sf u}_{i,r,k}(0))+ (\alpha_{i,r}(t)^{-1}\cdot {\sf u}_{i,r,k}(0)-\alpha_{i,r}(0)^{-1}\cdot {\sf u}_{i,r,k}(0))|\\
        \leq &~ |\alpha_{i,r}(t)^{-1}|\cdot|{\sf u}_{i,r,k}(t) - {\sf u}_{i,r,k}(0)| +|{\sf u}_{i,r,k}(0)|\cdot |\alpha_{i,r}(t)^{-1}-\alpha_{i,r}(0)^{-1}| \\
        \leq & ~ \exp( O(B^2D) )  \cdot O(R B^2) / d + \exp( O(B^2D) )  \cdot O(R B^2) / d \\
        \leq & ~ \frac{\exp( O(B^2 D) )}{d}  \cdot O(R B^2)
    \end{align*}
    Above the first equation is trivially from Definition~\ref{def:S_t}, the 2nd step is due to simple algebra, the 3rd step can be obtained by applying triangle inequality, the fourth step combines Parts 4, 7, 10, 12 of this lemma, and the final step is based on simple algebra.

\end{proof}




\end{document}